\definecolor{nicePink}{RGB}{247,83,148}
\definecolor{royalBlue}{HTML}{057DCD}
\newtheorem{theorem}{Theorem} 
\numberwithin{theorem}{section}
\newtheorem{lemma}[theorem]{Lemma}
\newtheorem{claim}[theorem]{Claim}
\newtheorem{proposition}[theorem]{Proposition}
\newtheorem{challenge}{Challenge}
\newtheorem{assumption}{Assumption}
\newtheorem{inftheorem}{Informal Theorem}
\newtheorem{infassumption}{Informal Assumptions}
\theoremstyle{definition}
\newtheorem{definition}[theorem]{Definition}
\newtheorem{remark}[theorem]{Remark}
\newtheorem*{theorem*}{Theorem}
\newtheorem*{lemma*}{Lemma}
\renewcommand{\Pr}{\mathop{\bf Pr\/}}
\newcommand{\E}{\mathop{\bf E\/}}
\newcommand{\Var}{\mathop{\bf Var\/}}
\newcommand{\Cov}{\mathop{\bf Cov\/}}
\newcommand{\poly}{\textnormal{poly}}
\newcommand{\reals}{\mathbb R}
\newcommand{\nats}{\mathbb N}
\newcommand{\natszero}{\mathbb N_0}
\newcommand{\ints}{\mathbb Z}
\newcommand{\eps}{\epsilon}
\newcommand{\calA}{\mathcal{A}}
\newcommand{\calB}{\mathcal{B}}
\newcommand{\calC}{\mathcal{C}}
\newcommand{\calD}{\mathcal{D}}
\newcommand{\calE}{\mathcal{E}}
\newcommand{\calG}{\mathcal{G}}
\newcommand{\calH}{\mathcal{H}}
\newcommand{\calI}{\mathcal{I}}
\newcommand{\calJ}{\mathcal{J}}
\newcommand{\calL}{\mathcal{L}}
\newcommand{\calM}{\mathcal{M}}
\newcommand{\calR}{\mathcal{R}}
\newcommand{\calT}{\mathcal{T}}
\newcommand{\calW}{\mathcal{W}}
\newcommand{\calX}{\mathcal{X}}
\newcommand{\calY}{\mathcal{Y}}
\newcommand{\calZ}{\mathcal{Z}}
\def\l{\ell}
\def\<{\langle}
\def\>{\rangle}
\newcommand{\Be}{\mathrm{Be}}
\newcommand{\Bin}{\mathrm{Bin}}
\newcommand{\NBin}{\mathrm{NBin}}
\newcommand{\Z}{\mathrm{Z}}
\newcommand{\TG}{\calT\calG}
\newcommand{\Geo}{\mathrm{Geo}}
\newcommand{\Poi}{\mathrm{Poi}}
\newcommand{\Dgauss}{\calZ}
\newcommand{\plow}{p_{\mathrm{low}}}
\newcommand{\tv}{d_{TV}}
\newcommand{\kl}{D_{KL}}
\def\wt{\widetilde}
\def\wh{\widehat}
\def\vec{\bm}
\newcommand{\ball}{\mathbb{B}}
\newcommand{\dgaussvar}{Z}
\newcommand{\ncrit}{n_{\mathrm{crit}}}
\def\Dgauss{\mathcal{Z}}
\def\rcrit{r_{\textnormal{crit}}}
\def\descriptivityparam{\theta}
\def\calAvarrho{\calA_{\varrho}}
\def\modes{\calM}
\def\dnew{d_{\mathrm{ST}}}
\def\calX{\mathcal{X}}
\def\ball{\mathbb{B}}
\def\siirvs{$\textnormal{SIIRVs}$}
\def\siiervs{$\textnormal{SIIERVs}$}
\def\setoperator{\mathtt{Op}}
\def\closure{\mathtt{closure}}
\def\ccalA{\overline{\calA}}
\def\chull{\mathtt{Conv}}
\def\conehull{\mathtt{Cone}}
\def\-{{\text -}}
\def\ball{\mathbb{B}}
\def\shade{\mathtt{Shade}}
\def\TT{{\vec T}}
\def\ZZ{{\Z}_{\vec T}}
\def\Deltae{\Delta {e}_{\vec a, \vec b}} 
\def\vol{\textnormal{Vol}}
\renewenvironment{abstract}{%
	\if@twocolumn
	\section*{\abstractname}%
	\else 
	\begin{center}%
		{\bfseries \large\abstractname\vspace{\z@}}
	\end{center}%
	\quotation
	\fi}
{\if@twocolumn\else\endquotation\fi}
\begin{document}
	
	\title{Learning and Covering Sums of Independent Random Variables with Unbounded Support}
	\author{
 		{Alkis Kalavasis\thanks{\url{kalavasisalkis@mail.ntua.gr}. Supported by the Hellenic Foundation for Research and Innovation (H.F.R.I.) under the ``First Call for
H.F.R.I. Research Projects to support Faculty members and Researchers and the procurement
of high-cost research equipment grant", project BALSAM, HFRI-FM17-1424.}} \\
 		 National Technical University of Athens \\
 		\and
 		{Konstantinos Stavropoulos\thanks{\url{kstavrop@utexas.edu}. Suppported by the NSF AI Institute for Foundations of Machine Learning (IFML) and by a scholarship from Bodossaki Foundation. }} \\
 		 The University of Texas at Austin \\
 		\and
 		{Manolis Zampetakis\thanks{\url{mzampet@berkeley.edu}. Supported by the Army Research Office (ARO) under contract W911NF-17-1-0304 as
part of the collaboration between US DOD, UK MOD and UK Engineering and Physical
Research Council (EPSRC) under the Multidisciplinary University Research Initiative (MURI).}} \\
 		 University of California, Berkeley \\
	}
	\maketitle
	\thispagestyle{empty}
	
	\begin{abstract}
	\small
      	  We study the problem of covering and learning sums $X = X_1 + \cdots + X_n$ of
	  independent integer-valued random variables $X_i$ (SIIRVs) with \textit{unbounded}, or even \textit{infinite}, support.
	  \cite{de2018learning} at FOCS 2018, showed that the maximum value of the
	  collective support of $X_i$'s necessarily appears in the sample complexity of
	  learning $X$. 
	  In this work, we address two questions: (i) Are there general families of SIIRVs with unbounded support that can be learned with sample complexity independent of both $n$ and the maximal element of the support? (ii) Are there general families of SIIRVs with unbounded support that admit proper sparse covers in total variation distance?
	  As for question (i), we provide a set of simple conditions that allow the unbounded SIIRV to be learned with complexity $\poly(1/\eps)$ bypassing the aforementioned lower bound. We further address question (ii) in the general setting where each
	  variable $X_i$ has unimodal probability mass function and is a different
	  member of some, possibly multi-parameter, \textit{exponential family} $\calE$ that satisfies some structural properties. 
	   These properties allow
	  $\calE$ to contain heavy tailed and non log-concave distributions. Moreover, we show that for every $\eps > 0$, and every $k$-parameter family $\calE$ that satisfies some structural assumptions, there exists an algorithm
	    with $\wt{O}(k) \cdot \poly(1/\eps)$
	    samples that learns a sum of $n$ arbitrary members of $\calE$ within $\eps$
	    in TV distance. The output of the learning algorithm is also a sum of random variables whose distribution lies in the family $\calE$. En route, we prove that any discrete unimodal exponential family with bounded constant-degree central moments can be approximated by the family corresponding to a bounded subset of the initial (unbounded) parameter space.

    \end{abstract}
	
\newpage

\thispagestyle{empty}

\thispagestyle{empty}
\newpage
\setcounter{page}{1}

\section{Introduction}
In this paper, we revisit the problem of learning distributions of the form $X = X_1 + \ldots + X_n$, where $n \in \nats$ and the terms $X_i$ are independent integer random variables. We focus on the cases where each $X_i$ has unbounded, even infinite support. Our work follows the literature of learning distributions from independent samples (see e.g.,~\citep{dasgupta1999learning, rabani2014learning, acharya2015optimal, canonne2015big, canonne2020survey, diakonikolas2019robust, moitra2010settling}), that has been introduced in \cite{kearns1994learnability}. In this problem, we observe independent samples from a random variable $X$, which is a priori known to belong to a class of distributions $\calC$, and the goal is to compute another random variable $X'$ such that $\tv(X, X') \le \eps$. The main question to ask follows: Given $\calC$, how many samples from $X$ do we need to compute the estimate $X'$? If the output $X'$ belongs to $\calC$, we say that we have \textit{properly learned} $X$.

The problem of learning distributions is closely related to the problem of \textit{sparsely covering} a class of distributions. Given a class $\calC$ of distributions, an \textit{$\eps$-cover} for this class is a set $\calC_{\eps}$ of distributions such that, for every $D \in \calC$, there exists a $D' \in \calC_{\eps}$ such that $\tv(D, D') \le \eps$. If $\calC_{\eps} \subseteq \calC$, then $\calC_{\eps}$ is called a \textit{proper cover}. Clearly, the existence of a (small) cover for a class $\calC$ is interesting by its own. Furthermore, once we have designed a cover $\calC_{\eps}$, then there exist generic algorithms, e.g., the tournament procedure \citep{daskalakis2014faster}, that uses $\calC_{\eps}$ to produce a learning algorithm with sample complexity $O(\log(|\calC_{\eps}|)/\eps^2)$ and running time $\wt{O}(|\calC_\eps|/\eps^2)$.

A fundamental problem in distribution learning arises when the elements of the class $\calC$ can be expressed as a sum $X = X_1 + \cdots + X_n$ of $n \in \nats$ independent but not identical random variables (SIIRVs). This problem has been extensively studied in the Theoretical Computer Science literature in the last decade. The seminal work of \cite{daskalakis2015sparse, daskalakis2015learningPBD, diakonikolas2016optimal} settled the fundamental problem of covering and learning sums $X$ of independent but not identical Bernoulli random variables, where they prove the surprising result that the number of samples needed for learning the random variable $X$ in this case is independent of $n$ and almost the same as the number of samples needed to learn a single Bernoulli random variable. Subsequently, \cite{daskalakis2013learningSIIRV, diakonikolas2016fourier} solved the problem of learning sums of integer random variables with support from $0$ to $m - 1$ and otherwise follow arbitrary distribution using $\wt{O}(m/\eps^2)$ samples (again independent of $n$). Follow-up works have also considered multidimensional distributions again with bounded support size \citep{daskalakis2015structure, daskalakis2016size, diakonikolas2016properly}. This line of work has found applications in Game Theory for computing equilibrium in anonymous games \citep{daskalakis2016size, diakonikolas2016properly, goldberg2017query, cheng2017playing}, in Mechanism Design for designing auctions \citep{goldberg2015auction}, and in Stochastic Optimization \citep{de2018boolean}. Crucially, such applications make use of the delicate structure of such sums (reflected in proper sparse covers) and are not necessarily implied by the learning results. In fact, learning SIIRVs could also be seen as a fundamental (and not trivial) application of the corresponding covering results.

All the previous work in this literature considered learning sums of random variables whose support is bounded in size and the maximum elements in the support are also bounded. In particular, \cite{daskalakis2013learningSIIRV} observed that if the support of the terms is unbounded, then the sample complexity of learning the distribution of the sum will depend on the number of terms in the worst case, even under the assumption that the terms have bounded moments. Moreover, the recent work of \cite{de2018learning} showed that, even when the size of the support is $4$, there should be a dependence of sample complexity on the maximum value of their support in general. In many settings though, both in Game Theory and in Stochastic Optimization, it is natural to encounter random variables with large or even infinite support. In these cases, any algorithm whose sample or time complexity depends on the support size or the maximum value of the random variable can be very inefficient or even useless. The above discussion gives rise to our first challenge:
\begin{challenge}
[Infinite Support]
\label{challenge:1}
When is it possible to learn SIIRVs with infinite support with a number of samples independent of $n$ and the maximum element of the support?
\end{challenge}

Note that, in our setting, the bounds of \cite{de2018learning} are only interesting from a qualitative perspective, since they focus on the (very weak) dependence of the sample complexity on the maximum value of the collective support (for which they provide tight bounds), but they enable doubly exponential dependence on the size of the collective support in their upper bounds.

Proper sparse covering (and hence proper learning due to the covering method), is a quite delicate requirement: To the best of our knowledge, the only known results for properly covering sums of independent univariate random variables, apply to the class of Poisson Binomial distributions \cite{daskalakis2015sparse,diakonikolas2016properly} and the class of $m$-SIIRVs (each summand is supported on $0$ to $m-1$) \cite{diakonikolas2016optimal}. This is the second challenge: 
\begin{challenge}
[Proper Covers]
\label{challenge:2}
Are there general families of SIIRVs with infinite support that admit \emph{proper} sparse covers in total variation distance?
\end{challenge}

\subsection{Our Contribution}\label{sec:contribution}
We initiate the study of SIIRVs with unbounded and even infinite support (SIIURVs). 
There are two aspects of our work. First, we overcome the aforementioned lower bounds with an appropriate set of simple assumptions. Under our assumptions, we prove that the sample complexity of the learning problem is independent from the number of terms, giving an answer to Challenge \ref{challenge:1} (see Section \ref{section:siiurvs}). Our result is important from a theoretical perspective, since in the distribution learning setting there is a lack of a tight combinatorial characterization of the sample complexity, unlike, for example, the case of binary classification. The standard upper bound, metric entropy, includes, in our case, a dependence on the number of terms.
Second, we give an answer to Challenge \ref{challenge:2} (and then to Challenge \ref{challenge:1}) by properly covering (and then learning) SIIRVs with structured distributions. In particular, we focus on SIIRVs where each term is a member of a given exponential family of distributions $\calE$, which we call $\calE$-SIIRVs or SIIERVs. The exponential family paradigm is a multi-parametric, extremely expressive framework that captures many interesting families of distributions. 
Our results identify delicate structural properties for a quite broader class of random variables than what has been previously known and, importantly, demonstrate that the size of the support is not an utter impediment in acquiring such delicate results.
We present our result on SIIERVs in Section \ref{section:proper}.

\paragraph{Results for SIIERVs.} 
Our main results concern the family $\calE$-SIIRV of $X = X_1 + ... + X_n$. Each $X_i$ is a member of an exponential family of distributions $\calE$, that is the probability mass function of $X_i$ at the point $x \in \ints$ is proportional to the quantity $\exp( - \vec{a}_i \cdot \vec{T}(x))$, where $\vec{T} : \ints \to \reals^k$ is the vector of \textit{sufficient statistics} of $\calE$ and $\vec{a}_i$ is the vector of parameters of $X_i$ that belongs to the \textit{parameter space} $\calA \subseteq \reals^k$ of $\calE$. So in our setting, for every $X_i$ the sufficient statistics $\vec{T}$ are the same but the parameter vector $\vec{a}_i$ is different for every $i$. The sum $X$ will be called an $\calE_{\vec T}(\calA)$-SIIRV of order $n$.

\begin{infassumption}[Assumption \ref{assumption:proper}] \label{asp:informal}
Assume that
there exist constants $L,B,\gamma,\Lambda>0$ so that
the exponential family $\calE = \calE_{\vec T}(\calA)$
is well-defined and:
\begin{enumerate}
    \item (Geometry) $\calA$ is closed, path-connected and its conical hull is a polyhedral cone.
    \item (Modes) Every distribution in $\calE$ is unimodal and the modes lie in $[-L,L]$.
    \item (Bounded Moment) Every distribution in $\calE$ has fourth central moment at most $B$.
    \item (Variance) The variance of each distribution in $\calE$ is lower bounded by $\gamma$.
    \item (Covariance) For any $\vec a$ in the convex hull of $\calA$, it holds $\Cov_{\vec a}(\vec T(W)) \preceq \Lambda \cdot I_k$.
\end{enumerate}
\end{infassumption}

\noindent For a discussion on the minimality of our assumptions, we refer to Sections \ref{section:siiurvs} and \ref{section:proper}. In the next results, the set $\calA'$ is a superset of $\calA \subseteq \reals^k$ (see the discussion after the statements).  

\begin{inftheorem}[Weakly-Proper Covering Theorem \ref{theorem:covering-siiervs}]
  Under Assumption \ref{asp:informal}, for any $\eps > 0$, there exists a set of distributions $\calC_{\eps}$ that $\eps$-covers the family of $\calE_{\vec T}(\calA)$-SIIRVs of order $n$ in total variation distance. The set $\calC$ has size $(n/\eps)^{O(k)} + 2^{k \cdot \poly(1/\eps)}$ and each element of $\calC_{\eps}$ is an $\calE_{\vec T}(\calA')$-SIIRV of order $\Theta(n)$.
\end{inftheorem}

\begin{inftheorem} [Learning Theorem \ref{theorem:learning-SIIERVs-main}]
  Under Assumption \ref{asp:informal}, given $m = k \cdot \wt{O}(1/\eps^2)$ samples from an unknown $\calE_{\vec T}(\calA)$-SIIRV $X$ of order $n$, there exists an algorithm that outputs $\wh{X}$ so that $\tv(X,\wh{X}) \leq \eps$ with high probability. Moreover, $\wh{X}$ is an $\calE_{\vec T}(\calA')$-SIIRV of order $\Theta(n)$.
\end{inftheorem}

\noindent\textbf{Weakly-Proper Covering.} We say that a cover is a weakly-proper cover for the family of $\calE_{\vec T}(\calA)$-SIIRVs of order $n$ if its elements belong to the family of $\calE_{\vec T}(\calA')$-SIIRVs with parameters in a slightly larger set $\calA' \subseteq \reals^k$ and with possibly more than $n$ terms. In the rest of the paper, we mostly stick with the term proper for brevity (see also Appendix \ref{subsubsection:proper-meaning}). We think of $\calA$ as input to the problem, but we 
focus on the various challenges arising by the nature of this problem instead of possible adversarial selections of $\calA$.

The covering result gives an answer to Challenge \ref{challenge:2} and the learning one provides an algorithm with sample complexity independent of $n$ and the maximum element of the support (Challenge \ref{challenge:1}).
In the above informal theorem, we have treated the relevant parameters of the exponential
family $\calE$ (e.g., $B$) as constants. If we also consider the accuracy $\eps$ to be constant, the learner runs in time $n^{O(k)}$. 
The assumptions about the central moments as well as the covariance matrix are standard. The assumption regarding the geometry of $\calA$ is a mild, technical assumption, that has, however, important technical implications. The variance lower bound is a substitute of particularly subtle -- and most probably not omnipotent -- technical tools that can be used to discard low variance terms in special cases.
Finally, the assumption about the modes provides the structure needed to apply the most powerful tool we possess to confront Challenge \ref{challenge:1}: quantitative versions of the Central Limit Theorem. Moreover, Challenge \ref{challenge:2} restricts the flexibility we have in applying such a tool, which we believe to indicate that our assumption about modes is, in some sense, essential for our purposes.

Our Assumptions \ref{asp:informal} do not exclude any reasonable exponential family and our methods capture (among others) Geometric, Bernoulli, Poisson, Zeta, Gamma, Gaussian, Laplacian distributions and interpolations thereof (see Appendix \ref{appendix:examples}). For instance, our results apply to sums with both Gaussian and Laplacian terms. In particular, the naturally occuring Zeta distributions (Zipf's law \citep{Chao1949HumanBA}) are not log-concave and no non-trivial learning results are known on sums thereof, even without requiring proper learning.

\paragraph{Technical Contributions.} 
For a more detailed discussion about our novel technical features, we refer to Sections \ref{section:technical-1} and \ref{section:technical-2}.
First, we provide a fundamental structural result about exponential families that satisfy our assumptions, by reducing the problem of properly sparsely covering an exponential family as such to the standard problem of covering a bounded subset of $\reals^k$ (for some $k\in\nats$) in Euclidean distance. The main challenge, which familiar results about exponential families do not resolve, is that the parameter space of the exponential family may be unbounded. We show that a bounded subset of the parameter space approximately generates any distribution in the family (Theorem \ref{theorem:projection}), which we believe to be of independent interest.
To this end, we identify an analogy between the geometry of exponential families and polyhedral theory; we essentially reduce probabilistic properties of exponential family distributions to geometric properties of polyhedral cones, for which we settle a novel result (Theorem \ref{theorem:geometry-up}). We also prove that for any distribution with parameter vector that has a sufficiently large (yet bounded) norm, the number of important points of the support is bounded, which leads to the resolution of the main technical challenge. Secondly, we provide a continuity argument which implies that for any $\calE$-SIIRV $X$ of order $n$, there exists some $\calE$-SIIRV $Y$ which is the sum of i.i.d. random variables in the family $\calE$ such that the distance between the expectation of $X$ and the expectation of $Y$, as well as the distance between the variance of $X$ and the variance of $Y$ are bounded. This is important in order to prevent our learning algorithm from running in time exponential to the number of terms $n$. 

\paragraph{Roadmap.} In Section \ref{section:siiurvs}, we establish a simple set of assumptions that is sufficient to address Challenge \ref{challenge:1}. At the same time, we revisit the main ideas used in the literature of SIIRVs and describe how we implement them on our setting. In Section \ref{section:proper}, we continue with the presentation of our main contribution, namely, covering and learning SIIERVs (addressing Challenges \ref{challenge:2} and then \ref{challenge:1}). Additional notation, missing proofs, technical tools that we use as well as some applications of our results can be found in the Appendix.

\section{Warm-Up: Structure and Learning of SIIURVs}
\label{section:siiurvs}

We show, as a warm-up, that there exists a set of assumptions under which learning in total variation distance the distribution of an unknown sum of (at most) $n$ independent random variables with possibly unbounded support can be done using a number of independent samples that does not increase with $n$. In particular, let $\calD$ be a family of distributions over $\ints$. We consider sums of independent integer-valued random variables of order $n$ of the form $X = \sum_{i\in[n']} X_i$, where $n'\le n$, $X_i \sim D_i \in \calD$. We call $X$ a $\calD$-SIIRV (or SIIURV, i.e., sum of independent integer random variables with unbounded support) of order $n$. 

\begin{assumption} 
\label{assumption:SIIURV}
We make the following assumptions for the family of distributions $\calD$.
\begin{enumerate}
    \item\label{cond:unimodal-SIIURV} Every distribution in $\calD$ is \textnormal{(1a)} unimodal 
    and \textnormal{(1b)} the mode is assigned probability at most equal to $1-\gamma$, for some constant $\gamma\in(0,1)$ (common for all distributions in $\calD$).\footnote{A unimodal distribution could have many consequent modes, each assigned equal amount of mass.}
    \item\label{cond:bounded-modes-SIIURV} Every mode of any distribution in $\calD$ lies within a (common) interval of constant length $L>0$.
    \item \label{cond:bounded-centered-SIIURV} The fourth central moment $\E \left[|W-\E [W]|^4 \right]$ of each distribution in $\calD$ is upper bounded by a constant $B>0$ (uniformly for all distributions in $\calD$).
\end{enumerate}
\end{assumption}

\paragraph{Minimality of Assumption \ref{assumption:SIIURV}.} 
Removing condition \ref{cond:unimodal-SIIURV} (unimodality and a bound on the mass assigned on the mode) would activate a lower bound on the sample complexity (Observation 1.3 from \cite{daskalakis2013learningSIIRV}) that involves some dependence on the number of terms $n$; we aim for sample complexity independent from $n$. The terms considered in the lower bound all have zero as a mode (essentially satisfying condition \ref{cond:bounded-modes-SIIURV} in the case of multimodal distributions) and all of their moments are upper bounded by a sequence of values (stronger than condition \ref{cond:bounded-centered-SIIURV}). They are, however, not unimodal and they assign almost all of their mass to zero (condition \ref{cond:unimodal-SIIURV} does not hold). Waiving condition \ref{cond:bounded-modes-SIIURV} enables one to form a family which does not have a sparse cover, even when the sums only have a single term. In particular, we can consider a sequence of arbitrarily shifted Bernoulli distributions with parameter $1/2$, each of which has a distance at least equal to $1/2$ from any other distribution of the sequence. Moreover, the aforementioned sequence does not violate conditions \ref{cond:unimodal-SIIURV} or \ref{cond:bounded-centered-SIIURV}. Finally, condition \ref{cond:bounded-centered-SIIURV} is important, since it rules out, for example, the case of $\calD$ containing all Geometric distributions (enabling probability of success arbitrarily close to $0$). In this case, there is some constant $\eps>0$ such that we may consider an infinite sequence of geometric distributions with diminishing success probabilities $p_n=2^{-n}$ with pairwise statistical distance at least $\eps$. The degree of the moment we assume to be bounded is $4$, which is useful in the dense case, to establish the rate of convergence of the sum to a discretized Gaussian distribution; importantly, the degree is constant.
We get the following result.

\begin{theorem}
[Learning]
\label{theorem:learning-SIIURVs-main}
Set $n \in \nats$ and $\calD$ some family of distributions satisfying Assumption \ref{assumption:SIIURV}.
Let $\eps, \delta \in (0,1)$ and $X$ be an unknown $\calD$-\textnormal{SIIRV} of order $n$. 
There exists an algorithm (Figure \ref{algorithm:learning-SIIURVs}) with the following properties: Given $n, \eps, \delta, L, B, \gamma$ and sample access to $X$, the algorithm uses $m = O (\frac{1}{\eps^2}\cdot \log(1/\delta) ) + O(\poly(B,1/\gamma,1/\eps) \cdot \log(L) )$ samples from $X$ and, in time $
\poly (m, L^{\poly(B,1/\gamma, 1/\eps)}  )\,,$
outputs 
a (succint description of a) distribution 
$\wt{X}$ 
with
$\tv(X, \wt{X}) \leq \eps$, with probability 
$1-\delta$.
\end{theorem}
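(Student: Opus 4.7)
The plan is a sparse/dense dichotomy on the variance of $X$, translating the strategy from the bounded-support SIIRV literature to our infinite-support setting, and then running a hypothesis-selection tournament over an explicit cover. The key quantitative consequence of Assumption~\ref{assumption:SIIURV} is a per-term variance lower bound: since the mode of each $X_i$ carries mass at most $1-\gamma$, unimodality forces at least $\gamma$ mass to sit at integer distance $\geq 1$ from the mode, so $\Var(X_i)\geq \gamma(1-\gamma)$ and $\Var(X)\geq n'\gamma(1-\gamma)$. This couples $\Var(X)$ with the effective number of terms $n'$, making a variance-based split meaningful.

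I would build an $\eps$-cover $\calC_\eps$ of $\calD$-SIIURVs of order $n$ in two parts. In the \emph{dense part}, when $\Var(X)\geq \tau=\poly(B,1/\gamma,1/\eps)$, a quantitative discrete Central Limit Theorem of the form $\tv(X,\Dgauss(\E[X],\Var(X)))\lesssim (\sum_i \E[(X_i-\E[X_i])^4])^{1/2}/\Var(X)^{3/2}$ shows that $X$ is $\eps/3$-close to some discretized Gaussian; an $\eps$-net over the parameter pairs $(\mu,\sigma^2)$ living in the feasible range $[-O(nL),O(nL)]\times[0,nB]$ yields $\poly(n,L,1/\eps)$ candidates. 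In the \emph{sparse part}, the per-term variance bound forces $n'\leq n^\star=\poly(B,1/\gamma,1/\eps)$ whenever total variance is below $\tau$; a fourth-moment Chebyshev argument on the sum then concentrates $X$, up to $\eps/4$ mass, on a window $W$ around its mean of width $\poly(B,1/\gamma,1/\eps)$ that is \emph{independent} of $L$. Enumerating window locations (at most $O(nL)$ of them) and then the restricted pmf on each window (a cover of size $\eps^{-\poly(B,1/\gamma,1/\eps)}$) produces a sparse cover of total size $|\calC_\eps|=L^{\poly(B,1/\gamma,1/\eps)}$. Running a Scheff\'e/tournament procedure \citep{daskalakis2014faster} over $\calC_\eps$ outputs a distribution $O(\eps)$-close to $X$ using $O(\log|\calC_\eps|/\eps^2)+O(\log(1/\delta)/\eps^2)=O(\poly(B,1/\gamma,1/\eps)\cdot\log L)+O(\log(1/\delta)/\eps^2)$ samples, matching the claimed bound; the time bound $\poly(m, L^{\poly(B,1/\gamma,1/\eps)})$ comes directly from the tournament's $\tilde O(|\calC_\eps|/\eps^2)$ time.

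The main obstacle is the dense-case CLT: we need a discrete Berry--Esseen-type bound in \emph{total variation} (not merely Kolmogorov) against a discretized Gaussian, using only a fourth central moment hypothesis and allowing each summand to have arbitrary unimodal shape with mode anywhere in the common length-$L$ interval. Translation invariance of TV distance lets us shift each $X_i$ to place its mode near $0$ before applying the bound, but the local-CLT-type smoothness ingredient implicit in such a TV estimate is delicate; this is precisely what the mode-mass-at-most-$1-\gamma$ condition of Assumption~\ref{assumption:SIIURV} supplies, preventing any single summand from being concentrated too heavily on a thin sub-lattice and providing the anti-concentration needed for a quantitative TV convergence rate. A secondary calibration is choosing $\tau$ so that the dense bound and the Chebyshev-window width each contribute at most $\eps/3$ error, so that the two branches overlap cleanly at the boundary.
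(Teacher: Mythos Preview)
Your sparse/dense dichotomy and the role of Assumption~\ref{assumption:SIIURV} are exactly what the paper uses, and your sparse-case analysis (bounding $n'\le n^\star=\poly(B,1/\gamma,1/\eps)$ and then covering distributions on a short window) matches Theorem~\ref{theorem:structural-SIIURV}. The gap is in your \emph{dense} case.

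You propose to enumerate a net of discretized Gaussians with $(\mu,\sigma^2)\in[-O(nL),O(nL)]\times[0,nB]$ and include these $\poly(n,L,1/\eps)$ candidates in $\calC_\eps$, then tournament over everything. But this makes $\log|\calC_\eps|$ depend on $\log n$, and the tournament running time polynomial in $n$; both violate the theorem, whose sample bound $m=O(\frac{1}{\eps^2}\log(1/\delta))+O(\poly(B,1/\gamma,1/\eps)\log L)$ and time bound $\poly(m,L^{\poly(B,1/\gamma,1/\eps)})$ are explicitly independent of $n$. (In your final accounting you wrote $\log|\calC_\eps|=\poly\cdot\log L$, silently dropping the $\poly(n)$ dense contribution you had just introduced.) The paper sidesteps this by \emph{not} covering the Gaussians at all: in the dense branch it simply estimates $\widehat\mu,\widehat\sigma^2$ from $O(\log(1/\delta)/\eps^2)$ samples to relative accuracy $\eps$ (Chebyshev for the mean; for the variance one checks that the excess kurtosis of $X$ is $O(1)$ under Assumption~\ref{assumption:SIIURV} when $n'$ is large), outputs $H_D=\Dgauss(\widehat\mu,\widehat\sigma^2)$, and only then runs a single \textsc{SelectHypothesis} between $H_D$ and the sparse-tournament winner $H_S$. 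This is what buys $n$-independence.

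A second, smaller point: the TV-distance CLT you invoke is not the fourth-moment ratio you wrote. The paper uses the Chen--Goldstein--Shao bound (Lemma~\ref{lemma:discr-gaussian-approx}), which has a third-moment term $\beta/\sigma^3$ \emph{and} a separate shift-distance term $\delta=\sup_i\tv(X-X_i,X-X_i+1)$. The latter is precisely where the mode-mass condition enters: unimodality makes $\tv(X_i,X_i+1)=\Pr[X_i=M_i]\le 1-\gamma$ by a telescoping computation, and then Lemma~\ref{lemma:dshift-sum} turns this into $\delta=O(1/\sqrt{n'\gamma})$. Your informal ``anti-concentration / local-CLT smoothness'' remark is pointing at the right phenomenon, but the actual bookkeeping goes through this shift-distance mechanism rather than a single moment inequality.
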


Theorem \ref{theorem:learning-SIIURVs-main} is based on a common technique used in problems related to SIIRVs, which uses quantitative versions of the Central Limit Theorem (like Lemma \ref{lemma:discr-gaussian-approx} of \cite{chen2010normal}) to reduce the learning problem into two sub-problems; covering $\calD$ in total variation distance and estimating the variance and expectation of the unknown SIIRV.

\begin{theorem}
[Structure of SIIURVs]
\label{theorem:structural-SIIURV}
Set $n \in \nats$ and $\calD$ some family of distributions satisfying Assumption \ref{assumption:SIIURV}. For any $\eps > 0$, and any $\calD$-\textnormal{SIIRV} $X$ of order $n$, there exists some $Y$ such that $\tv(X,Y) \leq \eps$ and either (i) $Y$ is a random variable among $L^{\poly(B,1/\gamma,1/\eps)}$ candidates that are independent from the particular $X$ (sparse form) or (ii) $Y$ is a discretized Gaussian random variable with $\E[X] = \E[Y]$ and $\Var(X)=\Var(Y)$ (dense form).
\end{theorem}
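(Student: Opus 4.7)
The plan is to establish the dichotomy by partitioning cases according to the number $n' \leq n$ of summands in $X = \sum_{i \in [n']} X_i$ against a threshold $\ncrit := C \cdot B^{3/2}/(\eps^2 \gamma^3) = \poly(B, 1/\gamma, 1/\eps)$, where $C$ is a sufficiently large absolute constant: sums with $n' \geq \ncrit$ will fall into the dense regime and those with $n' < \ncrit$ into the sparse regime. The crucial preliminary observation is that Assumption~\ref{assumption:SIIURV}(\ref{cond:unimodal-SIIURV}) upgrades into a uniform variance \emph{lower} bound $\Var(X_i) \geq \gamma/4$ for every single term: at most one integer can lie within distance strictly less than $1/2$ of $\E X_i$, so $\Pr[|X_i - \E X_i| \geq 1/2] \geq 1 - \max_k \Pr[X_i = k] \geq \gamma$, whence $\Var(X_i) \geq (1/4)\gamma$. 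This is exactly what converts ``$n'$ large'' into ``$\Var(X)$ large'' and drives the split.

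\textbf{Dense regime ($n' \geq \ncrit$).} The per-term variance lower bound gives $\sigma^2 := \Var(X) \geq n'\gamma/4$, while Jensen's inequality upgrades the fourth-moment bound to $\E|X_i - \E X_i|^3 \leq B^{3/4}$. Plugging into the quantitative CLT in total variation for integer-valued sums against the discretized Gaussian (Lemma~\ref{lemma:discr-gaussian-approx} of \cite{chen2010normal}), one obtains $\tv\bigl(X, \Dgauss(\E X, \sigma^2)\bigr) \leq O\bigl(B^{3/4}/(\sqrt{n'}\gamma^{3/2})\bigr) + O(1/\sqrt{n'\gamma})$, which is at most $\eps$ once $C$ is chosen large enough. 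Taking $Y := \Dgauss(\E X, \Var X)$ settles the dense conclusion.

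\textbf{Sparse regime ($n' < \ncrit$).} I would build a fixed (independent of $X$) cover $\calN$ of $\calD$ at TV-scale $\eps/(3\ncrit)$ and take the $\ncrit$-fold Cartesian product (padding short sums with point masses at $0$) to cover all $\calD$-SIIRVs of order $< \ncrit$. For a single $D \in \calD$ with $\sigma_D \leq B^{1/4}$ (from $\sigma_D^4 \leq \E[(W-\mu_D)^4] \leq B$): (i) the classical Johnson--Rogers unimodal mean-mode inequality $|\mu_D - \mode_D| \leq \sqrt{3}\,\sigma_D$ combined with $\mode_D \in [-L,L]$ localizes $\mu_D$ to an interval of length $O(L)$; (ii) the fourth-moment Markov inequality $\Pr[|W - \mu_D| \geq t] \leq B/t^4$ with $t = (3B\ncrit/\eps)^{1/4} = \poly(B, 1/\gamma, 1/\eps)$ lets one truncate $D$ to $[\mu_D - t, \mu_D + t]$ at TV cost $\leq \eps/(3\ncrit)$; (iii) rounding the truncated pmf on an $\eps/(3\ncrit t)$-precision grid costs another $\eps/(3\ncrit)$ in TV and yields at most $(\poly(B,1/\gamma,1/\eps))^{O(t)} = 2^{\poly(B,1/\gamma,1/\eps)}$ pmfs per window center. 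Combining, $|\calN| \leq O(L) \cdot 2^{\poly(B,1/\gamma,1/\eps)} \leq L^{\poly(B,1/\gamma,1/\eps)}$, and the $\ncrit$-fold product contains at most $|\calN|^{\ncrit} = L^{\poly(B,1/\gamma,1/\eps)}$ candidate sums. Subadditivity of total variation under convolution then gives $\tv(X, Y) \leq \ncrit \cdot (2\eps/(3\ncrit)) \leq \eps$ for some $Y$ in the cover.

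\textbf{Anticipated main obstacle.} The delicate point is the sparse-case bookkeeping: the truncation window length (not merely its position) must be $\poly(B, 1/\gamma, 1/\eps)$ \emph{independently} of the specific $D \in \calD$, which is exactly what the unimodal mean-mode inequality supplies; any weaker handle on $|\mu_D - \mode_D|$ would push $\sigma_D$ into the exponent of the per-term cover size and destroy the target $L^{\poly(\cdot)}$ bound. The dense regime is comparatively mechanical: once the per-term variance lower bound $\gamma/4$ is in hand, the quantitative discretized CLT is a direct application.
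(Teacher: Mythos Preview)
Your overall architecture matches the paper's: split at a threshold $\ncrit=\poly(B,1/\gamma,1/\eps)$, invoke the Chen--Goldstein--Shao discretized-Gaussian CLT in the dense regime, and in the sparse regime cover each term and take products. Your sparse-case bookkeeping is organised slightly differently from the paper (you build a TV-cover of single distributions $D\in\calD$ and take an $\ncrit$-fold product, whereas the paper truncates each term around its mode and then quantises the convolved support directly), but the ingredients---Johnson--Rogers mean/mode gap, fourth-moment Markov truncation, pointwise rounding---and the final count $L^{\poly(B,1/\gamma,1/\eps)}$ are the same.

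There is, however, a genuine gap in your dense-case argument. Lemma~\ref{lemma:discr-gaussian-approx} reads
\[
\tv\bigl(X,\Dgauss(\mu,\sigma^2)\bigr)\;\le\; O(1/\sigma)+O(\delta)+O(\beta/\sigma^3)+O(\delta\beta/\sigma^2),
\]
where $\delta=\max_i \tv(X-X_i,\,X-X_i+1)$ is the shift distance. Your displayed bound accounts only for the $1/\sigma$ and $\beta/\sigma^3$ terms; the two $\delta$-terms are absent, and without controlling $\delta$ the lemma gives nothing (for integer-valued sums one cannot get TV-closeness to a discretized Gaussian from moment data alone---Berry--Esseen only yields Kolmogorov distance). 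The fix, which the paper carries out and which uses condition~(1b) a second time, is: by unimodality the sum $\sum_x |\Pr[X_i=x]-\Pr[X_i=x-1]|$ telescopes to $2\Pr[X_i=M_i]$, so $\tv(X_i,X_i+1)=\Pr[X_i=M_i]\le 1-\gamma$; feeding this into the Mattner--Roos-type bound (Lemma~\ref{lemma:dshift-sum}) gives $\delta\le O(1/\sqrt{n'\gamma})$, after which all four terms are $O(\eps)$ for $n'\ge\ncrit$. So your assessment that ``the dense regime is comparatively mechanical once the per-term variance lower bound is in hand'' understates things: the mode-mass hypothesis does double duty---it yields both $\Var(X_i)\ge\gamma/4$ \emph{and} the shift-distance control---and it is precisely the second use that you omitted.
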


Hence, the learner first runs two different learning procedures, corresponding to the sparse and dense forms of Theorem \ref{theorem:structural-SIIURV}. For the sparse case, it runs a tournament over the possible candidates and in the dense one, it computes the parameters of the (potentially) nearby discretized Gaussian. From the two procedures, two hypotheses are obtained and finally hypothesis testing is performed in order to select the correct one. Our focus on the Gaussian approximation is the reason why we assumed that condition \eqref{cond:unimodal-SIIURV} holds.
In principle, there might be ways to relax Assumption \ref{assumption:SIIURV} and learn SIIURVs (independently from $n$) with different techniques, but we are particularly interested in using the Gaussian approximation in the dense case (compare, e.g., with the approach of \cite{daskalakis2013learningSIIRV}), since it will be pivotal to our main technical and conceptual contribution outlined in the following section.

\section{Structure and Proper Learning of SIIERVs}
\label{section:proper}
In the seminal work of \cite{daskalakis2015sparse}, it was shown that the class of Poisson Binomial Distributions (i.e., sums of independent indicator random variables) has a structure with similar properties as the one presented in Theorem \ref{theorem:structural-SIIURV}. Crucially, however, their results had an additional property. In both sparse and dense cases, the candidate distributions (i.e., the representatives of the class) were Poisson Binomial Distributions themselves (namely, the class admitted \textit{proper sparse covers}). The result unlocked the possibility of \textit{proper} learning for PBDs (see \cite{daskalakis2015learningPBD,diakonikolas2016properly}), with sample complexity independent from the number of terms. Besides the result of \cite{daskalakis2015sparse}, to the best of our knowledge, there are \emph{no further known results for properly covering sums of integer-valued (univariate) random variables, with terms in some structured-parametric family of distributions}.\footnote{\cite{diakonikolas2016optimal} provide proper sparse covers for the class of $m$-SIIRVs; nevertheless, this family does not have the ``structure'' we focus on this work since it is nonparametric.}

We provide significantly general results for the structure of sums of integer (and unbounded) random variables, under the condition that the terms belong in any fixed exponential family that satisfies a set of assumptions. Our results imply proper learning with sample complexity independent from the number of terms. We consider exponential families supported on the whole (unbounded) set of integer numbers, although our results could be extended to exponential families supported on some subset of $\ints$, like $\natszero$.

\subsection{Preliminaries and Definitions}
 
\noindent\textbf{Exponential Families.} For $k\in\natszero$, $\calA\subseteq\reals^k$ and $\vec T:\ints\to\reals^k$, we denote with $\calE_{\vec T}(\calA)$ the \textbf{exponential family} with \textbf{sufficient statistics} $\vec T$ and \textbf{parameter space} $\calA$. If $W\sim\calE_{\vec T}(\vec a)$ for some $\vec a\in\calA$, then $
    \Pr[W=x] \propto \exp(-\vec a\cdot \vec T(x))$ for any $x\in\ints$ . We will use $\Pr_{\vec a}[W=x]$ (similarly $\E_{\vec a}[W]$ and $\Var_{\vec a}(W)$ for expectation and variance correspondingly) to refer to the probability that $W=x$ given that $W\sim\calE_{\vec T}(\vec a)$, whenever it is clear that the distribution of $W$ belongs in $\calE_{\vec T}(\calA)$. \footnote{
In general, an exponential family $\calE$ over $\ints$ is also defined in terms of some carrier measure  $h : \ints \mapsto \reals_+$ so that if $W \sim \calE$, then $\Pr[W=x]\propto h(x)\cdot\exp(-\vec a\cdot \vec T(x))$ and denote $W\sim\calE_{\vec T,h}(\vec a)$. We can reduce this setting to $h \equiv 1$ by considering $\calA'=\calA\times\{1\}$ and $\vec T' = (\vec T, -\log_e(h(x)))$.}
\smallskip

\noindent\textbf{SIIERVs.} We will consider distributions of sums of the form $X=\sum_{i\in[n']}X_i$, where $n'\le n$, $(X_i)_i$ independent and $X_i\sim\calE_{\vec T}(\vec a_i)$ with $\vec a_i\in\calA$. We call this class of distributions as \textbf{$\calE_{\vec T}(\calA)$-\siirvs\ of order $n$} or simply \textbf{\siiervs}\ when $n,\calA$ and $\vec T$ are clear by the context.
\smallskip

\noindent\textbf{Set Operators.} We say that $\setoperator$ is an extensive set operator on $\reals^k$ if for any set $\calA\subseteq\reals^k$, we have that $\calA\subseteq \setoperator\calA \subseteq \reals^k$. We use the following extensive set operators:
For the definitions of the 
\textbf{convex hull} $\chull$ and
the \textbf{conical hull} $\conehull$ operators, we refer to \eqref{eq:convex} and \eqref{eq:cone}. The \textbf{$\varrho$-conical hull} operator $  
        \varrho\-\conehull \calA= \calA \cup (\conehull\calA \setminus \ball_\varrho(\vec 0))$, i.e., the  $\varrho\-\conehull$ operator inserts in $\calA$ all points of the conical hull of $\calA$ with norm at least $\varrho$. For additional notation, we refer to the Appendix \ref{appendix:notation}.

\subsection{Main results}

Our results for $\calE_{\vec T}(\calA)$-SIIRVs hold under the following set of assumptions about $\calE_{\vec T}(\calA)$.

\begin{assumption}
\label{assumption:proper}
Let $k \in \nats$, $\vec T:\ints\to\reals^k$ and $\calA\subseteq\reals^k$. 
Denote $\calA_{\varrho} = \varrho\-\conehull\calA$ and $\overline{\calA}_{\varrho} = \chull \calA_{\varrho}$ for $\varrho > 0$.
We assume that there exists some constant $\varrho>0$ so that
the exponential family $\calE_{\vec T}(\calA_{\varrho})$
is well-defined 
and the following hold:
\begin{enumerate}
    \item\label{assumption:geometry} The parameter space $\calA$ is closed and its conical hull $\conehull\calA$ is a polyhedral cone.

    \item\label{assumption:unimodal} Every distribution in $\calE_{\vec T}(\calAvarrho)$ is unimodal.
    
    \item\label{assumption:bounded-modes} There exists some constant $L>0$ such that every mode of every distribution in $\calE_{\vec T}(\calAvarrho)$ lies within the interval $[-L,L]$.
    
    \item\label{assumption:bounded-moments} There exists some constant $B>0$ such that every distribution in $\calE_{\vec T}(\calA_{\varrho})$ has fourth central moment that is upper bounded by $B$, i.e.,
    $\E_{\vec a} \left [ \left|W-\E_{\vec a}[W]\right|^4 \right] \le B, \text{ for any }\vec a\in\calA_{\varrho}\,.$
    
    \item\label{assumption:bounded-max-eval} There exists some constant $\Lambda>0$ such that $\Cov_{\vec a}(\vec T(W)) \preceq \Lambda \cdot I_k, \text{ for any }\vec a\in \overline{\calA}_{\varrho}\,.$
    
    \item\label{assumption:connectivity} The parameter space $\calA$ is path-connected. 

    \item\label{assumption:variance-lower-bound} There exists some constant $\gamma>0$ such that $\Var_{\vec a}(W) \ge \gamma, \text{ for any }\vec a\in\calA\,.$
    
\end{enumerate}
\end{assumption}

Assumptions \eqref{assumption:unimodal}-\eqref{assumption:bounded-moments} correspond to conditions in Assumption \ref{assumption:SIIURV}, but we make some additional ones. In particular, variants of assumption \eqref{assumption:bounded-max-eval} have been used in the past (see \cite{diakonikolas21IsingRobust}), e.g., for parameter estimation in exponential families and essentially ensure that parameter vectors close in Euclidean distance correspond to distributions close in statistical distance. Assumptions \eqref{assumption:connectivity} and \eqref{assumption:variance-lower-bound} are important only in the case that the number of terms $n$ in the sum is large. Assumption \eqref{assumption:variance-lower-bound} ensures that as the number of terms increases, the distribution approaches its limit (i.e., the discretized Gaussian distribution), with some constant rate. In fact, the variance lower bound is a substitute of particularly subtle but specialized technical tools that can be used to discard low variance terms in some specific cases (e.g., see the so called massage step of \cite{daskalakis2015sparse} which refers to Poisson Binomial Distributions and our own Appendix \ref{appendix:pnbds}, which refers to sums of independent geometric RVs, namely, Poisson Negative Binomial Distributions).  Finally, assumption \eqref{assumption:connectivity} implies some kind of continuity with respect to the parameter vector which is important for proper learning so that the behavior of a sum of a large number of terms can be described by a constant number of parameter vectors (in our case exactly one). For a discussion on the verification of the assumptions, see Appendix \ref{appendix:verification}.

We stress that assumptions \eqref{assumption:unimodal}-\eqref{assumption:bounded-max-eval} are imposed on a slightly expanded exponential family (by extending the parameter space $\calA$ to $\calAvarrho$). In fact, our analysis involves the study of the influence that changes in the parameter vector have on the corresponding distribution, given some properties generated by our assumptions. Extending the space on which such properties hold enables the study of a wider range of changes of the parameter vector. On the contrary, assumption \eqref{assumption:variance-lower-bound} is imposed on $\calA$ (and not its extended version). It essentially excludes some parameter vectors with large norms, but, in general, it \textit{does not imply} that $\calA$ is bounded. In general, $\calA$ should be though of as the space of parameters for the input and $\calAvarrho (\supseteq \calA)$ the space of parameters for the output.

\begin{theorem}
[Structure of SIIERVs] 
\label{theorem:covering-siiervs}
Set $n\in\nats$ and $\calE_{\vec T}(\calA)$ some exponential family satisfying Assumption \ref{assumption:proper}. Let $\calAvarrho = \varrho\-\conehull\calA$. There exists some value $\theta = \theta(\calA, \vec T)>0$ 
such that, for any $\eps \in (0,1)$ and any $\calE_{\vec T}(\calA)$-\textnormal{SIIRV} $X$ of order $n$, there exists some random variable $Y$ such that $\tv(X,Y)\le \eps$ and either (i) Y is an $\calE_{\vec T}(\calAvarrho)$-\textnormal{SIIRV} among $ ( \frac{\varrho \cdot \sqrt{\Lambda}}{\descriptivityparam}  )^{k\cdot \wt{O} (\frac{1}{\eps^2} )\cdot \poly (B, L, \frac{1}{\gamma}  )}$ candidates (sparse form) or (ii) Y is a sum of i.i.d. $\calE_{\vec T}(\calAvarrho)$-random variables among $ ( n^2\cdot \poly (B, \frac{1}{\gamma}  ) \cdot O( \frac{\varrho\cdot\sqrt{\Lambda}}{\descriptivityparam\cdot \eps}  )  )^k$ candidates (dense form).
\end{theorem}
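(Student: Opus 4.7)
The plan is to imitate the structure theorem for SIIURVs (Theorem \ref{theorem:structural-SIIURV}), with the new difficulty that both branches of the dichotomy must output SIIERVs themselves with parameters in $\calAvarrho$. I would split on the total variance $\sigma^2 = \sum_i \Var(X_i)$; since Assumption \eqref{assumption:variance-lower-bound} forces $\sigma^2 \ge n\gamma$, this is effectively a split on $n$: call $X$ \emph{sparse} if $n \le n_0 := \poly(B, L, 1/\gamma, 1/\eps)$, otherwise \emph{dense}.

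\textbf{Sparse case.} Here $X$ is a sum of at most $n_0$ exponential-family terms, so it suffices to enumerate each parameter $\vec a_i \in \calA$ over a fixed cover. The obstacle is that $\calA$ may be unbounded, which is precisely the role of Theorem \ref{theorem:projection}: every parameter in $\calA$ can be moved into a bounded subset of $\calAvarrho$ while perturbing the corresponding distribution negligibly in TV, leveraging the $\varrho$-$\conehull$ enlargement together with the polyhedral structure of $\conehull \calA$ (Assumption \eqref{assumption:geometry}). On that bounded region I place a Euclidean $\descriptivityparam$-net; Assumption \eqref{assumption:bounded-max-eval} together with the standard log-partition / KL-Pinsker inequality (applied along the segment connecting two parameters, which lies in $\overline{\calA}_\varrho$) converts $\|\vec a - \vec a'\| \le \descriptivityparam$ into $\tv(\calE_{\vec T}(\vec a), \calE_{\vec T}(\vec a')) \le O(\sqrt{\Lambda}\cdot\descriptivityparam)$. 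Taking the product over $n_0$ slots yields the advertised $(\varrho\sqrt{\Lambda}/\descriptivityparam)^{k\cdot\wt O(1/\eps^2)\cdot\poly(B,L,1/\gamma)}$ cover, with every candidate being a genuine $\calE_{\vec T}(\calAvarrho)$-SIIRV.

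\textbf{Dense case.} First invoke the quantitative CLT (Lemma \ref{lemma:discr-gaussian-approx}) to get $\tv(X, \dgaussvar(\mu, \sigma^2)) \le \eps/3$ where $\mu = \E[X]$ and $\sigma^2 = \Var(X)$; the fourth-moment bound from Assumption \eqref{assumption:bounded-moments} and $\sigma^2 \ge n \gamma$ from Assumption \eqref{assumption:variance-lower-bound} make the Berry--Esseen-type error small enough. The delicate step is to then realize this discretized Gaussian, up to another $\eps/3$ slack, as a sum of $n'$ i.i.d. copies of some $W \sim \calE_{\vec T}(\vec a^\star)$ with $\vec a^\star \in \calAvarrho$. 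Path-connectedness of $\calA$ (Assumption \eqref{assumption:connectivity}) combined with continuity of the moment map $\vec a \mapsto (\E_{\vec a}[W], \Var_{\vec a}(W))$ (which follows from Assumption \eqref{assumption:bounded-max-eval}) produces a continuum of attainable $(n' \E_{\vec a^\star}[W], n'\Var_{\vec a^\star}(W))$ pairs. To hit the target, I would discretize the bounded feasible region inside $\calAvarrho$ at resolution $O(1/(n\sigma))$ and let $n'$ range over a $\Theta(n)$ window, so that residual mismatches in mean and variance are absorbed by a second application of the CLT; this yields the $(n^2 \poly(B, 1/\gamma)\cdot O(\varrho\sqrt{\Lambda}/(\descriptivityparam\eps)))^k$ bound.

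\textbf{Main obstacle.} The hardest ingredient is the unbounded-to-bounded reduction of the parameter space, i.e., Theorem \ref{theorem:projection}: this step genuinely uses the polyhedral cone geometry of $\conehull \calA$ and makes the enlargement $\calA \subseteq \calAvarrho$ both necessary and affordable. Without it, the sparse case cannot produce a finite cover, and in the dense case one cannot guarantee that the i.i.d.\ moment-matching parameter $\vec a^\star$ lives in a compact region with controlled modes and moments. A secondary hurdle is the dense-case moment-matching itself: path-connectedness only furnishes \emph{existence} of a suitable $\vec a^\star$, so some care is needed to ensure that the discretized grid one places on $\calAvarrho$ contains a parameter close enough to the target pair $(\mu/n', \sigma^2/n')$ despite the freedom in choosing $n'$.
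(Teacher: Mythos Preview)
Your sparse case is correct and matches the paper exactly: Theorem~\ref{theorem:single-term} (built on Theorem~\ref{theorem:projection}) bounds the parameter space inside $\calAvarrho$, and the KL--Pinsker/Euclidean net handles discretization term by term.

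In the dense case you have the right skeleton (CLT, then realize the Gaussian as an i.i.d.\ exponential-family sum, then discretize), but the moment-matching step has a real gap. You assert that path-connectedness of $\calA$ plus continuity of $\vec a \mapsto (\E_{\vec a}[W],\Var_{\vec a}(W))$ ``furnishes existence of a suitable $\vec a^\star$''; however, connectedness of the image in $\reals^2$ does not by itself place the specific target $(\mu/n',\sigma^2/n')$ inside it for any $n'$. The paper resolves this by running a one-dimensional intermediate-value argument on the \emph{ratio} $\Var_{\vec a}(W)/\E_{\vec a}[W]$: after splitting carefully on the sign of $\E[X]$ and of the individual $\E[X_i]$, one shows that $\Var(X)/\E[X]$ always lies between ratios achieved at two of the input parameters $\vec a_i\in\calA$ (or is reached along a path toward a zero-mean parameter), so IVT along a path in $\calA$ yields $\vec b'\in\calA$ with $\Var_{\vec b'}(W)/\E_{\vec b'}[W]=\Var(X)/\E[X]$. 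Then the single choice $m=\lceil \E[X]/\E_{\vec b'}[W]\rceil=\lceil \Var(X)/\Var_{\vec b'}(W)\rceil$ matches \emph{both} moments with additive error $|\E_{\vec b'}[W]|\le B^{1/4}+L$ and $\Var_{\vec b'}(W)\le\sqrt{B}$, i.e.\ $O(1)$ independent of $n$; this $n$-independence is exactly what makes the two discretized Gaussians $O(\eps)$-close via Lemma~\ref{lemma:tv-gaussians}. Your brute-force ``range $n'$ over a $\Theta(n)$ window and grid over $\calAvarrho$'' only produces a cover once existence of a good $(\vec b',m)$ is established, and that existence is the ratio trick, not a consequence of connectedness alone.

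A smaller omission: Lemma~\ref{lemma:discr-gaussian-approx} also needs a bound on the shift distance $\delta=\max_i\tv(X-X_i,X-X_i+1)$. The paper obtains this by showing $\Pr_{\vec a}[W=M_{\vec a}]\le 1-\Omega(\poly(\gamma/B))$ via Lemma~\ref{lemma:vanishing-deviation} combined with the variance lower bound~\eqref{assumption:variance-lower-bound}; this is where unimodality~\eqref{assumption:unimodal} and the moment bound~\eqref{assumption:bounded-moments} enter the dense case beyond the Berry--Esseen term, and it is not implied by $\sigma^2\ge n\gamma$ alone.
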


\noindent The role of the quantity $\descriptivityparam$ is thoroughly discussed in Section \ref{section:technical-1}. Roughly, the parameter space $\calA$ and the trajectory of the sufficient statistics $\vec T(x)$, $x\in\ints$ are associated with a finite number of polyhedral cones which are important parts of the structure of the family $\calE_{\vec T}(\calA)$. The value of $\descriptivityparam$ depends on the geometry of the specified polyhedral cones.

The structural result implies a proper learning algorithm (Figure \ref{algorithm:learning-SIIERVs}) which roughly applies the tournament method and hypothesis selection routines over the cover both in the sparse and (after an additional important step) in the dense case (see Proposition \ref{proposition:hypothesis-de}). For the learning result, we assume access to some sample and evaluation oracles (see Appendix \ref{subsection:learning-SIIERVs}). Such access is needed in order to apply hypothesis testing over our covers in a formal sense. Let $D$ be a distribution over $\ints$. Consider the sample oracle $\mathrm{EX}(D)$ that, when invoked, returns a sample with law $D$ and the approximate evaluation oracle $\mathrm{EVAL}_D(\beta)$ that, when invoked with query $x \in \ints$, returns a value $q$ that satisfies $D(x)/(1+\beta) \leq q \leq (1+\beta)D(x)$ for some $\beta > 0$ (this oracle is used in \cite{de2014learning}). Below, the relation between $\beta$ and the desired learning accuracy $\eps$ is provided by \cite{de2014learning}. 
Finally, assume that the cover of Theorem \ref{theorem:covering-siiervs} (the set of candidate distributions) of radius $\eps$ can be constructed in time $T_{\mathrm{c}} = T_{\mathrm{c}}(\calA, n, \eps, \varrho, L, B, \Lambda, \gamma, \theta, \vec T)$ (see Remark \ref{remark:runtime} for a discussion on the runtime). 

\begin{theorem}
[Learning SIIERVs]
\label{theorem:learning-SIIERVs-main}
Set $n\in\nats$ and $\calE_{\vec T}(\calA)$ some exponential family satisfying Assumption \ref{assumption:proper}. Let $\calAvarrho = \varrho\-\conehull\calA$. There exists 
$\theta = \theta(\calA, \vec T) > 0$
such that for any $\eps, \delta \in (0,1)$ there exists an algorithm (Figure \ref{algorithm:learning-SIIERVs}) with the following properties: Given $n, \eps, \delta, B,L,\Lambda,\gamma,\theta$ and (i) sample access to an unknown $\calE_{\vec T}(\calA)$-\textnormal{SIIRV} $X$ of order $n$, (ii) $\mathrm{EX}(\dgaussvar(\mu, \sigma^2))$ for any $\mu, \sigma^2$ and (iii) $\mathrm{EX}(D)$ and evaluation oracle access to $\mathrm{EVAL}_D(\beta)$ for any $D \in \calE_{\vec T}(\calA_\varrho)$ for some $\beta \geq 0$ with $(1+\beta)^2 \leq 1 + \eps/8$, the algorithm uses $    
m = O  (\frac{1}{\eps^2} \log(1/\delta)) + k \cdot \wt{O}(\frac{1}{\eps^2})\cdot \poly(B, \frac{1}{\gamma} ) \cdot \log ( \frac{\varrho\cdot \Lambda}{\descriptivityparam} )$ samples from $X$ and, in time $
\poly (m, 2^{k \cdot \wt{O}(1/\eps^2)\cdot \poly(B, L, 1/\gamma)}, n^k, (\varrho \Lambda/\theta)^k, T_{\mathrm{c}} )\,,$
outputs a (succint description of a) distribution $\wt{X}$ which satisfies
$\tv(X, \wt{X}) \leq \eps$, with probability $1-\delta$. Moreover, $\wt{X}$ is an $\calE_{\vec T}(\calA_\varrho)$-SIIRV of order $(\sqrt{B} / \gamma)\cdot n$. 
\end{theorem}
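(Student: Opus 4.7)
The plan is to reduce the learning problem to the structural dichotomy provided by Theorem \ref{theorem:covering-siiervs} and then combine the two resulting candidates via hypothesis selection. Concretely, the structural result guarantees that the unknown $X$ is $\eps/3$-close to either some sparse-form candidate in a cover $\calC_s$ of size $N_s = (\varrho\sqrt{\Lambda}/\theta)^{k\cdot\wt{O}(1/\eps^2)\cdot\poly(B,L,1/\gamma)}$, or to some dense-form candidate (a sum of i.i.d.\ $\calE_{\vec T}(\calAvarrho)$-terms) in a collection $\calC_d$ of size $N_d = (n^2\cdot\poly(B,1/\gamma)\cdot O(\varrho\sqrt{\Lambda}/(\theta\eps)))^k$. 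The algorithm computes one candidate $\wh X_s \in \calC_s$ and one candidate $\wh X_d \in \calC_d$ and then selects between them by a final Scheffé-style hypothesis selection, invoking Proposition \ref{proposition:hypothesis-de} via the $\mathrm{EVAL}$ and $\mathrm{EX}$ oracles for distributions in $\calE_{\vec T}(\calAvarrho)$; that final step consumes $O(\log(1/\delta)/\eps^2)$ samples from $X$ and contributes the leading additive term in the sample complexity.

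For the sparse hypothesis, I would run the tournament procedure of \cite{daskalakis2014faster} (suitably adapted to the approximate evaluation oracle as in \cite{de2014learning}) directly over $\calC_s$. The number of samples required is $O(\log N_s /\eps^2) = k\cdot\wt{O}(1/\eps^2)\cdot\poly(B,1/\gamma)\cdot\log(\varrho\Lambda/\theta)$, matching the second term of the stated bound, and the running time is $\poly(N_s)$ plus $T_c$ for building the cover. To obtain $\wh X_d$ I cannot afford to do the same, since $\log N_d$ carries a $\log n$ factor. Instead I would first estimate the mean $\mu=\E[X]$ and variance $\sigma^2=\Var(X)$ from $O(1/\eps^2)$ samples; the fourth-moment hypothesis \eqref{assumption:bounded-moments} (which by independence transfers to the sum) together with Chebyshev yields $(\wh\mu,\wh\sigma^2)$ accurate to the precision needed by the Gaussian approximation (Lemma \ref{lemma:discr-gaussian-approx}). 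Because the dense candidates are parameterized by a single vector $\vec a\in\calAvarrho$ and the number $m$ of i.i.d.\ terms (with $m=\Theta((\sqrt B/\gamma)\cdot n)$ forced by matching a variance in $[n\gamma,nB]$ using terms of variance at least $\gamma$ and at most $B$), I would restrict attention to those elements of $\calC_d$ whose induced mean and variance are consistent with $(\wh\mu,\wh\sigma^2)$, which by the continuity/connectivity Assumption \eqref{assumption:connectivity} is a discrete set of size $\poly(n,1/\eps)^k$, and then pick any consistent one as $\wh X_d$ without needing further samples from $X$.

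The main obstacle is controlling the running time of the dense step and ensuring the resulting $\wh X_d$ is provably $\eps$-close to $X$ whenever the dense branch of Theorem \ref{theorem:covering-siiervs} applies. The key observation is that, by Theorem \ref{theorem:covering-siiervs} (dense form) combined with the triangle inequality through the discretized Gaussian with parameters $(\wh\mu,\wh\sigma^2)$, any $\calE_{\vec T}(\calAvarrho)$-sum of $m$ i.i.d.\ terms whose mean/variance match $(\wh\mu,\wh\sigma^2)$ up to the accuracy allowed by Lemma \ref{lemma:discr-gaussian-approx} is automatically $O(\eps)$-close to $X$; hence a single consistent candidate suffices. Enumerating candidates from $\calC_d$ and checking consistency takes $\poly(n^k,(\varrho\Lambda/\theta)^k,T_c)$ time, absorbed in the stated runtime.

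Assembling the three pieces — estimate $(\wh\mu,\wh\sigma^2)$, run the sparse tournament to get $\wh X_s$, deterministically choose $\wh X_d$ from $\calC_d$ matching $(\wh\mu,\wh\sigma^2)$, and finally Scheffé-test $\wh X_s$ against $\wh X_d$ — yields total sample complexity $O(\log(1/\delta)/\eps^2) + k\cdot\wt{O}(1/\eps^2)\cdot\poly(B,1/\gamma)\cdot\log(\varrho\Lambda/\theta)$ and runtime $\poly(m,2^{k\wt{O}(1/\eps^2)\poly(B,L,1/\gamma)},n^k,(\varrho\Lambda/\theta)^k,T_c)$, as required. A standard union bound over the three failure events (mean/variance estimation, sparse tournament, final selection) each driven to probability $\delta/3$ accounts for the $\log(1/\delta)$ factors. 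The output is by construction an $\calE_{\vec T}(\calAvarrho)$-SIIRV of order $(\sqrt B/\gamma)\cdot n$, since both $\wh X_s$ (of order $\Theta(n)$ by Theorem \ref{theorem:covering-siiervs}) and $\wh X_d$ (an i.i.d.\ sum of $m=\Theta((\sqrt B/\gamma)n)$ terms) respect this bound.
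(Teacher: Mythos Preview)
Your overall architecture matches the paper's exactly: run a sparse learner (tournament over $\calC_s$), run a dense learner, and finish with a pairwise Scheffé selection (Proposition~\ref{proposition:hypothesis-de}); the sample complexity accounting and the reason the $\log n$ factor is avoided are also the same. The one substantive difference is in how you implement the dense learner.

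The paper, after estimating $(\wh\mu,\wh\sigma^2)$, does \emph{not} attempt to filter $\calC_d$ by moment consistency. Instead it treats the estimated discretized Gaussian $\dgaussvar(\wh\mu,\wh\sigma^2)$ as a surrogate target and runs the full tournament of Proposition~\ref{proposition:tournament} between this Gaussian and all of $\calC_d$, drawing samples from the Gaussian via the oracle $\mathrm{EX}(\dgaussvar(\mu,\sigma^2))$ rather than from $X$ (see Claim~\ref{claim:proper-learning-dense-SIIERVs}). This is why oracle~(ii) is assumed in the theorem statement. Your approach---pick any element of $\calC_d$ whose mean and variance match $(\wh\mu,\wh\sigma^2)$ to within the Gaussian-approximation tolerance---is also correct in principle: matching the variance forces the number of i.i.d.\ terms $m$ to be large (since $m\cdot\Var_{\vec b}(W)\approx\Var(X)\ge n'\gamma$ and $\Var_{\vec b}(W)\le\sqrt{B}$), so the selected candidate is itself close to a Gaussian, and the triangle inequality through $\dgaussvar(\wh\mu,\wh\sigma^2)$ closes the argument. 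What the paper's route buys is that it uses the assumed oracles directly: the tournament needs only $\mathrm{EX}$ and $\mathrm{EVAL}$ access to the candidates, whereas your consistency check requires computing $\E_{\vec b}[W]$ and $\Var_{\vec b}(W)$ for each $\vec b$ in the discretized parameter set, which is not one of the primitives and would need to be derived (e.g., by truncating the infinite sum using the tail bounds of Lemma~\ref{lemma:vanishing-deviation} and querying $\mathrm{EVAL}$ pointwise). This is doable but is an extra step you have not spelled out.

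One small inaccuracy: the sparse hypothesis $\wh X_s$ is not ``of order $\Theta(n)$'' in general; by Claim~\ref{claim:proper-learning-sparse-SIIERVs} it is an $\calE_{\vec T}(\calAvarrho)$-SIIRV of order $\ncrit=\poly(B,L,1/\gamma)/\eps^2$, which is independent of $n$. The stated order bound $(\sqrt{B}/\gamma)\cdot n$ on the output is driven by the dense branch.
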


In particular, for the dense case hypothesis, the learner runs in two steps. First, similarly to the learner of Theorem \ref{theorem:learning-SIIURVs-main}, it estimates the expectation and the variance of $X$ by $\wt{O}(1/\eps^2)$ samples, thereby specifying a discretized Gaussian that is close to $X$. However, as a second step, it runs the tournament hypothesis testing procedure between \emph{the estimated Gaussian} and the candidate distributions of the dense form. Importantly, the tournament selection does not need to use real samples from $X$, but, instead, it generates draws from the Gaussian. 

In the following sections, we analyze the structural result of Theorem \ref{theorem:covering-siiervs}, which is our main technical contribution. In Appendix \ref{appendix:pnbds}, we provide an example corollary of our methods for the case of Poisson Negative Binomial Distributions (Theorem \ref{theorem:main-cover-pnbds}).

\subsection{Sparsifying the Parameter Space of an Exponential Family}
\label{section:technical-1}
We first solve the proper covering problem in the simplest possible case of $n=1$, i.e., we provide sparse covers for any exponential family $\calE_{\vec T}(\calA)$ satisfying (some of the assumptions in) Assumption \ref{assumption:proper}. The following result constitutes the main building block of our analysis in the case of general $n$ (see Section \ref{section:technical-2}). The proof of this Theorem can be found in Appendix \ref{proof:discretization}.

\begin{theorem}
[Sparsifying the Parameter Space]
\label{theorem:single-term}
    Under assumptions \eqref{assumption:geometry}, \eqref{assumption:unimodal}, \eqref{assumption:bounded-modes}, \eqref{assumption:bounded-moments} and \eqref{assumption:bounded-max-eval}, there exists $\descriptivityparam = \descriptivityparam(\calA, \vec T)>0$, 
    so that for any $\eps\in(0,1)$, there exists $\calB \subseteq\varrho\-\conehull\calA$ with $|\calB|\le (\widetilde{O}(\frac{\sqrt{\Lambda}\cdot \varrho}{\eps} + \frac{\sqrt{\Lambda}}{\eps\cdot \descriptivityparam}  + \frac{\sqrt{\Lambda}}{\eps\cdot \descriptivityparam}\cdot \log(B) ) )^k$ so that, for any $\vec a\in\calA$,
    $
        \tv(\calE_{\vec T}(\vec a),\calE_{\vec T}(\vec b))\le \eps\,, \text{ for some }\vec b\in\calB\,.
    $
\end{theorem}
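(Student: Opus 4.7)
The plan is to split the parameter space $\calA$ into a bounded piece and an ``unbounded'' piece along the polyhedral conical hull $\conehull \calA$, handle each piece separately, and then combine the two covers. For the bounded piece $\calA \cap \ball_{\varrho}(\vec 0)$ (after potentially enlarging $\varrho$ by a constant independent of $\eps$), I would use a standard Euclidean grid argument based on assumption \eqref{assumption:bounded-max-eval}: since $\Cov_{\vec a}(\vec T(W)) \preceq \Lambda I_k$ on $\overline{\calA}_\varrho$, the Hessian of the log-partition function is bounded in operator norm by $\Lambda$, which gives $\kl(\calE_{\vec T}(\vec a) \,\|\, \calE_{\vec T}(\vec b)) \le \Lambda\,\|\vec a-\vec b\|^2/2$ along any segment contained in $\overline{\calA}_\varrho$ (which is convex by construction), and hence by Pinsker $\tv(\calE_{\vec T}(\vec a),\calE_{\vec T}(\vec b))\lesssim \sqrt{\Lambda}\,\|\vec a-\vec b\|$. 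An $\eps/\sqrt{\Lambda}$-net of $\ball_\varrho(\vec 0)\cap \overline{\calA}_\varrho$ then contributes $(\varrho\sqrt{\Lambda}/\eps)^k$ points to $\calB$, matching the first summand.

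For the unbounded piece, the key claim I would establish is that any $\vec a \in \calA$ with $\|\vec a\|>\varrho$ can be replaced, up to $\eps$ in TV, by some $\vec b \in \varrho\-\conehull\calA$ of norm at most $R := \wt O(\log B/\descriptivityparam)$ times geometric constants. The strategy is as follows. Assumptions \eqref{assumption:bounded-modes} and \eqref{assumption:bounded-moments} imply that the effective support of $\calE_{\vec T}(\vec a)$ is contained in $I_\eps := [-L-M,L+M]$ with $M = O((B/\eps)^{1/4})$, by Markov on the fourth central moment. Hence the distribution is, up to $\eps$ total variation error, determined by the finite restriction $\{\vec a\cdot \vec T(x)\}_{x\in I_\eps \cap \ints}$. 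Writing $\vec a = t\vec u$ with $\vec u$ a unit vector inside $\conehull\calA$, one observes that the mode $x^*(\vec u)$ does not move along the ray (it is the minimizer of $\vec u\cdot \vec T(\cdot)$ over $\ints$), and as $t$ grows the mass collapses onto $x^*(\vec u)$ at rate governed by the \emph{gap} $\min_{x\neq x^*(\vec u)} \vec u \cdot (\vec T(x)-\vec T(x^*(\vec u)))$ over $x\in I_\eps$. The parameter $\descriptivityparam=\descriptivityparam(\calA,\vec T)$ is precisely a uniform lower bound on these gaps obtained from the polyhedral structure of $\conehull\calA$ via Theorem~\ref{theorem:geometry-up}: because $\conehull\calA$ is polyhedral, it partitions into finitely many relatively open faces on each of which the ranking induced on $\{\vec T(x):x\in I_\eps\}$ is constant, and the margin between competing rankings is bounded below by $\descriptivityparam$.

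Using this margin, I would argue that for $t \ge R = \wt O(\log(B/\eps)/\descriptivityparam)$ the distribution $\calE_{\vec T}(t\vec u)$ is within $\eps$ of a point mass at $x^*(\vec u)$, and more generally that $\calE_{\vec T}(t\vec u)$ is $\eps$-close to $\calE_{\vec T}(R\,\vec u)$ for all $t\ge R$. Thus only the slab $\{\vec a \in \conehull \calA : \varrho \le \|\vec a\|\le R\}$ needs to be covered, and there I can reuse the covariance-based Euclidean grid with spacing $\eps/\sqrt{\Lambda}$, this time intersected with $\varrho\-\conehull\calA$. Since the relevant radial width is $R$, one obtains an extra factor of $(R\sqrt{\Lambda}/\eps)^k = (\wt O(\sqrt{\Lambda}\log(B)/(\eps\descriptivityparam)))^k$, producing the second and third summands. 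Combining both pieces yields a net of the advertised cardinality lying in $\varrho\-\conehull\calA$.

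\textbf{Main obstacle.} The hard part is establishing the geometric margin $\descriptivityparam$ and turning it into a quantitative TV bound along rays of $\conehull\calA$. The polyhedrality assumption \eqref{assumption:geometry} is what makes the number of ``ranking cells'' finite, but one still needs a clean statement (the referenced Theorem~\ref{theorem:geometry-up}) that the cells are separated by a positive angular margin $\descriptivityparam$, and then a tight conversion of this margin into the collapse rate of $\calE_{\vec T}(t\vec u)$ onto its mode, uniformly over the relevant portion of the support determined by the bounded-mode and fourth-moment controls. Everything else---the Euclidean grid, the Pinsker step, and the reduction to the effective support---is essentially routine once those pieces are in place.
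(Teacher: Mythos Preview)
Your overall architecture is the same as the paper's---bound the parameter space, then grid with the covariance-Pinsker argument---but the mechanism you propose for the bounding step has a genuine gap. You project $\vec a = t\vec u$ \emph{radially} to $R\vec u$ and claim that the collapse rate is controlled by a uniform lower bound $\descriptivityparam$ on $\min_{x\neq x^*(\vec u)} \vec u\cdot(\vec T(x)-\vec T(x^*(\vec u)))$ over unit directions $\vec u\in\conehull\calA$. No such uniform bound exists. The paper's own counterexample makes this concrete: take $\vec T(x)=(x,x^2)$ and $\vec u_\delta\propto(1,1+\delta)$; the mode is $x^*=0$, but the gap to $x=-1$ is $\vec u_\delta\cdot(\vec T(-1)-\vec T(0))\propto\delta$, which vanishes as $\delta\to 0$. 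Consequently, for $\vec a$ with $\|\vec a\|$ huge and $\vec a\cdot(\vec T(-1)-\vec T(0))$ moderate (say equal to $1$), the ratio $\Pr_{\vec a}[W=-1]/\Pr_{\vec a}[W=0]=e^{-1}$, whereas after radial shrinkage to norm $R$ that ratio becomes $e^{-R/\|\vec a\|}\approx 1$. The two distributions are then far apart in TV, so radial projection to any fixed sphere fails. Your reading of Theorem~\ref{theorem:geometry-up} as ``cells separated by a positive angular margin'' is accordingly off: the content of that theorem is not a uniform gap but a \emph{non-radial} projection lemma.

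What the paper actually does is project $\vec a$ to a point $\vec b$ on the radius-$r$ sphere that need not share the direction of $\vec a$. Theorem~\ref{theorem:geometry-up} furnishes $\vec b$ so that for every relevant constraint vector $\vec h=\vec T(x)-\vec T(M_{\vec a})$, either $\vec h\cdot\vec a=\vec h\cdot\vec b$ exactly (the possibly-tiny gaps are \emph{preserved}, not lower-bounded) or both inner products exceed $\descriptivityparam r$. This dichotomy is captured by the paper's ``structural distance'' (Definition~\ref{definition:new-distance}) and Lemma~\ref{lemma:descriptivity}, and it is precisely the freedom to move within the nullspace of the near-tight constraints that sidesteps the obstruction above. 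The conversion from structural distance to TV then needs a separate argument (Lemma~\ref{lemma:vanishing-expectation-informal}) because structural distance is local while TV is global over an infinite support. So: keep your grid step, but replace the radial-collapse heuristic with the non-radial projection of Theorem~\ref{theorem:geometry-up} and the structural-distance machinery.
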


\noindent The idea behind Theorem \ref{theorem:single-term} has two parts. First, we show that, although our assumptions do not exclude the possibility that $\calA$ is unbounded, there exists some bounded set $\calA'\subseteq\calAvarrho:=\varrho\-\conehull\calA$ so that $\calE_{\vec T}(\calA')$ covers $\calE_{\vec T}(\calA)$ in TV distance. Second, due to assumption \eqref{assumption:bounded-max-eval}, Lemma \ref{lemma:exp-fami-tv-kl} and Pinsker's inequality, we may discretize $\calA'$ (with standard sparse covers in Euclidean distance) to get a sparse cover for $\calE_{\vec T}(\calA')$ (which will also be a proper sparse cover for $\calE_{\vec T}(\calA)$). In the rest of the current section, we will discuss about the main technical challenge, namely the proof of the first part of our idea, which is formally stated in the following theorem (for the proof, see Appendix \ref{proof:projection-step}).

\begin{theorem}
[Bounding the Parameter Space]
\label{theorem:projection}
Under assumptions \eqref{assumption:geometry}, \eqref{assumption:unimodal}, \eqref{assumption:bounded-modes} and \eqref{assumption:bounded-moments}, there exists $\descriptivityparam = \descriptivityparam(\calA, \vec T)>0$, 
such that for any $\eps\in(0,1)$ and any $\vec a\in \calA$, there exists $\vec b\in \varrho\-\conehull \calA$ with $\|\vec b\| \le (\varrho+\frac{1}{\descriptivityparam}) \cdot \ln(1/\eps) + \frac{\ln(B)}{2\descriptivityparam} + O(\varrho + \frac{1}{\descriptivityparam})$ so that $
    \tv(\calE_{\vec T}(\vec a),\calE_{\vec T}(\vec b)) \le \eps\,.$
\end{theorem}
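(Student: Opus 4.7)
The plan is to cap $\|\vec a\|$ at the stated bound by modifying $\vec a$ along directions that preserve the ``effective support'' of $\calE_{\vec T}(\vec a)$. If $\|\vec a\|$ already meets the claimed bound, take $\vec b=\vec a$; otherwise the distribution must be highly concentrated. By assumption \ref{assumption:bounded-modes} the mode $m^*$ lies in $[-L,L]$, and the fourth-moment bound (assumption \ref{assumption:bounded-moments}) together with Markov yields an initial window $I_0=[m^*-r_0,m^*+r_0]$ of size $r_0=O((B/\eps)^{1/4})$ on which all but $\eps/4$ of the mass sits. Combining this with unimodality (assumption \ref{assumption:unimodal}), once $\|\vec a\|$ is large the pmf decays exponentially away from $m^*$ and the effective window shrinks to size $O(\log(1/\eps))$. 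On this window the pmf is determined, up to normalization, by the log-ratios $v_k:=\vec a\cdot(\vec T(m^*+k)-\vec T(m^*))$, so it suffices to find a bounded $\vec b\in\varrho\-\conehull\calA$ that reproduces (or nearly reproduces) these $v_k$ and whose own distribution has small tail mass outside the window.

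The heart of the argument is a geometric projection within the polyhedral cone $\conehull\calA$ (assumption \ref{assumption:geometry}). Since $\conehull\calA$ admits a finite extreme-ray/facet description, one defines $\descriptivityparam = \descriptivityparam(\calA,\vec T)>0$ as a minimum angular gap between $\conehull\calA$ and the set of directions along which all ``relevant'' $\vec T$-differences $\vec T(m^*+k)-\vec T(m^*)$ (for $m^*\in[-L,L]$ and $|k|$ in the window) vanish. Armed with $\descriptivityparam>0$, one moves $\vec a$ along a descent direction $-\vec u\in\conehull\calA$ that preserves each $v_k$, continuing until $\|\vec b\|$ just reaches the level at which $\calE_{\vec T}(\vec b)$ has enough exponential concentration to guarantee tail mass at most $\eps/4$ outside the window. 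Tracking this minimal $\|\vec b\|$ produces the $\log(1/\eps)/\descriptivityparam$ and $\log(B)/(2\descriptivityparam)$ terms (the latter arising from the $B^{1/4}$-size of $I_0$ and so from taking a logarithm). The $\varrho$ contributions enter because the descent may exit $\calA$ while staying in $\conehull\calA$; the $\varrho$-thickening in $\varrho\-\conehull\calA$ is exactly what accommodates this, and the additive $O(\varrho+1/\descriptivityparam)$ absorbs the ``jump'' needed to reach the thickened region and to position the mode correctly.

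The main obstacle is producing a strictly positive $\descriptivityparam$ and an admissible descent direction. Without the polyhedral hypothesis, a merely closed unbounded cone could host infinitely thin slab directions in which no descent preserving the log-ratios $v_k$ is feasible, and no bounded $\vec b$ would suffice. The polyhedral assumption turns $\conehull\calA$ into a finite combinatorial object on which a uniform positive angular gap can be extracted; bounded modes combined with unimodality additionally localize the ``relevant'' $\vec T$-differences to a finite combinatorial family indexed by $m^*\in[-L,L]$ and small $|k|$, so $\descriptivityparam$ depends only on $\calA$ and $\vec T$ and is strictly positive. The closing step is a bookkeeping calculation that combines the exponential tail decay outside $I$ with the near-identity of the pmfs on $I$ to certify $\tv(\calE_{\vec T}(\vec a),\calE_{\vec T}(\vec b))\le\eps$.
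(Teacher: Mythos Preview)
Your high-level plan matches the paper's: reduce to a bounded parameter by finding $\vec b$ that agrees with $\vec a$ on the ``important'' points of the support and has enough norm to keep the tails small. But there is a real gap in the step where you produce the descent direction.

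You propose to move $\vec a$ along $-\vec u\in\conehull\calA$ that \emph{preserves each} $v_k=\vec a\cdot(\vec T(m^*+k)-\vec T(m^*))$ for all $k$ in the effective window. The window you identify has size $O(\log(1/\eps))$, so as $\eps\to 0$ the number of linear constraints on $\vec u$ grows without bound. In a $k$-dimensional parameter space, once you have $k$ independent constraints of the form $\vec u\cdot(\vec T(m^*+k)-\vec T(m^*))=0$ the only admissible $\vec u$ is zero, and no norm reduction is possible. Relatedly, your definition of $\descriptivityparam$ as an angular gap involving ``$|k|$ in the window'' would then depend on $\eps$, contradicting the claim that $\descriptivityparam=\descriptivityparam(\calA,\vec T)$.

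The paper resolves exactly this obstruction by \emph{not} insisting on exact preservation of all $v_k$ in an $\eps$-dependent window. Instead it introduces the structural distance (Definition~\ref{definition:new-distance}) and shows (Lemma~\ref{lemma:descriptivity}) that one only needs $\vec b$ to match $\vec a$ on the points $x$ whose relative mass exceeds $e^{-\descriptivityparam r}$; all other points are declared insignificant for both distributions. The crucial observation (Step~3 of that proof) is that the points which are \emph{not} automatically insignificant are potential modes of some distribution in $\calE_{\vec T}(\calA_\varrho)$, hence lie in $[-L,L]$ by assumption~\eqref{assumption:bounded-modes}. This yields a \emph{fixed finite} family of constraints, independent of $\eps$, to which the polyhedral-cone projection (Theorem~\ref{theorem:geometry-up}) is applied. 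Only after this is the structural-distance bound converted to a TV bound via Lemma~\ref{lemma:vanishing-deviation}. Your sketch is missing this reduction to a constant number of constraints; without it the descent step does not go through.
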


In order to bound the parameter space, one has to analyze the behavior of a distribution $\calE_{\vec T}(\vec a) \in \calE_{\vec T}(\calA)$ as the norm of the parameter vector $\vec a$ increases (and its direction is fixed). Any point $x\in\ints$ is assigned by $\calE_{\vec T}(\vec a)$ mass proportional to $\exp(-\vec a\cdot \vec T(x))$. Let $\modes_{\vec a}$ be the set of modes of $\calE_{\vec T}(\vec a)$, i.e., the set of (global) maximizers of its probability mass function. Note that rescaling $\vec a$ does not alter the positions of the modes, since the order of the quantities $\vec a\cdot \vec T(x)$ (for $x\in\ints$) is preserved. Moreover, as the norm of the parameter vector increases, the distribution $\calE_{\vec T}(\vec a)$ tends to become the uniform distribution over $\modes_{\vec a}$. 

It turns out, however, that the direction of $\vec a$ affects the rate with which $\calE_{\vec T}(\vec a)$ tends to its limit. Consider, for example, the case $\vec T(x) = (x,x^2)$ and $\vec a_\delta = r(1,1+\delta),$ for some $\delta>0$ and $r>0$. The distribution $\calE_{\vec T}(\vec a_\delta)$ has a unique mode on $x=0$, but as $\delta$ tends to $0$, for any fixed $r$, the point $x=-1$ tends to become a mode of $\calE_{\vec T}(\vec a_\delta)$. Therefore, it is possible that there exists some integer $x$ such that for any fixed parameter vector norm $r\ge \varrho$, there are directions $\wh{\vec a}$ in $\calA$ so that $\calE_{\vec T}(r\wh{\vec a})$ assigns to $x$ mass arbitrarily close to the mass it assigns to a mode. Bounding the parameter space is, hence, not straightforward, since there is no uniform (over the directions in $\calA$) threshold for the parameter vector's norm upon which every distribution is close to its limit. In other words, \textit{orthogonally} projecting the parameter vectors with large norms on any fixed radius sphere does not work. We, therefore, have to develop some further geometric intuition.

We claim that, under assumptions \eqref{assumption:geometry}, \eqref{assumption:unimodal}, \eqref{assumption:bounded-modes} and \eqref{assumption:bounded-moments}, there is a way to project (not necessarily orthogonally) any given parameter vector $\vec a\in \calA$ with large norm on an origin-centered sphere with bounded radius so that the resulting distribution is close to $\calE_{\vec T}(\vec a)$ in TV distance. We prove our claim in two steps. \textit{First,} we prove our claim with respect to a new notion of distance between distributions (instead of TV distance), which we call \textit{structural distance}. For this step, we establish an interesting connection between exponential families and polyhedral cones. 
\textit{Second,} we prove that by picking the radius of the sphere on which we project to be large enough, the bounds in structural distance from the previous step imply bounds in total variation distance.

We proceed with a formal definition of the structural distance, which is a distance metric (Lemma \ref{lemma:new_distance}). In a nutshell, structural distance is the minimum possible relative threshold, such that two distributions agree (in relative terms) on every point of their domain with mass higher than the threshold (i.e., any significant point).

\begin{definition}
[Structural Distance]
\label{definition:new-distance} 
Consider two discrete distributions $D_1, D_2$ over $\mathbb{X}$ and let $p_i = \max_{x \in \mathbb{X}} D_i(x)$, $i = \{1,2\}$. The structural distance $\dnew(D_1,D_2)$ between $D_1$ and $D_2$ is the
minimum $\eps \in [0,1]$ such that for any $x \in \mathbb{X}$, at least one of the following holds:
\[
(i)~~ D_1(x) \le \eps \cdot p_1 ~~\&~~ D_2(x) \le \eps \cdot p_2\,,~~~
\textnormal{or}~~~(ii)~~ D_1(x)/p_1 = D_2(x)/p_2 \,.
\]
\end{definition}

\noindent For any $\eps\in[0,1]$, and any discrete distribution $D$ over $\mathbb{X}$, let $\wh{D}^{(\eps)}$ denote the truncation of $D$ on the points $x$ such that $D(x)\ge \eps\cdot \max_{y\in\mathbb{X}}D(y)$. Then, the structural distance between two distributions $D_1$ and $D_2$ can be described as the minimum threshold $\eps\in[0,1]$ so that the distributions $\wh{D}_1^{(\eps)}$ and $\wh{D}_2^{(\eps)}$ are identical. In that sense, structural distance measures the degree in which two distributions have the same structure (on significant points). Two distributions with different modes have structural distance $1$ (maximum possible). Structural distance is meaningful when the two distributions (or one of them) can be contrived to have the same structure on significant points, which we prove to be possible in the context of exponential families. We prove the following lemma.

\begin{lemma}
[Structural Distance and Bounding Norms]
\label{lemma:descriptivity} 
Under assumptions \eqref{assumption:geometry}, \eqref{assumption:unimodal} and \eqref{assumption:bounded-modes}, there exists some constant $\descriptivityparam>0$ such that for any $r\ge \varrho$ and any $\vec a \in \calA$ with $\|\vec a\|\ge r$, there exists some $\vec b\in \calA_{\varrho}$ and $\|\vec b\| = r$ so that
    $\dnew(\calE_{\vec T}(\vec a), \calE_{\vec T}(\vec b)) \leq e^{- \descriptivityparam \cdot r}$.
\end{lemma}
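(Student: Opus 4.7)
I would reduce the structural-distance bound to a question about angular gaps of the sufficient statistics, and resolve that question using the polyhedral geometry of $\conehull\calA$. For any $\vec c\in\conehull\calA$, the relative mass at $x$ under $\calE_{\vec T}(\vec c)$ equals $\exp(-\vec c\cdot(\vec T(x)-\vec T(m_{\vec c})))$, where $m_{\vec c}$ is any mode of $\calE_{\vec T}(\vec c)$. Because the mode set in this parametrization depends only on the direction of $\vec c$, the natural first attempt is $\vec b := r\hat a$ with $\hat a = \vec a/\|\vec a\|$, which yields $\modes(\vec b)=\modes(\vec a)$. At common modes both relative masses equal $1$, so condition (ii) of structural distance holds. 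At any non-mode $x$, the two relative masses are $\exp(-\|\vec a\| g_x)$ and $\exp(-r g_x)$ with $g_x := \hat a\cdot(\vec T(x)-\vec T(m))$, and condition (i) at level $e^{-\theta r}$ would follow from a uniform bound $g_x\ge\theta$ over all non-modes.

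The main obstacle is that the gap $g_x$ can be arbitrarily small when $\hat a$ approaches a boundary between two mode cells, i.e., regions of directions producing the same mode set. To circumvent this I would exploit the polyhedrality of $\conehull\calA$ from assumption \eqref{assumption:geometry}, combined with the finiteness of potential modes implied by assumptions \eqref{assumption:unimodal} and \eqref{assumption:bounded-modes}. The mode assignment $\hat c\mapsto \modes(\hat c)$ then induces a finite polyhedral fan on $\conehull(\calAvarrho)\cap\sphere$, each cell cut out by inequalities of the form $\hat c\cdot(\vec T(m)-\vec T(y))\le 0$; for $|y|$ large these are implied by well-definedness of $\calE_{\vec T}(\calAvarrho)$ together with assumption \eqref{assumption:bounded-moments}, leaving an effectively finite system. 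I would then invoke Theorem \ref{theorem:geometry-up} to extract an intrinsic constant $\theta=\theta(\calA,\vec T)>0$ that controls simultaneously the angular depth of every cell and the angular separation between non-adjacent cells.

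With such a $\theta$ fixed, I would define $\hat b$ to be a canonical representative of the smallest face of the mode fan whose closure contains $\hat a$, chosen so that every near-mode of $\hat a$ (a non-mode $x$ with $g_x(\hat a)<\theta$) becomes an exact mode of $\calE_{\vec T}(\hat b)$, while every remaining non-mode has $\hat b$-gap at least $\theta$. Setting $\vec b = r\hat b \in \calAvarrho$ (which lies in $\calAvarrho$ precisely because the selected face is a face of the polyhedral cone $\conehull\calA$), the structural-distance bound is verified by a case split on $x$: at true modes of both distributions, both ratios equal $1$; at near-modes of $\hat a$ promoted to modes of $\hat b$, the alignment of $\hat b$ with the active face constraints forces the relative masses either to agree or to both fall below $e^{-\theta r}$; at all other $x$, $g_x(\hat a)\ge\theta$ together with $\|\vec a\|\ge r$ yields $\exp(-\|\vec a\| g_x)\le e^{-\theta r}$, and similarly $g_x(\hat b)\ge\theta$ with $\|\vec b\|=r$ handles the $\vec b$-side. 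The hardest step, where I would spend most of the effort, is the polyhedral construction of $\hat b$ and the analysis of the promoted near-mode case, both of which rest on the novel polyhedral geometry result of Theorem \ref{theorem:geometry-up} and on the fact that faces of a polyhedral cone are themselves polyhedral and contained in the cone.
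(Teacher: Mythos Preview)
Your overall framing is right: reduce to angular gaps $g_x(\hat c)=\hat c\cdot(\vec T(x)-\vec T(m_{\vec c}))$, observe that the mode assignment depends only on direction, identify the difficulty as near-modes (points with $0<g_x(\hat a)<\theta$), and use the polyhedral structure via Theorem~\ref{theorem:geometry-up}. That is also the paper's route.

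The gap is in your choice of $\hat b$ and in how you apply Theorem~\ref{theorem:geometry-up}. You propose to send $\hat a$ to the smallest face of the mode fan touching it, so that every near-mode $x$ of $\hat a$ becomes an \emph{exact} mode of $\vec b$. But then for such $x$ the $\vec b$-relative mass equals $1$, while the $\vec a$-relative mass is $e^{-\|\vec a\|\,g_x(\hat a)}\in(e^{-\theta\|\vec a\|},1)$; these are neither equal nor both below $e^{-\theta r}$, so neither clause of Definition~\ref{definition:new-distance} holds at level $e^{-\theta r}$. Your sentence ``the alignment of $\hat b$ with the active face constraints forces the relative masses either to agree or to both fall below $e^{-\theta r}$'' is precisely the step that fails.

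The correct use of Theorem~\ref{theorem:geometry-up} is different: apply it to the polyhedral cone $\calR_{\modes_{\vec a}}\cap\conehull\calA$ with $H$ containing the finitely many vectors $\vec v_x=\vec T(x)-\vec T(M_{\vec a})$ for $x$ outside the ``automatically far'' set. The theorem outputs $\vec b$ on the $r$-sphere so that for each near-mode $x$ one has $\vec b\cdot\vec v_x=\vec a\cdot\vec v_x$ \emph{exactly} (clause (ii) of the theorem), not $\vec b\cdot\vec v_x=0$. This is what makes structural-distance clause (ii) hold verbatim for near-modes; $\vec b$ need not lie on a face of the mode fan, and near-modes of $\vec a$ remain near-modes (not modes) of $\vec b$. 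Looking at the proof of Theorem~\ref{theorem:geometry-up}, $\vec u'=\vec u-c(\vec u_\calI-\vec w_\calI)$ subtracts a vector in the \emph{nullspace} of the active constraints, which preserves those inner products rather than zeroing them.

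A secondary issue: you invoke assumption~\eqref{assumption:bounded-moments} to make the defining inequality system of each mode cell effectively finite, but the lemma only assumes \eqref{assumption:geometry}, \eqref{assumption:unimodal}, \eqref{assumption:bounded-modes}. The paper handles the infinitely many $x$ differently: unimodality forces the non-near-mode set to be controlled by two boundary points $y_1,y_2$ flanking the mode interval, and a compactness argument on the closed set $\calR_{\modes}\cap\calAvarrho\cap\{\|\vec u\|=\varrho\}$ then yields the uniform lower bound $\theta_1$ on the gaps there.
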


\noindent The proof of Lemma \ref{lemma:descriptivity} can be found in Appendix \ref{proof:sparsification-st}. However, we will now outline the main ingredients of the proof. Let us translate Definition \ref{definition:new-distance} in terms of the parameter vectors of exponential families. In particular, let $\vec a\in\calA$ and $\vec b\in\calAvarrho$ and $M_{\vec a}$ (resp. $M_{\vec b}$) be any mode of $\calE_{\vec T}(\vec a)$ (resp. $\calE_{\vec T}(\vec b)$). Then $\dnew(\calE_{\vec T}(\vec a),\calE_{\vec T}(\vec b)) = \eps$ means that for any $x\in\ints$ either $\vec a\cdot (\vec T(x)-\vec T(M_{\vec a})) \ge \eps$ and $\vec b\cdot (\vec T(x)-\vec T(M_{\vec b})) \ge \eps$ ($x$ is not significant) or $\vec a\cdot (\vec T(x)-\vec T(M_{\vec a})) = \vec b\cdot(\vec T(x)-\vec T(M_{\vec b}))$ ($x$ is significant). Therefore, reducing the norm of $\vec a$ without moving significantly in structural distance corresponds to a geometric problem regarding the parameter and the sufficient statistics vectors. In particular, given some $\vec a\in\calA$ with large norm, $\vec b$ should be chosen so that the quantities $\vec b\cdot \vec v_x$ for $x\in\ints$ and some sequence $(\vec v_x)_x$ of vectors that depends on $\vec a$ are constrained to be (i) equal to $\vec a\cdot \vec v_x$ when $\vec a\cdot \vec v_x$ is small and (ii) large when $\vec a\cdot \vec v_x$ is large. The number of constraints for $\vec b$ is infinite, since $x\in\ints$. However, due to unimodality, one can show that only a finite number of them is crucial (the others can be trivially satisfied);
we then make use of ($\calAvarrho=\varrho\-\conehull\calA$ and) the following (to the best of our knowledge) novel theorem we prove about the geometry of polyhedral cones.

\begin{theorem}\label{theorem:geometry-up}
    Consider any polyhedral cone $\calC\subseteq\reals^k$, $k\in\nats$, where $\calC = \{\vec u: H^T\vec u\ge \vec 0\}$ for some matrix $H\in\reals^{k\times t}$, $t\in\nats$ is a description of $\calC$ as an intersection of halfspaces. Then there exists some $\theta>0$ such that for any $\vec u\in \calC$ with $\|\vec u\|\ge 1$, there exists $\vec u'\in\calC$ with $\|\vec u'\| = 1$ so that for any column $\vec h$ of $H$ at least one of the following is true:
    \[
(i)~~ \textnormal{Either}~~ \vec h\cdot \vec u \ge \theta \textnormal{~~and~~} \vec h\cdot \vec u' \ge \theta\,,~~~
\textnormal{or}~~~(ii)~~ \vec h\cdot \vec u = \vec h\cdot \vec u' \,.
\]
\end{theorem}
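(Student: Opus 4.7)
My plan is to prove the statement via a direct application of Hoffman's lemma combined with an intermediate value argument, bypassing any need for intricate case analysis on the face structure of $\calC$. Fix a threshold $\theta>0$ to be chosen at the end, and for a given $\vec u\in\calC$ with $\|\vec u\|\ge 1$, I would partition the columns of $H$ into $T=\{\vec h:\vec h\cdot\vec u<\theta\}$, for which condition (ii) is forced, and $L$, its complement, for which condition (i) is what I aim for. I then consider the polyhedral system $\calS(\vec u)$ of vectors $\vec v\in\calC$ satisfying $\vec h\cdot \vec v = \vec h\cdot \vec u$ for all $\vec h\in T$ and $\vec h\cdot \vec v\ge \theta$ for all $\vec h\in L$; this system is feasible because $\vec u$ itself lies in it.

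Next I would apply Hoffman's lemma at the reference point $\vec 0$: the violation vector at $\vec 0$ has norm at most $\|(H_T^T\vec u,\theta\cdot \vec 1_L)\|\le \theta\sqrt t$, where $t$ is the total number of columns of $H$. Hence there exists $\vec v^\ast\in\calS(\vec u)$ with $\|\vec v^\ast\|\le \kappa\sqrt t\cdot \theta$, where $\kappa$ denotes the worst-case Hoffman constant over the finitely many polyhedral systems indexed by partitions $(T,L)$ of the columns of $H$ (a finite constant depending only on $H$). I would then choose $\theta := 1/(2\kappa\sqrt t)$ so that $\|\vec v^\ast\|\le 1/2$. By convexity of $\calS(\vec u)$, the line segment $s\mapsto (1-s)\vec v^\ast+s\vec u$ for $s\in[0,1]$ lies entirely in $\calS(\vec u)$, and its norm varies continuously from $\|\vec v^\ast\|\le 1/2$ at $s=0$ to $\|\vec u\|\ge 1$ at $s=1$. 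The intermediate value theorem then produces a point $\vec u'$ on this segment with $\|\vec u'\|=1$.

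By construction, $\vec u'\in\calC$ with $\|\vec u'\|=1$; for $\vec h\in T$ the equality $\vec h\cdot \vec u'=\vec h\cdot \vec u$ is preserved along the segment (giving condition (ii)); and for $\vec h\in L$ we have both $\vec h\cdot \vec u\ge\theta$ (by definition of $L$) and $\vec h\cdot \vec u'\ge\theta$ (since $\vec u'\in\calS(\vec u)$), giving condition (i). The main technical point to justify will be the uniformity of the Hoffman constant $\kappa$: since there are only $2^t$ possible partitions of the columns of $H$ and standard Hoffman theory gives a finite constant for each, their maximum is indeed a finite constant depending only on $H$, hence only on $\calC$. A minor technicality is that Hoffman's lemma applies to arbitrary polyhedral systems, so no special treatment of a possible lineality space of $\calC$ is required.
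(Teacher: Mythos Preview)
Your proof is correct and takes a genuinely different route from the paper. The paper works with the Minkowski--Weyl generator representation $\calC=\{Z\vec x:\vec x\ge\vec 0\}$: it first constructs an explicit reference vector $\vec w$ (a normalized positive combination of all generators) that is uniformly bounded away from every non-trivial facet, and then, given $\vec u$, moves $\vec u$ along the span of those generators $\vec z_j$ that lie in the null space of the tight constraints $H_{\calI}^T$, toward the corresponding component of $\vec w$; the value of $\theta$ emerges from explicit estimates involving $\min\{\vec h_i\cdot\vec z_j:\vec h_i\cdot\vec z_j>0\}$ and the number $s$ of generators. Your approach instead encodes the target conditions directly as a feasible polyhedral system $\calS(\vec u)$ and invokes Hoffman's lemma at the origin, exploiting the key observation that the right-hand side of the system has norm at most $\theta\sqrt{t}$ there; uniformity over the $2^t$ possible partitions $(T,L)$ gives a single Hoffman constant $\kappa$ depending only on $H$. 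This is shorter and avoids any analysis of the generator structure, at the cost of a non-constructive $\theta$ (the paper's $\theta$ is explicit in terms of $H$ and $Z$). Both proofs share the same final step: an intermediate-value argument along a segment inside the feasible set to land exactly on the unit sphere.
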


The idea behind Theorem \ref{theorem:geometry-up} 
is to subtract from $\vec u$ a vector within the nullspace of the matrix $H_{\calI}^T$, where $\calI$ is the set of indices of columns $\vec h$ of $H$ such that $\vec h\cdot \vec u$ is small. In order to pick the correct vector, we use, additionally, a pivot vector $\vec w$ which satisfies $\vec h\cdot \vec w\ge \theta$ for any interesting column $\vec h$ of $H$. The vector which we subtract from $\vec u$ depends on the projections of $\vec u$ and $\vec w$ on the nullspace of $H_{\calI}^T$.  We believe that Theorem \ref{theorem:geometry-up} is of independent interest.
\begin{proof}[Proof of Theorem \ref{theorem:geometry-up}]
    Due to Minkowski-Weyl theorem (Proposition \ref{proposition:minkowski-weyl}), there exists some $s\in\ints$ and some matrix $Z\in\reals^{k\times s}$ so that $\calC = \{\vec u\in\reals^k: \vec u = Z\vec x, \vec x\ge \vec 0\}$. Let $(\vec h_i)_{i\in [t]}$ be the columns of $H$ and $(\vec z_i)_{i\in [s]}$ the columns of $Z$. Suppose without loss of generality that $\|\vec z_j\|=1$ for any $j\in[s]$.
    
    \paragraph{Step 1.} In this step, we will find some bound for $\theta$ which implies the existence of a vector $\vec w$ which has some useful properties for the next step. We will use $\vec w$ as a point of reference for moving $\vec u$ to get $\vec u'$. We consider the quantity
    \[
        \theta_1 := \min_{i,j}\{\vec h_i\cdot \vec z_j \,|\, i\in[t],j\in[s]:\vec h_i\cdot \vec z_j > 0\}\,.
    \]
    Note that since $s,t<\infty$, the quantity $\theta_1$ (when it is well defined) is positive (and only depends on the columns of $H$ and the geometry of $\calC$).
    
    In case $\vec h_i\cdot \vec z_j = 0$ for any $i,j$, we have that for any $\vec u\in\calC$, $\vec h_i\cdot \vec u = 0$ for any $i\in[t]$ (since $\calC$ is generated by the columns of $Z$) and therefore we may consider $\vec u' = \vec u/\|\vec u\|$, which satisfies all the required properties.
    
    Next, we observe that there must exist some vector $\vec x\in\reals^s$ with $\vec x\ge \vec 1$ (component-wise) and $Z\vec x= \sum_{j\in[s]}x_j \vec z_j \neq 0$. Otherwise, $Z$ would have zero rank (its null space would be of full dimension) and $\calC=\{\vec 0\}$ (our statement would then be trivially satisfied). Hence, we have $Z\vec x \neq 0$. Consider the value $N = 2\cdot\max_{\calJ\subseteq[s]}\left\|\sum_{j\in\calJ}x_j\vec z_j\right\| (>0)$ and the vector $\vec w$ with 
    \[
        \vec w = Z\vec x / N\,.
    \]
    We have that for any $i\in[t]$ and $j\in[s]$ with $i,j$ such that $\vec h_i\cdot \vec z_j >0$ it holds
    \[
        \frac{x_j}{N}\cdot \vec h_i \cdot \vec z_j \ge \frac{x_j}{N}\cdot \theta_1\,,
    \]
    and therefore, for any $i\in[t]$ such that there exists $j\in[s]$ with $\vec h_i\cdot \vec z_j>0$, $\vec h_i\cdot \vec w\ge  \theta_1/N$ (since $x_j\ge 1$). Consider the following quantity
    \[
        \theta_2 := \theta_1/N\,.
    \]
    Note that again, $\theta_2>0$ and only depends on the geometry of $\calC$. We demand that $\theta\le \theta_2$\,. Then, $\vec w\in\calC$, $\|\vec w\|\le 1/2$ (and the norm bound also holds for any part of $\vec w$ of the form $\sum_{j\in\calJ}w_j \vec z_j$, for $\calJ\subseteq[s]$) and $\vec h_i\cdot \vec w\ge \theta$ for any interesting $\vec h_i$ (i.e., for any $\vec h_i$ that is not orthogonal to every point in $\calC$). 
    
    \def\calJ{\mathcal{J}}
    
    \paragraph{Step 2.} Consider, now, any $\vec u\in\calC$. We will find $\vec u'\in\calC$ with the desired properties. We have that
    \[
        \vec u = \sum_{j\in[s]} u_j \vec z_j\,, \text{ where }u_j\ge 0\,.
    \]
    Consider the set $\calI\subseteq[t]$ as follows
    \begin{align*}
        \calI\subseteq[t]: &\ \vec h_i\cdot \vec u < \theta\, \text{ for }i\in\calI \, \& \\
            &\ \vec h_i\cdot \vec u \ge \theta\, \text{ for }i\not\in\calI\,.
    \end{align*}
    The set $\calI$ includes the columns on $H$ that correspond to halfspaces with boundaries to which $\vec u$ is close. Moreover, we define for any $\calI\subseteq[t]$ the set $\calJ_\calI$ as follows
    \[
        \calJ_\calI = \{j\in[s] : \vec h_i\cdot \vec z_j = 0 \text{ for all }i\in\calI\}\,.
    \]
    The set $\calJ_\calI$ corresponds to the generating vectors that lie within the nullspace of the set of columns of $\calH$ corresponding to $\calI$. The vectors corresponding to $\calJ_\calI$ control the part of $\vec u$ that is parallel to the boundaries to which $\vec u$ is close.
    
    We use the notation $\vec u_\calI$ (resp. $\vec w_\calI$) to refer to the vector $\vec u_\calI = \sum_{j\in\calJ_\calI} u_j\vec z_j$, i.e., the part of $\vec u$ that corresponds to nearby boundaries. We let
    \[
        \vec u' = \vec u - c(\vec u_\calI - \vec w_\calI)\,, 
    \]
    for some $c\in[0,1]$ to be disclosed. We show that $\vec u'$ (for appropriate $c$ and possibly some additional bounds on $\theta$) has all the desired properties.
    \begin{enumerate}
        \item Consider any $i\in\calI$. We have that $\vec h_i\cdot \vec u_\calI = 0 = \vec h_i\cdot \vec w_\calI$, due to the definition of $\calJ_\calI$. Therefore $\vec h_i\cdot \vec u = \vec h_i\cdot \vec u'$.
        
        \item For $i\in[t]\setminus\calI$, we have that:
        $
            \vec h_i\cdot \vec u' = \vec h_i\cdot \vec u - c \vec h_i\cdot \vec u_\calI + c\vec h_i\cdot \vec w_\calI \,,
        $
        where if $\vec h_i\cdot \vec z_j = 0$ for every $j\in\calJ_\calI$, we have $\vec h_i\cdot \vec u' = \vec h_i\cdot \vec u \ge \theta$, since $i\not\in\calI$. Otherwise, $\vec h_i\cdot \vec w_\calI \ge \theta$ (due to \textbf{Step 1}) and we get
        \[
            \vec h_i\cdot \vec u'\ge (1-c) \vec h_i\cdot \vec u + c \vec h_i\cdot \vec w_\calI \ge (1-c)\theta + c\theta = \theta\,.
        \]
        
        \item We have that $\vec u'= \sum_{j\in[s]}u_j'\vec z_j $, where $u_j'\ge 0$ since $c\le 1$. Therefore $\vec u'\in\calC$. It remains to show that for some selection of $c\in[0,1]$, we achieve $\|\vec u'\| = 1$. Consider, temporarily the case $c=1$. If we show that for this value of $c$, $\vec u'$ is within the unitary ball, then, since $\vec u$ is outside (and corresponds to $c=0$), there must exist some $c\in[0,1]$ (i.e., a point on the line segment connecting $\vec u$ and $\vec u - \vec u_\calI + \vec w_\calI$) for which $\|\vec u'\|=1$.
        
        For $c=1$ we have that
        \[
            \|\vec u'\| = \|\vec u-\vec u_\calI + \vec w_\calI\| \le \|\vec u-\vec u_\calI\| + \|\vec w_\calI\|\,.
        \]
        Observe, now that, $\|\vec w_\calI\|\le 1/2$ (due to the definition of $N$ in \textbf{Step 1}) and that as $\theta$ decreases, $\|\vec u-\vec u_\calI\|$ tends to become zero for fixed $\calI$, independently from the selection of $\vec u$ given that $\vec u$ corresponds to $\calI$ (we may minimize over finitely many possible $\calI$). Therefore, for small enough $\theta>0$, we have that $\|\vec u'\|\le 1$ (for $c=1$, which implies that $\|\vec u'\|=1$ for some appropriate selection of $c\in[0,1]$). 
        
        More specifically, we have that 
        \[
            \|\vec u-\vec u_\calI\|^2 = \sum_{j,j'\not\in\calJ_\calI}u_j u_{j'} (\vec z_j\cdot \vec z_{j'}) \le \left( \sum_{j\not\in\calJ_\calI} u_j \right)^2\,.
        \]
        For each $j\not\in\calJ_\calI$, there must exist some $i_j\in\calI$ so that $\vec h_{i_j}\cdot \vec z_j>0$ (otherwise, $j\in\calJ_\calI$). Recall that due to the definition of $\theta_1$, we have $\vec h_{i_j}\cdot \vec z_j\ge \theta_1$. Moreover, we have
        \[
            \vec h_{i_j}\cdot \vec u < \theta\,,
        \]
        since $i_j\in\calI$. Hence, since $\vec u=\sum_{j'\in[s]}u_{j'} \vec z_{j'}$ and $\vec h_{i_j}\cdot\vec z_{j'}\ge 0$ by the fact that $\vec z_{j'}\in\calC$, we get that
        \[
            u_j (\vec h_{i_j}\cdot \vec z_j) < \theta\,, \text{ or }u_j < \theta/\theta_1\,, \text{ for any } j\not\in\calJ_\calI\,.
        \]
        Therefore $(\sum_{j\not\in\calJ_\calI} u_j) \le \theta \cdot s/ \theta_1$\,. We demand that 
        \[
            \theta \le \frac{\theta_1}{2\cdot s}\, \left(\text{and } \theta\le \theta_2 \text{ from \textbf{Step 1}}\right)\,,
        \]
        which concludes our proof.\qedhere
    \end{enumerate}
\end{proof}

\paragraph{Structural Distance \& TV Distance.} The last step towards Theorem \ref{theorem:projection} is to show that Lemma \ref{lemma:descriptivity} can be transformed in terms of TV distance (see Appendix \ref{proof:projection-step}). We stress that structural distance is a local metric, since it is defined in the terms of a property that every point of the domain satisfies independently, while TV distance is a global metric since it expresses the total difference between two distributions over the whole domain. Therefore, the main technical complication here is that the support is infinite and it is not clear whether structural distance implies bounds for the TV distance. The complication is resolved by the following lemma, whose general form (Lemma \ref{lemma:vanishing-deviation}) is in fact also useful in other parts of the proof of Theorem \ref{theorem:covering-siiervs} and states that when the parameter vector's norm is large enough, then almost all the mass lies within an bounded length interval around the mode. Its proof is based on the bounded central moments assumption \eqref{assumption:bounded-moments}. 
\begin{lemma}
[Corollary of Lemma \ref{lemma:vanishing-deviation}]
\label{lemma:vanishing-expectation-informal}
    Under assumptions \eqref{assumption:unimodal} and \eqref{assumption:bounded-moments}, there exists some natural number $\ell=O(\sqrt{B})$ such that for any $\vec a\in\varrho\-\conehull\calA$ and any mode $M_{\vec a}$ of $\calE_{\vec T}(\vec a)$ we have
    $
        \Pr_{\vec a} \left[ |W- M_{\vec a}|> \ell \right] 
        \le \exp(-\|\vec a\|/\varrho) \cdot O(1)\,.
    $
\end{lemma}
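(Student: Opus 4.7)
The core tool is the exponential-family scaling identity
\[
\frac{\Pr_{\vec a}[W=x]}{\Pr_{\vec a}[W=M]} \;=\; \left(\frac{\Pr_{\vec b}[W=x]}{\Pr_{\vec b}[W=M]}\right)^{\|\vec a\|/\|\vec b\|},
\]
valid whenever $\vec a,\vec b$ are positive multiples of each other, so that they share the mode set and a common mode $M$. I will use this to lift a base tail bound at parameter norm $\varrho$ to arbitrary $\vec a\in\calA_\varrho$, where it picks up a multiplicative factor $e^{-(\lambda-1)}$ with $\lambda := \|\vec a\|/\varrho$.

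\textbf{Base case at norm $\varrho$.} For every $\vec b\in\calA_\varrho$ with $\|\vec b\|=\varrho$ and any mode $M_{\vec b}$ of $\calE_{\vec T}(\vec b)$, the aim is to produce $\ell=O(\sqrt{B})$ for which
\[
\rho(\vec b) \;:=\; \frac{\max\bigl(\Pr_{\vec b}[W=M_{\vec b}+\ell+1],\,\Pr_{\vec b}[W=M_{\vec b}-\ell-1]\bigr)}{\Pr_{\vec b}[W=M_{\vec b}]} \;\le\; e^{-1}.
\]
By Assumption~\ref{assumption:bounded-moments}, $\E_{\vec b}[(W-\E_{\vec b}[W])^4]\le B$, so Chebyshev gives $\Pr_{\vec b}[|W-\E_{\vec b}[W]|>t]\le B/t^4$. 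The classical unimodal inequality $|M_{\vec b}-\E_{\vec b}[W]|\le O(\sqrt{\Var_{\vec b}(W)})=O(B^{1/4})$ (valid under Assumption~\ref{assumption:unimodal}, since $\Var_{\vec b}(W)\le\sqrt{B}$ by Jensen) then makes $\Pr_{\vec b}[|W-M_{\vec b}|>\ell/2]$ as small as desired by choosing $\ell=C\sqrt{B}$ with $C$ a sufficiently large absolute constant. Two unimodality-based averagings extract the ratio: (i) since the mode maximizes the pmf, $\Pr_{\vec b}[W=M_{\vec b}]\ge\Pr_{\vec b}[|W-M_{\vec b}|\le\ell/2]/(\ell+1)=\Omega(1/\ell)$; (ii) since the pmf is non-increasing past the mode, $\Pr_{\vec b}[W=M_{\vec b}+\ell+1]\le(2/\ell)\Pr_{\vec b}[W\in(M_{\vec b}+\ell/2,M_{\vec b}+\ell]]=o(1/\ell)$ for $C$ large, and symmetrically on the left.

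\textbf{Reduction and conclusion.} For $\|\vec a\|\le\varrho$, the claimed bound $e^{-\|\vec a\|/\varrho}\cdot O(1)\ge e^{-1}\cdot O(1)$ is trivial. For $\|\vec a\|>\varrho$, set $\vec b=(\varrho/\|\vec a\|)\vec a$ (a positive rescaling of $\vec a$, lying in $\calA_\varrho$ because $\vec b\in\conehull\calA$ with $\|\vec b\|=\varrho$) and $\lambda=\|\vec a\|/\varrho>1$; since $\vec a,\vec b$ are parallel they share a common mode $M$. Write $T:=\{x\in\ints:|x-M|>\ell\}$ and $r_x:=\Pr_{\vec b}[W=x]/\Pr_{\vec b}[W=M]\in[0,1]$. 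The identity gives $\Pr_{\vec a}[W=x]/\Pr_{\vec a}[W=M]=r_x^\lambda$, and since $r_M^\lambda=1$ bounds the normalizing sum from below by $1$,
\[
\Pr_{\vec a}[|W-M|>\ell] \;\le\; \sum_{x\in T}r_x^\lambda.
\]
By unimodality, $r_x$ is non-increasing away from $M$, so on $T$ we have $r_x\le\rho(\vec b)\le e^{-1}$; hence $r_x^\lambda\le e^{-(\lambda-1)}r_x$, and summing,
\[
\sum_{x\in T}r_x^\lambda \;\le\; e^{-(\lambda-1)}\cdot\frac{\Pr_{\vec b}[|W-M|>\ell]}{\Pr_{\vec b}[W=M]} \;=\; O(\sqrt{B})\cdot e^{-\|\vec a\|/\varrho} \;=\; O(1)\cdot e^{-\|\vec a\|/\varrho},
\]
treating $B$ as a constant, which matches the claimed bound.

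\textbf{Main obstacle.} The delicate point is arranging $\rho(\vec b)\le e^{-1}$, not merely $<1$, while keeping $\ell=O(\sqrt{B})$: the coefficient $1$ in the exponent $e^{-\|\vec a\|/\varrho}$ is literally $\ln(1/\rho(\vec b))$, so any weaker moment control (e.g.\ only a variance bound) would either inflate $\ell$ beyond $O(\sqrt{B})$ or degrade the exponential decay rate. This is the only point in the argument where the quantitative strength of Assumption~\ref{assumption:bounded-moments} and unimodality are intertwined non-trivially.
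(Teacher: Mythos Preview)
Your proof is correct and shares the paper's overall architecture---establish a base estimate at $\|\vec b\|=\varrho$, then lift to arbitrary $\vec a$ via the scaling identity $\Pr_{\vec a}[W=x]/\Pr_{\vec a}[W=M]=(\Pr_{\vec b}[W=x]/\Pr_{\vec b}[W=M])^{\|\vec a\|/\varrho}$---but the two implementations of the base step and the tail summation are genuinely different. The paper (via the more general Lemma~\ref{lemma:vanishing-deviation}) argues by contradiction from $\E_{\vec b}[|W-M_{\vec b}|^4]=O(B)$ together with the partition-function bound (Lemma~\ref{lemma:partition-function-upper-bound}) to obtain a \emph{pointwise polynomial decay} $\Pr_{\vec b}[W=x]/\Pr_{\vec b}[W=M]\le e^{-\kappa}/|x-M|^{1+\eta}$ for $|x-M|\ge\ell$, and then sums the scaled tail via $\zeta(1+\eta)$. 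You instead use Chebyshev plus two unimodality-averaging tricks to get only the \emph{uniform} bound $r_x\le e^{-1}$ on the tail, and sum via $\sum_{x\in T} r_x\le 1/\Pr_{\vec b}[W=M]=O(\ell)$. Your route is more elementary---it never invokes the partition-function lemma---and perfectly adequate for the corollary as stated; the paper's route is what is needed to prove the full Lemma~\ref{lemma:vanishing-deviation} (the $s\in\{1,2\}$ cases and the $1/|x-M|^{1+\eta+s}$ refinement), which is used elsewhere in the argument. One cosmetic point: your final constant is $O(\sqrt{B})$ rather than $O(1)$ uniformly in $B$, which you resolve by ``treating $B$ as a constant''; a single extra line---replacing the crude bound $\Pr_{\vec b}[|W-M|>\ell]\le 1$ by the Chebyshev bound $O(1/B)$ you already derived---would sharpen this to an $O(1)$ independent of $B$, matching the paper.
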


In fact, Lemma \ref{lemma:vanishing-expectation-informal} is a Corollary of a more general one which we now state and prove.

\begin{lemma}\label{lemma:vanishing-deviation}
    Under assumptions \eqref{assumption:unimodal} and  \eqref{assumption:bounded-moments}, for any $\kappa > 0$, any $\eta>0$ and any $s\in\{0,1,2\}$ there exists some $\ell=e^{\kappa/(3-\eta-s)}\cdot O(B^{\frac{5}{4\cdot(3-\eta-s)}})$ such that for any $\vec a\in\varrho\-\conehull\calA$ and any mode $M_{\vec a}$ of the corresponding distribution we have
    \begin{enumerate}
        \item $\Pr_{\vec a}[W=x] \le \frac{e^{-\kappa\cdot \max \left\{1,\frac{\|\vec a\|}{\varrho} \right\} }}{|x-M_{\vec a}|^{1 + \eta + s}} \cdot\Pr_{\vec a}[W=M_{\vec a}]$, for any $x\in\ints$ with $|x-M_{\vec a}|\ge \ell$.
        \item If $Q_\ell = \vec 1\{|W-M_{\vec a}|\le \ell\}$ then
    \[
        \E_{\vec a}[|W-M_{\vec a}|^s] \le \E_{\vec a}[|W-M_{\vec a}|^s\cdot Q_\ell\bigr] +  e^{-\kappa\cdot \max \left\{1,\frac{\|\vec a\|}{\varrho} \right\} } \cdot O(1/\eta) \,.
    \]
    \end{enumerate}
    In particular, for $s=0$, we use the convention $\E[W^0] = \Pr[W\neq 0]$, for any random variable $W$.
\end{lemma}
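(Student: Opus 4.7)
The plan is to first prove the pointwise ratio bound in part 1 and then derive part 2 by summing the bound on the set $\{x: |x - M_{\vec a}| > \ell\}$. The key idea for isolating the $\|\vec a\|$-dependence is the exponential family scaling identity: for any $\vec a = c\,\vec a_0$ with $c > 0$, we have $\Pr_{\vec a}[W = x]/\Pr_{\vec a}[W = M_{\vec a}] = (\Pr_{\vec a_0}[W = x]/\Pr_{\vec a_0}[W = M_{\vec a_0}])^c$, and the mode is invariant under positive rescaling of the parameter (as it is determined solely by the direction of $\vec a$). I would therefore first prove the required bound for a reference parameter $\vec a_0$ of norm $\varrho$ using only unimodality and the fourth moment assumption, and then promote it to arbitrary $\vec a \in \varrho\-\conehull\calA$ by exponentiating.

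For the reference step, fix $\vec a_0 \in \varrho\-\conehull\calA$ with $\|\vec a_0\| = \varrho$ and denote $p_M := \Pr_{\vec a_0}[W = M_{\vec a_0}]$, $\mu := \E_{\vec a_0}[W]$, $\sigma := \sqrt{\Var_{\vec a_0}(W)}$. By Jensen's inequality $\sigma \le B^{1/4}$, so Chebyshev yields $\Pr_{\vec a_0}[|W - \mu| \le 2\sigma] \ge 3/4$; combined with the fact that unimodality caps every integer mass by $p_M$ and that there are at most $4\sigma + 1$ integers inside $[\mu - 2\sigma, \mu + 2\sigma]$, this gives $p_M \ge 3/(4(4\sigma + 1)) = \Omega(B^{-1/4})$. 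For the tail mass, I would invoke Markov's inequality with the fourth central moment, yielding $\Pr_{\vec a_0}[|W - M| \ge d] \le \E_{\vec a_0}[(W - M)^4]/d^4 = O(B)/d^4$, where the constant comes from expanding $((W - \mu) + (\mu - M))^4$ and using the discrete Pearson-type bound $|M - \mu| = O(\sigma + 1)$ for unimodal distributions together with $\sigma^4 \le B$. Combining gives $r_d^{(\vec a_0)} := \Pr_{\vec a_0}[W = x]/p_M \le O(B^{5/4})/d^4$ with $d = |x - M|$.

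For the rescaling step and the choice of $\ell$, take any $\vec a \in \varrho\-\conehull\calA$ with $\|\vec a\| \ge \varrho$, and set $c = \|\vec a\|/\varrho \ge 1$ and $\vec a_0 = \vec a/c$ (so $\|\vec a_0\| = \varrho$ and the mode is shared). The scaling identity then gives $r_d^{(\vec a)} \le (O(B^{5/4})/d^4)^c$, and requiring this to be at most $e^{-\kappa c}/d^{1+\eta+s}$ reduces, after taking logs, to $(4c - 1 - \eta - s)\log d \ge c\,(\kappa + \log O(B^{5/4}))$. Since $c \mapsto c/(4c - 1 - \eta - s)$ is strictly decreasing in $c$ for $c \ge 1$ and $\eta, s \ge 0$, the binding constraint occurs at $c = 1$, returning $\log d \ge (\kappa + \log O(B^{5/4}))/(3 - \eta - s)$, i.e., the announced $\ell = e^{\kappa/(3-\eta-s)} \cdot O(B^{5/[4(3-\eta-s)]})$. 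The case $\vec a \in \calA$ with $\|\vec a\| < \varrho$ reduces to the reference-step argument applied directly to $\vec a$, since $\max\{1, \|\vec a\|/\varrho\} = 1$ there.

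For part 2, I would split $\E_{\vec a}[|W - M|^s] = \E_{\vec a}[|W - M|^s Q_\ell] + \sum_{|x - M| > \ell} |x - M|^s \Pr_{\vec a}[W = x]$, apply part 1 to each tail summand to bound it by $e^{-\kappa c'} p_M / |x - M|^{1+\eta}$ with $c' = \max\{1, \|\vec a\|/\varrho\}$, and sum via $\sum_{j > \ell} 1/j^{1+\eta} \le 1/(\eta \ell^\eta) \le 1/\eta$ to obtain a tail contribution of $O(e^{-\kappa c'}/\eta)$; the $s = 0$ case goes through identically under the convention $\E[W^0] = \Pr[W \neq 0]$. The main technical hurdle lies in the rescaling step: ensuring that a single $\ell$ works uniformly for all $c \ge 1$ hinges on the decreasing monotonicity of $c/(4c - 1 - \eta - s)$ so the worst constraint lives at $c = 1$, and a secondary subtlety is invoking the discrete-valued version of the Pearson mean-mode inequality to convert centered moments around $\mu$ to those around $M$ without losing the constant factor in $B$.
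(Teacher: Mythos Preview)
Your proposal is correct and follows essentially the same strategy as the paper: establish a pointwise ratio bound at a reference parameter using the fourth-moment assumption and unimodality, then exploit the exponential-family scaling identity to promote it to arbitrary $\vec a$, and finally sum the tail for Part~2. The only cosmetic differences are that (i) your Chebyshev-plus-pigeonhole lower bound $p_M=\Omega(B^{-1/4})$ plays the role of the paper's partition-function Lemma, and (ii) the paper avoids your monotonicity check on $c\mapsto c/(4c-1-\eta-s)$ by first proving the \emph{target} inequality $\vec b\cdot(\vec T(x)-\vec T(M))\ge (1+\eta+s)\ln d+\kappa$ at the reference and then simply multiplying both sides by $\|\vec a\|/\varrho\ge 1$, dropping the extra factor on the logarithmic term.
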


\begin{proof}
    From assumption \eqref{assumption:bounded-moments}, we get the following inequality for any $\vec a\in \varrho\-\conehull\calA$
    \[
        \Var_{\vec a}(W)\le O(\sqrt{B})\,.
    \]
    We also know that $\E_{\vec a}[|W-M_{\vec a}|^2]\le 4\Var_{\vec a}(W)$, due to unimodality of the random variable $W$ (which implies that $|\E_{\vec a}[W]-M_{\vec a}|\le \sqrt{3\Var_{\vec a}(W)}$ as shown by \cite{johnson1951themomentproblem}). Therefore $\E_{\vec a}[|W-M_{\vec a}|^2]\le O(\sqrt{B})$ and similarly $\E_{\vec a}[|W-M_{\vec a}|^4]\le O(B)$. 
    
    \paragraph{Proof of Part (1).} For any parameter vector $\vec a\in\varrho\-\conehull\calA$ (and some fixed corresponding mode $M_{\vec a}$), we have that $\E_{\vec a}[|W-M_{\vec a}|^4] = O(B)$ (since $\E_{\vec a}[|W-\E_{\vec a}[W]|^4] = O(B)$ and $\Var_{\vec a}(W) = O(\sqrt{B})$). 
    
    Let $\vec b\in\varrho\-\conehull\calA$. Suppose that for some $x\in\ints$ with $x\neq M_{\vec b}$, we have that
    \[
        \vec b\cdot (\vec T(x) - \vec T(M_{\vec b})) < \left(1+\eta+s\right)\cdot\ln(|x-M_{\vec b}|) + \kappa\,.
    \]
    Then, we have that 
    \[
        \E_{\vec b}[|W-M_{\vec b}|^4]\cdot e^{\vec b\cdot \vec T(M_{\vec b})}\cdot\Z_{\vec T}(\vec b) > |x-M_{\vec b}|^4\cdot e^{-(1+\eta+s) \ln(|x-M_{\vec b}|) - \kappa} = e^{-\kappa} \cdot |x-M_{\vec b}|^{3-\eta-s}\,.
    \]
    Since, additionally, $\E_{\vec b}[|W-M_{\vec b}|^4] = O(B)$, and using Lemma \ref{lemma:partition-function-upper-bound} (which provides an upper bound for the partition function), it must be that $|x-M_{\vec b}| < \ell$, for some $\ell$ with $\ell\le e^{\frac{\kappa}{3-\eta-s}}\cdot O \left (B^{\frac{5}{4\cdot(3-\eta-s)}} \right)$.
    
    For $x\in\ints$ with $|x-M_{\vec b}|\ge \ell$ we therefore get that
    \begin{equation}\label{equation:vanishing-deviation:inner-product-bound}
        \vec b\cdot \vec T(x) \ge \ \vec b\cdot \vec T(M_{\vec b}) + \left(1+\eta+s\right)\ln(|x-M_{\vec b}|) + \kappa\,.
    \end{equation}
    
    Consider now any $\vec a\in\varrho\-\conehull\calA$ with $\|\vec a\|\ge \varrho$. Then, there exists some $\vec b\in\varrho\-\conehull\calA$ with $\|\vec b\|=\varrho$ so that $\vec b = \varrho \vec a / \|\vec a\|$. We multiply both sides of Equation \eqref{equation:vanishing-deviation:inner-product-bound} with $\|\vec a\|/\varrho \geq 1$ and use the fact that rescaling the parameter vector does not change the set of modes to get
    \begin{align*}
        \vec a\cdot \vec T(x) &\ge \vec a\cdot \vec T(M_{\vec a}) + \left(1+\eta+s\right)\ln(|x-M_{\vec b}|)  + \kappa\|\vec a\|/\varrho, \;\;\text{ or } \\
        \Pr_{\vec a}[W=x] &\le \exp(-\kappa\|\vec a\|/\varrho)\cdot \frac{1}{|x-M_{\vec b}|^{\left(1+\eta+s\right)}} \cdot\Pr_{\vec a}[W=M_{\vec a}]\,,
    \end{align*}
    for any $x\in\ints$ with $|x-M_{\vec a}|\ge \ell$. Note that in the above we did not multiply $\left(1+\eta+s\right)\ln(|x-M_{\vec b}|)$ with $\| \vec a \|/\varrho \geq 1$ since it only helps the inequality. Since $M_{\vec a} = M_{\vec b}$ due to the choice of $\vec b$, we accomplish our goal for the case $\|\vec a\| \geq \varrho$.
    
    On the other side, if $\|\vec a\|<\varrho$, then $\kappa\ge \kappa\cdot\|\vec a\|/\varrho$ and we may pick $\vec b = \vec a$, concluding the proof of the first part of the Lemma.

    \paragraph{Proof of Part (2).} Let us set $s \in \{0,1,2\}$. We have that
    \begin{align*}
        \E_{\vec a}[|W-M_{\vec a}|^s] = &
        \frac{
                \sum_{x\in\ints} |x-M_{\vec a}|^s\cdot \exp( -\vec a\cdot (\vec T(x)-\vec T(M_{\vec a}) ) )
            }{
                \exp( \vec a\cdot \vec T(M_{\vec a}) ) \cdot \Z_{\vec T}(\vec a)
            } \\
        = & \E_{\vec a}[|W-M_{\vec a}|^s \cdot Q_{\ell}] + \sum_{x:|x-M_{\vec a}|>\ell} \frac{
                |x-M_{\vec a}|^s\cdot \exp( -\vec a\cdot (\vec T(x)-\vec T(M_{\vec a}) ) )
            }{
                \exp( \vec a\cdot \vec T(M_{\vec a}) ) \cdot \Z_{\vec T}(\vec a)
            } \\
        \le & \E_{\vec a}[|W-M_{\vec a}|^s \cdot Q_{\ell}] + \sum_{x:|x-M_{\vec a}|>\ell} |x-M_{\vec a}|^s\cdot \exp( -\vec a\cdot (\vec T(x)-\vec T(M_{\vec a}) ) ) \\
        \le & \E_{\vec a}[|W-M_{\vec a}|^2 \cdot Q_{\ell}] + e^{-\kappa \cdot \max\{ \|\vec a\|/\varrho,1 \}} \cdot 2\zeta(1+\eta) \\ 
        \le & \E_{\vec a}[|W-M_{\vec a}|^2 \cdot Q_{\ell}] + e^{-\kappa \cdot \max\{\|\vec a\|/\varrho,1\}}\cdot O({1}/{\eta})\,,
    \end{align*}
where the first inequality follows from the fact that $\exp( \vec a\cdot \vec T(M_{\vec a}) ) \cdot \Z_{\vec T}(\vec a) = \sum_{x\in\ints} \exp(-\vec a\cdot (\vec T(x) - \vec T(M_{\vec a})) ) \ge \exp(-\vec a\cdot (\vec T(M_{\vec a}) - \vec T(M_{\vec a})) ) = 1$. The second inequality follows by applying \textbf{Part (1)} and noting that $\sum_{x : |x - M_{\vec a}| > \l} \frac{1}{|x-M_{\vec a}|^{1+\eta}} \leq 2\zeta(1+\eta),$ where $\zeta(\cdot)$ denotes the Riemann zeta function and $\zeta(1+\eta) = \Theta(1/\eta)$ as $\eta\to 0$.
\end{proof}

\def\ncrit{n'_{\textnormal{crit}}}
\subsection{Sparsifying SIIERVs}
\label{section:technical-2}

We now consider distributions of the form $X=\sum_{i\in[n']}X_i$, where $n'\le n$, $(X_i)_i$ independent and $X_i\sim\calE_{\vec T}(\vec a_i)$ for $\vec a_i\in\calA$. When $n'$ is small, then the distribution can be approximated term by term, by setting $\eps$ of Theorem \ref{theorem:single-term} equal to $\eps/n'$, since the total error of approximation is at most equal to the sum of the errors for each term. When $n'$ is large, the distribution of $X$ resembles the distribution of a discretized Gaussian random variable, due to the Berry-Esseen type bound of Lemma \ref{lemma:discr-gaussian-approx}. Assumption \eqref{assumption:variance-lower-bound} ensures that the variance of $X$ will be large enough, but also that the shift distance (i.e., $\tv(X,X+1)$) will be small enough. In particular, for unimodal distributions, the shift distance of a single term equals the mass assigned to the mode. Using Lemma \ref{lemma:vanishing-deviation}, we show that the variance lower bound implies an upper bound for the mass on the mode.

We only need to account for the case of large $n'$ and, in fact, find some critical value $\ncrit \in\nats$ with respect to which sparse and dense cases are split. In the dense case, the distribution of $X$ can be represented by its mean and variance alone. The goal is, therefore, to find some SIIERV $Y$ that is also close to a discretized Gaussian and $\E[X]\approx\E[Y]$ and $\Var(X)\approx\Var(Y)$. We prove in Lemma \ref{lemma:continuity} that (under assumptions \eqref{assumption:unimodal} and \eqref{assumption:bounded-moments}) $\Var_{\vec a}(W)$ and $\E_{\vec a}[W]$ are continuous functions of $\vec a$ on $\calA$ (and by assumption \eqref{assumption:connectivity}, $\calA$ is path connected). Therefore, one might hope that, although $\E[X]$ is a sum of many different quantities of the form $\E_{\vec a_i}[W]$ with $\vec a_i\in\calA$ (and similarly $\Var(X)$, due to independence), continuity implies the existence of a single parameter vector $\vec b$ in $\calA$ which expresses the total behavior of the quantities $\Var(X)$ and $\E[X]$. It turns out that the correct way to define $\vec b$ is as the parameter vector for which $\Var(X)/\E[X] = \Var_{\vec b}(W)/\E_{\vec b}[W]$. The reason ratios are used is to eliminate the influence of the number of terms $n'$ (which the parameter vector has no influence on). The idea is to use some kind of intermediate value theorem to prove the existence of $\vec b$, motivated by the fact that in the case $\E[X_i]>0$ for any $i\in[n']$, then $\Var(X)/\E[X] \in [\min_{i\in[n']}\Var(X_i)/\E[X_i] , \max_{i\in[n']}\Var(X_i)/\E[X_i]]$. However, there needs to be a careful handling for the cases that $\E[X_i]= 0$ or $\E[X_i]<0$ for some $i\in[n']$. The random variable $Y$ is selected to be a SIIERV of order $m = \lceil \E[X]/\E_{\vec b}[W]\rceil$ consisting of i.i.d. terms $(Y_i)_{i\in[m]}$ each following the distribution $\calE_{\vec T}(\vec b)$.

Finally, we observe that $\vec b\in\calA$ and Theorem \ref{theorem:single-term} can be used once again with accuracy $\eps/m$ to discretize the parameter space and provide a sparse cover for the dense case. We proceed with the formal proof of Theorem \ref{theorem:covering-siiervs}.

\begin{proof}[Proof of Theorem \ref{theorem:covering-siiervs}]
    We consider some exponential family $\calE_{\vec T}(\calA)$, the class $\calE_{\vec T}(\calA)$-\siirvs\ of order $n$ and any random variable $X$ with distribution within this class. That is
    \[
        X = \sum_{i\in[n']} X_i\,, \text{ where }n'\le n, X_i\sim\calE_{\vec T}(\vec a_i), \vec a_i\in\calA \text{ and }(X_i)_i\text{ independent.}
    \]
    We will show that, under our assumptions (see Assumption \ref{assumption:proper}) we can approximate the distribution of any such $X$ by some distribution lying within a small subset of $\calE_{\vec T}(\calA')$-\siirvs\ of order $m$, where $m\le \varrho\cdot n$ and $\calA'=\setoperator\calA$, for some $\varrho\ge 1$ and some extensive set operator $\setoperator$.
    
    \paragraph{Sparse Case.} The proof we will give has two main ingredients. The first one is Theorem \ref{theorem:single-term}, which states that, under our assumptions, we may sparsify the parameter space into a small set $\calB\subseteq\varrho\-\conehull\calA$. The first ingredient directly implies that if $n'$ is small, then the distribution of $X$ must be close to the distribution of some random variable $Y = \sum_{i\in[n']}Y_i$, where $Y_i\sim\calE_{\vec T}(\vec b_i)$, $\vec b_i\in\calB$ and  $(Y_i)_i$ are independent. In particular, considering some $\ncrit\in\nats$ which will be specified later, whenever $n'\le \ncrit$, Theorem \ref{theorem:single-term}, applied for $\eps\gets \eps/\ncrit$ gives some set $\calB\subseteq\varrho\-\conehull\calA$ with $|\calB|^{1/k} \le \widetilde{O}(\frac{\ncrit\cdot\varrho\cdot\sqrt{\Lambda}}{\eps\cdot \descriptivityparam} \cdot \log(B))$ so that each $\calE_{\vec T}(\calA)$-SIIRV of order $\ncrit$ can be $\eps$-approximated by some $\calE_{\vec T}(\calB)$-SIIRV of order $\ncrit$. Observe that there are only 
    \[
        |\calB|^{\ncrit} \le \left(\widetilde{O}\left(\frac{\ncrit\cdot\varrho\cdot\sqrt{\Lambda}}{\eps\cdot \descriptivityparam} \cdot \log(B) \right) \right)^{k\cdot \ncrit}
    \]
    different $\calE_{\vec T}(\calB)$-\siirvs\ of order $\ncrit$. The value $\ncrit$ will be obtained by the upcoming dense case.
    
    \paragraph{Dense Case.} The second ingredient is Lemma \ref{lemma:discr-gaussian-approx}, which indicates that when we have many terms in the sum, then the distribution of $X$ can be accurately represented by its mean and variance alone, since $X$ will be close to a Discretized Gaussian distribution. Therefore, when $n'\ge \ncrit$ (where $\ncrit$ sufficiently large for our purposes), we might try to find some SIIERV $Y$ that also has sufficiently many terms (and hence is accurately represented by its mean and variance alone) whose expectation and variance are close to the expectation and variance of $X$ respectively.
    
    In particular, our proof consists of the following parts.
    \begin{enumerate}
        \item We prove that a random variable $\dgaussvar_{X}$ that follows the discretized Gaussian distribution with mean $\E[X]$ and variance $\Var(X)$ is $O(\eps)$-close in total variation distance to $X$, given that $n'(\ge\ncrit)$ is large enough.
        
        \item We then find some random variables $Y'$ where $Y'=\sum_{i\in[m]}Y_i'$ and $Y_1', \ldots, Y_m'$ are i.i.d., each following the distribution $\calE_{\vec T}(\vec b')$ for some $\vec b'\in\calA$ so that $\E[Y']\approx \E[X]$ and $\Var(Y')\approx \Var(X)$.
        
        \item Next, we show that $Y'$ is $O(\eps)$-close to a discretized Gaussian random variable $\dgaussvar_{Y'}$ with  mean $\E[Y']$ and variance $\Var(Y')$, when $n'$ is large enough.
        
        \item Afterwards, we show that $\dgaussvar_{X}$ and $\dgaussvar_{Y'}$ are $O(\eps)$-close in total variation distance.
        
        \item Finally, we use Theorem \ref{theorem:single-term} for the distribution $\calE_{\vec T}(\vec b')$ in order to discretize the parameter vector.
    \end{enumerate}
    
    In fact, our goal here is to acquire a set of inequalities of the form $n'\ge n_i$, for $n_i>0$ so that if $n'$ satisfies all of them, then each one of the steps presented above corresponds to some $O(\eps)$ deviation in total variation distance.
    
    \paragraph{Step 1: Gaussian Approximation.} Let $\mu=\E[X]=\sum_{i\in[n']}\mu_i,$ where $\mu_i=\E[X_i]$ and $\sigma^2=\Var(X)=\sum_{i\in[n']}\sigma_i^2$, where $\sigma_i^2=\Var(X_i)$ and consider some random variable $\dgaussvar_{X}$ with $\dgaussvar_{X}\sim\Dgauss(\mu,\sigma^2)$. Let $\beta = \sum_{i\in[n']}\beta_i$ where $\beta_i = \sum_{i\in[n']}\E[|X_i-\E[X_i]|^3]$ and let $\delta\in[0,1]$ such that
    \[
        \delta= \max_{i\in[n']}\tv(X-X_i,X-X_i+1)\,.
    \]
    From Lemma \ref{lemma:discr-gaussian-approx}, we get that
    \[
        \tv(X,\dgaussvar_{X}) \le O(1/\sigma) + O(\delta) + O(\beta/\sigma^3) + O(\delta\cdot\beta / \sigma^2)\,.
    \]
    First, we upper bound the quantity $\beta/\sigma^2$. 
        Using Lemma \ref{lemma:ratio-ineq}, we have that
        \[
            \frac{\beta}{\sigma^2}\le \max_{i\in[n']}\frac{\beta_i}{\sigma_i^2} \le B/\gamma\,,
        \]
        due to assumptions \eqref{assumption:bounded-moments} and \eqref{assumption:variance-lower-bound}. Next, we provide a lower bound for $\sigma^2$. In particular, we get that
        \[
            \sigma^2 \ge n'\cdot \gamma \,,
        \]
        due to assumption \eqref{assumption:variance-lower-bound}. We now demand that $n'$ is large enough so that
        \[
            \left( 1+\frac{\beta}{\sigma^2} \right) \cdot \frac{1}{\sigma} \le O(\eps)\,,
        \]
        thereby concluding to the following demand for the number of summands $n'$:
        \[
            n' \ge n_1\,, \text{ where }n_1 = O\left( \frac{B^2}{\eps^2 \cdot \gamma^3} \right)\,.
        \]
        Finally, we calculate $\delta$ and provide another demand of the form $n'\ge n_i$ for some $n_i>0$. For any $\vec a\in\calA$ and any $W\sim\calE_{\vec T}(\vec a)$, we have that
        \[
            2 \tv(W, W+1) = \frac{1}{\Z_{\vec T}(\vec a)}\sum_{x \in \ints} \left| \exp(-\vec a \cdot \vec T(x)) - \exp(-\vec a \cdot \vec T(x+1)) \right|\,.
        \]
        By using the unimodatily assumption \eqref{assumption:unimodal}, we get that the summation in the right hand side of the above equation is telescopic on both sides around $M_{\vec a}$ and therefore
        \[
            \tv(W,W+1) = \Pr[W=M_{\vec a}]\,.
        \]
        We now bound $\Pr[W=M_{\vec a}]$ for any $\vec a\in\calA$ by using Lemma \ref{lemma:vanishing-deviation} with $\eta=1/2$, $s=2$ and $\kappa>0$ to be decided. Note that by picking smaller $\eta$ we can shrink the order of $\ell$ but cannot make it smaller than $O(B^{5/4})$. We get some $\ell\le e^{2\kappa}\cdot O(B^{2.5})$ so that 
        \[
            \E[|W-M_{\vec a}|^2]\le \ell^2 \cdot \Pr[W\neq M_{\vec a}] + e^{-\kappa}\cdot O(1)\,.
        \]
        We pick $\kappa = \ln(O(1/\gamma))$ to get
        \[
            \E[|W-M_{\vec a}|^2]\le \ell^2 \cdot \Pr[W\neq M_{\vec a}] + \gamma/2\,.
        \]
        We also know that $\E[|W-M_{\vec a}|^2] \ge \Var_{\vec a}(W) \ge \gamma$. Hence, we have
        \[
            1 - \Pr[W = M_{\vec a}] \ge \Omega(B^5/\gamma^5)\,.
        \]
        Moreover, by using Lemma \ref{lemma:dshift-sum}, we get that
        \[
            \tv(X,X-X_i+1) \le \frac{\sqrt{2/\pi}}{\sqrt{\frac{1}{4}+(n'-1)\inf_{\vec a\in\calA}(1-\tv(W_{\vec a},1+W_{\vec a}))}}\,,
        \]
        where $W_{\vec a}\sim\calE_{\vec T}(\vec a)$.
        
        We conlcude to the following demand for the number of summands $n'$ (so that we have $(1+\beta/\sigma^2)\cdot\delta\le O(\eps)$):
        \[
            n' \ge n_2\,, \text{ where }n_2 = O\left( \frac{B^7}{\eps^2 \cdot \gamma^7} \right)\,.
        \]
        
    \paragraph{Step 2: Matching Variances and Expectations.} In this step, we will find some random variable $Y'$ that is the sum of a number of i.i.d. random variables within $\calE_{\vec T}(\calA)$ such that the expectation (resp. variance) of $Y'$ is close to the expectation (resp. variance) of $X$. We will split cases according to the sign of the expectation $\E[X]$.
    \begin{itemize}
        \item If $\E[X] = 0$, then we have $\sum_{i\in[n']}\E[X_i] = 0$, which implies that either $\E[X_j] = 0$ for some $j\in[n']$ (in which case we may consider $\vec b' = \vec a_j$), or that $\E[X_i]\cdot \E[X_j] < 0$, for some $i,j\in[n']$, which, since $\E_{\vec a}[W]$ is a continuous function when $\vec a\in\calA$ (see Lemma \ref{lemma:continuity}), gives by intermediate value theorem (and the fact that $\calA$ is connected by assumption \eqref{assumption:connectivity}) some $\vec b'\in\calA$ with $\E_{\vec b'}[W] = 0$.
        
        We now pick
        \[
            m = \left\lceil \frac{\Var(X)}{\Var_{\vec b'}(W)} \right\rceil\,.
        \]
        We have that $\Var(Y') = m\cdot \Var_{\vec b'}(W) \in [\Var(X), \Var(X)+\Var_{\vec b'}(W)]$ and hence we get that $\Var(Y') \in [\Var(X), \Var(X)+\sqrt{B}]$, due to assumption \eqref{assumption:bounded-moments} and the fact that $\vec b'\in\calA$. Moreover, we have that $\E[X] = \E[Y'] = 0$.
       
        \item If $\E[X] > 0$, then we split $X = \sum_{i\in[n']}X_i$ into three summations $X=X^++X^-+X^0$, according to the sign of $\E[X_i]$ (for example $X^+ = \sum_{i\in I^+} X_i$, where $I^+$ is the set of $i\in[n']$ so that $\E[X_i]>0\}$). We then have that $\E[X^+] > |\E[X^-]|$ (since $\E[X]>0$) and
        \[
            \frac{\Var(X)}{\E[X]} = \frac{\Var(X^+)+\Var(X^-)+\Var(X^0)}{\E[X^+] - |\E[X^-]|} \ge \frac{\Var(X^+)}{\E[X^+]}\,.
        \]
        Moreover, we have that 
        \[
            \frac{\Var(X^+)}{\E[X^+]} = \frac{\sum_{i\in I^+} \Var(X_i)}{\sum_{i\in I^+} \E[X_i]} \ge \min_{i\in[n']}\frac{\Var(X_i)}{\E[X_i]}\,,
        \]
        since $\Var(X_i),\E[X_i]>0$, for any $i\in I^+$. Recall that the distribution of $X_i$ is $\calE_{\vec T}(\vec a_i)$ for some $\vec a_i\in\calA$. 
        
        Suppose, first that there exists some $j\in[n']$ so that $\E[X_j] \le 0$ (i.e., $I^0\cup I^-\neq\emptyset$). Then, there exists some $\vec a_j\in\calA$ such that $\E_{\vec a_j}[W] \le 0$. Since $\calA$ is connected, there exists some path connecting $\vec a_i$ and $\vec a_j$. Let $\vec a'$ be the first point in the path between $\vec a_i$ and $\vec a_j$ (beginning from $\vec a_i$) so that $\E_{\vec a'}[W] = 0$. We know that there exists such a point and that when $\vec a$ goes from $\vec a_i$ to $\vec a'$ through the path we described, $\E_{\vec a}[W]$ always remains positive (since it is a continuous function by Lemma \ref{lemma:continuity}). Moreover, as $\vec a$ approaches $\vec a'$, the expectation $\E_{\vec a}[W]$ becomes arbitrarily small, while the variance $\Var_{\vec a}(W)$ remains lower bounded by $\gamma$ (due to assumption \eqref{assumption:variance-lower-bound}). Let $P$ denote the path from $\vec a_i$ to $\vec a'$, excluding $\vec a'$. Then the quantity $\Var_{\vec a}(W)/\E_{\vec a}[W]$ is a continuous function of $\vec a$ when $\vec a\in P$, due to Lemma \ref{lemma:continuity} and the fact that $\E_{\vec a}[W]>0$ for any $\vec a\in P$. Also, we have that as $\vec a\to\vec a'$ (through the path $P$), $\Var_{\vec a}(W)/\E_{\vec a}[W]\to \infty$ and therefore, due to the intermediate value theorem, there exists some $\vec b'\in P\subseteq\calA$ so that 
        \[
            \frac{\Var_{\vec b'}(W)}{\E_{\vec b'}[W]} = \frac{\Var(X)}{\E[X]} \in \left[\frac{\Var_{\vec a_i}(W)}{\E_{\vec a_i}[W]} , \infty\right) \,.
        \]
        
        When $\E[X_i]>0$ for any $i\in[n']$, we have that
        \[
            \frac{\Var(X)}{\E[X]} = \frac{\Var(X^+)}{\E[X^+]} \in \left[ \min_{i\in[n']}\frac{\Var(X_i)}{\E[X_i]}, \max_{i\in[n']}\frac{\Var(X_i)}{\E[X_i]} \right]\,,
        \]
        since all terms are positive. By a similar continuity argument we get once again that there exists some $\vec b'\in\calA$ so that 
        \[
            \frac{\Var_{\vec b'}(W)}{\E_{\vec b'}[W]} = \frac{\Var(X)}{\E[X]}\,.
        \]
        We may pick
        \[
            m = \left\lceil \frac{\Var(X)}{\Var_{\vec b'}(W)} \right\rceil\,.
        \]
        We have that $\Var(Y') = m\cdot \Var_{\vec b'}(W) \in [\Var(X), \Var(X)+\Var_{\vec b'}(W)]$ and hence we get that $\Var(Y') \in [\Var(X), \Var(X)+\sqrt{B}]$, due to assumption \eqref{assumption:bounded-moments} and the fact that $\vec b'\in\calA$. Moreover, due to the selection of $\vec b'$ we have that
        \[
            m = \left\lceil \frac{\E(X)}{\E_{\vec b'}[W]} \right\rceil\,.
        \]
        Hence, we get the following bound for the expectation of $Y'$ with respect to the expectation of $X$
        \begin{align*}
            \E[Y'] = m\cdot \E_{\vec b'}[W] \in \left[\E[X], \E[X] + \E_{\vec b'}[W]\right]
        \end{align*}
        
        \item If $\E[X]<0$, then we may use an analogous reasoning as for the case that $\E[X]>0$ to prove the existence of some $\vec b'\in\calA$ so that $\E[Y']\in[\E[X], \E[X]+\E_{\vec b'}[W]]$ and also $\Var(Y')\in [\Var(X), \Var(X)+\sqrt{B}]$.
    \end{itemize}
    
    Therefore, in any case, we have proven that for some $\vec b'\in\calA$, the random variable $Y'=\sum_{i\in[m]}Y'_i$ where $Y'_i$ are i.i.d. random variables following the distribution $\calE_{\vec T}(\vec b')$ has
    \[
        \E[X] \le \E[Y'] \le \E[X] + \E_{\vec b'}[W] \text{ and } \Var(X)\le \Var(Y') \le \Var(X) + \sqrt{B}\,.
    \]
    One merit of the result presented above is that the difference between the variances (resp. expectations) of $X$ and $Y'$ does not depend on the number of terms $n'$ of $X$. This is crucial in order to be able to apply Lemma \ref{lemma:tv-gaussians} to show that whenever $n'$ is large enough, $X$ is close to some $Y'$ as described above.

    \paragraph{Step 3: $Y'$ is similar to a Gaussian.} In this step, we use the same arguments as in \textbf{Step 1} to find a sufficient condition for $n'$ so that $Y'$ is $O(\eps)$-close in total variation distance to some $\dgaussvar_{Y'}\sim\Dgauss(\E[Y'],\Var(Y'))$. In particular, the quantities of interest are three. First, the ratio of the sum of the third centralized moments of $Y_i'$ to the variance of $Y'$, for which the upper bound we provided in \textbf{Step 1} continues to hold. Second, the lower bound for the variance of $Y'$, which is $m\cdot\gamma$. Third, the shift distance $\delta_{Y'}$ in which, $m$ will appear in the denominator in the position of $n'$. 
    
    We have that $m\ge \frac{\Var(X)}{\Var_{\vec b'}(W)} \ge n'\cdot\gamma/\sqrt{B}$. Therefore, applying the similar demands for the shift distance as in \textbf{Step 1}, we get the following sufficient demand for $n'$
        \[
            n'\ge n_3, \text{ where }n_3 = O\left( \frac{B^{7.5}}{\eps^2\cdot \gamma^8} \right)\,.
        \]
        
    \paragraph{Step 4: The Gaussian approximations are close.} In this step, we make use of Lemma \ref{lemma:tv-gaussians} in order to find sufficient conditions for $n'$ so that 
    $X$ and $Y'$ are $O(\eps)$ close in total variation distance. 
    We have that $|\E[X]-\E[Y']|\le |\E_{\vec b'}[W]| \le \E_{\vec b'}[|W-M_{\vec b'}|] + |M_{\vec b'}| \le B^{1/4}+L$ (due to assumptions \eqref{assumption:bounded-moments} and \eqref{assumption:bounded-modes}) and $|\Var(X)-\Var(Y')|\le \sqrt{B}$ and also $\Var(X)\le \Var(Y')$. Hence, by Lemma \ref{lemma:tv-gaussians}, which bounds the total variation distance between two discretized Gaussians using the differences of their parameters, we get that it is sufficient that
    \[
        \Var(X)\ge (B^{1/4}+L)^2/\eps^2 \text{ and }\Var(X) \ge \sqrt{B}/\eps\,.
    \]
    We know that $\Var(X) \ge n'\cdot \gamma$ (by assumption \eqref{assumption:variance-lower-bound}). We arrive to the following sufficient condition for $n'$.
    \[
        n' \ge n_4, \text{ where } n_4 = O\left( \frac{L^2+\sqrt{B}}{\eps^2 \cdot \gamma^2} \right)\,.
    \]
    
    Gathering all of the conditions for $n'$, we get that $\ncrit =\frac{1}{\eps^2}\cdot \poly(B,L,1/\gamma)$.
        
    \paragraph{Step 5: Discretization.} Finally, we make use of Theorem \ref{theorem:single-term} in order to discretize the space of possible parameter vectors. In particular, we find a sparse set (subset of $\calA_{\varrho}$) that contains (for any input distribution of $X$) some $\vec b$ so that if $Y=\sum_{i\in[m]}Y_i$ with $(Y_i)_i$ i.i.d. with distribution $\calE_{\vec T}(\vec b)$, then $Y,Y'$ are $O(\eps/m)$ close in total variation distance. We apply Theorem \ref{theorem:single-term} with error margin $\eps/m$, using the fact that $m\le n'\cdot\sqrt{B}/\gamma$ to quantify our results. 
    
    We get that $X$ is $O(\eps)$ close to $Y$ in total variation distance.
\end{proof}

\paragraph{Acknowledgment.} We thank Dimitris Fotakis for useful discussions on various stages of this work. We would also like to thank the anonymous reviewers of NeurIPS 2022 for their feedback.

\bibliographystyle{alpha}
\begingroup
    \setlength{\bibsep}{12pt}
    \bibliography{references.bib}
\endgroup

\appendix

\newpage

\section{Additional Notation, Preliminaries and General Tools} 
\label{appendix:appendix}

In this section, we provide some notation that will be useful in the proofs. Moreover, we report a collection of existing results that we will apply in our proofs.

\subsection{Additional Notation}
\label{appendix:notation}
\paragraph{Standard Notation.}
We denote with $\nats$ the set of natural numbers $\{1,2,3,\dots\}$ and $\natszero=\{0\}\cup\nats$. We denote with $\ints$ the set of integer numbers $\{\dots,-2,-1,0,1,2,\dots\}$ and with $\reals$ the set of real numbers. 
We denote vectors using bold letters e.g., $\vec a, \vec T$. 
We let $\|\cdot\|_p$ denote the $L_p$ norm with $p \in \{1,2,\infty\}$. In general, we let $\| \cdot \| = \| \cdot \|_2$ be the standard Euclidean norm. The inner product between two vectors $\vec a, \vec b$ is denoted by $\vec a \cdot \vec b$.
We use $\Gamma$ for the Gamma function and $\zeta$ for the Riemann zeta function.
For any $r>0$, $k\in\nats$, $\vec a\in \reals^k$, we denote with $\ball_{r}(\vec a)$ (resp. $\ball_{r}[\vec a]$) the open (resp. closed) ball centered at $\vec a$ with radius $r$.

\paragraph{Probability.} For a random variable $X$, we may denote its distribution by $\calL(X)$. We denote by $\Pr, \E, \Var, \Cov$ the probability, the expectation, the variance and the covariance operators. We denote $\Be, \Bin, \NBin, \Geo, \TG, \Dgauss, \Poi, \mathrm{Uni};$
the Bernoulli, Binomial, Negative Binomial, Geometric, Truncated Geometric, discretized Gaussian, Poisson and Uniform probability distribution respectively.

\paragraph{Notions of Distance for Distributions.}
Let $P, Q$ be two probability measures in the discrete probability space $(\Omega, \mathcal{F})$. The \emph{total variation distance} or \emph{statistical distance} between $P$ and $Q$, denoted $\tv(P,Q)$, is defined as $\tv(P,Q) = \frac{1}{2}\sum_{x \in \Omega}|P(x) - Q(x)| = \max_{A \in \mathcal{F}}|P(A)-Q(A)|$. The \emph{Kullback–Leibler divergence} (or simply, KL divergence), denoted $D_{KL}(P \parallel Q)$, is defined as $D_{KL}(P \parallel Q) = \E_{x \sim P}\!\left[\log \left(\frac{P(x)}{Q(x)}\right)\right] = \sum_{x \in \Omega}P(x) \log \left(\frac{P(x)}{Q(x)} \right)$. 

\paragraph{Exponential Families} For $k\in\natszero$, $\calA\subseteq\reals^k$ and $\vec T:\ints\to\reals^k$, we denote with $\calE_{\vec T}(\calA)$ the exponential family with sufficient statistics $\vec T$ and parameter space $\calA$. In particular, if $W\sim\calE_{\vec T}(\vec a)$ for some $\vec a\in\calA$, then for any $x\in\ints$
\[
    \Pr[W=x] \propto \exp(-\vec a\cdot \vec T(x))\,.
\]
We will use the notation $\Pr_{\vec a}[W=x]$ (similarly $\E_{\vec a}[W]$ and $\Var_{\vec a}(W)$ for expectation and variance correspondingly) to refer to the probability that $W=x$ given that $W\sim\calE_{\vec T}(\vec a)$, whenever it is clear by the context that the distribution of $W$ belongs in $\calE_{\vec T}(\calA)$. Note that we will use the following notation $\E_{W}[|W|^0] = \E_{W}[\vec 1\{ W\neq 0 \}] = \Pr_{W}[W\neq 0]$, i.e., we interpret $0^0$ as $0$. Note that in the general case, an exponential family $\calE$ supported on $\ints$ is defined in terms of some additional function  $h:\ints\to(0,+\infty)$ so that if $W\sim \calE$, then $\Pr[W=x]\propto h(x)\cdot\exp(-\vec a\cdot \vec T(x))$. In this case, we use the notation $W\sim\calE_{\vec T,h}(\vec a)$. However, we can reduce this setting to $h\equiv 1$ by considering $\calA'=\calA\times\{1\}$ and $\vec T' = (\vec T, -\log_e(h(x)))$. We also define the logarithmic partition function $\Lambda_{\vec T, h} : \reals^k \rightarrow \reals_{+}$ as $\Lambda_{\vec T, h}(\vec a) = \log \Big( \sum_{x=0}^{\infty} h(x) \exp( -\vec a \cdot \vec T(x) ) \Big)$.

\paragraph{Modes of Distributions.} For any distribution $\calD$ over the lattice of integers $\ints$, we consider the set of modes of $\calD$ to be $\modes = \arg\max_{x\in\ints}\calD(x)$ (where $\calD(x) = \Pr_{W\sim\calD}[W=x]$). We say that $\calD$ is unimodal if there exists some (mode) $M$ in $\ints$ such that if $W\sim \calD$, then
\begin{align*}
    \Pr[W=x]&\ge \Pr[W=x+1]\,, \text{ for any }x\ge M \text{ and} \\
    \Pr[W=x]&\le \Pr[W=x+1]\,, \text{ for any }x< M\,.
\end{align*}
Note that it might be the case that $\Pr[W=x]=\Pr[W=x+1]$, and therefore $\calD$ could have many neighboring modes (although we call it unimodal). 

For $\vec a\in\calA$, consider the distribution $\calE_{\vec T}(\vec a)$ which lies in the exponential family $\calE_{\vec T}(\calA)$. We denote with $\modes_{\calE_{\vec T}(\vec a)}$ or simply $\modes_{\vec a}$ (whenever $\vec T$ is clear by the context) the set of modes of $\calE_{\vec T}(\vec a)$. We also use the notation $M_{\vec a}$ to refer to any specific mode of $\calE_{\vec T}(\vec a)$, like in $\Pr_{\vec a}[W=M_{\vec a}]$ (the referenced point could equivalently be any mode of $\calE_{\vec T}(\vec a)$ since all modes are assigned the same probability mass).

We say that the exponential family $\calE_{\vec T}(\calA)$ is unimodal if $\calE_{\vec T}(\vec a)$ is unimodal for each $\vec a\in\calA$. We denote with $\modes_{\calA}$ the union of the sets of modes of the distributions in $\calE_{\vec T}(\calA)$, i.e., $\modes_{\calA} = \cup_{\vec a\in\calA} \modes_{\vec a}$.

\paragraph{Sets and Set Operators.}
A polyhedron is the intersection of finitely many affine halfspaces.
A cone is a subset $K \subseteq \reals^k$ with $\vec 0 \in K$ and $\alpha \vec y \in K$ for all $\vec y \in K$ and $\alpha \in \reals_+$. A polyhedral cone is a polyhedron that is a cone. We say that $\setoperator$ is an extensive set operator on $\reals^k$ if for any set $\calA\subseteq\reals^k$, $\setoperator\calA$ is a subset of $\reals^k$ and $\calA\subseteq \setoperator\calA$. We will make use of some particular extensive set operators:
The closure operator $\closure$: $\closure \calA = \calA \cup \partial \calA$. We also use the notation $\ccalA := \closure\calA$.
The convex hull operator $\chull$:
    \begin{equation}
        \label{eq:convex}
     \chull \calA = \left\{\sum_{i\in[n]}t_i\vec a_i: \vec a_i\in\calA, t_i\in[0,1], \sum_{i\in[n]}t_i=1, n\in\nats\right\},
    \end{equation}
    for any $\calA\subseteq\reals^k$. The conical hull operator $\conehull$:
    \begin{equation}
        \label{eq:cone}
        \conehull \calA = \left\{\sum_{i\in[n]}t_i\vec a_i: \vec a_i\in\calA, t_i\ge 0, n\in\nats\right\},
    \end{equation}
    for any $\calA\subseteq\reals^k$
    The $\varrho$-conical hull operator $\varrho\-\conehull$, where $\varrho>0$:
    \[  
        \varrho\-\conehull \calA= \calA \cup (\conehull\calA \setminus \ball_\varrho(\vec 0))\,.
    \]
    In other words, $\varrho\-\conehull$ operator inserts in $\calA$ all points of the conical hull of $\calA$ with norm at least $\varrho$.
    The shade operator:
    \[
        \shade \calA = \{t\vec a: \vec a\in\calA, t\ge 1\}\,.
    \]
We will use these set operators to rule out the possibility that $\calA$ is a very contrived set that makes the proper covering problem unreasonably difficult or complicated, since we focus on providing the first general approach for the proper covering problem. In particular, we will relax our demand that the cover is proper, by enabling sums of random variables that belong in a slightly wider exponential family by enlarging $\calA$ appropriately.

\subsubsection{What do Proper Covering and Learning mean for SIIERVs?}
\label{subsubsection:proper-meaning}
\noindent\textbf{(Weakly) Proper Covers.} Below, we give a slightly relaxed definition for the notion of proper covers for our setting. We will use these set operators to rule out the possibility that $\calA$ is a very contrived set that makes the proper covering problem unreasonably difficult or complicated, since we focus on providing the first general approach for the proper covering problem. In particular, we will relax our demand that the cover is proper, by enabling sums of random variables that belong in a slightly wider exponential family by enlarging $\calA$ appropriately.
\begin{definition}[Proper Covers of \siiervs]\label{definition:proper}
    Let $\calC$ be a cover of $\calE_{\vec T}(\calA)$-\siirvs\ of order $n$, where $\calA\subseteq\reals^k$. Consider $R \ge 1$ and $\setoperator$ is an extensive set operator on $\reals^k$. If any element of $\calC$ is the distribution of an $\calE_{\vec T}(\setoperator \calA)$-SIIRV of order $R\cdot n$, then we say that $\calC$ is $(R, \setoperator)$-proper.
\end{definition}

\noindent\textbf{(Weakly) Proper Learning.} For any $n,k\in\nats$, any $\calA\subseteq\reals^k$ and any $\vec T:\ints\to\reals^k$ (so that $\calE_{\vec T}(\calA)$ is well defined), we say that the the class of $\calE_{\vec T}(\calA)$-\siirvs\ of order $n$ can be $(R,\setoperator)$-properly learned if there exist some $R\ge 1$ and some extensive set operator $\setoperator$ such that for any $\eps,\delta\in(0,1)$ there exists some algorithm $A$ and some polynomial $m$ on $1/\eps,1/\delta$ (and other relevant parameters) such that $A$, given $m$ independent samples from some unknown $\calE_{\vec T}(\calA)$-SIIRV $X$ of order $n$, outputs some $\calE_{\vec T}(\setoperator\calA)$-SIIRV $Y$ of order $R\cdot n$ such that $\tv(X,Y)\le \eps$, with probability at least $1-\delta$.

\subsection{General Tools}

We show that our novel notion of distance, the structural distance is actually a distance metric.
\begin{lemma}\label{lemma:new_distance}
    The structural distance (Definition \ref{definition:new-distance}) is a distance metric between discrete distributions.
\end{lemma}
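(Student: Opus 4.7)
The plan is to verify the four defining axioms of a metric. Non-negativity holds by construction since $\dnew$ is defined as a minimum over $\eps \in [0,1]$, and symmetry is immediate since conditions (i) and (ii) in Definition \ref{definition:new-distance} are symmetric in the pair $(D_1, D_2)$. First I would note that the set of valid $\eps$ is upward-closed (if $\eps$ satisfies the condition, so does every $\eps' \ge \eps$, because condition (i) only relaxes and (ii) is $\eps$-independent), so the infimum is attained and $\dnew$ is a well-defined quantity in $[0,1]$ even when $\mathbb{X}$ is infinite, equal to $\sup_{x} \max(D_1(x)/p_1, D_2(x)/p_2)$ over those $x$ for which (ii) fails.

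For the identity of indiscernibles, if $D_1 = D_2$ then $p_1 = p_2$ and (ii) holds at every $x$, so $\dnew(D_1, D_2) = 0$. Conversely, $\dnew(D_1, D_2) = 0$ forces, at each $x$, either $D_1(x) = D_2(x) = 0$ (from (i) with $\eps = 0$) or $D_1(x)/p_1 = D_2(x)/p_2$. Summing over $x \in \mathbb{X}$, and noting the all-zero points contribute nothing to either side, gives $1/p_1 = 1/p_2$, so $p_1 = p_2$ and $D_1 = D_2$ pointwise.

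The main content is the triangle inequality $\dnew(D_1, D_3) \le \eps_{12} + \eps_{23}$, where $\eps_{ij} := \dnew(D_i, D_j)$. Set $\eps := \eps_{12} + \eps_{23}$ (if $\eps \ge 1$ the bound is trivial), and fix $x \in \mathbb{X}$. The proof proceeds by case analysis on which of (i), (ii) applies to each of $(D_1, D_2)$ and $(D_2, D_3)$ at $x$. If (ii) holds for both, the two ratio equalities chain to give $D_1(x)/p_1 = D_3(x)/p_3$. If (i) holds for both, then $D_1(x) \le \eps_{12} p_1 \le \eps p_1$ and $D_3(x) \le \eps_{23} p_3 \le \eps p_3$. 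In the mixed case where (ii) holds for $(D_1, D_2)$ and (i) holds for $(D_2, D_3)$, the ratio transports smallness through $D_2$: $D_1(x) = (p_1/p_2)\, D_2(x) \le (p_1/p_2)\cdot \eps_{23} p_2 = \eps_{23} p_1 \le \eps p_1$, while $D_3(x) \le \eps_{23} p_3 \le \eps p_3$; the symmetric mixed case is analogous. In every case, (i) or (ii) holds for $(D_1, D_3)$ with parameter $\eps$, proving the bound.

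The only subtlety worth flagging is the correct handling of the factor $p_i/p_j$ in the mixed cases, which is what converts smallness of $D_2(x)$ relative to $p_2$ into smallness of $D_1(x)$ or $D_3(x)$ relative to the respective maximum. Beyond this bookkeeping, no step presents a real obstacle.
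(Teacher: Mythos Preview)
Your proof is correct and follows essentially the same approach as the paper: direct verification of the metric axioms, with the triangle inequality handled by a case analysis on which of conditions (i) and (ii) applies to each of the two intermediate pairs at a given point $x$. Your organization of the triangle inequality (four explicit cases on which condition holds for each pair) is in fact cleaner and more complete than the paper's, which first isolates the points with $D_1(x)>(\eps_1+\eps_2)p_1$ to force the chaining of (ii), and then dispatches the remaining points with a terse ``take cases'' remark.
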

\begin{proof}
Note that $\dnew$ is non-negative. It is sufficient to prove the three following properties:\\
\noindent\texttt{Identity of indiscernibles.} $\dnew(\calD_1,\calD_2) = 0$ if and only if $\calD_1$ and $\calD_2$ are equivalent. The first direction of the property is satisfied, since when $\eps=0$, any point outside the support of $\calD_1$ must be assigned zero mass by $\calD_2$ (hence they have common support) and any point in the support must have proportionally equivalent mass (which implies exactly equal mass, since the probabilities must sum to $1$). For the other direction, observe that if $\calD_1$ and $\calD_2$ are equivalent, then the structural distance is zero.\\
\noindent\texttt{Symmetry.} Observe that Definition \ref{definition:new-distance} is, in fact, symmetric and so $\dnew(\calD_1,\calD_2) = \dnew(\calD_2,\calD_1)$.\\
\noindent\texttt{Triangle Inequality.} We have to show that $\dnew(\calD_1,\calD_3) \le \dnew(\calD_1,\calD_2) + \dnew(\calD_2,\calD_3)$. Let $\dnew(\calD_1,\calD_2) = \eps_1$ and $\dnew(\calD_2,\calD_3) = \eps_2$.
        
Consider, first, any $x\in\calX$ such that $\calD_1(x) > (\eps_1+\eps_2)\cdot p_1$. Then $\calD_1(x) > \eps_1\cdot p_1$ and hence, for the point $x$ and the pair $(\calD_1,\calD_2)$, it must hold $\calD_2(x)/p_2 = \calD_1(x)/p_1 > \eps_1+\eps_2$. Similarly, we have $\calD_2(x)/p_2 > \eps_1 + \eps_2$ and so $\calD_3(x)/p_3 = \calD_2(x)/p_2 > \eps_1 + \eps_2$. This implies that $\calD_3(x)/p_3 = \calD_1(x)/p_1$.

For the rest points $x\in\calX$ with $\calD_1(x) \le (\eps_1+\eps_2)\cdot p_1$, we have to take cases for $\calD_1 (x)$ (either $\le \eps_1 p_1$ or $> \eps_1 p_1$) and after applying what we know about $\dnew(\calD_1,\calD_2)$, take cases for $\calD_2(x)$ (and apply knowledge about $\dnew(\calD_2,\calD_3)$). In any case, we get that $\calD_3(x) \le (\eps_1+\eps_2)\cdot p_3$. Therefore $\dnew(\calD_1,\calD_3) \le \eps_1+\eps_2$.
\end{proof}

We continue with a collection of general tools that we are going to need for our proofs. The Minkowski-Weyl Theorem shows that any polyhedron can be either represented in a constrained or a finitely generated form.
\begin{proposition}
[Minkowski-Weyl Theorem {[\cite{minkowski1986},\cite{weyl1935}]}]
\label{proposition:minkowski-weyl}
    Let $\calC\subseteq\reals^k$ for $k\in\ints$. Then the following are equivalent:
    \begin{enumerate}
        \item There exists $t\in\nats$ and some matrix $H\in\reals^{k\times t}$ such that $\calC = \{\vec u\in\reals^k : H^{T} \vec u\ge \vec 0\}$.
        \item There exists $s\in\nats$ and some matrix $Z\in\reals^{k\times s}$ such that $\calC = \{\vec u\in \reals^k : \vec u = Z\vec x, \vec x\ge \vec 0\}$.
    \end{enumerate}
\end{proposition}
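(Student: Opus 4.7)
The plan is to prove both directions by reducing to Fourier--Motzkin elimination, the classical procedure showing that the projection of a polyhedron onto a coordinate subspace is again a polyhedron. The direction $(2)\Rightarrow(1)$ falls out of a single application of this; the reverse direction is obtained via polar duality combined with Farkas' lemma.

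For $(2)\Rightarrow(1)$, the idea is to express $\calC$ as a projection. Define the auxiliary polyhedron
\[ P = \{(\vec u,\vec x)\in\reals^{k+s}: \vec u - Z\vec x = \vec 0,\ \vec x\ge \vec 0\}, \]
which, after splitting each equality into two opposite inequalities, is described by finitely many linear inequalities. Clearly $\calC$ is the projection of $P$ onto the first $k$ coordinates. Applying Fourier--Motzkin elimination iteratively to remove $x_1,\dots,x_s$, the system remains a finite set of linear inequalities at each step. The final description has the form $H^T\vec u \ge \vec b$ for some matrix $H$ and vector $\vec b$. Since $\calC$ contains $\vec 0$ and is closed under nonnegative scaling, one verifies that $\vec b$ can be taken to be $\vec 0$, yielding the desired halfspace representation.

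For $(1)\Rightarrow(2)$, the plan uses polar duality. Define
\[ \calC^\circ = \{\vec v\in\reals^k: \vec v\cdot\vec u \ge 0 \text{ for all } \vec u\in\calC\}.\]
A key lemma (a version of Farkas' lemma, itself provable from Fourier--Motzkin together with closedness of finitely generated cones) is that $\calC^\circ$ coincides with the finitely generated cone $\{H\vec\lambda : \vec\lambda \ge \vec 0\}$ spanned by the columns of $H$. Thus $\calC^\circ$ is finitely generated, so by the direction $(2)\Rightarrow(1)$ already established, it admits a halfspace representation $\calC^\circ = \{\vec v: Z^T\vec v \ge \vec 0\}$ for some matrix $Z\in\reals^{k\times s}$. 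The bipolar theorem for closed convex cones gives $\calC = \calC^{\circ\circ}$, and applying Farkas a second time to this halfspace description of $\calC^\circ$ yields $\calC^{\circ\circ} = \{Z\vec\alpha : \vec\alpha \ge \vec 0\}$, which is the sought representation. The closedness of $\calC$ required for the bipolar identity is immediate from the halfspace description in (1).

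The main technical obstacle is the Fourier--Motzkin step itself, which must be verified to produce an exact description of the projection rather than merely a superset; this requires carefully pairing every positive and negative coefficient of the eliminated variable and checking that the combined inequalities characterize exactly the consistent values of the remaining variables. A secondary subtlety is the closedness of finitely generated cones, needed both for the bipolar identity and for the hyperplane separation underlying Farkas' lemma; the standard device is Carath\'eodory's theorem, which writes the cone as a finite union of images of $\reals_+^k$ under linear maps with linearly independent columns, each such image being closed.
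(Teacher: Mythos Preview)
The paper does not prove this proposition; it is stated as a classical result with citations to Minkowski and Weyl and used as a black box (specifically inside the proof of Theorem~\ref{theorem:geometry-up}). Your sketch via Fourier--Motzkin elimination for $(2)\Rightarrow(1)$ and polar duality plus Farkas for $(1)\Rightarrow(2)$ is one of the standard textbook routes and is correct as outlined, including the subtleties you flag (exactness of the projection step, and closedness of finitely generated cones via Carath\'eodory for the bipolar identity). Since the paper offers no proof of its own, there is nothing to compare against beyond noting that your argument is self-contained while the paper simply imports the result.
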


The next standard inequality shows that in order to control the statistical distance for two distributions, it suffices to control the KL divergence.
\begin{proposition}
[Pinsker's Inequality]
\label{proposition:pinsker}
For any probability distributions $P$ and $Q$ on a common measurable space, the following inequality holds.
\[
    \tv(P,Q)\le \sqrt{\frac{1}{2}\kl(P \parallel Q)}\,.
\]
\end{proposition}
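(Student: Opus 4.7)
The plan is to reduce Pinsker's inequality to the scalar (Bernoulli) case and then establish the scalar inequality by elementary calculus. First, I would fix any event $A$ in the common measurable space and consider the two-point distributions $P_A = (P(A), 1-P(A))$ and $Q_A = (Q(A), 1-Q(A))$ obtained by the binary partition $\{A, A^c\}$. By the log-sum inequality (equivalently, the data processing inequality for KL divergence applied to the measurable map $\mathbf{1}_A$), we have
\[
    \kl(P_A \parallel Q_A) \;\le\; \kl(P \parallel Q).
\]
On the other hand, using the characterization $\tv(P,Q) = \sup_A |P(A) - Q(A)|$, I would take $A = \{x : P(x) \ge Q(x)\}$ (the Scheffé set), so that $|P(A) - Q(A)| = \tv(P,Q) = \tv(P_A, Q_A)$. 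It therefore suffices to prove the inequality for a pair of Bernoulli distributions with parameters $p, q \in [0,1]$.

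Next, I would reduce to showing the pointwise scalar inequality
\[
    \varphi(p,q) \;:=\; p \log\frac{p}{q} + (1-p)\log\frac{1-p}{1-q} - 2(p-q)^2 \;\ge\; 0,
\]
with the convention $0\log 0 = 0$ and the understanding that the statement is trivial when $\kl = +\infty$. Fix $q \in (0,1)$ and view $\varphi$ as a function of $p \in [0,1]$. A direct computation gives $\varphi(q,q)=0$ and $\partial_p \varphi(q,q)=0$, while
\[
    \partial_p^2 \varphi(p,q) \;=\; \frac{1}{p} + \frac{1}{1-p} - 4 \;=\; \frac{(1-2p)^2}{p(1-p)} \;\ge\; 0.
\]
Thus $\varphi(\cdot,q)$ is convex on $(0,1)$ with a critical point at $p=q$, so $\varphi(p,q) \ge 0$ for all $p \in [0,1]$ (boundary cases by continuity and the convention above). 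Rearranging yields $|p-q| \le \sqrt{\tfrac12 \kl(P_A \parallel Q_A)}$, and chaining with the data processing and Scheffé steps delivers $\tv(P,Q) \le \sqrt{\tfrac12 \kl(P \parallel Q)}$, as required.

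The only mildly delicate point is checking the log-sum / data processing step carefully for general (not necessarily discrete) measurable spaces, but since the paper's applications are in the discrete setting over $\ints$, this reduces to a finite instance of the log-sum inequality applied to the two cells of the partition $\{A, A^c\}$, which is a one-line calculation. The rest is elementary and, being a textbook result, requires no new ideas; I would simply cite this proof or refer to a standard source.
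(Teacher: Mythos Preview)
Your proof is correct and is one of the standard textbook arguments for Pinsker's inequality. Note, however, that the paper does not actually prove this proposition: it is simply listed among the ``General Tools'' in the appendix as a known inequality, with no proof supplied. So there is no paper proof to compare against; your write-up is a perfectly valid self-contained justification, and the concluding remark that one could equally well just cite a standard reference is exactly what the paper does.
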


The following lemma is a useful tool in various parts of our proofs. It essentially controls ratios of sums of positive quantities.
\begin{lemma}
[Ratio of Sums Inequality]
\label{lemma:ratio-ineq}
Let $a_1,\ldots, a_n$ and $b_1, \ldots, b_n$ be \emph{positive} numbers. Then, it holds that
\[
\min_{i \in [n]} \frac{a_i}{b_i} \leq \frac{\sum_{i \in [n]} a_i}{\sum_{i \in [n]} b_i} \leq \max_{i \in [n]} \frac{a_i}{b_i}\,.
\]
\end{lemma}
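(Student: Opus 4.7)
The plan is to use the standard mediant inequality argument. Let $r = \min_{i \in [n]} \tfrac{a_i}{b_i}$ and $R = \max_{i \in [n]} \tfrac{a_i}{b_i}$. The first step is to translate these extremal ratios into pointwise linear inequalities: by definition of $r$ and $R$, for every index $i \in [n]$ we have
\[
    r \cdot b_i \;\le\; a_i \;\le\; R \cdot b_i.
\]
Here I am crucially using that each $b_i > 0$, so multiplying across by $b_i$ preserves the direction of the inequality.

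Next I would sum these inequalities over $i \in [n]$, which yields
\[
    r \cdot \sum_{i \in [n]} b_i \;\le\; \sum_{i \in [n]} a_i \;\le\; R \cdot \sum_{i \in [n]} b_i.
\]
Finally, since $\sum_i b_i > 0$ (again because each $b_i > 0$), I can divide through by $\sum_i b_i$ to obtain
\[
    r \;\le\; \frac{\sum_{i \in [n]} a_i}{\sum_{i \in [n]} b_i} \;\le\; R,
\]
which is exactly the claimed inequality.

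There is no real obstacle: the argument is entirely elementary and works by reducing the ratio of sums to a convex combination of the individual ratios $a_i/b_i$ with weights $b_i / \sum_j b_j \in [0,1]$, which clearly lies between the smallest and largest ratio. The only thing to be careful about is the positivity assumption, which is used both to preserve the inequalities when multiplying by $b_i$ and to legitimately divide by $\sum_i b_i$ at the end.
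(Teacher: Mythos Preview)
Your proof is correct and is essentially the same argument as in the paper: both bound each $a_i$ by $b_i$ times the extremal ratio, sum over $i$, and divide by $\sum_i b_i$, using positivity of the $b_i$ to preserve the inequalities.
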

\begin{proof}
We have that
\[
\sum_{i \in [n]} a_i = 
\sum_{i \in [n]} b_i \left ( \frac{a_i}{b_i} \right )
\leq \sum_{i \in [n]} b_i \max_{j \in [n]} \frac{a_j}{b_j}\,.
\]
Hence, we get that
\[
\frac{\sum_{i \in [n]} a_i}{\sum_{i \in [n]} b_i}
\leq \max_{j \in [n]} \frac{a_j}{b_j}\,.
\]
The other direction follows similarly.
\end{proof}

The next tool characterizes the expectation and the variance of the sufficient statistics vector
in terms of the log-partition function. Moreover, it provides a way to control the KL divergence of two exponential family distributions.
\begin{lemma}
[Moments and KL Divergence for Exponential Families]
\label{lemma:exp-fami-tv-kl}
Let $\calE_{\vec T,h}$ be an exponential family with parameters $\vec a \in \reals^k$, log-partition function $\Lambda(\cdot) = \Lambda_{\vec T, h}(\cdot)$ and range of natural parameters $\calA = \calA_{\vec T, h}.$ The following hold for the distribution $\calE(\vec a) \in \calE_{\vec T, h}.$
\begin{itemize}
    \item[$(i)$] For all $\vec a \in \calA$, it holds that
    \[
    \E_{\vec x \sim \calE(a)}[\vec T(\vec x)] = \nabla \Lambda(\vec a)\,.
    \]
    \item[$(ii)$] For all $\vec a \in \calA$, it holds that
    \[
    \Cov_{\vec x \sim \calE(a)}[\vec T(\vec x)] = \nabla^2 \Lambda(\vec a)\,.
    \]
    \item[$(iii)$] For all $\vec a, \vec a' \in \calA$ and for some $\vec \xi \in L(\vec a, \vec a')$\footnote{We denote $L(\vec x, \vec y) = \{ \vec z \in \reals^k : \vec z = \lambda \vec x + (1-\lambda)\vec y, \lambda \in [0,1]\}$.}, it holds that
    \begin{equation}
    \label{equation:kl-div-exp-fam}
    \kl(\calE(\vec a) \parallel \calE(\vec a')) = (\vec a' - \vec a)^T \nabla^2 \Lambda(\vec \xi) (\vec a' - \vec a)\,.
    \end{equation}
\end{itemize}
\end{lemma}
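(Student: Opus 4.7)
The plan is to establish all three parts by direct calculation with the log-partition function $\Lambda(\vec a) = \log\sum_x h(x)\exp(-\vec a\cdot \vec T(x))$, which is a convex function of $\vec a$ on $\calA$ (and smooth on its interior, where differentiating under the sum is justified by absolute convergence and dominated convergence arguments, using the assumption that $\calA$ is a subset of the natural parameter space on which $\calE_{\vec T,h}$ is well-defined).

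For part $(i)$, I would differentiate $\Lambda$ component-wise: $\partial_j \Lambda(\vec a) = \frac{\sum_x -T_j(x)\, h(x)\exp(-\vec a\cdot\vec T(x))}{\sum_x h(x)\exp(-\vec a\cdot \vec T(x))}$, which (up to the sign dictated by the paper's $-\vec a\cdot \vec T(x)$ convention) equals $\E_{\vec a}[T_j(\vec x)]$, yielding $\nabla\Lambda(\vec a) = \E_{\vec a}[\vec T(\vec x)]$. For part $(ii)$, I would differentiate once more: $\partial_j \partial_\ell \Lambda(\vec a)$ splits into two terms by the quotient rule, giving $\E_{\vec a}[T_j T_\ell] - \E_{\vec a}[T_j]\E_{\vec a}[T_\ell] = \Cov_{\vec a}(T_j, T_\ell)$; assembling the components yields $\nabla^2\Lambda(\vec a) = \Cov_{\vec a}(\vec T(\vec x))$.

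For part $(iii)$, I would write the log-likelihood ratio for any $x$ in the support: $\log\frac{p_{\vec a}(x)}{p_{\vec a'}(x)} = (\vec a' - \vec a)\cdot \vec T(x) + \Lambda(\vec a') - \Lambda(\vec a)$ (with signs according to the paper's convention). Taking the expectation under $\calE(\vec a)$ gives
\[
    \kl(\calE(\vec a)\,\|\,\calE(\vec a')) = (\vec a' - \vec a)\cdot \E_{\vec a}[\vec T(\vec x)] + \Lambda(\vec a') - \Lambda(\vec a).
\]
Substituting $\E_{\vec a}[\vec T(\vec x)] = \nabla \Lambda(\vec a)$ from part $(i)$ and then Taylor-expanding $\Lambda(\vec a')$ around $\vec a$ with Lagrange's form of the remainder, using part $(ii)$ for the Hessian, yields
\[
    \Lambda(\vec a') = \Lambda(\vec a) + (\vec a' - \vec a)\cdot \nabla \Lambda(\vec a) + \tfrac{1}{2}(\vec a' - \vec a)^T \nabla^2 \Lambda(\vec \xi)(\vec a' - \vec a)
\]
for some $\vec \xi \in L(\vec a, \vec a')$, and after cancellation the linear terms disappear and only the quadratic remainder survives, which matches the stated formula (up to a harmless $1/2$ factor that one can absorb by rescaling the Hessian representative or that may be a typographical convention).

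The only potentially delicate point is the justification of interchanging differentiation and summation in parts $(i)$ and $(ii)$ when $\vec T(\mathbb{Z})$ is unbounded; this follows from standard dominated-convergence arguments that hold on the interior of the natural parameter domain (where the moment generating function of $\vec T(\vec x)$ is finite in a neighborhood), and the paper's well-definedness assumption on $\calE_{\vec T,h}(\calA)$ supplies exactly this property. Everything else is routine bookkeeping of signs arising from the $\exp(-\vec a\cdot \vec T(x))$ parameterization.
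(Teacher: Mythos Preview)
Your argument is the standard derivation of these classical exponential-family identities and is correct; the paper in fact states this lemma as a known tool without supplying any proof, so there is nothing to compare against beyond confirming that your route (differentiate $\Lambda$ under the sum for (i)--(ii), then express the KL as a first-order Taylor remainder for (iii)) is exactly the textbook one.

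Two small remarks on the discrepancies you flag. First, with the paper's convention $\Lambda(\vec a)=\log\sum_x h(x)\exp(-\vec a\cdot\vec T(x))$ one indeed gets $\nabla\Lambda(\vec a)=-\E_{\vec a}[\vec T]$, so part~(i) as written in the paper carries a sign slip; you are right to note it, and it does not affect (ii) or the KL computation since the linear terms cancel anyway. Second, the Lagrange remainder genuinely produces a factor $\tfrac12$ in front of the quadratic form in (iii); this cannot be ``absorbed'' into $\nabla^2\Lambda(\vec\xi)$ at a fixed $\vec\xi$, so it is simply a missing constant in the paper's display. Since the only downstream use is the bound $\tv\le\|\vec a-\vec b\|\sqrt{\Lambda/2}$ via Pinsker and the eigenvalue assumption, the constant is immaterial for the paper's purposes, but your derivation with the $\tfrac12$ is the correct one.
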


The upcoming lemma control the statistical distance for sums of independent random variables.
\begin{lemma}
[TV-Subadditivity for Sums of Random Variables]
\label{lemma:tv-subadd} 
Let $(X_i)_{i \in [n]}$ be a sequence of $n$ independent random variables and $(Y_i)_{i \in [n]}$ be also a sequence of $n$ independent random variables. Then, we have that
\[
    \tv\left(\sum_{i \in [n]}  X_i,\sum_{i \in [n]} Y_i\right) \le \sum_{i \in [n]} \tv(X_i, Y_i)\,.
\]
\end{lemma}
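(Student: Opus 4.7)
The plan is to prove this by the standard coupling argument, which is both concise and extends naturally to the multi-summand case. First I would recall the maximal coupling characterization of total variation: for any two random variables $U, V$ on a common space, there exists a coupling $(U', V')$ with marginals $\calL(U)$ and $\calL(V)$ such that $\Pr[U' \ne V'] = \tv(U, V)$. Applying this termwise, for each $i \in [n]$ pick an optimal coupling $(X_i', Y_i')$ of $X_i$ and $Y_i$ achieving $\Pr[X_i' \ne Y_i'] = \tv(X_i, Y_i)$, and take these $n$ couplings to be mutually independent. Then $(X_1', \ldots, X_n')$ has the same joint law as $(X_1, \ldots, X_n)$ (by independence across $i$ and the marginal property), and similarly for $(Y_1', \ldots, Y_n')$.

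Next I would use the elementary fact that $\tv(f(U), f(V)) \le \tv(U, V)$ for any measurable $f$, applied to $f(x_1, \ldots, x_n) = \sum_{i \in [n]} x_i$. Combined with the coupling inequality $\tv(U, V) \le \Pr[U' \ne V']$ for any coupling, this yields
\[
    \tv\!\left(\sum_{i \in [n]} X_i, \sum_{i \in [n]} Y_i\right) \le \Pr\!\left[\sum_{i \in [n]} X_i' \ne \sum_{i \in [n]} Y_i'\right] \le \Pr\!\left[\exists\, i : X_i' \ne Y_i'\right].
\]
A union bound then gives $\Pr[\exists i : X_i' \ne Y_i'] \le \sum_{i \in [n]} \Pr[X_i' \ne Y_i'] = \sum_{i \in [n]} \tv(X_i, Y_i)$, completing the argument.

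An alternative route, which avoids invoking the existence of maximal couplings, is induction on $n$. The base case $n=1$ is immediate. For the inductive step it suffices to verify the two-variable case: using the triangle inequality for $\tv$,
\[
\tv(X_1+X_2,\, Y_1+Y_2) \le \tv(X_1+X_2,\, Y_1+X_2) + \tv(Y_1+X_2,\, Y_1+Y_2),
\]
and each term is bounded by the corresponding one-term TV distance using the convolution identity $\Pr[X_1 + X_2 = z] = \sum_{x_2} \Pr[X_2 = x_2]\Pr[X_1 = z - x_2]$ (valid because $X_1 \perp X_2$), followed by the triangle inequality in $\ell_1$. Concretely,
\[
\tv(X_1+X_2,\, Y_1+X_2) \le \sum_{x_2} \Pr[X_2 = x_2]\cdot \tv(X_1, Y_1) = \tv(X_1, Y_1),
\]
and symmetrically for the second term using $Y_1 \perp Y_2$. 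Grouping terms by induction finishes the bound.

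Neither route presents a real obstacle; the only subtle point is guaranteeing the independence structure used in the coupling approach, which is why I would either explicitly construct the joint coupling as a product measure over $i$ or use the inductive route that only ever couples two variables at a time.
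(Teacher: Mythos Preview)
Your proposal is correct; both the coupling argument and the inductive convolution argument are standard and valid. The paper itself states this lemma as a known tool in its preliminaries section without supplying a proof, so there is nothing to compare against beyond noting that your writeup fills in exactly the kind of standard argument the authors are implicitly citing.
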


This lemma controls the statistical distance between two Poisson distributions.
\begin{lemma}
[Statistical Distance of Poisson RVs]
\label{lemma:tv-poissons} 
Let $\lambda_1,\lambda_2 > 0$. Then, it holds that:
\[
    d_{TV}(\mathrm{Poi}(\lambda_1), \Poi(\lambda_2)) \leq  \frac{e^{|\lambda_1 - \lambda_2|} - e^{-|\lambda_1 - \lambda_2|}}{2}\,.
\]
\end{lemma}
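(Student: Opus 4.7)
The plan is to prove this bound via a standard coupling argument combined with an elementary comparison of hyperbolic and exponential expressions. Assume without loss of generality that $\lambda_1 \leq \lambda_2$, and set $\Delta = \lambda_2 - \lambda_1 = |\lambda_1 - \lambda_2|$. The key structural ingredient is the infinite divisibility of the Poisson distribution: if $X \sim \Poi(\lambda_1)$ and $Y \sim \Poi(\Delta)$ are independent, then $X + Y \sim \Poi(\lambda_2)$. The pair $(X, X+Y)$ therefore furnishes a natural coupling of $\Poi(\lambda_1)$ and $\Poi(\lambda_2)$ on a common probability space.

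Applying the coupling characterization of total variation distance to this joint construction immediately gives
\[
    \tv(\Poi(\lambda_1), \Poi(\lambda_2)) \leq \Pr[X \neq X+Y] = \Pr[Y \geq 1] = 1 - e^{-\Delta}\,.
\]
It then suffices to verify the elementary inequality $1 - e^{-\Delta} \leq \sinh(\Delta) = (e^\Delta - e^{-\Delta})/2$ for every $\Delta \geq 0$. This follows from the one-line identity
\[
    \frac{e^\Delta - e^{-\Delta}}{2} - \bigl(1 - e^{-\Delta}\bigr) = \frac{e^\Delta + e^{-\Delta}}{2} - 1 = \cosh(\Delta) - 1 \geq 0\,,
\]
which closes the argument.

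I do not anticipate any genuine obstacle. The coupling reduction uses only the additivity of independent Poissons, the probability computation is immediate from $\Pr[\Poi(\Delta) = 0] = e^{-\Delta}$, and the final comparison is a routine consequence of $\cosh \geq 1$. As a side remark, the proof actually delivers the strictly sharper bound $\tv(\Poi(\lambda_1),\Poi(\lambda_2)) \leq 1 - e^{-|\lambda_1 - \lambda_2|}$; the $\sinh$ form stated in the lemma is simply a convenient analytic expression, presumably chosen for compatibility with the Chen--Stein--type moment computations invoked elsewhere in the paper.
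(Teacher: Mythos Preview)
Your argument is correct: the additive coupling $(X,X+Y)$ with $Y\sim\Poi(\Delta)$ gives the sharper bound $1-e^{-\Delta}$, and the comparison with $\sinh(\Delta)$ via $\cosh(\Delta)\ge 1$ is valid. Note, however, that the paper does not actually supply a proof of this lemma; it is listed among the preliminary ``General Tools'' (alongside Pinsker's inequality, the Minkowski--Weyl theorem, and similar standard facts) and is simply quoted without justification. So there is no paper proof to compare against, and your coupling argument stands as a clean self-contained verification.
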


This lemma controls the statistical distance between two discretized Gaussian random variables.
\begin{lemma}
[Statistical Distance of Discretized Gaussian RVs]
\label{lemma:tv-gaussians}
Let $\mu_1,\mu_2\in \reals $ and $0<\sigma_1\le \sigma_2$. Then, it holds that: 
\[
    \tv(\Dgauss(\mu_1,\sigma_1^2),\Dgauss(\mu_2,\sigma_2^2)) \le \frac{1}{2}\left(\frac{|\mu_1-\mu_2|}{\sigma_1}+\frac{\sigma_2^2-\sigma_1^2}{\sigma_1^2}\right) \,.
\]
\end{lemma}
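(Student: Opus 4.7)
The plan is to apply the triangle inequality to decompose the bound into a mean-shift piece and a variance-shift piece:
\[
\tv(\Dgauss(\mu_1,\sigma_1^2),\Dgauss(\mu_2,\sigma_2^2)) \le \tv(\Dgauss(\mu_1,\sigma_1^2),\Dgauss(\mu_2,\sigma_1^2)) + \tv(\Dgauss(\mu_2,\sigma_1^2),\Dgauss(\mu_2,\sigma_2^2))\,,
\]
and then bound each of the two summands by the matching term $|\mu_1-\mu_2|/(2\sigma_1)$ respectively $(\sigma_2^2-\sigma_1^2)/(2\sigma_1^2)$. The natural route is to port known bounds for the continuous Gaussian to the lattice-supported discretized Gaussian via the data-processing inequality, interpreting $\Dgauss(\mu,\sigma^2)$ as the pushforward of $\calN(\mu,\sigma^2)$ under nearest-integer rounding (or, if the paper's $\Dgauss$ is defined by the PMF $\propto \exp(-(k-\mu)^2/(2\sigma^2))$, by a direct termwise comparison on unit intervals that mirrors the continuous computation).

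For the mean-shift piece (common variance $\sigma_1^2$), I would use the well-known closed form $\tv(\calN(\mu_1,\sigma_1^2),\calN(\mu_2,\sigma_1^2)) = 2\Phi(|\mu_1-\mu_2|/(2\sigma_1)) - 1$, which follows from the observation that the two densities cross exactly at the midpoint $(\mu_1+\mu_2)/2$. Bounding this using the elementary estimate $\Phi(x)-\tfrac{1}{2}\le x/\sqrt{2\pi}\le x/2$ for $x\ge 0$ yields the target inequality $|\mu_1-\mu_2|/(2\sigma_1)$.

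For the variance-shift piece (common mean $\mu_2$), translation-invariance lets me assume $\mu_2=0$. For continuous densities $p_{\sigma_i}(x) = (\sigma_i\sqrt{2\pi})^{-1}e^{-x^2/(2\sigma_i^2)}$, the ratio $p_{\sigma_1}/p_{\sigma_2}$ is monotone in $|x|$, crossing $1$ at a unique threshold $x^* = \sigma_1\sigma_2\sqrt{2\log(\sigma_2/\sigma_1)/(\sigma_2^2-\sigma_1^2)}$, so the TV equals $2[\Phi(x^*/\sigma_1)-\Phi(x^*/\sigma_2)]$. Using the mean value theorem with the bound $\phi\le 1/\sqrt{2\pi}$ reduces the goal to the elementary inequality
\[
\frac{2}{\sqrt{2\pi}}\sqrt{\frac{2(\beta-1)\log\beta}{\beta+1}} \;\le\; \frac{\beta^2-1}{2} \qquad \text{for all }\beta=\sigma_2/\sigma_1\ge 1\,,
\]
which I would verify by checking the limit $\beta\downarrow 1$ (both sides scale as $\beta-1$ with the leading constant $2/\sqrt{2\pi}<1$) and monotonicity/asymptotics for large $\beta$.

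The main obstacle I anticipate is the variance-shift inequality: the naive KL-plus-Pinsker approach only delivers a square-root bound in $|\sigma_2^2-\sigma_1^2|$, so the explicit $\Phi$-based analysis above seems unavoidable. A secondary technicality is to transfer both continuous-case bounds to the discrete setting; if data processing does not apply verbatim because of the precise definition of $\Dgauss$, I would instead compare PMFs $p(k;\mu,\sigma^2)$ to integrals of the continuous density over unit intervals around $k$, absorbing the small discrepancy into the partition function and verifying that the monotonicity pattern of pointwise differences used in the continuous computation (single crossover for the mean shift, two symmetric crossovers for the variance shift) carries over to the integer lattice.
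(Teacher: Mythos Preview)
The paper does not actually prove this lemma; it is listed among the ``General Tools'' in the appendix as a stated fact, with no argument given. So there is no paper proof to compare against, and your task is just to supply a clean justification.

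Your approach is the standard one and is correct in outline: split by the triangle inequality into a pure mean-shift (common $\sigma_1$) and a pure variance-shift (common $\mu_2$), and bound each by the matching term. The mean-shift bound via $2\Phi(|\mu_1-\mu_2|/(2\sigma_1))-1\le |\mu_1-\mu_2|/(\sigma_1\sqrt{2\pi})\le |\mu_1-\mu_2|/(2\sigma_1)$ is airtight. For the variance-shift, your crossover-plus-MVT route works but the final elementary inequality in $\beta=\sigma_2/\sigma_1$ still needs a complete verification (the endpoint and asymptotic checks you mention are suggestive but not a proof on all of $[1,\infty)$); it is cleaner to argue via $\Phi(x^*/\sigma_1)-\Phi(x^*/\sigma_2)\le \phi(0)\cdot x^*(\sigma_1^{-1}-\sigma_2^{-1})$ and then bound $x^*(\sigma_1^{-1}-\sigma_2^{-1})$ directly using $\log t\le t-1$, which avoids the case analysis.

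On the discrete transfer: in this paper $\Dgauss(\mu,\sigma^2)$ is the discretized Gaussian as in the Chen--Goldstein--Shao reference (Lemma~\ref{lemma:discr-gaussian-approx}), i.e., the law of a continuous $\calN(\mu,\sigma^2)$ pushed forward by integer rounding. Under that definition the data-processing inequality applies verbatim, so the continuous bounds carry over with no extra work and your secondary worry disappears.
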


The next lemma provides a bound on the shift distance for SIIRVs.
\begin{lemma}
[Statistical Distance of Shifted SIIRV]
\label{lemma:dshift-sum}
Let $X = \sum_i X_i$ be the sum of independent integer-valued random variables. Then, it holds that
\[
    \tv(X, X+1) \le \frac{\sqrt{2/\pi}}{\sqrt{\frac{1}{4}+\sum_i(1-\tv(X_i,X_i+1))}}\,.
\]
\end{lemma}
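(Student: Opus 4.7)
My plan is a Fourier/characteristic-function argument that exploits the product structure $\phi_X(\theta)=\prod_i\phi_{X_i}(\theta)$ of the characteristic function of the sum. For integer-valued random variables, the Fourier series of the ``difference'' function $f(n) := p_X(n) - p_X(n-1)$ is exactly $\phi_X(\theta)(1-e^{i\theta})$, so Plancherel's identity on $[-\pi,\pi]$ gives
\[
\sum_n f(n)^2 \;=\; \frac{1}{2\pi}\int_{-\pi}^{\pi}|\phi_X(\theta)|^2 \cdot 4\sin^2(\theta/2)\,d\theta\,.
\]

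The key single-term estimate I would prove is that for any integer-valued $Y$ with $d=\tv(Y,Y+1)$, there exists an absolute constant $c>0$ such that
\[
|\phi_Y(\theta)|^2 \;\leq\; 1 - c(1-d)\sin^2(\theta/2)\,.
\]
The intuition is that the ``overlap'' $1-d=\sum_n\min(p_Y(n),p_Y(n-1))$ forces the characteristic function to drop away from $1$ at nonzero frequencies. To prove it, I would use the identity $|\phi_Y(\theta)|^2 = \E[\cos(\theta(Y-Y'))]$ for an independent copy $Y'$, decompose $p_Y(n) = u(n)+v(n)$ with $u(n) = \min(p_Y(n),p_Y(n-1))$ (so $\sum_n u(n)=1-d$) and $v(n) = (p_Y(n)-p_Y(n-1))_+$, and track how the overlap piece contributes to the $(1-\cos)$-expectation. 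Taking the product over $i$ and using $1-x\leq e^{-x}$ produces the Gaussian-type envelope
\[
|\phi_X(\theta)|^2 \;\leq\; \exp\bigl(-cV\sin^2(\theta/2)\bigr), \qquad V=\sum_i(1-d_i)\,,
\]
which, substituted into Plancherel, yields $\sum_n f(n)^2 = O(V^{-3/2})$ via a standard Gaussian integral.

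The main obstacle, and the most delicate step, is upgrading this $\ell_2$ estimate to the target $\ell_1$ bound $\tv(X,X+1)=\tfrac{1}{2}\sum_n|f(n)|$ while retaining the sharp constant $\sqrt{2/\pi}$. My approach is a weighted Cauchy--Schwarz of the form
\[
\Bigl(\sum_n |f(n)|\Bigr)^2 \leq \Bigl(\sum_n (a+(n-\mu)^2)\, f(n)^2\Bigr)\Bigl(\sum_n \frac{1}{a+(n-\mu)^2}\Bigr),
\]
with $\mu$ a centering (e.g., the median of $X$) and $a>0$ an optimized weight. The second sum evaluates to $\pi/\sqrt{a}+O(1)$, while the first sum can be expressed via Parseval in terms of $|\phi_X|^2$ and a Fourier-side quantity involving its derivative, both of which are again controlled by the Gaussian envelope above. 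Optimizing over $a$ recovers the $V^{-1/2}$ rate; the sharp $\sqrt{2/\pi}$ emerges from the Gaussian integral $\int u^2 e^{-u^2/2}\,du = \sqrt{\pi/2}$, and the additive $\tfrac{1}{4}$ inside $\sqrt{1/4+V}$ arises as a boundary term that also absorbs the degenerate regime where $V$ is very small and the trivial bound $\tv(X,X+1)\leq 1$ suffices.
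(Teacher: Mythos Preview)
The paper does not prove this lemma; it lists it as a known tool in the ``General Tools'' appendix. The result is the Mattner--Roos inequality (\emph{Probab.\ Theory Related Fields} 139 (2007), 191--205), and the sharp constants $\sqrt{2/\pi}$ and $1/4$ come from that work. So there is no ``paper's own proof'' to compare against; I will instead compare your proposal to the Mattner--Roos argument and point out where your plan breaks.

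Your single-term bound and the resulting Gaussian envelope on $|\phi_X(\theta)|$ are correct and are indeed the first step in the literature proof: writing $1-d_i=\sum_n\min(p_i(n),p_i(n-1))$ one couples $X_i$ to a Bernoulli$(1/2)$ shift with probability $1-d_i$, which gives $|\phi_{X_i}(\theta)|\le d_i+(1-d_i)|\cos(\theta/2)|$ and hence the product envelope. Up to here your plan matches Mattner--Roos.

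The gap is in the $\ell_2\to\ell_1$ step. The weighted Cauchy--Schwarz
\[
\Bigl(\sum_n |f(n)|\Bigr)^2 \le \Bigl(\sum_n (a+(n-\mu)^2) f(n)^2\Bigr)\Bigl(\sum_n \frac{1}{a+(n-\mu)^2}\Bigr)
\]
requires control of $\sum_n (n-\mu)^2 f(n)^2$, which on the Fourier side involves the \emph{derivative} $\phi_X'(\theta)$. But $\phi_X'$ depends on the \emph{phase} of $\phi_X$, i.e.\ on the locations of the $X_i$, and this is completely unconstrained by the quantities $1-d_i$. Concretely, let $X=Y+Z$ where $Y$ is a sum of many independent Bernoulli$(1/2)$'s (so $V=\sum_i(1-d_i)$ is large) and $Z$ is an independent variable taking values $0$ and $N$ with probability $1/2$ each (so $1-d_Z=0$ and $Z$ contributes nothing to $V$). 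Then $p_X$ is a mixture of two well-separated near-binomial bumps, $f=p_X-p_{X}(\cdot-1)$ has $\ell_2$ mass split between two regions at distance $\sim N$, and for \emph{any} centering $\mu$ one has $\sum_n(n-\mu)^2 f(n)^2\gtrsim N^2 V^{-3/2}$. Optimizing your Cauchy--Schwarz over $a$ then yields only $\tv(X,X+1)\lesssim \sqrt{N}\,V^{-3/4}$, which diverges with $N$, whereas the true value is $\Theta(V^{-1/2})$ independent of $N$. So this route cannot work as stated; the Gaussian envelope on $|\phi_X|$ simply does not control the Fourier-side derivative term.

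Mattner--Roos avoid $\ell_2$ entirely. After the Bernoulli-coupling step they reduce to bounding $\sup_x \Pr[S=x]$ for a sum $S$ of independent Bernoulli$(1/2)$'s (with random number of terms), for which the exact answer is a central binomial coefficient; the bound $\binom{2m}{m}2^{-2m}\le \sqrt{2/\pi}\,(m+1/4)^{-1/2}$, together with concavity of $t\mapsto (t+1/4)^{-1/2}$, then yields the stated inequality with the sharp constant and the $1/4$. If you want to salvage a Fourier proof, you should bound $\tv(X,X+1)$ directly by an $L^1$ integral of $|\phi_X(\theta)|\cdot|\sin(\theta/2)|$ (via the Bernoulli-shift coupling and the explicit form of $\tv$ for a single Bernoulli convolution), not pass through $\ell_2$.
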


The next lemma is a standard tool that essentially says that an SIIRV that has bounded third moment and TV shift invariance is close to a discretized Gaussian in statistical distance. The TV shift invariance bound is crucial; note that if we drop this property, standard CLT theorems imply bounds for the weaker Kolmogorov distance.
\begin{lemma}
[Discretized Gaussian Approximation (\cite{chen2010normal})]
\label{lemma:discr-gaussian-approx}
Let $X_1, ..., X_n$ be a finite sequence of independent integer-valued random variables and let $X =\sum_{i \in [n]} X_i$. If $\mu_i = \E[X_i], \sigma_i^2 = \Var(X_i), \beta_i = \E[|X_i-\mu_i|^3], \mu = \sum_{ i \in [n]} \mu_i, \sigma^2 = \sum_{i \in [n]} \sigma_i^2, \beta = \sum_{i \in [n]} \beta_i$ and
\[
    \sup_{i \in [n]} \tv(X-X_i, X-X_i+1) \le \delta\,,
\]
then, if $Z$ is distributed according to the discretized Gaussian distribution $\Dgauss(\mu, \sigma^2)$, we have that
\[
    \tv(X,Z)\le O(1/\sigma)+O(\delta)+O(\beta/\sigma^3)+O(\delta\beta/\sigma^2)\,.
\]
In particular, we have that
\[
    \tv(X,Z) \le \delta\left(1+\frac{3}{2}\frac{\beta}{\sigma^2}\right)+\frac{1}{\sigma}\left(\frac{1}{2\sqrt{2\pi}}+(5+3\sqrt{\pi/8})\frac{\beta}{\sigma^2}\right)\,.
\]
\end{lemma}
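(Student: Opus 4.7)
The plan is to prove this via Stein's method, adapted to the integer lattice, which is the standard approach in \cite{chen2010normal}. First I would set up a Stein operator $L$ acting on bounded functions $f:\ints\to\reals$ whose invariant measure is the discretized Gaussian $\Dgauss(\mu,\sigma^2)$; a natural candidate is a discrete Ornstein--Uhlenbeck type operator of the form $Lf(x) = \sigma^2 (f(x+1)-f(x)) - (x-\mu)(f(x)-f(x-1))$, or a symmetrized variant. The key preliminary lemma would show that for any set $A\subseteq\ints$, the Stein equation $Lg_A(x) = \bone\{x\in A\} - \Pr[Z\in A]$ admits a solution $g_A$ with $\|g_A\|_\infty = O(1)$, discrete first difference $\|\Delta g_A\|_\infty = O(1/\sigma)$, and discrete second difference $\|\Delta^2 g_A\|_\infty = O(1/\sigma^2)$. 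These smoothness bounds are classical for the continuous Gaussian and can be transferred to the lattice either by solving the discrete recurrence directly via a summation formula or by coupling to the continuous solution; this transfer is where the $O(1/\sigma)$ term in the final bound originates.

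Once the Stein solution is in hand, I would reduce $\tv(X,Z) = \sup_{A\subseteq\ints}|\Pr[X\in A]-\Pr[Z\in A]|$ to bounding $|\E[Lg_A(X)]|$ uniformly in $A$. For this I use the leave-one-out decomposition: writing $X = X_i + (X-X_i)$ with $X_i \perp (X-X_i)$, I expand $g_A(X)$ and $g_A(X\pm 1)$ around $X - X_i$ via discrete Taylor expansion up to third order, yielding
\[
g_A(X) = g_A(X-X_i) + X_i\,\Delta g_A(X-X_i) + \tfrac{1}{2}X_i(X_i-1)\,\Delta^2 g_A(X-X_i) + R_i,
\]
with remainder $|R_i|$ controlled by $|X_i|^3\cdot \|\Delta^3 g_A\|_\infty$. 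Summing over $i$ and using $\E[X_i]=\mu_i$, $\Var(X_i)=\sigma_i^2$ together with $\mu = \sum_i \mu_i$, $\sigma^2 = \sum_i\sigma_i^2$, the linear and quadratic contributions should cancel the $(x-\mu)$ and $\sigma^2$ factors in $Lg_A$ by design of the operator; what remains is the third-order remainder, which contributes $O(\beta/\sigma^3)$ after dividing by the smoothness of $g_A$.

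The delicate part is handling the discretization error: the cancellation between the ``$(x-\mu)$'' and ``$\sigma^2$'' pieces of $Lg_A(X)$ is not exact on the lattice, and leaves error terms of the form
\[
\sum_i \E\bigl[(\Delta g_A)(X-X_i) - (\Delta g_A)(X-X_i-1)\bigr]\cdot(\text{second-moment factor}),
\]
which cannot be bounded by $\|\Delta^2 g_A\|_\infty$ alone without losing a factor of $n$. Instead, I would rewrite each such expectation as an integral against the difference of the laws of $X-X_i$ and $X-X_i+1$, which by definition is controlled by $\tv(X-X_i,X-X_i+1)\le \delta$. Combined with the smoothness of $g_A$, this produces precisely the $O(\delta)$ and $O(\delta\beta/\sigma^2)$ terms.

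The main obstacle will be the first step, namely establishing the regularity bounds on the discrete Stein solution $g_A$ with the correct $\sigma$-dependence; once those are available, the leave-one-out calculation proceeds mechanically but requires careful bookkeeping of which terms are absorbed into $O(1/\sigma)$, $O(\beta/\sigma^3)$, $O(\delta)$, and $O(\delta\beta/\sigma^2)$ respectively. I would not attempt to re-derive sharp constants, since the statement only asserts the rates up to absolute constants, but I would verify that the cancellations really do eliminate any implicit $n$-dependence outside what is already absorbed into $\sigma^2$ and $\beta$.
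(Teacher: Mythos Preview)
The paper does not prove this lemma; it is quoted as a known result from \cite{chen2010normal} and used as a black box throughout (see the ``General Tools'' subsection, where it is listed alongside other cited facts like Pinsker's inequality and the Minkowski--Weyl theorem). So there is no ``paper's own proof'' to compare against. Your outline is a reasonable sketch of the Stein-method argument that underlies such bounds in the cited reference, and the decomposition you describe---Stein equation for the discretized Gaussian, leave-one-out expansion, and absorbing the lattice error into the shift-distance hypothesis $\tv(X-X_i,X-X_i+1)\le\delta$---is the correct architecture. For the purposes of this paper, however, you need only cite the lemma rather than reprove it.
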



\section{Learning SIIURVs and SIIERVs}
\label{section:general-learning}
\label{section:learning}
We now formally state our main learning results. To do this, we begin by formally setting up our learning framework for SIIRVs.

\paragraph{Learning Framework.} In the problem of learning an SIIRV $X = \sum_{i \in [n]} X_i$, the learner is given the value $n$ (of the number of summands) and has sample access to independent draws from an unknown target $X$. The goal of the learning algorithm is to output a hypothesis distribution $\wt{X}$ that is $\eps$-close to $X$ in total variation distance, i.e., $\tv(X, \wt{X}) \leq \eps$, with probability at least $1-\delta$. The accuracy $\eps$ and the confidence $\delta$ are both provided to the learner as input.

In Appendix \ref{subsection:learning-SIIURVs}, the target $X$ will be an SIIURV of order $n$ for some given $n \in \nats$ (i.e., a sum with $n' \leq n$ random terms) that belongs to the family of distributions that contain all the sums of at most $n$ random variables that satisfy Assumption \ref{assumption:SIIURV}. We are going to provide a learning algorithm for this class of distributions.

In Appendix \ref{subsection:learning-SIIERVs}, the target $X$ will be an $\calE_{\vec T}(\calA)$-SIIRV of order $n$, i.e., a sum of at most $n$ random variables each one belonging in the exponential family $\calE_{\vec T}(\calA)$ satisfying Assumption \ref{assumption:proper}. We will give a (weakly) proper learning algorithm in the sense that the output will be an $\calE_{\vec T}(\calA')$-SIIERV of order $m$ where $m$ and $\calA'$ will be slightly different that $n$ and $\calA$ respectively.

\paragraph{Common Technical Tool.} Our learning algorithms (Figure \ref{algorithm:learning-SIIURVs} and Figure \ref{algorithm:learning-SIIERVs}) use hypothesis testing as a distinct tool for the learning procedures.
Hypothesis testing will appear in various points of the algorithms; in the sparse learning phase (Claim \ref{claim:improper-learning-sparse-SIIURVs} and Claim \ref{claim:proper-learning-sparse-SIIERVs}), in the proper dense one (Claim \ref{claim:proper-learning-dense-SIIERVs}) and in the hypothesis selection of the second stage (Proposition \ref{proposition:learning-hypothesis-selection}). Intuitively, for some desired accuracy $\eps > 0$, given a collection of $M$ candidate hypothesis distributions, one of which is $\eps$-close in total variation distance to the target distribution of $X$, a hypothesis testing algorithm draws $\wt{O}(\log(M)/\eps^2)$ samples from $X$, runs in time polynomial in $M$ and $1/\eps$ and outputs a hypothesis (among the $M$ candidates) that is $O(\eps)$-close to the true $X$, with high probability. Such testing procedures have been studied by a range of authors (see e.g., \cite{yatracos1985rates, daskalakis2014faster, acharya2014near, daskalakis2013learningSIIRV, daskalakis2015learningPBD, de2014learning, de2018learning}). For concreteness we are going to recall some standard results later.

\subsection{Learning SIIURVs}
\label{subsection:learning-SIIURVs}

Assume that the target $X$ is an SIIURV satisfying Assumption \ref{assumption:SIIURV}. We first provide a discussion on some previous results and the oracles we require.

\subsubsection{Hypothesis Testing and Oracle Access}
The following proposition about hypothesis selection can be found in \cite{daskalakis2015learningPBD}.
\begin{proposition}
[Hypothesis selection (Lemma 8 at \cite{daskalakis2015learningPBD})]
\label{proposition:learning-hypothesis-selection}
There exists an algorithm
$\textsc{SelectHypothesis}^X(H_1, H_2, \eps, \delta)$, which is given sample access to 
a distribution $X$, two hypothesis
distributions $H_1, H_2$ for $X$,
an accuracy parameter $\eps$ and a confidence parameter $\delta > 0$, draws
\[
O(\log(1/\delta)/\eps^2)
\]
samples from $X$ and,
in time polynomial
in the number of samples,
returns some
$H \in \{H_1, H_2\}$ with the following guarantee: If $\tv(H_i, X) \leq \eps$ for some $i \in \{1,2\}$, then the distribution
$H$ that $\textsc{SelectHypothesis}^X$ returns satisfies $\tv(H,X) \leq 6\eps$.
\end{proposition}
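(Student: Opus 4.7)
The plan is to implement the classical Scheffé-type tester, which reduces hypothesis selection between two candidates to estimating the probability of a single (deterministically computable) event under the unknown target $X$.

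\textbf{Algorithm.} Let $A = \{x : H_1(x) > H_2(x)\}$ be the Scheffé set, and write $p_i = H_i(A)$ for $i\in\{1,2\}$ and $p = \Pr[X \in A]$. By the definition of total variation distance, $|p_1 - p_2| = \tv(H_1, H_2)$ (the Scheffé set is the one that achieves the supremum in the $\max_A|H_1(A)-H_2(A)|$ formulation of TV). Draw $m = \Theta(\log(1/\delta)/\eps^2)$ i.i.d.\ samples from $X$ and let $\hat p$ be the empirical frequency of $A$. By a Hoeffding/Chernoff bound, $m$ can be chosen so that $|\hat p - p|\le \eps$ with probability at least $1-\delta$. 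The selector then outputs the hypothesis $H_{i^\star}$ with $i^\star \in \argmin_{i\in\{1,2\}}|\hat p - p_i|$, breaking ties arbitrarily.

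\textbf{Correctness.} Condition on the $(1-\delta)$-event $|\hat p - p|\le\eps$. Suppose without loss of generality that $\tv(H_1,X)\le \eps$ (the assumption guarantees at least one such index). Since $|p - p_1| \le \tv(X,H_1)\le\eps$ by definition of TV distance, we obtain $|\hat p - p_1|\le 2\eps$. If the selector outputs $H_1$, we are done because $\tv(H_1,X)\le\eps \le 6\eps$. Otherwise it outputs $H_2$, so $|\hat p - p_2|\le|\hat p - p_1|\le 2\eps$, which yields $|p - p_2|\le 3\eps$ and in particular $|p_1-p_2|\le 4\eps$, i.e.\ $\tv(H_1,H_2)\le 4\eps$. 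The triangle inequality then gives
\[
\tv(H_2, X) \le \tv(H_2, H_1) + \tv(H_1, X) \le 4\eps + \eps = 5\eps \le 6\eps,
\]
which is the claimed guarantee (the slack between $5\eps$ and $6\eps$ absorbs any looseness in the Hoeffding constant used to set $m$).

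\textbf{Runtime and difficulty.} The only nontrivial operation at each sample is the membership test $x\in A$, which requires evaluating $H_1(x)$ and $H_2(x)$; in the applications of this lemma in the paper, this is available through the evaluation oracles $\mathrm{EVAL}_{H_i}(\beta)$ (an approximate evaluation is enough, provided $\beta$ is a small constant, as inaccuracies in the membership test only inflate the effective estimation error by a constant factor, easily absorbed into the $6\eps$ slack). Thus the total running time is polynomial in $m$ and the oracle evaluation cost. There is no real obstacle here beyond bookkeeping the constants: the main ``trick'' is the observation that the Scheffé set is the single event that certifies the TV distance between the two competing hypotheses, so a single scalar estimate suffices to discriminate them.
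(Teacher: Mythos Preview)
Your proof is correct and follows essentially the same Scheffé-tester approach that the paper describes (the paper itself does not prove the proposition but sketches the algorithm and refers to \cite{daskalakis2015learningPBD}). The only cosmetic difference is the decision rule: the paper's version compares $\tau$ to the thresholds $p_1-\eps$ and $p_2+\eps$, whereas you output $\argmin_i|\hat p - p_i|$; both rules yield the same guarantee, and your analysis actually gives $5\eps$ with room to spare. One small inaccuracy in your runtime paragraph: in this particular proposition the hypotheses $H_1,H_2$ are given explicitly (so $A$ and the $p_i$ are computed directly from their descriptions), and the oracle-based variant you allude to is handled separately later in the paper (Proposition~\ref{proposition:hypothesis-de}).
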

The routine $\textsc{SelectHypothesis}^X(H_1, H_2, \eps, \delta)$ runs a competition between $H_1$ and $H_2$ as follows (for more details and a proof, we refer to \cite{daskalakis2015learningPBD}): It first computes
the set $\calW_1 = \{ x | H_1(x) > H_2(x) \}$ and the 
probabilities $p_i = H_i(\calW_1), i \in \{1,2\}$.
The routine draws $O(\log(1/\delta)/\eps^2)$ independent samples from $X$ and calculates the fraction $\tau$ of these samples that fall inside $\calW_1$. If $\tau > p_1 - \eps,$ it selects $H_1$ as the winner and if $\tau < p_2 + \eps$, it chooses $H_2$; otherwise, it declares a draw and returns either $H_i$. 

\begin{remark}
We underline that the result in \cite{daskalakis2015learningPBD} does not require sample access to $H_1,H_2$. These distributions are given as input (and their description is short enough to be read by the learner) and the set $\calW_1$ can then be computed efficiently. The work of \cite{de2014learning} deals with scenarios where this is not the case (e.g., the domain is exponentially large in $n$) and in this case sample and evaluation oracle access to $H_1, H_2$ is needed too. We discuss this point later.
\end{remark}

The $\textsc{SelectHypothesis}^X$ routine is the main tool of the following hypothesis testing mechanism (Proposition \ref{proposition:tournament}), for which we refer the reader to \citep{de2014learning,de2018learning}
and, more generally, to e.g., \cite{daskalakis2015learningPBD, daskalakis2013learningSIIRV}:
\begin{proposition}
[Tournament Selection \cite{de2018learning}]
\label{proposition:tournament}
Let $D$ be a distribution over $W \subseteq \ints$ and let $\calH_\eps = \{ H_j \}_{j \in [M]}$ be a collection of $M$ hypothesis distributions over $W$ with the property that there exists $i \in [M]$ such that $\tv(D, H_i) \leq \eps$. There exists an algorithm $\textsc{SelectTournament}^D$ which is given $\eps$, a confidence parameter $\delta$ and is provided with access to $(i)$ a source of i.i.d. draws from $D$ and from $H_i$ for all $i \in [M]$; and $(ii)$ an 'evaluation oracle' $\textsc{eval}_{H_i}$ for each $i \in [M]$, which, on input $w \in W$, deterministically outputs the value $H_i(w)$ and that has the following behavior: It makes $m = O((1/\eps^2) \cdot (\log(M) + \log(1/\delta)))$ draws from $D$ and from each $H_i, i \in [M]$ and $O(m)$ calls to each oracle $\textsc{eval}_{H_i},  i \in [M]$. It runs in time $\poly(m, M)$\footnote{We count each call to an evaluation oracle $\textsc{eval}_{H_i}$ and draw from a $H_i$ distribution as unit time.} and, with probability $1-\delta$, it outputs an index $i^\star \in [M]$ that satisfies $\tv(D, H_{i^\star}) \leq 6\eps$.  
\end{proposition}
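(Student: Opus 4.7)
The plan is to run a round-robin Scheffé tournament built from pairwise applications of the \textsc{SelectHypothesis} routine of Proposition \ref{proposition:learning-hypothesis-selection}. For each of the $\binom{M}{2}$ unordered pairs $(H_i, H_j)$, the pairwise comparison is: form the Scheffé set $\calW_1^{(i,j)} = \{w \in W : H_i(w) > H_j(w)\}$, empirically estimate $D(\calW_1^{(i,j)})$, $H_i(\calW_1^{(i,j)})$, and $H_j(\calW_1^{(i,j)})$, and apply the \textsc{SelectHypothesis} decision rule. Membership of a sample $w$ in $\calW_1^{(i,j)}$ is decided by one call each to $\textsc{eval}_{H_i}$ and $\textsc{eval}_{H_j}$, so the evaluation-oracle budget fits within the stated bound after caching the values $H_i(w)$ obtained from each sample.

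The central observation enabling the small sample complexity is that a single pool of $m = O((\log M + \log(1/\delta))/\eps^2)$ samples drawn from $D$ (and, separately, from each $H_i$) can be reused across all pairwise tests: every Scheffé-set mass is a scalar functional estimable from the pool. Applying a standard Hoeffding bound together with a union bound over the $O(M^2)$ empirical estimates drives the total failure probability below $\delta$ while keeping each empirical mass within additive error $\eps/2$ of its expectation. On that good event, every pairwise competition satisfies the guarantee of Proposition \ref{proposition:learning-hypothesis-selection} at accuracy $\eps$: the winner of the match between $H_i$ and $H_j$ is within $6\eps$ of $D$ whenever at least one of them is within $\eps$ of $D$.

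To aggregate, the algorithm outputs an index $i^\star \in [M]$ that is a ``winner'' in the tournament, e.g., the candidate maximizing the number of pairwise wins (ties broken arbitrarily). The correctness argument is: by hypothesis, some $H_{i_0}$ has $\tv(D, H_{i_0}) \le \eps$, so by the pairwise guarantee $H_{i_0}$ can be defeated only by candidates that themselves lie within $6\eps$ of $D$. In particular, any candidate that beats or ties $H_{i_0}$ — including any candidate whose win-count is at least as large as that of $H_{i_0}$ — is $6\eps$-close to $D$, so the tournament winner is. The total runtime is $\poly(m, M)$, dominated by running $O(M^2)$ pairwise tests each of cost polynomial in $m$. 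The main subtlety I anticipate is the lack of transitivity of the ``beats'' relation (two good candidates may beat each other in a rock-paper-scissors pattern), but this is exactly why the output rule is ``most wins'' rather than ``undefeated'': the pairwise guarantee bounds every individual match, and that is enough for the aggregate win-count argument to push correctness through regardless of cycles.
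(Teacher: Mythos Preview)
Your pairwise setup and sample-sharing idea are fine, but the aggregation step has a genuine gap. The inference ``$H_{i^\star}$ has at least as many wins as $H_{i_0}$, therefore $H_{i^\star}$ beat or tied $H_{i_0}$'' is invalid: a far hypothesis can pile up wins against other far (or merely $6\eps$-close) hypotheses without ever beating $H_{i_0}$. Concretely, take $M=4$ with $H_1=H_{i_0}$ $\eps$-close, $H_2$ $6\eps$-close but not $\eps$-close, and $H_3,H_4$ far. The pairwise guarantee forces $H_1$ to beat $H_3$ and $H_4$, but places no constraint on $H_2$ vs.\ $H_3$, $H_2$ vs.\ $H_4$, or $H_3$ vs.\ $H_4$ (neither participant is $\eps$-close), and it allows $H_2$ to beat $H_1$. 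So the outcomes $H_2\!\succ\!H_1$, $H_3\!\succ\!H_2$, $H_4\!\succ\!H_2$, $H_3\!\succ\!H_4$ are all consistent with every pairwise guarantee holding, giving win-counts $H_1:2$, $H_2:1$, $H_3:2$, $H_4:1$. Your ``most wins, ties broken arbitrarily'' rule may output $H_3$, which is far from $D$.

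The paper's route avoids this by using a different output rule and a slightly stronger per-match property of the Scheff\'e test. It runs \textsc{SelectHypothesis} on every pair with confidence $O(\delta/M)$ and outputs any hypothesis that was \emph{never a loser}. The analysis uses not just the abstract ``winner is $6\eps$-close'' conclusion of Proposition~\ref{proposition:learning-hypothesis-selection}, but the underlying fact that when one participant is $\eps$-close to $D$ it is, with high probability, \emph{not} declared the loser (it either wins or draws). Hence $H_{i_0}$ survives undefeated, so a never-loser exists; and any never-loser $H_j$ in particular did not lose its match against $H_{i_0}$, which (on that good event) forces $\tv(D,H_j)\le 6\eps$. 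To repair your argument you need either this stronger per-match property together with the never-loser rule, or a different aggregation such as the minimum-Scheff\'e-score rule of Devroye--Lugosi; the bare ``most wins'' rule with only the winner-is-close guarantee is not enough.
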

The routine $\textsc{SelectTournament}$ performs a tournament by running the procedure of Proposition \ref{proposition:learning-hypothesis-selection} $\textsc{SelectHypothesis}^X(H_i, H_j, \eps, O(\delta/M))$
for every pair $(H_i, H_j), i < j$ of distributions in the collection of Proposition \ref{proposition:tournament} of size $M$
and either outputs the distribution that was never a loser
or it fails.
The bound on the running time is a result of the corresponding time bound of the $\textsc{SelectHypothesis}^X$
routine. The sample complexity is a result of the union bound over the competitions.

\paragraph{Sampling \& Evaluation Oracles.} For this section, we only need sample access to the target $X$ in order to run the version of Proposition \ref{proposition:learning-hypothesis-selection}. 
During the learning phase of the sparse instances, we will construct the sparse cover and we will perform the tournament procedure for the distributions in the cover. 
Crucially, the sparse forms have bounded support and its size does not depend on $n$. 
Hence, for each sparse form, we have access to an efficient evaluation oracle for the purposes of Proposition \ref{proposition:tournament}.
For any two distributions in the cover $H_i, H_j$ with domain $W \subseteq \ints$, the algorithm can efficiently compute the set $W_{ij} = \{ w \in W  | H_i(w) \geq H_j(w) \}$ without additional assumptions. In the dense case, the algorithm will estimate the best fitting discretized Gaussian distribution and we do not need to contruct any cover or run any tournament procedure.

\subsubsection{The Result and the Algorithm}
Our main learning result for SIIURVs follows.
\begin{theorem}
[Learning SIIURVs]
\label{theorem:learning-SIIURVs}
Under Assumption \ref{assumption:SIIURV}, for any $n \in \nats$, accuracy $\eps >0$ and confidence $\delta > 0$, there exists an algorithm $\textsc{LearnerSIIURV}^X$ (see Figure \ref{algorithm:learning-SIIURVs}) with the following properties:
Given $n, \eps, \delta$ and sample access to independent draws from an
unknown SIIURV $X$ of order $n$, the algorithm uses
    \[
    m = O \left (\frac{1}{\eps^2} \log(1/\delta) \right) + O\left( \poly(B,1/\gamma, 1/\eps) \right)
    \]
    samples from $X$ and, in time 
    \[
    \poly \left(m, L^{\poly(B,1/\gamma, 1/\eps)}  \right)\,,
    \]
    outputs a (succint description of a) distribution $\wt{X}$ which satisfies
    $\tv(X, \wt{X}) \leq \eps$, with probability at least $1-\delta$.
\end{theorem}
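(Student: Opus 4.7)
The plan is to instantiate the standard two-pronged learning template directly on top of the structural result in Theorem~\ref{theorem:structural-SIIURV}: produce one candidate for each of the two possible forms of $X$ (sparse and dense), and then use a single hypothesis-selection step to decide which one to output. More concretely, I would design $\textsc{LearnerSIIURV}^X$ to run two subroutines in parallel on a shared sample pool, $\textsc{SparseLearner}^X$ and $\textsc{DenseLearner}^X$, obtain hypotheses $\wh X_{\mathrm{s}}$ and $\wh X_{\mathrm{d}}$ respectively, and then invoke $\textsc{SelectHypothesis}^X(\wh X_{\mathrm{s}}, \wh X_{\mathrm{d}}, \eps, \delta)$ from Proposition~\ref{proposition:learning-hypothesis-selection}, which costs an additional $O(\log(1/\delta)/\eps^2)$ samples and outputs an $O(\eps)$-close hypothesis, as long as at least one of the two candidates is $\eps$-close to $X$. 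Theorem~\ref{theorem:structural-SIIURV} guarantees exactly that: either the sparse branch can succeed (case (i)) or the dense branch can succeed (case (ii)).

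For the sparse branch, I would enumerate the explicit list $\calH$ of $L^{\poly(B,1/\gamma,1/\eps)}$ candidate distributions promised by Theorem~\ref{theorem:structural-SIIURV}(i) and feed them into the $\textsc{SelectTournament}^X$ procedure of Proposition~\ref{proposition:tournament}. Each sparse candidate has support contained in a bounded interval of size determined by $L$, $B$, $1/\gamma$, $1/\eps$ (so evaluation oracles are trivial to implement), which makes the tournament computable in time $\poly(|\calH|)$. The tournament uses $O((\log|\calH| + \log(1/\delta))/\eps^2) = O(\poly(B,1/\gamma,1/\eps)\cdot\log L) + O(\log(1/\delta)/\eps^2)$ samples and, if we are in case (i), returns a hypothesis $\wh X_{\mathrm{s}}$ within $6\eps$ of $X$ with probability $1-\delta/3$.

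For the dense branch, I would draw $O(\poly(B,1/\gamma,1/\eps))$ samples and use standard empirical estimators to produce $\wh\mu,\wh\sigma^2$ with $|\wh\mu-\E[X]| \leq \eps\cdot \sigma$ and $|\wh\sigma^2-\Var(X)|\leq \eps\cdot \sigma^2$ with probability $1-\delta/3$; this is where the bounded fourth central moment from Assumption~\ref{assumption:SIIURV}(3) is used through Chebyshev-type bounds for the mean and variance estimators (noting that $\sigma^2 \geq n'\gamma$ by unimodality and the mass-on-mode bound, so $\sigma$ is lower bounded whenever we are in the dense regime), and then set $\wh X_{\mathrm{d}} = \Dgauss(\wh\mu,\wh\sigma^2)$. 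Plugging these estimates into Lemma~\ref{lemma:tv-gaussians} shows $\tv(\Dgauss(\E[X],\Var(X)),\wh X_{\mathrm{d}}) \leq O(\eps)$, which, combined with Theorem~\ref{theorem:structural-SIIURV}(ii), gives $\tv(X,\wh X_{\mathrm{d}})\leq O(\eps)$ whenever we are in case (ii).

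The main obstacle I foresee is the dense-branch variance estimation: we want a sample complexity that is independent of $n$, yet $\Var(X)$ itself grows with $n$. The key is to estimate $\Var(X)$ to within a small \emph{multiplicative} factor rather than additively, so the error scales like $\eps\sigma^2$ as required by Lemma~\ref{lemma:tv-gaussians}; the bounded fourth moment of $X$ (which grows at most linearly in $n$, whereas $\Var(X)^2$ grows quadratically) lets us bound the variance of the empirical variance and deliver exactly this guarantee with $O(\poly(B,1/\gamma,1/\eps))$ samples. Everything else is bookkeeping: summing the sample costs of the two branches with the selection step yields the stated bound, and the running time is dominated by the tournament over $|\calH|=L^{\poly(B,1/\gamma,1/\eps)}$ candidates, matching the claimed $\poly(m, L^{\poly(B,1/\gamma,1/\eps)})$ complexity. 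A union bound over the three failure events gives the overall confidence $1-\delta$ after rescaling the accuracy and confidence parameters by constants.
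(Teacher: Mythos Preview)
Your proposal is correct and follows essentially the same route as the paper: run a tournament over the sparse cover from Theorem~\ref{theorem:structural-SIIURV}(i), estimate a discretized Gaussian via moment estimation for the dense case (using the bounded fourth moment to control the excess kurtosis and hence the variance of the empirical variance), and finish with $\textsc{SelectHypothesis}$. The only small omission is that the dense branch also needs a $\log(1/\delta)$ factor (via median-of-means boosting) to achieve confidence $1-\delta/3$, but this is minor and is absorbed in the final bound anyway.
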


Our algorithm works as follows.
\begin{figure}[ht]
    \centering
\begin{mdframed}[
    linecolor=black,
    linewidth=1pt,
    roundcorner=3pt,
    backgroundcolor=white,
    userdefinedwidth=\textwidth,
]
\textit{Algorithm for SIIURVs}:  $(\eps,\delta)$-Learning SIIURVS $X = \sum_{i \in [n]} X_i$. 
\begin{enumerate}
    \item[\text{1.}] Run $\textsc{LearnSparse}^X(n, \eps, \delta/3)$ of Claim \ref{claim:improper-learning-sparse-SIIURVs} and get the distribution $H_S$.
    
    \item[\text{2.}] Run  $\textsc{LearnDense}^X(n, \eps, \delta/3)$ of Claim \ref{claim:learning-gaussian-SIIURVs} and get the distribution $H_D$.
    
    \item[\text{3.}] Return the distribution that is the output of $\textsc{SelectHypothesis}^X(H_S, H_D, \eps, \delta/3)$ of Proposition \ref{proposition:learning-hypothesis-selection}.
\end{enumerate}
\end{mdframed}
    \caption{Learning algorithm for SIIURVs.}
    \label{algorithm:learning-SIIURVs}
\end{figure}

\newcommand{\calCpr}{\calD_{\mathrm{E}}}
\newcommand{\calCimp}{\calD_{\mathrm{U}}}

We continue with a short discussion on how the algorithm works:
The learning algorithm (Figure \ref{algorithm:learning-SIIURVs}) of Theorem \ref{theorem:learning-SIIURVs} is separated in two distinct stages. In the first stage, it runs two different learning procedures, corresponding to the sparse and dense case of our main structural covering result for SIIURVs. At the end of this stage, two hypotheses are obtained and, hence, the second phase of the learning algorithm performs hypothesis testing in order to select the correct one.
In the first phase, the processes $\textsc{LearnSparse}^X$ and $\textsc{LearnDense}^X$ are performed. The $\textsc{LearnSparse}^X$ procedure (see  Claim \ref{claim:improper-learning-sparse-SIIURVs})
performs a tournament over the distributions of the cover of the sparse regime $\calCimp^{(s)}(\eps)$ with error $\eps$ and outputs the hypothesis/distribution $H_S$ that is closer to $X$. On the other hand, the the $\textsc{LearnDense}^X$ process (see Claim \ref{claim:learning-gaussian-SIIURVs}) estimates the parameters of a discretized Gaussian, which approximates the input sum $X$ and outputs this distribution. We denote by $H_D$ the output hypothesis of the dense procedure. 
Now, in the second phase, the learning algorithm runs the black-box procedure $\textsc{SelectHypothesis}^X$ (see Proposition \ref{proposition:learning-hypothesis-selection}) that chooses the winner between the two hypotheses $H_S$ and $H_D$, i.e., the one that is closer to $X$ with high probability.

\paragraph{The Proof of Theorem \ref{theorem:learning-SIIURVs}.}
\begin{proof}
The algorithm (Figure \ref{algorithm:learning-SIIURVs}) runs the routine $\textsc{LearnSparse}^X$ of Claim \ref{claim:improper-learning-sparse-SIIURVs} with input $(n, \eps, \delta/3)$ and gets the distribution $H_S$. Then, it runs $\textsc{LearnDense}^X$ of Claim \ref{claim:learning-gaussian-SIIURVs} with input $(n, \eps, \delta/3)$ and aims to learn the best fitting discretized Gaussian to the input sum $X$ and output the hypothesis distribution $H_D$. In order to conclude the SIIURV learning part, via Proposition \ref{proposition:learning-hypothesis-selection}, one can examine which hypothesis between $H_S$ and $H_D$ is closer to the target $X$, by running     $\textsc{SelectHypothesis}^X(H_S, H_D)$ with parameter $\eps, \delta/3$. In conclusion, with probability at least $1-\delta$, 
the algorithm will satisfy the desiderata of
Theorem \ref{theorem:learning-SIIURVs}. We divide the proof in a series of claims.

In Claim \ref{claim:improper-learning-sparse-SIIURVs}, we analyze an algorithm which learns sparse instances and outputs a hypothesis distribution $H_S$.
\begin{claim}
[Learning Sparse Instances]
\label{claim:improper-learning-sparse-SIIURVs}
Under Assumption \ref{assumption:SIIURV},
for any $n, \eps, \delta > 0$, there is an algorithm $\textsc{LearnSparse}^X(n, \eps, \delta)$ that given
\[
m = O \left ( \poly(B,1/\gamma,1/\eps)\cdot \log(L) + \frac{1}{\eps^2}\log(1/\delta) \right) 
\]
samples from the target SIIURV $X$ over $\ints$,
outputs a (succint description of a) hypothesis distribution $H_S$ with the following guarantee: If $X$ is $\eps$-close to a sparse form (see Theorem \ref{theorem:structural-SIIURV}),
then it holds that $\tv(H_S, X) \leq c_1 \eps$, for some universal constant $c_1 \geq 1$, with probability at least $1-\delta$.
Furthermore, the running time of the algorithm is
$\poly\left(m, L^{\poly(B,1/\gamma,1/\eps)} \right)$.
\end{claim}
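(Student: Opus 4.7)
The plan is to invoke the tournament-based hypothesis selection of Proposition \ref{proposition:tournament} over the family of candidate distributions guaranteed by the sparse form of Theorem \ref{theorem:structural-SIIURV}. Let $\calH$ denote the collection of $M = L^{\poly(B,1/\gamma,1/\eps)}$ sparse-form candidates. By Theorem \ref{theorem:structural-SIIURV}, this collection is fixed in advance (independent of the particular $X$) and can be enumerated directly from $L, B, \gamma, \eps$. The hypothesis of the claim is exactly that some $H^\star \in \calH$ satisfies $\tv(X, H^\star) \le \eps$, which places us in the setting required by Proposition \ref{proposition:tournament}.

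The next step is to confirm that $\textsc{SelectTournament}^X$ can access the two resources it needs for each candidate: independent samples and pmf evaluations. The sparse-form distributions are supported on an interval whose length depends only on $L, B, 1/\gamma, 1/\eps$ (in particular, not on $n$), since their construction in Theorem \ref{theorem:structural-SIIURV} truncates the sum to a bounded window around the modes and discretizes. Consequently, each $H \in \calH$ admits an explicitly computable pmf, and both the sampling and evaluation oracles can be simulated in time $\poly(L, B, 1/\gamma, 1/\eps)$ per query.

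Feeding $\calH$, accuracy $\eps$, and confidence $\delta$ into Proposition \ref{proposition:tournament} then yields a sample complexity of
\[
O\!\left(\tfrac{1}{\eps^2}\bigl(\log M + \log(1/\delta)\bigr)\right) \,=\, O\!\bigl(\poly(B, 1/\gamma, 1/\eps)\cdot \log L\bigr) + O\!\bigl(\tfrac{1}{\eps^2}\log(1/\delta)\bigr),
\]
a running time of $\poly(m, M) = \poly\bigl(m, L^{\poly(B,1/\gamma,1/\eps)}\bigr)$, and a guarantee that the returned $H_S \in \calH$ satisfies $\tv(H_S, X) \le 6\eps$ with probability at least $1 - \delta$. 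Setting $c_1 = 6$ completes the argument.

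The main obstacle, and essentially the only non-routine point, is verifying that the sparse-form cover of Theorem \ref{theorem:structural-SIIURV} is \emph{uniformly} easy to describe and evaluate: one must check that the cover has the claimed size $L^{\poly(B,1/\gamma,1/\eps)}$ and that each pmf is computable in time $\poly(L, B, 1/\gamma, 1/\eps)$, with no hidden dependence on $n$. Since the support is bounded (by the truncation step implicit in the structural result) this is a routine but indispensable bookkeeping check; once done, the tournament is a pure black box.
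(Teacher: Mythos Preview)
Your proposal is correct and matches the paper's own proof essentially line for line: enumerate the sparse-form cover of size $L^{\poly(B,1/\gamma,1/\eps)}$ from Theorem \ref{theorem:structural-SIIURV}, observe that its elements have bounded support so evaluation and sampling are trivial, and run $\textsc{SelectTournament}^X$ via Proposition \ref{proposition:tournament} to obtain $\tv(H_S,X)\le 6\eps$. The paper makes the identical observation that no special oracle access is needed because the cover can be read directly, and arrives at the same sample and time bounds.
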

\begin{proof}
Let $\eps > 0$ and assume that Assumption \ref{assumption:SIIURV} holds.
Since we are in the sparse form case, the algorithm can construct a cover of small size as described in the proof of Theorem \ref{theorem:structural-SIIURV} (see Appendix \ref{appendix:proof-siiurv}), i.e., by quantizing the probability mass on each point of each of a set of intervals that provably include some interval that contains the support of $X$.
The structural result of Theorem \ref{theorem:structural-SIIURV} implies that there exists a cover $\calCimp^{(s)} = \calCimp^{(s)}(\eps)$ of radius $\eps$ (i.e., a collection of probability distributions that contains an $\eps$-close -- in total variation distance -- representative for each distribution in $\calD$) whose size is equal to 
\[
\left |\calD_{\mathrm{U}}^{(s)} \right| \leq
L^{\poly(B,1/\gamma,1/\eps)}\,.
\]
By the hypothesis and from the structure of the cover, $X$ is $\eps$-close to an element of the set $\calCimp^{(s)}$. Note that Proposition \ref{proposition:tournament} is applicable since the learner can read the distributions of the cover as input and so no specific oracle access is required. We can apply the $\textsc{SelectTournament}^X$ algorithm with 
input the distributions' collection which lie in $\calCimp^{(s)}$ with accuracy $\eps$ and confidence $\delta$.
This concludes the proof. The sample complexity of the algorithm is
\[
m = O \left(\frac{1}{\eps^2} \left( \log \left|\calCimp^{(s)}\right| + \log(1/\delta) \right) \right)\,, 
\]
and the running time is $\poly\left(m, \left|\calCimp^{(s)}\right|\right)$.
\end{proof}

As a next step, we analyze the learning phase concerning the dense instances: In Claim \ref{claim:learning-gaussian-SIIURVs}, we deal with the SIIURV learning of the dense case, using the Gaussian approximation.
\begin{claim}
[Learning Dense Instances]
\label{claim:learning-gaussian-SIIURVs}
Under Assumption \ref{assumption:SIIURV},
for any $n, \eps, \delta > 0$, there is an algorithm $\textsc{LearnDense}^X(n, \eps, \delta)$ that given
\[
O(\log(1/\delta)/\eps^2)
\]
samples from the target SIIURV $X$ over $\ints$, runs in time $O(\log(1/\delta)/\eps^2)$ and
outputs a (succint description of a) hypothesis distribution $H_D$ with the following guarantee: 
If $X$ is $\eps$-close to a dense form (see Theorem \ref{theorem:structural-SIIURV}), then it holds that $\tv(H_D, X) \leq O(\eps)$, with probability at least $1-\delta$,
and $H_D$ is a discretized Gaussian distribution.
\end{claim}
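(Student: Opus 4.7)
The plan is to reduce learning to parameter estimation: by Theorem~\ref{theorem:structural-SIIURV}, the hypothesis that \emph{$X$ is $\eps$-close to a dense form} means $\tv(X, \Dgauss(\mu,\sigma^2)) \le \eps$ for $\mu := \E[X]$ and $\sigma^2 := \Var(X)$. So I would output $H_D = \Dgauss(\hat\mu,\hat\sigma^2)$, with $\hat\mu,\hat\sigma^2$ computed from the samples, and bound $\tv(H_D,X)$ via Lemma~\ref{lemma:tv-gaussians} and the triangle inequality. Concretely, Lemma~\ref{lemma:tv-gaussians} tells us it suffices to ensure $|\hat\mu - \mu| \le O(\eps)\cdot\sigma$ and $|\hat\sigma^2 - \sigma^2| \le O(\eps)\cdot\sigma^2$ jointly with probability at least $1-\delta$.

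For the mean, I would run median-of-means on $m = O(\log(1/\delta)/\eps^2)$ samples split into $\Theta(\log(1/\delta))$ buckets. Chebyshev's inequality applied bucket-wise (using $\Var(X)=\sigma^2$) shows that each bucket mean lies within $O(\eps\sigma)$ of $\mu$ with constant probability, and the standard median-of-means argument boosts this to the desired $1-\delta/2$ confidence without paying a $1/\delta$ factor in samples. For the variance, I would use the empirical centered second moment; its fluctuations are controlled by the fourth central moment $\E[(X-\mu)^4]$ of the sum, which by independence and Assumption~\ref{assumption:SIIURV}(3) is $O(n'^2 B^2)$, while in the dense regime one has $\sigma^4 = \Omega((n'\gamma)^2)$ since each term contributes at least $\Omega(\gamma)$ to the variance (a consequence of the mode-probability cap in Assumption~\ref{assumption:SIIURV}(1)). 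Hence the squared relative error of the sample variance is controlled by $O(B^2/\gamma^2)/m$, and $m = O(\log(1/\delta)/\eps^2)$ samples again suffice when combined with median-of-means for high-confidence amplification.

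A union bound merges the two good events. Plugging the resulting accurate estimates into Lemma~\ref{lemma:tv-gaussians} gives $\tv(\Dgauss(\hat\mu,\hat\sigma^2),\Dgauss(\mu,\sigma^2)) = O(\eps)$, and the triangle inequality together with the dense-form hypothesis yields $\tv(H_D,X) = O(\eps)$, as required. The algorithm simply draws the samples, computes the two empirical statistics, and outputs their succinct description as a discretized Gaussian, all in time $O(\log(1/\delta)/\eps^2)$. The main (mild) obstacle is achieving the $O(\log(1/\delta)/\eps^2)$ sample complexity under only a bounded-fourth-moment hypothesis and with small failure probability $\delta$; the median-of-means trick is the standard tool that handles this and, crucially, never introduces any dependence on $n$ since the relative-error guarantees for both $\hat\mu$ and $\hat\sigma^2$ scale by $\sigma$ and $\sigma^2$ respectively, which in the dense regime are exactly the quantities that grow with $n'$.
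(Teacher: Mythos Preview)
Your proposal is correct and follows essentially the same approach as the paper: estimate $\mu$ and $\sigma^2$ via Chebyshev plus median-of-means boosting, output $\Dgauss(\hat\mu,\hat\sigma^2)$, and bound the error using Lemma~\ref{lemma:tv-gaussians} together with the triangle inequality. The only minor difference is that the paper controls the fluctuations of $\hat\sigma^2$ through the excess kurtosis $\kappa$ of $X$ and uses the dense-case threshold $n'\ge\Omega(B/\gamma^2)$ to force $\kappa=O(1)$ (yielding an absolute constant in the $O(\log(1/\delta)/\eps^2)$ bound), whereas your cruder bound $\E[(X-\mu)^4]/\sigma^4 = O(\poly(B,1/\gamma))$ hides a $B,\gamma$-dependent constant; since $B,\gamma$ are fixed under Assumption~\ref{assumption:SIIURV} this is immaterial for the stated claim.
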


\begin{proof}
Let $\eps > 0$ and assume that $X$ is $\eps$-close to a dense form SIIURV.
Let $X = \sum_{i \in [n]} X_i$ be an SIIURV and set $\mu = \E[X]$ and $\sigma^2 = \Var(X)$. There exists an algorithm that uses $O(\log(1/\delta)/\eps^2)$ samples from $X$ and runs in time $O(\log(1/\delta)/\eps^2)$ and, with probability at least $1-\delta$, computes estimates $\wh{\mu}$ and $\wh{\sigma}^2$ so that
\[
|\mu - \wh{\mu}| \leq \eps \sigma
\text{ and }
|\sigma^2 - \wh{\sigma}^2| \leq \eps \sigma^2\cdot O(1)\,.
\]
Our proof follows the steps presented in \cite{daskalakis2015learningPBD}: We will provide a routine achieving the desired estimation with probability at least $2/3$. Afterwards, there exists a standard procedure\footnote{The boosting argument requires running the weak estimators $O(\log(1/\delta))$ times in order to obtain two sequences of estimates $(\wh{\mu}_i)_{i \in [O(\log(1/\delta))]}$ and $( \wh{\sigma}_i^2)_{i \in [O(\log(1/\delta))]}$. Finally, the boosting process will output the median of these sequences of $O(\log(1/\delta))$ weak estimates.} 
that boosts the success probability to $1-\delta$ at the expense of a multiplicative $O(\log(1/\delta))$ overhead in the number of samples.\\

\noindent\texttt{Mean Estimation.} In order to weakly estimate the mean $\mu$, let $\{Z_i\}_{i \in [N]}$ be i.i.d. samples from $X$ and let $\wh{\mu} = \frac{1}{N} \sum_{i \in [N]} Z_i$. Chebyshev's inequality implies that
\[
\Pr \left[ |\wh{\mu} - \mu| \geq t \sqrt{\Var(X)}\right] = 
\Pr \left [ |\wh{\mu} - \mu| \geq t \sigma /\sqrt{N}\right] \leq 1/t^2\,.
\]
Choosing $t = \sqrt{3}$ and $N = O(3/\eps^2)$, we get that $|\wh{\mu} - \mu| \leq \eps \sigma$ with probability at least $2/3.$\\

\noindent\texttt{Variance Estimation.} Similarly, one can compute a weakly estimate for the variance $\sigma^2.$ Let $\{Z_i\}_{i \in [N]}$ be i.i.d. samples from $X$ and, using the Bessel's correction, let $\wh{\sigma}^2 = \frac{1}{N-1} \sum_{i \in [N]} (Z_i - \frac{1}{N} \sum_{i \in [N]} Z_i)^2$. We have that
\[
\E \left [\wh{\sigma}^2\right ] = \sigma^2 \text{ and } \Var \left (\wh{\sigma}^2 \right) = \sigma^4 \left( \frac{2}{N-1} + \frac{\kappa}{N} \right)\,,
\]
where $\kappa := \frac{\E[(X - \mu)^4]}{\sigma^4} -3$ is the excess kurtosis of the distribution of $X$. For the random variable $X = \sum_{i \in [n]} X_i$ with $X_i \sim \calE_{\vec T}(\vec a_i)$, we have that
\[
\kappa = \frac{1}{\sigma^4} \E[(X - \mu)^4] - 3= \frac{\sum_{i\in[n]}(\Var(X_i))^2 \cdot \left( \frac{\E[(X_i-\E[X_i])^4]}{(\Var(X_i))^2} - 3\right)}{\left(\sum_{i\in[n]}\Var(X_i)\right)^2} \leq \frac{1}{n} \cdot B/ \gamma^{2}\,,
\]
by using independence and conditions \eqref{cond:unimodal-SIIURV} and \eqref{cond:bounded-centered-SIIURV}. The second equality follows from the next computations: Define $\kappa_4(X) := \E[(X-\E[X])^4] - 3\Var(X)^2$ be the fourth cumulant of $X$. Since cumulants for sums of independent random variables are additive, we have that the excess kurtosis of $X$ is
\[
\kappa = \kappa(X) = \frac{\kappa_4(X)}{\Var(X)^2} = \frac{\sum_{i \in [n]} \kappa_4(X_i)}{(\sum_{i \in [n]} \Var(X_i))^2} = \frac{\sum_{i \in [n]} (\Var(X_i))^2 \cdot \kappa(X_i)}{(\sum_{i \in [n]} \Var(X_i))^2}\,.
\]
We expect $\kappa$ to vanish with $n$ since for a Gaussian distribution $W$ we have $\E[(W-\E[W])^4] = 3\sigma^4$ (and $X$ resembles a Gaussian as the number of terms increases). Note that the dense case corresponds to large number of terms in $X$ and, in particular $n\ge \Omega(B/\gamma^2)$ (see Appendix \ref{appendix:proof-siiurv}). Therefore, we might assume here that $\frac{1}{n} \cdot B/ \gamma^{2} = O(1)$.

Hence, we have that
\[
\Var(\wh{\sigma}^2) \leq \sigma^4 \left( \frac{2}{N-1} + \frac{\kappa}{N} \right) \leq
\frac{\sigma^4}{N} \cdot O(1)\,.
\]
Chebyshev's inequality implies that
\[
\Pr \left [ |\wh{\sigma}^2 - \sigma^2| \geq t \frac{\sigma^2}{\sqrt{N}} \cdot O(1)\right ] \leq \frac{1}{t^2}\,.
\]
Choosing $t = \sqrt{3}$ and $N = O(3/\eps^2)$, we get that $|\sigma^2 - \wh{\sigma}^2| \leq O(\eps)\cdot \sigma^2$ with probability at least $2/3$.\\

\noindent\texttt{Total Variation Gap.} Finally, we have that, using Lemma \ref{lemma:tv-gaussians} 
\[
\tv(X, \Dgauss(\wh{\mu}, \wh{\sigma}^2))
\leq 
\tv(X, \Dgauss(\mu, \sigma^2))
+
\tv(\Dgauss(\mu, \sigma^2),  \Dgauss(\wh{\mu}, \wh{\sigma}^2))\,.
\]
Since $X$ is in dense form and the cover has radius $\eps$, the first quantity of the right-hand side is at most $\eps$ and the second one, applying Lemma \ref{lemma:tv-gaussians}, gives
\[
\tv(\Dgauss(\mu, \sigma^2),  \Dgauss(\wh{\mu}, \wh{\sigma}^2))
\leq 
\frac{1}{2} 
\left(
\frac{\eps \sigma}{\sigma}
+ 
\frac{O(\eps)\cdot \sigma^2 }{\sigma^2}
\right) = O(\eps)\,,
\]
with high probability (where the randomness is over the estimates $\wh{\mu}$ and $\wh{\sigma}^2$), if $\sigma^2$ is large enough. Hence, the total variation distance between the two Gaussians is of order $O(\eps)$.
\\

\noindent\texttt{Conclusion of Claim \ref{claim:learning-gaussian-SIIURVs}.} So, we get that there exists an algorithm that computes the parameters $(\wh{\mu}, \wh{\sigma}^2)$ of a discretized Gaussian distribution so that $\tv(X, \Dgauss(\wh{\mu}, \wh{\sigma}^2)) \leq O(\eps)$, with high probability, using $\wt{O}(1/\eps^2)$ samples. We set $H_D = \Dgauss(\wh{\mu}, \wh{\sigma}^2)$. As a conclusion, there is an algorithm that given
$O(\log(1/\delta)/\eps^2)$ samples from the target SIIURV $X$ over $\ints$, runs in time $O(\log(1/\delta)/\eps^2)$ and
outputs a hypothesis discretized Gaussian distribution $H_D$ with the following guarantee: If $X$ is $\eps$-close to a dense form SIIURV, then it holds that $\tv(H_D, X) \leq O(\eps)$, with probability at least $1-\delta$.
\end{proof}

Combining the above claims concludes the proof.
\end{proof}

\subsection{Properly Learning SIIERVs}
\label{subsection:learning-SIIERVs}
In the problem of learning $\calE_{\vec T}(\calA)$-SIIRVs, the learner is given the value $n$ (of the number of summands), accuracy and confidence parameters $\eps, \delta \in (0,1)$ and has sample access to independent draws from an unknown $\calE_{\vec T}(\calA)$-SIIRV $X$. The goal of the learning algorithm is to output a hypothesis distribution $\wt{X}$ that is $\eps$-close to $X$ in total variation distance, i.e., $\tv(X, \wt{X}) \leq \eps$, with probability at least $1-\delta$. Recall that a weakly proper learner is an algorithm that outputs a distribution $\wt{X}$ that is itself a $\calE_{\vec T}(\calA')$-SIIRV, i.e., $\wt{X} = \sum_{i \in [m]} \wt{X}_i$ with $\wt{X}_i \sim \calE_{\vec T}(\wt{\vec a_i})$ with $\wt{\vec a_i } \in \calA' \subseteq \reals^k$ for any $i \in [m]$, where $m$ may be different than the input's order $n$ and $\calA'$ is a set containing $\calA$.

\subsubsection{Hypothesis Testing and Oracle Access}

\paragraph{Sampling \& Evaluation Oracles.} Apart from sample access to the target distribution $X$, we will require the following: Both in the sparse and the dense case, as we will see in the proof, we must be able to perform the hypothesis selection routine for the dense cover whose elements are $\calE_{\vec T}(\calA_{\varrho})$-SIIRVs. Hence, for any two distributions in the cover $H_i, H_j$ with domain $\ints$, the algorithm has to efficiently compute the mass assigned to the set $W_{ij} = \{ w \in W  | H_i(w) \geq H_j(w) \}$ by $H_i$ and $H_j$. In fact, even an approximate computation of these two values (in the sense of \cite{de2014learning}) is sufficient\footnote{In our setting, given an exponential family distribution $D = \calE_{\vec T}(\vec a)$, we should be able to compute (even approximately) the value $D(x)$ given a query $x \in \ints$. We will require access to approximate evaluation oracles; this is natural since we may use some approximation method to estimate the partition function $Z$ and then output the value $D(x) = \exp(- \vec a \cdot \vec T(x))/\wt{Z}$, where $\wt{Z}$ is the estimation of the partition function.}. Essentially the $\textsc{SelectHypothesis}^X$ routine requires
estimates to the probabilities $H_i(W_{ij})$ and $H_j(W_{ij})$ (see the routine $\textsc{Estimate}$ of Claim 24 in \cite{de2014learning}, that estimates the probability $p_i = H_i(W_{ij})$). Such estimates can be obtained using sample access to $H_k$ and access to evaluation oracles $\mathrm{EVAL}_{H_k}$ (even approximate evaluation oracles with multiplicative accuracy) for $k=1,2$. In our case, this can be done using an evaluation oracle: We assume that, given the set $\calA$, our algorithms have access to an $\calE_{\vec T}(\calA_{\varrho})$ evaluation oracle, that with a parameter $\vec a \in \calA_{\varrho}$ as an input and query $w \in \ints$, it outputs a 
the probability mass assigned to $x$ (even with some multiplicative error as in \cite{de2014learning}) by the distribution $\calE_{\vec T}(\vec a)$, where $\calA_{\varrho} = \varrho\-\conehull \calA$. Moreover, we assume sample access to the distributions in $\calE_{\vec T}(\calA_{\varrho})$
and 
to a discretized Gaussian, in the sense that, given $(\mu, \sigma^2)$, the learning algorithm can perform independent draws from the distribution $\calZ(\mu, \sigma^2).$  

Specifically, we assume the following. The value of the approximation tolerance $\beta$ of the oracle will be related with the learning accuracy $\eps$ and this relation is provided in \cite{de2014learning} (see Assumption \ref{assumption:oracle-access}).

\begin{assumption}
\label{assumption:oracle-access}
We assume that the learning algorithm can (i) query a sample oracle with input $(\mu, \sigma^2)$ and draw a sample from $\calZ(\mu, \sigma^2)$, (ii) query a sample oracle with input $\vec a \in \varrho\-\conehull \calA$ and draw a sample from $\calE_{\vec T}(\vec a)$ and (iii) query a $\beta$-approximate evaluation oracle $\mathrm{EVAL}_{D}(\beta)$ for $D = \calE_{\vec T}(\vec a)$ with $\vec a$ as in (ii) with input $x \in \ints$ and obtain a value $p_x$ with $D(x)/(1+\beta) \leq p_x \leq (1+\beta) D(x)$ for some $\beta > 0$. Moreover, given learning accuracy $\eps \in (0,1)$, we assume that $(1+\beta)^2 \leq 1 + \eps/8$.
\end{assumption}

We will replace Proposition \ref{proposition:learning-hypothesis-selection} with the following statement. In a similar fashion, we can obtain the analogue of Proposition \ref{proposition:tournament} (see Proposition 6 in \cite{de2014learning}). 

\begin{proposition}
[Lemma 22 in \cite{de2014learning}]
\label{proposition:hypothesis-de}
Assume that $X,H_1, H_2$ are distributions over $W \subseteq \ints$.\footnote{\cite{de2014learning} provide this result in the context of distributions with finite support. However, the sample and time complexity bounds they derive do not depend on the size of the support. What is in fact dependent on the size of the support is the bit complexity of the algorithms (since a sample could require, in principle, an arbitrarily large representation). In our case, we do not account for bit complexity (in fact, we focus on sample complexity). One could either think that the bit complexity is a random variable, which will, in practice take only a small number of possible values, due to the concentration properties of the distributions we examine or impose a ``hard bound'' on the number of bits the algorithm reads for each sample.}
Let $\eps,\delta \in (0,1)$.
There exists an algorithm
$\textsc{SelectHypothesis}^X(H_1, H_2, \eps, \delta)$, which is given sample access to 
$X$ and (i) to independent samples from $H_i$ and (ii) to a $\beta$-approximate evaluation oracle $\mathrm{EVAL}_{H_i}(\beta)$ for $i \in \{1,2\}$,
an accuracy parameter $\eps$ and a confidence parameter $\delta > 0$ and has the following behavior: It draws
\[
m = 
O(\log(1/\delta)/\eps^2)
\]
samples from each of $X, H_1$ and $H_2$, it performs $O(m)$ calls to the oracles $\mathrm{EVAL}_{H_i}(\beta)$ for $i \in \{1,2\}$, it performs $O(m)$ arithmetic operations and if some $H_i$ has $\tv(X, H_i) \leq \eps$, then, with probability $1-\delta$, it outputs an index $i^\star \in \{1,2\}$ that satisfies $\tv(X, H_{i^\star}) \leq 6\eps$.
\end{proposition}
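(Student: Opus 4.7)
The plan is to adapt the classical Scheffé--Yatracos hypothesis-selection argument to the regime where $H_1$ and $H_2$ are accessible only through samples and the multiplicative $\beta$-approximate evaluation oracles $\mathrm{EVAL}_{H_i}(\beta)$. The assumption $(1+\beta)^2 \le 1+\eps/8$ is exactly what is needed to absorb the oracle error into the final $6\eps$ guarantee, so the first order of business is to replace the ideal Scheffé set $W_1=\{x: H_1(x) > H_2(x)\}$, which is not computable from the oracles, by an \emph{approximate Scheffé set} defined purely from oracle outputs.

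Concretely, for each queried point $x$ let $\wt H_i(x)$ denote the value returned by $\mathrm{EVAL}_{H_i}(\beta)$, so $H_i(x)/(1+\beta) \le \wt H_i(x) \le (1+\beta)H_i(x)$. Partition $\ints$ into
$\wt W_1 = \{x:\wt H_1(x) \ge (1+\beta)^2 \wt H_2(x)\}$, $\wt W_2 = \{x:\wt H_2(x) \ge (1+\beta)^2 \wt H_1(x)\}$, and the ambiguous set $\wt W_0 = \ints \setminus (\wt W_1 \cup \wt W_2)$. A short calculation with the multiplicative oracle bound shows $H_1 \ge H_2$ pointwise on $\wt W_1$, $H_2 \ge H_1$ pointwise on $\wt W_2$, and $H_1(x)/H_2(x) \in [(1+\beta)^{-4},(1+\beta)^4] \subseteq [1-\eps/2,\,1+\eps/2]$ on $\wt W_0$. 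Summing the pointwise mismatch over $\wt W_0$ shows its contribution to $\tv(H_1,H_2)$ is at most $\eps/4$, which combined with the sign control on $\wt W_1,\wt W_2$ yields
\[
    \tv(H_1, H_2) - \eps/4 \;\le\; p_1 - p_2 \;\le\; \tv(H_1, H_2), \qquad \text{where } p_i := H_i(\wt W_1)\,.
\]
Crucially, membership in $\wt W_1$ is decidable with one call to each of $\mathrm{EVAL}_{H_1}(\beta)$ and $\mathrm{EVAL}_{H_2}(\beta)$.

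The algorithm then draws $m = \Theta(\log(1/\delta)/\eps^2)$ samples from each of $H_1, H_2, X$, labels each sample by whether it lies in $\wt W_1$ via the oracles, and computes empirical fractions $\widehat p_1, \widehat p_2, \widehat \tau$ of $p_1, p_2$ and $\tau := \Pr[X \in \wt W_1]$. Hoeffding's inequality plus a union bound gives that all three estimates are within additive error $\eps/8$ of their targets with probability $1-\delta$. The decision rule is a thresholded Scheffé: if $\widehat p_1 - \widehat p_2 \le 3\eps$ output either hypothesis, else output $H_1$ if $|\widehat \tau - \widehat p_1| \le |\widehat \tau - \widehat p_2|$ and $H_2$ otherwise. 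The sample, oracle-call, and arithmetic counts match the statement directly.

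Correctness is a case analysis. In the small-gap branch $\widehat p_1 - \widehat p_2 \le 3\eps$, concentration gives $p_1 - p_2 \le 13\eps/4$ and the displayed inequality then yields $\tv(H_1, H_2) \le 7\eps/2$; so if $\tv(X, H_i) \le \eps$ for some $i$, the other hypothesis is within $9\eps/2 \le 6\eps$ of $X$ by the triangle inequality, making either output acceptable. In the large-gap branch $\widehat p_1 - \widehat p_2 > 3\eps$, assume w.l.o.g.\ $\tv(X, H_1) \le \eps$; then $|\tau - p_1| \le \tv(X, H_1) \le \eps$ gives $|\widehat \tau - \widehat p_1| \le \eps + \eps/4 = 5\eps/4$, while $|\widehat \tau - \widehat p_2| \ge (\widehat p_1 - \widehat p_2) - |\widehat \tau - \widehat p_1| > 3\eps - 5\eps/4 = 7\eps/4 > |\widehat \tau - \widehat p_1|$, so the rule selects $H_1$; the symmetric argument handles the case $\tv(X, H_2) \le \eps$. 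The main obstacle is the joint calibration of the oracle slack $(1+\beta)^2 \le 1+\eps/8$, the estimation tolerance $\eps/8$, and the decision threshold $3\eps$: all three must be chosen together so that every branch lands within the $6\eps$ target, and the displayed inequality controlling $p_1 - p_2$ in terms of $\tv(H_1, H_2)$ is the non-trivial entry point of the oracle approximation into the analysis; once these constants are set consistently, the remainder is the textbook Scheffé argument combined with standard Hoeffding concentration.
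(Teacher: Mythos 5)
Your proof is correct and follows essentially the same route as the result the paper imports by citation (Lemma 22 of \cite{de2014learning}): build an approximate Scheff\'e set from the $\beta$-approximate oracles, estimate $p_1, p_2$ and $\tau$ empirically from samples of $H_1, H_2, X$, and run a thresholded competition, with $(1+\beta)^2 \le 1+\eps/8$ absorbing the oracle error. One harmless bookkeeping slip: the quantity entering the lower bound on $p_1-p_2$ is the unhalved one-sided sum $\sum_{x\in\wt W_0}\max\{0, H_1(x)-H_2(x)\}\le \eps/2$ (your $\eps/4$ is the contribution to $\tfrac12\sum_x|H_1(x)-H_2(x)|$, not to the set-form $\max_A(H_1(A)-H_2(A))$ used in the decomposition), so the small-gap branch gives $\tv(H_1,H_2)\le 15\eps/4$ and a final bound of $19\eps/4<6\eps$; the conclusion is unaffected.
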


\paragraph{Cover Construction.} Additionally, our algorithm has to construct the cover for $\calE_{\vec T}(\calA)$-sums. For both the sparse and the dense case, given the parameters $\eps,\varrho,\descriptivityparam,B$, we can consider the set $\calA'$ of the parameters' space where
$\calA' = \varrho\-\conehull\calA \cap \{\vec a: \|\vec a\|\le \rcrit\}$ for some sufficiently large $\rcrit \le (\varrho+\frac{1}{\descriptivityparam})\cdot\ln(1/\eps)+\frac{1}{2\descriptivityparam}\cdot\ln(B)+O(\varrho+\frac{1}{\descriptivityparam})$. Using Proposition \ref{proposition:euclidean-cover}, we can obtain a discretization for this set in time $T_{\mathrm{c}}$.
Let us define the total construction time of the cover of Theorem \ref{theorem:covering-siiervs} as
\begin{equation}
\label{eq:time}
T_{\mathrm{c}}^{\mathrm{total}} = T_{\mathrm{c}}(\calA, n, \eps/\ncrit, \vec T, \varrho,\descriptivityparam,B) +  T_{\mathrm{c}}(\calA, n, \eps/m, \vec T, \varrho,\descriptivityparam,B)\,. 
\end{equation}
The first term corresponds to the sparse cover construction $(\ncrit = \poly(B,L,1/\gamma)/\eps^2)$ and the second for the dense one $(m \leq n' \cdot \sqrt{B}/\gamma)$.

\begin{remark}
[On the runtime]
\label{remark:runtime}
    Note that when $\calA$ is not a very complicated set, then, for any $r>0$, the set $\calAvarrho\cap \ball_r[0]$ can be $O(\eps)$-covered in Euclidean distance in time polynomial to the size of the cover. Therefore, in such cases, $T_c$ can be omitted from the execution time of our learning algorithm, since it is dominated by the remaining terms.
\end{remark}

\subsubsection{The Result and the Algorithm}
The main learning result is stated in Theorem \ref{theorem:learning-SIIERVs}.

\begin{theorem}
[Weakly Proper Learner for SIIERVs]
\label{theorem:learning-SIIERVs}
Let $k \in \nats$ and consider the exponential family $\calE_{\vec T}(\calA)$ with $\calA \subseteq \reals^k$.
Let $n \in \nats$, accuracy $\eps \in (0,1)$, confidence $\delta \in (0,1)$ and let $X$ be an
unknown $\calE_{\vec T}(\calA)$-sum over $\ints$ of order $n$. 
Assume that Assumption \ref{assumption:proper} holds with parameters $\varrho, B,\gamma, \Lambda, \descriptivityparam$ and that Assumption \ref{assumption:oracle-access} holds.
There exists an algorithm  $\textsc{ProperLearner}^X$ (see Figure \ref{algorithm:learning-SIIERVs}) with the following properties: Given $n, \eps, \delta$, the algorithm uses
    \[
    m = {O} \left (\frac{1}{\eps^2} \log(1/\delta) \right) + 
    k \cdot \wt{O}(1/\eps^2) \cdot \poly \left(B, L, \frac{1}{\gamma} \right) \cdot \log \left(\frac{\varrho \sqrt{\Lambda}}{\descriptivityparam}
\right)
    \]
    samples from $X$ and, in time 
    \[
    \poly \left(m, \left( \frac{\varrho \cdot \sqrt{\Lambda}}{\descriptivityparam} \right)^{k\cdot \poly\left(B, \frac{1}{\eps}, \frac{1}{\gamma} \right)},  \left( n^2\cdot \poly\left(B, \frac{1}{\gamma} \right) \cdot O\left( \frac{\varrho\cdot\sqrt{\Lambda}}{\descriptivityparam\cdot \eps} \right) \right)^k, T_{\mathrm{c}}^{\mathrm{total}} 
    \right)\,,
    \]
    where $T_{\mathrm{c}}^{\mathrm{total}}$ is given in \eqref{eq:time},
    outputs a (succint description of a) distribution $\wt{X}$ which satisfies
    $\tv(X, \wt{X}) \leq \eps$, with probability at least $1-\delta$
    and, moreover, $\wt{X}$ is an $\calE_{\vec T}(\calA')$-sum of order at most 
    $(\sqrt{B}/\gamma) \cdot n$ and $\calA' = \varrho\-\conehull \calA$.
\end{theorem}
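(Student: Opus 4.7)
The plan is to follow the three-stage template of Figure \ref{algorithm:learning-SIIERVs}, which mirrors the sparse/dense decomposition of the structural result: produce a hypothesis $H_S$ by searching the sparse cover, produce a hypothesis $H_D$ by approximating $X$ in the dense regime, then select between them using Proposition \ref{proposition:hypothesis-de}. Theorem \ref{theorem:covering-siiervs} guarantees that $X$ is $\eps$-close in total variation to either some element of the sparse cover (elements are $\calE_{\vec T}(\calAvarrho)$-SIIRVs of order $n$) or some element of the dense cover (elements are sums of i.i.d.\ random variables in $\calE_{\vec T}(\calAvarrho)$ of order up to $(\sqrt{B}/\gamma)\cdot n$), so one of the two branches will yield a good hypothesis.

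For the sparse branch, I would construct the sparse cover of Theorem \ref{theorem:covering-siiervs} in time $T_{\mathrm{c}}$ by discretizing $\calAvarrho\cap \ball_{\rcrit}[\vec 0]$ via Proposition \ref{proposition:euclidean-cover} with accuracy scaled down by a factor of $\ncrit$; its size is at most $(\varrho\sqrt{\Lambda}/\descriptivityparam)^{k\cdot \wt{O}(1/\eps^2)\cdot\poly(B,L,1/\gamma)}$. I would then invoke the tournament selector (the multi-hypothesis analog of Proposition \ref{proposition:hypothesis-de} from \cite{de2014learning}) against $X$ over this cover, using the sample and approximate-evaluation oracles of Assumption \ref{assumption:oracle-access}(ii)-(iii) for each candidate. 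The sample complexity drawn from $X$ in this step is $\wt{O}(\log|\calC_S|/\eps^2)$, which matches the second term $k\cdot \wt{O}(1/\eps^2)\cdot\poly(B,L,1/\gamma)\cdot\log(\varrho\sqrt{\Lambda}/\descriptivityparam)$ of the claimed bound.

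The dense branch is the more delicate one and is where $n$-independence of the sample complexity is preserved. I would first estimate $\wh\mu, \wh\sigma^2$ of $X$ from $\wt{O}(\log(1/\delta)/\eps^2)$ samples by the empirical mean and (Bessel-corrected) variance, boosted by taking medians as in Claim \ref{claim:learning-gaussian-SIIURVs}; the bounded-moment Assumption \ref{assumption:proper}\eqref{assumption:bounded-moments} together with the variance lower bound \eqref{assumption:variance-lower-bound} controls the excess kurtosis in the dense regime exactly as in the SIIURV analysis, yielding $|\wh\mu-\mu|\le \eps\sigma$ and $|\wh\sigma^2-\sigma^2|\le O(\eps)\sigma^2$ with high probability. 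By Theorem \ref{theorem:covering-siiervs} the true $X$ is $\eps$-close to a discretized Gaussian, so Lemma \ref{lemma:tv-gaussians} gives $\tv(X,\dgaussvar(\wh\mu,\wh\sigma^2))=O(\eps)$. The key trick — the one that keeps samples from $X$ out of the expensive step — is that I would then run the tournament over the dense cover $\calC_D$ using $\dgaussvar(\wh\mu,\wh\sigma^2)$ as the reference distribution, drawing the required samples from $\dgaussvar$ via Assumption \ref{assumption:oracle-access}(i) and using the evaluation oracle (iii) for the SIIERV candidates. Since some candidate is $\eps$-close to $X$ and hence $O(\eps)$-close to the Gaussian, the tournament returns $H_D$ with $\tv(\dgaussvar(\wh\mu,\wh\sigma^2),H_D)=O(\eps)$, and a triangle inequality gives $\tv(X,H_D)=O(\eps)$.

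For the final selection, I would invoke Proposition \ref{proposition:hypothesis-de} on the pair $(H_S,H_D)$ against fresh samples from $X$, contributing the $O(\log(1/\delta)/\eps^2)$ term and producing $\wt X$ with $\tv(X,\wt X)\le 6\cdot O(\eps)$; rescaling $\eps$ by a fixed absolute constant at the outset absorbs all constants. A union bound over the three $\delta/3$-events (mean/variance estimation, sparse tournament, dense tournament plus final selection) yields overall confidence $1-\delta$. The expected main obstacle is accounting for the multiplicative oracle tolerance $\beta$ in Assumption \ref{assumption:oracle-access}: the guarantee $(1+\beta)^2\le 1+\eps/8$ is precisely what is needed so that the approximate probability estimates used inside Proposition \ref{proposition:hypothesis-de} do not degrade the $6\eps$ conclusion, and chaining this through the dense branch (where one of the two distributions fed to the hypothesis tester is itself only an $O(\eps)$-approximation of $X$) must be checked carefully. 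The running-time bound then follows by combining the cover construction time $T_{\mathrm{c}}^{\mathrm{total}}$ of \eqref{eq:time} with the per-hypothesis polynomial cost of the tournaments over the two covers whose sizes are given by Theorem \ref{theorem:covering-siiervs}.
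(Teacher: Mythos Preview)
Your proposal is correct and follows essentially the same approach as the paper: the three-stage template (sparse tournament over the cover using real samples from $X$, dense branch via Gaussian estimation followed by a tournament against the estimated Gaussian rather than $X$, then pairwise selection) is exactly the argument given in Claims \ref{claim:proper-learning-sparse-SIIERVs} and \ref{claim:proper-learning-dense-SIIERVs}. In particular, you correctly isolate the key device that keeps the sample complexity independent of $n$---running the dense tournament with $\dgaussvar(\wh\mu,\wh\sigma^2)$ as the reference distribution---and your accounting of the sample and time bounds, the union bound, and the role of the $\beta$ tolerance all match the paper.
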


The algorithm follows. As in the SIIURV case, there are two regimes resembling to the input sum $X$ having small (i.e., sparse) or large (i.e., dense) variance and a final hypothesis testing routine. We now shortly depict the two learning sub-routines, corresponding to the small and large variance cases.  
When $X$ is close to a sparse form, the learning algorithm runs a tournament between all possible distributions of the sparse case and chooses the hypothesis that won each pairwise competition, i.e., the tournament's winner .
The dense proper case is more challenging: Our ultimate goal is to learn the dense form hypothesis that is close to $X$ with a sample complexity that does not depend on $n$. Crucially, we have to make use of the structure of the cover. In the dense regime, the input sum $X$ is close to a discretized Gaussian random variable and its parameters can be estimated using $O(1/\eps^2)$ samples. Having this approximation for $X$, we run the tournament hypothesis testing procedure between \emph{the estimated Gaussian} and the distributions of the dense form. Hence, we draw no more samples from $X$, but instead we generate draws from the Gaussian. By a union bound on the two events (i.e., the Gaussian is close to $X$ and that the winner of the tournament is close to the Gaussian), we get a dense form that is close to $X$. In the following, we may omit the ''weakly proper'' phrasing and simply use the term ''proper''.
\begin{figure}[ht]
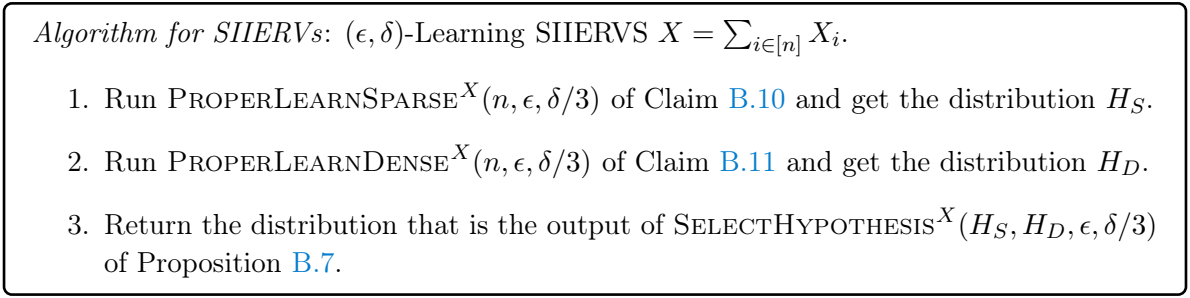

    \centering
\begin{mdframed}[
    linecolor=black,
    linewidth=1pt,
    roundcorner=3pt,
    backgroundcolor=white,
    userdefinedwidth=\textwidth,
]
\textit{Algorithm for SIIERVs}:  $(\eps,\delta)$-Learning SIIERVS $X = \sum_{i \in [n]} X_i$. 
\begin{enumerate}
    \item[\text{1.}] Run $\textsc{ProperLearnSparse}^X(n, \eps, \delta/3)$ of Claim \ref{claim:proper-learning-sparse-SIIERVs} and get the distribution $H_S$.
    
    \item[\text{2.}] Run  $\textsc{ProperLearnDense}^X(n, \eps, \delta/3)$ of Claim \ref{claim:proper-learning-dense-SIIERVs} and get the distribution $H_D$.
    
    \item[\text{3.}] Return the distribution that is the output of $\textsc{SelectHypothesis}^X(H_S, H_D, \eps, \delta/3)$ of Proposition \ref{proposition:hypothesis-de}.
\end{enumerate}
\end{mdframed}
    \caption{Proper Learning algorithm for SIIERVs.}
    \label{algorithm:learning-SIIERVs}
\end{figure}

After describing these fundamental procedures, we are now ready to provide a complete proof of our main learning result (see Theorem \ref{theorem:learning-SIIERVs}).
\paragraph{The Proof of Theorem \ref{theorem:learning-SIIERVs}.}
The analysis of Theorem \ref{theorem:learning-SIIERVs} works as follows:
Under Assumption \ref{assumption:proper}, we analyze our proper learner.     As we have already mentioned, the algorithm (Figure \ref{algorithm:learning-SIIERVs}) 
runs as follows: First, it calls the 
$\textsc{ProperLearnSparse}^X$ of Claim \ref{claim:proper-learning-sparse-SIIERVs} with input $(n, \eps, \delta/3)$ and gets the distribution $H_S$.
Then, it runs $\textsc{ProperLearnDense}^X$ of Claim \ref{claim:proper-learning-dense-SIIERVs} with input $(n, \eps, \delta/3)$ and gets a distribution $H_D$, which lies inside the desired distribution class. In order to conclude the proper learning part, via Proposition \ref{proposition:hypothesis-de}, it runs the procedure
$\textsc{SelectHypothesis}^X(H_S, H_D)$ with parameter $\eps, \delta/3$.
In conclusion, with probability at least $1-\delta$, 
the algorithm $\textsc{ProperLearner}^X$ will satisfy the desiderata of
Theorem \ref{theorem:learning-SIIERVs}. We divide the proof in a series of claims. 
Before presenting the proof, we remind the reader that
the following hold for the $(B^{1/2}/\gamma, \varrho\-\conehull)$-proper $\eps$-cover $\calD_{\mathrm{E}} = \calD_{\mathrm{E}}(\eps) = \calD_{\mathrm{E}}^{(s)}\cup \calD_{\mathrm{E}}^{(d)}$ with:
\begin{equation}
\label{equation:cover-SIIERV}
|\calD_{\mathrm{E}}^{(s)}|  + 
|\calD_{\mathrm{E}}^{(d)}| =  \left( \frac{\varrho \cdot \sqrt{\Lambda}}{\descriptivityparam} \right)^{k\cdot \wt{O}\left(\frac{1}{\eps^2}\right)\cdot \poly \left(B, L, \frac{1}{\gamma} \right)} + \left( n^2\cdot \poly\left(B, \frac{1}{\gamma} \right) \cdot O\left( \frac{\varrho\cdot\sqrt{\Lambda}}{\descriptivityparam\cdot \eps} \right) \right)^k\,.
\end{equation}

\begin{claim}
[Proper Learning of Sparse Instances]
\label{claim:proper-learning-sparse-SIIERVs}
Assume that Assumption \ref{assumption:proper} and Assumption \ref{assumption:oracle-access} hold.
For any $n, \eps, \delta > 0$, there is an algorithm $\textsc{ProperLearnSparse}^X(n, \eps, \delta)$ that given
\[
m = O \left (\frac{1}{\eps^2} \log(1/\delta) \right) + 
O \left( k \cdot \wt{O}\left(\frac{1}{\eps^2}\right)\cdot \poly \left(B, L, \frac{1}{\gamma} \right) \cdot \log \left(\frac{\varrho \cdot \sqrt{\Lambda}}{\descriptivityparam}\right)
\right)
\]
samples from the target $\calE_{\vec T}(\calA)$-sum $X$ over $\ints$ of order $n$,
outputs a (succint description of a) hypothesis distribution $H_S$ with the following guarantee: If $X$ is $\eps$-close to some element of
$\calCpr^{(s)}(\eps)$ (of Equation \eqref{equation:cover-SIIERV}), 
then it holds that $\tv(H_S, X) \leq c_1 \cdot \eps$, for some universal constant $c_1 \geq 1$, with probability at least $1-\delta$.
Moreover, if $\calCpr^{(s)}(\eps)$ is the sparse subset of the weakly proper cover $\calCpr(\eps)$, then
$H_S$ lies in $\calCpr^{(s)}(\eps)$ and the running time of the algorithm is
\[\poly\left(m, 2^{k \cdot \poly(B, 1/\eps, 1/\gamma) \cdot \log(\varrho \sqrt{\Lambda}/\descriptivityparam) }, T_{\mathrm{c}}^{\mathrm{sparse}} \right)\,.
\] 
In particular,
\begin{enumerate}
    \item $H_S$ will be an $\calE_{\vec T}(\varrho\-\conehull \calA)$-sum of order $\frac{1}{\eps^2} \cdot \poly(B,L,1/\gamma)$ and,
    \item $T_{\mathrm{c}}^{\mathrm{sparse}} = T_{\mathrm{c}}(\calA, n, \eps^3/\poly(B,L,1/\gamma), \vec T, \varrho,\descriptivityparam,B)$.
\end{enumerate}
\end{claim}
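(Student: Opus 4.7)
The plan is to combine the structural result of Theorem \ref{theorem:covering-siiervs} with a tournament-style hypothesis selection procedure. First, using the assumption and the relevant parameters $(\varrho, B, L, \gamma, \Lambda, \descriptivityparam)$, I would invoke Theorem \ref{theorem:covering-siiervs} to construct the sparse portion of the cover $\calCpr^{(s)}(\eps)$ in time $T_{\mathrm{c}}^{\mathrm{sparse}} = T_{\mathrm{c}}(\calA, n, \eps/\ncrit, \vec T, \varrho, \descriptivityparam, B)$, where $\ncrit = \poly(B,L,1/\gamma)/\eps^2$. Each candidate in $\calCpr^{(s)}(\eps)$ is, by construction, an $\calE_{\vec T}(\varrho\-\conehull\calA)$-SIIRV of order $\ncrit$, and, by hypothesis, there exists a distribution $H^\star \in \calCpr^{(s)}(\eps)$ with $\tv(X, H^\star) \le \eps$.

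Once the cover is built, the second step is to run the tournament procedure between all pairs of elements in $\calCpr^{(s)}(\eps)$, using the black-box hypothesis selection of Proposition \ref{proposition:hypothesis-de} adapted from \cite{de2014learning}. Concretely, I would call $\textsc{SelectHypothesis}^X(H_i, H_j, \eps, \delta / |\calCpr^{(s)}(\eps)|^2)$ for every pair and output the candidate that is never a loser (as in Proposition \ref{proposition:tournament}). This requires (i) sample access to $X$ (given by input), (ii) sample access to each $H_i \in \calCpr^{(s)}(\eps) \subseteq \calE_{\vec T}(\varrho\-\conehull\calA)$-SIIRVs, and (iii) a $\beta$-approximate evaluation oracle for each $H_i$. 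Item (ii) follows from Assumption \ref{assumption:oracle-access}(ii) by drawing the order-$\ncrit$ i.i.d. terms and summing them; item (iii) follows from Assumption \ref{assumption:oracle-access}(iii) by noting that the p.m.f.~of a sum at a given query point $x$ can be expressed as a convolution and thus approximated by composing the $\mathrm{EVAL}$ oracle for the individual terms.

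Correctness follows from Proposition \ref{proposition:hypothesis-de}: the tournament returns, with probability $1-\delta$, some $H_S \in \calCpr^{(s)}(\eps)$ with $\tv(X, H_S) \le 6\eps$, so we may take $c_1 = 6$. Since $H_S$ is taken directly from the cover, it is automatically an $\calE_{\vec T}(\varrho\-\conehull\calA)$-SIIRV of order $\ncrit = \poly(B,L,1/\gamma)/\eps^2$, giving the weakly-proper guarantee. For the sample complexity, Proposition \ref{proposition:hypothesis-de} requires $O((\log M + \log(1/\delta))/\eps^2)$ samples with $M = |\calCpr^{(s)}(\eps)|$; plugging in the bound from \eqref{equation:cover-SIIERV} yields $\log M = k \cdot \wt O(1/\eps^2) \cdot \poly(B,L,1/\gamma) \cdot \log(\varrho\sqrt{\Lambda}/\descriptivityparam)$, which, combined with the $\log(1/\delta)/\eps^2$ term from the confidence boost, matches the sample bound claimed (absorbing the extra $1/\eps^2$ factor into $\poly$). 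The runtime bound follows by combining the cover construction cost $T_{\mathrm{c}}^{\mathrm{sparse}}$, the $\poly(m, M)$ cost of the tournament, and the observation that $M = 2^{k \cdot \poly(B,1/\eps,1/\gamma) \cdot \log(\varrho\sqrt{\Lambda}/\descriptivityparam)}$.

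The main obstacle I anticipate is technical rather than conceptual: carefully justifying the approximate evaluation oracle access for each SIIRV in the cover, since sums of exponential family variables typically do not admit a closed-form p.m.f.~and require a (possibly approximate) convolution computation. The tolerance $\beta$ of the component-wise oracle must be propagated through $\ncrit$ convolutions while still meeting the $(1+\beta)^2 \le 1 + \eps/8$ requirement of Assumption \ref{assumption:oracle-access}; this can be handled by invoking the oracle at a slightly finer multiplicative accuracy $\beta' = \Theta(\eps/\ncrit)$, which only contributes logarithmically to the runtime and is already subsumed in $T_{\mathrm{c}}^{\mathrm{sparse}}$.
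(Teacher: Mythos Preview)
Your proposal is correct and follows essentially the same approach as the paper: construct the sparse cover $\calCpr^{(s)}(\eps)$ via Theorem \ref{theorem:covering-siiervs}, then run the tournament of \cite{de2014learning} over it, yielding $c_1=6$ and the stated sample and time bounds. If anything, you are more explicit than the paper about how the evaluation oracle for a \emph{sum} is obtained from the component-wise oracle of Assumption \ref{assumption:oracle-access}(iii) via convolution; the paper simply asserts oracle access to the cover elements without detailing this step.
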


\begin{proof}
Let $\eps \in (0,1)$. Under Assumption \ref{assumption:proper}, according to our structural result, there exists a sparse form cover $\calCpr^{(s)} = \calCpr^{(s)}(\eps)$ of radius $\eps$ whose size is equal to 
\[
  M = \left  |\calD_{\mathrm{E}}^{(s)} \right| \leq  \left( \frac{\varrho \cdot \sqrt{\Lambda}}{\descriptivityparam} \right)^{k\cdot \wt{O}(1/\eps^2)\cdot \poly\left(B, L, \frac{1}{\gamma} \right)}\,,
\]
and each $\calE_{\vec T}(\calA)$-SIIRV of order $\ncrit$ (the sparse SIIERVs) can be $\eps$-approximated by some distribution $\calE_{\vec T}(\varrho\-\conehull\calA)$-SIIRV of order $\ncrit$. The algorithm has to construct the cover in time $T_{\mathrm{c}}^{\mathrm{sparse}}$ (with accuracy $\eps/\ncrit$ since we are in the sparse regime; this is indicated by the proof of the sparse case).
Let us assume that
$X$ is $\eps$-close to a sparse form element
in the cover $\calCpr(\eps)$.
Using Assumption \ref{assumption:oracle-access}, we can apply the $\textsc{SelectTournament}^X$ algorithm of \cite{de2014learning} (see Proposition 6 which is a variant of Proposition \ref{proposition:tournament}) with 
input 
the distributions' collection  $\calCpr^{(s)}$
with accuracy $\eps$ and confidence $\delta$.
We observe that there exists a distribution from the collection that is $\eps$-close in total variation distance and, so, the requirements are satisfied. Moreover, we assume that we have sample oracle access and evaluation oracle access to any distribution in $\calD_{\mathrm{E}}^{(s)}$. We can apply the variant of Proposition \ref{proposition:tournament}: The algorithm makes $O\left(\frac{1}{\eps^2} \left( \log(M) +  \log(1/\delta)\right)\right)$ draws from $X$ and from each $Y$ that is 
in $\calCpr^{(s)}$, runs in time
polynomial in the number of samples and in the size of the collection and, with probability at least $1-\delta$, outputs an index $i^\star \in [M]$ so that the sum $Y^\star$ with the corresponding parameters satisfies $\tv(X, Y^\star) \leq 6\eps$. We set $c_1 = 6$.
Moreover, in order to obtain a detailed expression for the sample complexity, we have that
\[
m = O\left(\frac{1}{\eps^2} \left( \log (M) +  \log(1/\delta) \right) \right)\,.
\]

The running time is $\poly\left(m, M, T_{\mathrm{c}}^{\mathrm{sparse}} \right).$
The result follows by replacing $M$.
\end{proof}
As a next step, we analyze the learning phase concerning the dense instances: 

\begin{claim}
[Proper Learning of Dense Instances]
\label{claim:proper-learning-dense-SIIERVs}
Under Assumption \ref{assumption:proper} and Assumption \ref{assumption:oracle-access},
for any $n, \eps, \delta > 0$, there is an algorithm $\textsc{ProperLearnDense}^X(n, \eps, \delta)$ that given
\[
m = O\left (\frac{1}{\eps^2}\log(1/\delta) \right)
\]
samples from the target $\calE_{\vec T}(\calA)$-sum $X$ over $\ints$ of order $n$,
outputs a (succint description of a) hypothesis distribution $H_D$ with the following guarantee: If $X$ is $\eps$-close to a dense form $\calE_{\vec T}(\calA)$-sum
in the $\calCpr(\eps)$
of Equation \eqref{equation:cover-SIIERV}
, then it holds that $\tv(H_D, X) \leq c_1 \cdot \eps$,
for some absolute constant $c_1 \geq 1$,
with probability at least $1-\delta$.
Moreover, if $\calCpr^{(d)}(\eps)$ is the dense subset in the proper cover $\calCpr(\eps)$, then  
$H_D$ lies in $\calCpr^{(d)}(\eps)$ and the running time of the algorithm is
\[
\poly \left(m, \left( n^2\cdot \poly\left(B, \frac{1}{\gamma} \right) \cdot O\left( \frac{\varrho\cdot\sqrt{\Lambda}}{\descriptivityparam\cdot \eps} \right) \right)^k, T_{\mathrm{c}}^{\mathrm{dense}} \right)\,.
\]
In particular,
\begin{enumerate}
    \item $H_D$ will be an $\calE_{\vec T}(\varrho\-\conehull \calA)$-sum of order $(\sqrt{B}/\gamma)\cdot n$ and,
    \item $T_{\mathrm{c}}^{\mathrm{dense}} = T_{\mathrm{c}}(\calA, n, \eps \cdot \gamma/(n \cdot \sqrt{B}), \vec T, \varrho,\descriptivityparam,B).$
\end{enumerate}

\end{claim}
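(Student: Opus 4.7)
The proof will mirror the dense-case strategy highlighted after Theorem \ref{theorem:learning-SIIERVs-main}: use the samples from $X$ only to estimate its mean and variance, form a discretized Gaussian proxy $\wh Z$, and then run a tournament over the dense cover $\calCpr^{(d)}(\eps)$ with $\wh Z$ (not $X$) as the tournament's target. This decoupling is exactly what lets the sample complexity drawn from $X$ scale like $O(\log(1/\delta)/\eps^2)$, free of any $\log|\calCpr^{(d)}|$ term and hence free of $n$.

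In more detail, the plan has three steps. \emph{Step 1 (mean and variance).} Draw $O(\log(1/\delta)/\eps^2)$ samples from $X$ and compute the empirical mean $\wh\mu$ and Bessel-corrected empirical variance $\wh{\sigma^2}$, exactly as in Claim \ref{claim:learning-gaussian-SIIURVs}; the excess-kurtosis bound used there uses only Assumption \ref{assumption:proper} items \eqref{assumption:bounded-moments} and \eqref{assumption:variance-lower-bound}, so the same median-boosting argument gives $|\wh\mu-\E[X]|\le \eps\sqrt{\Var(X)}$ and $|\wh{\sigma^2}-\Var(X)|\le O(\eps)\Var(X)$ with probability $1-\delta/2$. \emph{Step 2 (proxy is close to $X$).} Set $\wh Z\sim\calZ(\wh\mu,\wh{\sigma^2})$. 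Because $X$ is $\eps$-close to a dense-form representative, and Steps 1 and 3 of the proof of Theorem \ref{theorem:covering-siiervs} show via Lemma \ref{lemma:discr-gaussian-approx} that every dense-form SIIERV is $O(\eps)$-close to $\calZ(\E[X],\Var(X))$, a triangle inequality gives $\tv(X,\calZ(\E[X],\Var(X)))\le O(\eps)$; combining this with Lemma \ref{lemma:tv-gaussians} applied to the Step 1 estimates yields $\tv(X,\wh Z)\le O(\eps)$. \emph{Step 3 (simulated tournament).} Construct the dense cover $\calCpr^{(d)}(\eps)$ in time $T_{\mathrm{c}}^{\mathrm{dense}}$. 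By Theorem \ref{theorem:covering-siiervs} some $H^\star\in\calCpr^{(d)}(\eps)$ satisfies $\tv(X,H^\star)\le\eps$, hence $\tv(\wh Z,H^\star)\le O(\eps)$. Apply the tournament variant of \cite{de2014learning} (Proposition \ref{proposition:hypothesis-de} lifted to $|\calCpr^{(d)}|$ hypotheses) with $\wh Z$ as the target distribution: its required samples of $\wh Z$ are generated on-the-fly from $\calZ(\wh\mu,\wh{\sigma^2})$ (Assumption \ref{assumption:oracle-access}(i)), while the sample and $\beta$-approximate evaluation oracles for each candidate are provided by Assumption \ref{assumption:oracle-access}(ii,iii). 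With probability $1-\delta/2$ the tournament returns $H_D\in\calCpr^{(d)}(\eps)$ with $\tv(\wh Z,H_D)\le 6\eps$, and a final triangle inequality gives $\tv(X,H_D)\le c_1\eps$.

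For bookkeeping, every sample drawn from $X$ is spent in Step 1, so $m=O(\log(1/\delta)/\eps^2)$. The runtime is polynomial in $m$, in $|\calCpr^{(d)}(\eps)|=(n^2\poly(B,1/\gamma)\cdot O(\varrho\sqrt\Lambda/(\descriptivityparam\eps)))^k$, and in the cover construction time $T_{\mathrm{c}}^{\mathrm{dense}}=T_{\mathrm{c}}(\calA,n,\eps\gamma/(n\sqrt B),\vec T,\varrho,\descriptivityparam,B)$, where the inner accuracy $\eps\gamma/(n\sqrt B)$ reflects that each i.i.d.\ term of the dense representative is discretized to tolerance $\eps/m'$ with $m'\le(\sqrt B/\gamma)\cdot n$, per Step 5 of the proof of Theorem \ref{theorem:covering-siiervs}. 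The output $H_D$ is an $\calE_{\vec T}(\varrho\-\conehull\calA)$-sum of order at most $(\sqrt B/\gamma)\cdot n$, matching the weakly-proper guarantee. The main obstacle is precisely this decoupling: without running the tournament against $\wh Z$, we would be forced to draw $\log|\calCpr^{(d)}|/\eps^2=k\log(n\varrho\sqrt\Lambda/(\descriptivityparam\eps))/\eps^2$ samples from $X$, reintroducing a $\log n$ dependence; correctness of the substitution rests entirely on the quantitative CLT bound of Lemma \ref{lemma:discr-gaussian-approx} being applicable in the dense SIIERV regime, which was already verified inside the proof of Theorem \ref{theorem:covering-siiervs}.
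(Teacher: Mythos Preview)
Your proposal is correct and follows essentially the same route as the paper: estimate $(\wh\mu,\wh{\sigma^2})$ from $O(\log(1/\delta)/\eps^2)$ samples of $X$ via Claim \ref{claim:learning-gaussian-SIIURVs}, form the discretized Gaussian proxy $\wh Z$, and then run the tournament of \cite{de2014learning} over $\calCpr^{(d)}(\eps)$ with $\wh Z$ as the target, using Assumption \ref{assumption:oracle-access} for the required sample and evaluation oracles. The paper's proof is slightly more terse (it invokes Claim \ref{claim:learning-gaussian-SIIURVs} directly to obtain $\tv(X,\wh Z)\le\eps$ and then bounds $\tv(\wh Z,Y)\le 2\eps$ for the cover representative $Y$, arriving at $c_1=13$), but the logic and the key decoupling idea are identical.
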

\begin{proof}
Let us assume that $X$ is $\eps$-close to a dense form 
$\calE_{\vec T}(\calA)$-sum
in the $\calCpr(\eps)$.
We apply Claim \ref{claim:learning-gaussian-SIIURVs} to the target sum with some accuracy $\eps'$ and confidence $\delta/2$. Hence, with high probability, after drawing $O(\log(1/\delta)/\eps^2)$ samples from $X$, we get parameters $\wh{\mu}$ and $\wh{\sigma}^2$ so that the random variable $Z \sim \Dgauss(\wh{\mu}, \wh{\sigma}^2)$ satisfies
\[
\tv(X, Z) \leq \eps \,.
\]
The accuracy $\eps'$ is chosen so that the resulting total variation gap is $\eps$.
Our assumption about $X$ implies that 
there exists a distribution $Y$ is a sum of i.i.d. random variables with common parameter vector $\vec b$ and distribution $\calE_{\vec T}(\vec b)$, then $X$ is close to $Y$. So, there exists a cover $\calCpr^{(d)} = \calCpr^{(d)}(\eps)$ of radius $\eps$ for dense instances of size 
\[
\left|\calCpr^{(d)}\right| =  \left( n^2\cdot \poly\left(B, \frac{1}{\gamma} \right) \cdot O\left( \frac{\varrho\cdot\sqrt{\Lambda}}{\descriptivityparam\cdot \eps} \right) \right)^{k}\,.
\]
The algorithm constructs the cover in time $T_{\mathrm{c}}^{\mathrm{dense}}$ (with accuracy $\eps/(n \sqrt{B}/\gamma)$ as indicated by the proof of the dense case).
By the structure of the cover, there exists a distribution $Y$ so that
\begin{enumerate}
    \item $Y$ lies in $\calCpr^{(d)}$,
    \item and $\tv(X,Y) \leq \eps$.
\end{enumerate}
Hence, by the triangle inequality, we have that $\tv(Y, Z) \leq 2\eps$.
We then apply the algorithm $\textsc{SelectTournament}^X$ (we use the modification of Proposition \ref{proposition:tournament} (see \cite{de2014learning} with hypothesis selection algorithm as in Proposition \ref{proposition:hypothesis-de}) with the following input:
\begin{enumerate}
    \item Let the target $D$ be the distribution of $Z$ (i.e., the discretized Gaussian),
    \item consider the collection of distributions corresponding to the set $\calCpr^{(d)}$,
    \item and accuracy $2\eps$ and confidence $\delta/2$.  
\end{enumerate}
Note that there exists a distribution in the provided collection that is $2\eps$-close to the target distribution and, using Assumption \ref{assumption:oracle-access}, we have the required sample and evaluation oracle access.
The $\textsc{SelectTournament}^X$ procedure makes $O\left(\frac{1}{\eps^2} \left( \log\left|\calCpr^{(d)}\right| + \log(1/\delta)\right)\right)$ draws from the target $Z$ and from each distribution in the collection. This is possible and it only costs \emph{in runtime}. Recall that
we have assumed samples access to a discretized Gaussian oracle and sample and evaluation oracle access to the elements of the cover. Moreover, it runs in time polynomial in the number of samples and in the size of the collection (i.e., the runtime depends on $n$) and, with probability at least $1-\delta/2$, outputs an index $i^\star \in \left[\left|\calCpr^{(d)}\right|\right]$ so that the sum $Y^\star$ with the corresponding parameters satisfies 
\[
\tv(Z, Y^{\star}) \leq 12 \eps\,.
\]
Hence, it holds that $\tv(X, Y^\star) \leq 13\eps$. Let $c_1 = 13$. Applying union bound, we have that, with probability at least $1-\delta,$ the algorithm $\textsc{ProperLearnDense}^X$ will use
\[
m = O(\log(1/\delta)/\eps^2)
\]
samples (from the target $X$) and, in time $\poly\left(m, 1/\eps, \log(1/\delta), \left|\calCpr^{(d)}\right|\right)$, it will output a distribution $H_D$ so that
\begin{enumerate}
    \item $H_D$ lies inside $\calCpr^{(d)}$,
    \item and $\tv(X,H_D) \leq 13\eps$.
\end{enumerate}
The result follows.
\end{proof}

By combining  Claim \ref{claim:proper-learning-sparse-SIIERVs} and Claim \ref{claim:proper-learning-dense-SIIERVs} with the guarantees of Proposition \ref{proposition:hypothesis-de}, Theorem \ref{theorem:learning-SIIERVs} follows.

\section{The Proof of Theorem \ref{theorem:structural-SIIURV} (Structural Result for SIIURVs)}
\label{appendix:proof-siiurv}
\begin{proof}
[Proof of Theorem \ref{theorem:structural-SIIURV}]
Let us consider the SIIURV $X = \sum_{i \in [n']} X_i$ for some $n' \leq n$ where the distribution of each $X_i$ satisfies Assumption \ref{assumption:SIIURV}.
There exists a critical threshold value $\ncrit$, to be decided, that indicates whether $X$ belongs to the sparse or to the dense form. Let us first consider the case where $n' \geq \ncrit$.\\
\noindent\texttt{Dense Case.} In this case, we will approximate $X$ with a suitable discretized Gaussian random variable.
Let $\mu = \E[X]=\sum_{i\in[n']} \mu_i,$ where $\mu_i = \E[X_i]$ and $\sigma^2 = \Var(X)= \sum_{i\in[n']} \sigma_i^2$, where $\sigma_i^2 = \Var(X_i)$ and consider some random variable $\dgaussvar_{X}$ with $\dgaussvar_{X} \sim \Dgauss(\mu,\sigma^2)$. 
Moreover, we set
$\beta = \sum_{i\in[n']} \beta_i$ where $\beta_i = \sum_{i\in[n']}\E \left[|X_i-\E[X_i]|^3\right]$ and consider $\delta \in [0,1]$ to be such that
    \[
        \delta = \max_{i\in[n']} \tv(X-X_i, X-X_i+1)\,.
    \]
If we apply the Gaussian approximation lemma (see Lemma \ref{lemma:discr-gaussian-approx}), we get that
\[
    \tv(X,\dgaussvar_{X}) \le  O(1/\sigma)+O(\delta)+O(\beta/\sigma^3)+O(\delta\beta/\sigma^2)\,.
\]
Our goal is to control the right-hand side of this inequality. In fact, it is reasonable to 
upper bound the ratio between the sum of third centered moments to the variance, to lower bound the variance of $X$ and to upper bound $\delta$. 
In what follows, we insist on these three desiderata.
\begin{claim}
[Variance Lower Bound] It holds that $\Var(X) \geq n' \cdot \gamma/4$.
\end{claim}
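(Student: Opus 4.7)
The plan is to lower bound the variance of each individual summand $X_i$ by $\gamma/4$, and then use independence of the summands. Under Assumption \ref{assumption:SIIURV}, each $X_i$ is unimodal over $\ints$ (condition \ref{cond:unimodal-SIIURV}a) and its mode $M_i$ satisfies $\Pr[X_i = M_i] \le 1-\gamma$ (condition \ref{cond:unimodal-SIIURV}b), so $\Pr[X_i \ne M_i] \ge \gamma$. Since $X_i$ is integer-valued, every point outside $M_i$ has $|X_i - M_i|\ge 1$, giving the crude bound
\[
    \E\bigl[(X_i - M_i)^2\bigr] \;\ge\; \Pr[X_i \ne M_i] \;\ge\; \gamma.
\]

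Next, I would translate this mode-centered second moment into the true variance by invoking the classical inequality of \cite{johnson1951themomentproblem} for unimodal distributions, already cited in the excerpt: $|\E[X_i] - M_i| \le \sqrt{3\,\Var(X_i)}$. Using this together with the decomposition $\E[(X_i - M_i)^2] = \Var(X_i) + (\E[X_i] - M_i)^2$, one obtains
\[
    \E\bigl[(X_i - M_i)^2\bigr] \;\le\; \Var(X_i) + 3\,\Var(X_i) \;=\; 4\,\Var(X_i).
\]
Combining the two displays yields $\Var(X_i) \ge \gamma/4$ for every $i \in [n']$.

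Finally, since the $X_i$ are independent,
\[
    \Var(X) \;=\; \sum_{i \in [n']} \Var(X_i) \;\ge\; n' \cdot \gamma/4,
\]
which is the desired claim. The only subtlety is ensuring Johnson's inequality applies in our discrete integer-valued setting, but the proof of Johnson's bound only uses unimodality of the distribution and not any smoothness, so it goes through verbatim. Note that this step only uses conditions \ref{cond:unimodal-SIIURV}a and \ref{cond:unimodal-SIIURV}b; the moment bound \ref{cond:bounded-centered-SIIURV} and the mode-location condition \ref{cond:bounded-modes-SIIURV} will be used elsewhere (to upper bound $\beta/\sigma^2$ and the shift distance $\delta$, respectively).
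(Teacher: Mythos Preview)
Your proof is correct and follows essentially the same approach as the paper: both bound $\E[(X_i-M_i)^2]\ge \Pr[X_i\ne M_i]\ge \gamma$ using integrality, then convert to $\Var(X_i)\ge \gamma/4$ via the Johnson mode--mean inequality $|\E[X_i]-M_i|\le\sqrt{3\Var(X_i)}$, and finally sum by independence. Your write-up is in fact more explicit than the paper's, which compresses the Johnson step into the single line $\Var(X_i)\ge \tfrac14\sum_x \Pr[x]\,|x-M|^2$ without justification.
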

\begin{proof}
Let us focus on a particular $X_i$ in the sum that satisfies Assumption \ref{assumption:SIIURV}. Let $M$ be a mode of the unimodal random variable $X_i$. We have that
$\Var(X_i) \geq \frac{1}{4} \sum_{x \in \ints} \Pr[x] |x - M|^2 \geq \frac{1}{4} \min_{x \neq M} |x - M|^2 \sum_{x \neq M} \Pr[x] = \Theta(\gamma)$, since we can sum over $x \neq M$ and this sum has mass at least $\gamma$, also $\min_{x \neq M}|x-M|^2 = 1$. Since the random variables $X_i$ are independent, the SIIRV $X$ has variance at least $\sigma^2 = \Omega(n' \cdot \gamma)$.
\end{proof}
\begin{claim}
[Third Centered Moment - Variance Ratio]
It holds that $\beta/\sigma^2 = O \left( \frac{B}{\gamma} \right).$
\end{claim}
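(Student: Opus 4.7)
The plan is to reduce the aggregate ratio $\beta/\sigma^2$ to a per-term bound via the Ratio of Sums Inequality (Lemma \ref{lemma:ratio-ineq}), and then control each $\beta_i/\sigma_i^2$ separately using the assumptions available on each summand.

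First I would apply Lemma \ref{lemma:ratio-ineq} to the positive sequences $(\beta_i)_{i\in[n']}$ and $(\sigma_i^2)_{i\in[n']}$ (positivity follows since each $X_i$ is unimodal with mass at most $1-\gamma$ on its mode, hence non-degenerate). This gives
\[
    \frac{\beta}{\sigma^2} = \frac{\sum_{i\in[n']}\beta_i}{\sum_{i\in[n']}\sigma_i^2} \le \max_{i\in[n']}\frac{\beta_i}{\sigma_i^2}\,,
\]
so it suffices to produce a uniform bound $\beta_i/\sigma_i^2 = O(B/\gamma)$ for every $i$.

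Second, I would upper bound each $\beta_i$ using only the bounded fourth central moment (condition \eqref{cond:bounded-centered-SIIURV} of Assumption \ref{assumption:SIIURV}). By Jensen's inequality applied to the concave map $t \mapsto t^{3/4}$ on $[0,\infty)$,
\[
    \beta_i \;=\; \E\bigl[|X_i-\mu_i|^3\bigr] \;\le\; \bigl(\E[|X_i-\mu_i|^4]\bigr)^{3/4} \;\le\; B^{3/4} \;\le\; B\,,
\]
where the final inequality is harmless since we may assume $B \ge 1$ (otherwise replace $B$ by $\max(B,1)$ throughout).

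Third, I would lower bound each $\sigma_i^2$ by re-running the argument of the preceding Variance Lower Bound claim for a single summand. Since $X_i$ is unimodal with some mode $M_i$ and $\Pr[X_i=M_i]\le 1-\gamma$ (condition \eqref{cond:unimodal-SIIURV}), expanding the variance around $M_i$ gives
\[
    \sigma_i^2 \;\ge\; \tfrac{1}{4}\E[(X_i-M_i)^2] \;\ge\; \tfrac{1}{4}\min_{x\neq M_i}|x-M_i|^2\cdot \Pr[X_i\neq M_i] \;\ge\; \gamma/4\,,
\]
using the fact that $X_i$ is integer-valued so $\min_{x\neq M_i}|x-M_i|^2 = 1$. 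Combining the two bounds yields $\beta_i/\sigma_i^2 \le 4B/\gamma = O(B/\gamma)$ uniformly in $i$, which together with the first step completes the claim. No substantive obstacle is anticipated; the only mild subtlety is that Assumption \ref{assumption:SIIURV} does not directly posit a variance lower bound, so the per-term argument from the previous claim must be invoked rather than quoted as an assumption.
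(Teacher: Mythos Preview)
Your proposal is correct and follows essentially the same approach as the paper: apply Lemma \ref{lemma:ratio-ineq} to reduce to a per-term bound, then upper bound $\beta_i$ via the fourth-moment assumption and lower bound $\sigma_i^2$ via the argument of the preceding Variance Lower Bound claim. The only difference is cosmetic---you make the Jensen step $\beta_i \le B^{3/4}$ explicit, whereas the paper simply invokes the fourth-moment bound and writes $\beta/\sigma^2 = O(B/\gamma)$ directly.
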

\begin{proof}
We have that
\[
\frac{\beta}{\sigma^2} = \frac{\sum_{i \in [n']} \beta_i}{\sum_{i \in [n']} \sigma_i^2}
\]
Note that each term in the above ratio is non-negative and so we can apply Lemma \ref{lemma:ratio-ineq} in order to obtain
\[
\frac{\beta}{\sigma^2} \leq \max_{i \in [n']} \frac{\beta_i}{\sigma_i^2}
\]
Using the proof of the previous claim, we have that $\sigma_i^2 = \Omega(\gamma)$. Moreover, the fourth centered moment is upper bounded by $B$ and so
\[
\frac{\beta}{\sigma^2} = O \left( \frac{B}{\gamma} \right)\,.
\]
\end{proof}

\begin{claim}
[TV Shift] It holds that $\delta =  O \left( \frac{1}{\sqrt{1 + (n'-1) \cdot (1-\gamma)}}\right)$.
\end{claim}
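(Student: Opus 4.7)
The plan is to fix an arbitrary $i\in[n']$ and bound $\tv(X-X_i,\,X-X_i+1)$ via Lemma \ref{lemma:dshift-sum} applied to the sum $X-X_i=\sum_{j\neq i}X_j$ of $n'-1$ independent integer-valued random variables. The lemma yields
\[
\tv(X-X_i,\,X-X_i+1)\;\le\;\frac{\sqrt{2/\pi}}{\sqrt{\tfrac{1}{4}+\sum_{j\neq i}\bigl(1-\tv(X_j,X_j+1)\bigr)}}\,,
\]
so the remaining task is to obtain a uniform positive lower bound on $1-\tv(X_j,X_j+1)$ for each summand; everything will then follow by plugging in and taking a maximum over $i$.

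To handle a single term, the plan is to exploit condition \ref{cond:unimodal-SIIURV}. For any unimodal integer random variable $W$ with a mode $M$, the increments $p_W(x)-p_W(x+1)$ have a fixed sign on $\{x\ge M\}$ and the opposite fixed sign on $\{x<M\}$, so $2\tv(W,W+1)=\sum_{x\in\ints}|p_W(x)-p_W(x+1)|$ telescopes on each side of $M$ and collapses to $2p_W(M)$. This gives the identity $\tv(W,W+1)=\Pr[W=M]$ (precisely the shift-identity already invoked in Step 1 of the proof of Theorem \ref{theorem:covering-siiervs}). Applying it to each $X_j$, together with the mode-mass bound from condition \ref{cond:unimodal-SIIURV}, yields $1-\tv(X_j,X_j+1)\ge\gamma$, uniformly in $j$.

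Summing the termwise estimate over the $n'-1$ indices $j\neq i$, the denominator under the square root in Lemma \ref{lemma:dshift-sum} is at least $\tfrac{1}{4}+(n'-1)\gamma$, so
\[
\tv(X-X_i,\,X-X_i+1)\;=\;O\!\left(\frac{1}{\sqrt{1+(n'-1)\gamma}}\right),
\]
with a bound that does not depend on $i$. Taking the maximum over $i\in[n']$ gives the claimed estimate on $\delta$ (with the relevant spread constant as provided by condition \ref{cond:unimodal-SIIURV}). There is no substantive obstacle: the two tools, Lemma \ref{lemma:dshift-sum} and the telescoping shift-identity for unimodal distributions, do all the work, and the only conceptual care needed is to verify that the termwise mode-mass hypothesis is what allows the uniform lower bound on $1-\tv(X_j,X_j+1)$ that is required by the lemma.
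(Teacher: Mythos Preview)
Your proposal is correct and follows essentially the same approach as the paper: compute $\tv(X_j,X_j+1)=\Pr[X_j=M_j]$ via the telescoping identity for unimodal integer distributions, bound this by $1-\gamma$ using condition~\ref{cond:unimodal-SIIURV}(b), and then apply Lemma~\ref{lemma:dshift-sum} to $X-X_i$ before maximizing over $i$. Your bound $O\bigl(1/\sqrt{1+(n'-1)\gamma}\bigr)$ is in fact the one that drops out of this argument; the paper's displayed $(1-\gamma)$ in place of $\gamma$ appears to be a typo (and its intermediate display also omits the ``$1-$'' inside the sum), but since $\gamma\in(0,1)$ is a fixed constant this does not affect the $O(\cdot)$ conclusion in $n'$.
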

\begin{proof}
For a single term $X_i$, it holds that
\[
\tv(X_i, X_i+1) = \frac{1}{2} \cdot \sum_{x \in \ints} | \Pr[X_i = x] - \Pr[X_i = x-1] | 
\]
Let $M$ be a mode of $X_i$. Since $X_i$ is unimodal, we get a telescopic sum and $\tv(X_i, X_i + 1) = \Pr[X_i = M].$ Hence, we get that the TV shift is at most $1-\gamma$. We now apply Lemma \ref{lemma:dshift-sum} and get
\[
\tv(X - X_i, X -X_i + 1) = \tv\left(\sum_{j \neq i} X_j, 1+\sum_{j \neq i} X_j\right) \leq \frac{\sqrt{2/\pi}}{\sqrt{\frac{1}{4} + \sum_{j \neq i} \tv(X_j, X_j + 1) } }\,.
\]
This implies that
\[
\tv(X - X_i, X -X_i + 1)  = O \left( \frac{1}{\sqrt{1 + (n'-1) \cdot (1-\gamma)}}\right)\,.
\]
Taking the supremum of $i \in [n']$, we get that
\[
\delta = O \left( \frac{1}{\sqrt{1 + (n'-1) \cdot (1-\gamma)}}\right)\,.
\]
\end{proof}

\begin{claim}
For $n' \geq \Omega \left(\frac{B^2}{\gamma^3 \eps^2} \right)$, we get that $\tv(X, \Dgauss_X) \leq \eps$.
\end{claim}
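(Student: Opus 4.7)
My plan is to combine the Gaussian-approximation bound from Lemma \ref{lemma:discr-gaussian-approx} with the three preceding claims and then just compute what lower bound on $n'$ forces each of the four summands in that bound to be $O(\eps)$. Concretely, Lemma \ref{lemma:discr-gaussian-approx} gives
\[
    \tv(X,\dgaussvar_X) \le O\!\left(\tfrac{1}{\sigma}\right) + O(\delta) + O\!\left(\tfrac{\beta}{\sigma^3}\right) + O\!\left(\tfrac{\delta\beta}{\sigma^2}\right),
\]
and the earlier claims give $\sigma^2 \ge \Omega(n'\gamma)$, $\beta/\sigma^2 \le O(B/\gamma)$ and $\delta \le O\!\bigl(1/\sqrt{1+(n'-1)\gamma}\bigr)$ (the latter coming from $1-\tv(X_j,X_j+1) \ge \gamma$ plugged into Lemma \ref{lemma:dshift-sum}; I would fix the minor typo in Claim 3 here).

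Next, I would treat each of the four terms separately. The $O(1/\sigma)$ term is at most $O(1/\sqrt{n'\gamma})$, which is $\le \eps/4$ as soon as $n' \ge \Omega(1/(\gamma\eps^2))$. The $O(\delta)$ term is at most $O(1/\sqrt{n'\gamma})$, giving the same requirement. The $O(\beta/\sigma^3)$ term factors as $(\beta/\sigma^2)\cdot(1/\sigma) \le O(B/\gamma)\cdot O(1/\sqrt{n'\gamma}) = O\!\bigl(B/(\gamma^{3/2}\sqrt{n'})\bigr)$, which is $\le \eps/4$ precisely when $n' \ge \Omega(B^2/(\gamma^3\eps^2))$. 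Finally, $O(\delta\beta/\sigma^2) \le O(B/\gamma)\cdot O(1/\sqrt{n'\gamma})$ gives the same dominant requirement.

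Taking the maximum of the four resulting lower bounds on $n'$, the binding constraint is the one from the $\beta/\sigma^3$ and $\delta\beta/\sigma^2$ terms, namely $n' \ge \Omega(B^2/(\gamma^3\eps^2))$. Under this condition, each of the four terms is $\le \eps/4$, so the total is at most $\eps$, proving the claim. I do not anticipate a genuine obstacle here; the argument is a routine arithmetic combination of the prior claims, and the only subtlety is making sure to track $\gamma$ versus $1-\gamma$ correctly through the TV-shift bound, which is resolved by applying Lemma \ref{lemma:dshift-sum} in its stated form $\sum_j(1-\tv(X_j,X_j+1))$ rather than the version written in the proof of Claim 3.
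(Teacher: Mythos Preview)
Your proposal is correct and follows essentially the same approach as the paper: combine the Gaussian-approximation bound from Lemma~\ref{lemma:discr-gaussian-approx} with the variance lower bound, the $\beta/\sigma^2$ upper bound, and the shift-distance bound, and read off the dominant constraint $n' \ge \Omega(B^2/(\gamma^3\eps^2))$. You are also right to flag the typo in the TV-shift claim (it should be $\sum_j(1-\tv(X_j,X_j+1))\ge (n'-1)\gamma$, not the version with $1-\gamma$); the paper carries this typo through but the final bound is unaffected since the $\beta/\sigma^3$ term already forces the stated threshold.
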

\begin{proof}
We require that $\frac{1}{\sigma} \cdot \beta/\sigma^2 \leq \epsilon$, which implies that $\frac{B}{\gamma \sqrt{n' \cdot \gamma}} \leq \eps$ and so $n' = \Omega(B^2/(\gamma^3 \eps^2)).$
Also, we require that $\delta B/\gamma \leq \eps$, which implies that $\sqrt{1 + (n'-1)(1-\gamma)} \geq \frac{B}{\gamma \eps}$. This is satisfied by the above choice of $n'$. Hence, we can choose $\ncrit = \Omega \left( \frac{B^2}{\gamma^3 \eps^2} \right)$.
\end{proof}

\noindent\texttt{Sparse Case.} Let us now focus on the case $n' \leq \frac{B^2}{\gamma^3 \eps^2}.$ For the term $X_i$ with mode $M$ ($M$ could be any mode of $X_i$, since $X_i$ might have many consequent modes and still be considered unimodal), we have that
\[
\E |X_i - M|^4 = O(B)\,,
\]
since $|\E[X_i]-M|^2\le 3\Var(X_i)$ whenever $X_i$ is unimodal (see \cite{johnson1951themomentproblem}).

It holds that $|x-M|^4 \Pr[X_i = x] \leq O(B)$ for any $x \in \ints$. Let us consider the points $x \in \ints$ so that $\Pr[X_i = x] \geq \frac{\eps}{|x-M|^{3.5}}$. It holds that
\[
O(B) \geq |x-M|^4 \Pr[X_i = x] \geq \eps \cdot \sqrt{|x-M|}
\]
and so these points lie in
\[
|x - M| \leq B^2/\eps^2
\]
We have that
\[
\sum_{x : |x - M| > B^2 / \eps^2} \Pr[X_i = x] \leq \sum_{x : |x - M| > B^2 / \eps^2} \frac{\eps}{|x-M|^{3.5}} = O(\eps)\,, 
\]
since $\sum_{x : |x - M| > B^2 / \eps^2} \frac{1}{|x-M|^{3.5}} \leq \sum_{i \geq 1} 1/i^2 = \pi^2/6$.
This implies that there exists a distribution supported on the bounded interval $[M - B^2/\eps^2, M+B^2/\eps^2]$ which is $(1-\eps)$ close in total variation to $X_i$. In order to get the desired result, we have to make $\eps = \wt{\eps}/n'$ and so the SIIRV $X$ will be $\wt{\eps}$ close in statistical distance to a discrete random variable $Y$ whose support is included within an interval of size at most $(n')^3 \cdot B^2/\wt{\eps}^2 = \poly(B/\gamma \wt{\eps})$ (due to convolution). Moreover, for each $X_i$, the mode takes some out of $L$ at most values (due to condition \eqref{cond:bounded-modes-SIIURV}) and therefore there are $L^{n'} = L^{\poly(B, 1/\gamma, 1/\wt{\eps})}$ possible choices for the interval that contains the support of $Y$ (since fixing the modes fixes the intervals corresponding to each term $X_i$).

For every such interval $\calI$, we know that it has size at most $s = \poly(B,1/\gamma, 1/\wt{\eps})$. Each point in the interval can be assigned by $Y$ a value within $[0,1]$. Therefore, if we quantize the possible values for each point in the interval $\calI$ into $s/\wt{\eps}$ equidistant levels, then we get $s^{s/\wt{\eps}} = 2^{\poly(B,1/\gamma,1/\wt{\eps})}$ possible distributions $Y'$, corresponding to $\calI$. We know that for some $\calI$ there exists some distribution $Y'$ that is $O(\wt{\eps})$ close to $Y$ (and hence to $X$) in total variation distance. The total number of possibilities is $L^{\poly(B,1/\gamma,1/\wt{\eps})}$.
\end{proof}

\section{Bounding the Parameter Space (Theorem \ref{theorem:projection})}\label{appendix:bound-param-space}

\subsection{The Proof of Theorem \ref{theorem:projection} (Bounding the Parameter Space)}
\label{proof:projection-step}
We restate the theorem we are going to prove for readers' convenience.
\begin{theorem*}
\emph{
Under assumptions \eqref{assumption:geometry}, \eqref{assumption:unimodal}, \eqref{assumption:bounded-modes} and \eqref{assumption:bounded-moments}, there exists some value $\descriptivityparam = \descriptivityparam(\calA, \vec T)>0$ depending on the geometric properties of $\calA$ and $\vec T$, 
such that for any $\eps\in(0,1)$ and any $\vec a\in \calA$, there exists some $\vec b\in \varrho\-\conehull \calA$ with $\|\vec b\| \le (\varrho+\frac{1}{\descriptivityparam}) \cdot \ln(1/\eps) + \frac{1}{2\descriptivityparam}\cdot \ln(B) + O(\varrho + \frac{1}{\descriptivityparam})$ such that
\[
    \tv(\calE_{\vec T}(\vec a),\calE_{\vec T}(\vec b)) \le \eps\,.
\]
}
\end{theorem*}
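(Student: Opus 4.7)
The plan is to combine the structural-distance projection of Lemma~\ref{lemma:descriptivity} with the tail/mode concentration of Lemma~\ref{lemma:vanishing-deviation} to lift the bound from $\dnew$ to $\tv$. First, I would dispose of the trivial case: pick a target radius
\[
    r^\star = \left(\varrho + \tfrac{1}{\descriptivityparam}\right)\ln(1/\eps) + \tfrac{1}{2\descriptivityparam}\ln(B) + O\!\left(\varrho + \tfrac{1}{\descriptivityparam}\right),
\]
and if $\|\vec a\|\le r^\star$ simply set $\vec b = \vec a$, which satisfies the norm bound vacuously. Otherwise $\|\vec a\| > r^\star \ge \varrho$, so Lemma~\ref{lemma:descriptivity} produces some $\vec b \in \calAvarrho$ with $\|\vec b\| = r^\star$ and $\dnew(\calE_{\vec T}(\vec a),\calE_{\vec T}(\vec b)) \le e^{-\descriptivityparam\cdot r^\star} =: \eps'$.

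The main step is the conversion $\dnew \le \eps' \Longrightarrow \tv \le O(\eps)$, which is a global-versus-local issue since $\dnew$ is defined pointwise on an infinite support. Write $D_1 = \calE_{\vec T}(\vec a), D_2 = \calE_{\vec T}(\vec b)$ with modal masses $p_1,p_2$, and partition $\ints$ into the ``equal-ratio'' set $S_1 = \{x: D_1(x)/p_1 = D_2(x)/p_2\}$ and the ``insignificant'' set $S_2 = \{x: D_1(x)\le \eps' p_1\text{ and }D_2(x)\le \eps' p_2\}$; by definition of $\dnew$ these cover $\ints$. On $S_1$ the distributions are proportional with ratio $p_1/p_2$, and computing $p_1/p_2 = (1-D_1(S_2))/(1-D_2(S_2))$ from the normalization constraint one obtains the clean bound
\[
    \tv(D_1,D_2) \le \tfrac{3}{2}\bigl(D_1(S_2) + D_2(S_2)\bigr),
\]
provided $D_i(S_2)\le 1/2$ (which will follow a posteriori). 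I also note in passing that since the mode of each $D_i$ is assigned mass $p_i > \eps' p_i$, it must lie in $S_1$, so $M_{\vec a} = M_{\vec b}$, which is helpful in the next step.

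It then remains to bound $D_i(S_2)$, which I split between points within distance $\ell = O(\sqrt{B})$ of the common mode $M$ and points outside. Lemma~\ref{lemma:vanishing-expectation-informal} (applied with $\|\vec a_i\|\ge r^\star \ge \varrho\ln(1/\eps)$) gives tail mass $O(e^{-\|\vec a_i\|/\varrho}) \le O(\eps)$ outside the window. Inside the window there are at most $2\ell+1 = O(\sqrt{B})$ integer points, each contributing at most $\eps' p_i \le \eps'$ to $D_i(S_2)$, for a total of $O(\sqrt{B}\cdot e^{-\descriptivityparam\cdot r^\star}) \le O(\eps)$ by the choice of $r^\star$. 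Plugging both bounds into the $\tv$ inequality above yields $\tv(D_1,D_2)\le O(\eps)$ and rescaling $\eps$ by a constant gives the theorem. The main obstacle, beyond the somewhat fiddly bookkeeping in the $\dnew \to \tv$ conversion, is ensuring that the two competing bounds, $(2\ell+1)\eps' \le \eps$ and $e^{-\|\vec a_i\|/\varrho}\le \eps$, can both be absorbed into a single norm threshold $r^\star$ of the stated form; this is precisely why the bound carries an additive $\varrho\ln(1/\eps)$ term (from the tail) alongside the $\tfrac{1}{\descriptivityparam}\ln(1/\eps) + \tfrac{1}{2\descriptivityparam}\ln B$ term (from the structural-to-TV conversion).
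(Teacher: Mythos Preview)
Your proof is correct and uses the same two ingredients as the paper---Lemma~\ref{lemma:descriptivity} for the structural-distance projection and Lemma~\ref{lemma:vanishing-deviation} (via its corollary~\ref{lemma:vanishing-expectation-informal}) for the tail mass---but your conversion from $\dnew$ to $\tv$ is organized more directly than the paper's. The paper expands $\tv(D_1,D_2)$ as a double sum over $(x,y)\in\ints^2$ (by writing each probability with its partition function in the denominator and cross-multiplying), then partitions $\ints^2$ into four blocks based on distance-to-mode, and further splits the ``near $\times$ near'' block into four sub-blocks based on significance; the key observation is that the ``significant $\times$ significant'' sub-sum vanishes identically. You instead work in $\ints$, split into the equal-ratio set $S_1$ and the insignificant set $S_2$, observe that on $S_1$ the two distributions are proportional with ratio $(1-D_1(S_2))/(1-D_2(S_2))$, and reduce everything to bounding $D_i(S_2)$. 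Your route is shorter and more transparent; the paper's double-sum approach reaches the same bound through a heavier decomposition but does not buy anything additional here. Your observation that the modes must lie in $S_1$ (hence $M_{\vec a}=M_{\vec b}$) is exactly what the paper also uses implicitly when it writes ``we also consider $M_{\vec a}=M_{\vec b}$.''
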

In order to show this result, we make use of Lemma \ref{lemma:descriptivity}.

\begin{proof}
Let $\vec a \in \varrho\-\conehull\calA$ with $\| \vec a \|_2 \ge \rcrit$ with $\rcrit\ge \varrho$ to be decided. Our goal is to provide a parameter vector $\vec b$ so that $\| \vec b \|_2 = \rcrit$ and $\tv(\calE_{\vec T}(\vec a), \calE_{\vec T}(\vec b)) = O(\eps).$ 

\noindent Let $W\sim\calE_{\vec T}(\vec a)$ and $W'\sim\calE_{\vec T}(\vec b)$ (at first, $\vec a$ and $\vec b$ are unspecified). We have that
\[
\tv(W,W') = \frac{1}{2} \sum_{x \in \ints} 
\left | 
\frac{\exp(- \vec a \cdot \vec T(x))}{\sum_{y \in \ints} \exp(- \vec a \cdot \vec T(y))}
-
\frac{\exp(- \vec b \cdot \vec T(x))}{\sum_{y \in \ints} \exp(- \vec b \cdot \vec T(y))}
\right |\,.
\]

Consider some mode $M_{\vec a}$ of $\calE_{\vec T}(\vec a)$ and some mode $M_{\vec b}$ of $\calE_{\vec T}(\vec b)$. 
Note that $\ZZ(\vec a),\ZZ(\vec b)\ge 1$. Then, we have that
\[
\tv(W,W') =
\frac{1}{2 \ZZ(\vec a) \ZZ(\vec b)} 
\sum_{x \in \ints} 
\left |
\sum_{y \in \ints} 
e^{- \vec a \cdot \TT(x) - \vec b \cdot \TT(y)} 
- 
e^{- \vec b \cdot \TT(x) - \vec a \cdot \TT(y)}
\right |\,.
\]
By moving the absolute value inside the sum over $y \in \ints$, the total variation distance is
\[
\tv(W,W') \le
\frac{1}{2 \ZZ(\vec a) \ZZ(\vec b)} 
\sum_{(x,y) \in \ints^2} 
\left |
e^{- \vec a \cdot \TT(x) - \vec b \cdot \TT(y)} 
- 
e^{- \vec b \cdot \TT(x) - \vec a \cdot \TT(y)} 
\right |\,.
\]

We will apply Lemma \ref{lemma:vanishing-deviation} with $\eta = 1/2$, $s=0$ and $k=1$ and get $\ell\le O(B^{1/2})$.
This motivates us to partition $\ints$ into two sets
\[
    Z_1 = \{x\in\ints: |x-M_{\vec a}|> \ell\} \text{ and }Z_2 = \ints\setminus Z_1\,.    
\]
Based on $Z_1,Z_2$, we can decompose $\ints^2$ into four sets: $N_1 = Z_1\times Z_1, N_2 = Z_1\times Z_2, N_3 = Z_2\times Z_1, N_4 = Z_2\times Z_2$.

Set $\Deltae := \frac{e^{- \vec a \cdot \TT(x) - \vec b \cdot \TT(y)} 
- 
e^{- \vec b \cdot \TT(x) - \vec a \cdot \TT(y)} }{ \ZZ(\vec a) \ZZ(\vec b)} $ and $S := \sum_{(x,y)\in\ints^2} |\Deltae|$. We have that
\[
    S = S_1 + S_2 + S_3 + S_4\,,
\]
where $S_i = \sum_{(x,y)\in N_i}|\Deltae|$ and observe that an upper bound on $S$ would control the total variation distance.

Let us choose $\vec b$. In what follows, we consider $\vec b$ to be the parameter vector given by Lemma \ref{lemma:descriptivity}, for $r=\rcrit$ and $\vec a\in\varrho\-\conehull\calA$ with $\|\vec a\| \ge \rcrit$. We also consider $M_{\vec a} = M_{\vec b}$. We next upper bound each term $S_i$ separately.

\noindent\underline{\texttt{Term $S_1$:}} For the term $S_1$, we use the fact that if $Q_\ell = \vec 1\{|W-M_{\vec a}|\le \ell\}$, we get
    \[
        \Pr_{\vec a}[|W-M_{\vec a}|>\ell] = \E[|W-M_{\vec a}|^0\cdot (1-Q_\ell)] \le e^{-\|\vec a\|/\varrho} \cdot O(1)\,,
    \]
    and similarly for $\Pr_{\vec b}[|W-M_{\vec b}|>\ell]$, since $\vec a$ and $\vec b$ belong to $\varrho\-\conehull\calA$ and $\ell$ is selected accordingly, as Lemma \ref{lemma:vanishing-deviation} suggests.
    
    Moreover, we have that $|\Deltae| \le \frac{e^{- \vec a \cdot \TT(x) - \vec b \cdot \TT(y)}
+ 
e^{- \vec b \cdot \TT(x) - \vec a \cdot \TT(y)} }{ \ZZ(\vec a) \ZZ(\vec b)} $ and therefore 
\begin{align*}
    S_1 \le &\ 2\cdot\Pr_{\vec a}[W\in Z_1]\cdot \Pr_{\vec b}[W\in Z_1] \\
    = &\ 2\cdot \Pr_{\vec a}[|W-M_{\vec a}|>\ell]\cdot \Pr_{\vec b}[|W-M_{\vec b}|>\ell] \\
    \le &\ e^{-2 \rcrit/\varrho}\cdot O(1)\,.
\end{align*}

\noindent\underline{\texttt{Terms $S_2, S_3$:}} For $S_2$ and $S_3$, we have for similar reasons that
\begin{align*}
    S_2, S_3 \le &\ \Pr_{\vec a}[W\in Z_1]\cdot \Pr_{\vec b}[W\in Z_2] + \Pr_{\vec b}[W\in Z_1]\cdot \Pr_{\vec a}[W\in Z_2] \\
    \le &\ \Pr_{\vec a}[|W-M_{\vec a}|>\ell] + \Pr_{\vec b}[|W-M_{\vec b}|>\ell] \\
    \le &\ e^{- \rcrit/\varrho}\cdot O(1)\,.
\end{align*}

\noindent\underline{\texttt{Term $S_4$:}} For the term $S_4$, we split $N_4$ to $N_4^{(1)}, N_4^{(2)}, N_4^{(3)}, N_4^{(4)}$ and form the four sums $S_4^{(1)},$ $S_4^{(2)},$ $S_4^{(3)},$ $S_4^{(4)}$ (which sum to $S_4$), similarly to how we split $\ints^2$ into $N_1,N_2,N_3,N_4$. In this case, we consider $Z'_1 = \{x\in Z_2 : \Pr_{\vec a}[W=x] \le e^{-\descriptivityparam\rcrit}\Pr_{\vec a}[W=M]\}$ and $Z'_2 = Z_2\setminus Z'_1$. 

We know that $|Z_2|\le 2\ell$ and therefore $\Pr_{\vec a}[W\in Z_1'] \le 2\ell\cdot e^{-\descriptivityparam\rcrit}$ and, due to the selection of $\vec b$ (according to Lemma \ref{lemma:descriptivity}), we also have that $\Pr_{\vec b}[W\in Z_1'] \le 2\ell\cdot e^{-\descriptivityparam\rcrit}$. Hence, with a similar reasoning as the one used for $S_1,S_2,S_3$ and since $\ell = O(B^{1/2})$ we have
\begin{align*}
    S_4^{(1)} \le &\ e^{-2\descriptivityparam\rcrit} \cdot O(B)\,, \\
    S_4^{(2)}, S_4^{(3)} \le &\ e^{-\descriptivityparam\rcrit} \cdot O(B^{1/2})\,.
\end{align*}
\noindent It remains to bound $S_4^{(4)}$. We have
\begin{align*}
    S_4^{(4)} = &\ \sum_{(x,y)\in Z_2'\times Z_2'} |\Deltae| \\
    = &\ \sum_{(x,y)\in N_4^{(4)}}
        \frac{
        \left| 
            \frac{\Pr_{\vec a}[W=x]}{\Pr_{\vec a}[W=M_{\vec a}]}\cdot \frac{\Pr_{\vec b}[W=y]}{\Pr_{\vec b}[W=M_{\vec b}]}
            -
            \frac{\Pr_{\vec b}[W=x]}{\Pr_{\vec b}[W=M_{\vec b}]}\cdot \frac{\Pr_{\vec a}[W=y]}{\Pr_{\vec a}[W=M_{\vec a}]}
        \right|
        }{
            e^{\vec a\cdot \vec T(M_{\vec a})} \cdot \ZZ(\vec a) \cdot e^{\vec b\cdot \vec T(M_{\vec b})} \cdot \ZZ(\vec b) 
        }
    = 0\,,
\end{align*}
due to the selection of $\vec b$ according to Lemma \ref{lemma:descriptivity}.
Therefore, in total, we pick 
\[
    \rcrit = \varrho \cdot \ln(1/\eps) + \frac{1}{2\descriptivityparam}\cdot \ln(B) + \frac{1}{\descriptivityparam} \cdot \ln(1/\eps) + O(\varrho + 1/\descriptivityparam)\,,
\]
and get that $\tv(W,W') \le \eps$.
\end{proof}

\subsection{The Proof of Lemma \ref{lemma:descriptivity} (Structural Distance \& Bounding Norms)}
\label{proof:sparsification-st}
In order to show Lemma \ref{lemma:descriptivity}, we will rely on the geometry induced by the exponential family distributions. Let us restate this result.
\begin{lemma*}
[Structural Distance \& Bounding Norms]
Under assumptions \eqref{assumption:geometry}, \eqref{assumption:unimodal} and \eqref{assumption:bounded-modes}, there exists some constant $\descriptivityparam>0$ such that for any $r\ge \varrho$ and any $\vec a \in \calA$ with $\|\vec a\|\ge r$, there exists some $\vec b\in \calA_{\varrho}$ and $\|\vec b\| = r$ so that
    $\dnew(\calE_{\vec T}(\vec a), \calE_{\vec T}(\vec b)) \leq e^{- \descriptivityparam \cdot r}$, i.e., for any $x \in \ints$, at least one of the following should hold:
    \begin{itemize}
        \item Either $x$ satisfies $\Pr_{\vec a}(x) \leq e^{- \descriptivityparam \cdot r} \cdot \Pr_{\vec a}(M_{\vec a})$ and $\Pr_{\vec b}(x) \leq e^{- \descriptivityparam \cdot r} \cdot \Pr_{\vec b}(M_{\vec b})$, 
        \item or $x$ satisfies $\Pr_{\vec a}(x)/\Pr_{\vec a}(M_{\vec a}) = \Pr_{\vec b}(x)/\Pr_{\vec b}(M_{\vec b})$.
    \end{itemize}
\end{lemma*}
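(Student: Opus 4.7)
The plan is to reduce the claim to a geometric statement about polyhedral cones and then invoke Theorem~\ref{theorem:geometry-up}, taking $\descriptivityparam$ to be the constant $\theta$ it produces for an appropriately chosen cone. The starting observation is that, since $\Pr_{\vec a}(x)/\Pr_{\vec a}(M_{\vec a}) = \exp(-\vec a\cdot(\vec T(x)-\vec T(M_{\vec a})))$, the bound $\dnew(\calE_{\vec T}(\vec a),\calE_{\vec T}(\vec b))\le e^{-\descriptivityparam r}$ is exactly the ``either both inner products are at least $\descriptivityparam r$, or they are equal'' dichotomy that Theorem~\ref{theorem:geometry-up} delivers. I aim to construct $\vec b$ so that $M_{\vec b}=M_{\vec a}=:M$, reducing the condition to a single family of test vectors $\vec v_x:=\vec T(x)-\vec T(M)$, and then realize enough of these as halfspace normals of a polyhedral cone containing $\vec a$.

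I would build the cone as follows. By assumption~\eqref{assumption:geometry}, $\conehull\calA$ has a halfspace description $\{\vec u: G^T\vec u\ge \vec 0\}$. Since the modes lie in $[-L,L]$ by assumption~\eqref{assumption:bounded-modes}, there are only finitely many possible values of $M$; I fix one and adjoin to $\conehull\calA$ the mode-related increment constraints $\vec u\cdot(\vec T(y+1)-\vec T(y))\ge 0$ for $y\in\{M,\dots,M+K\}$ together with their symmetric counterparts for $y\in\{M-K,\dots,M-1\}$, where $K=K(\calA,\vec T)$ is a finite window chosen below. The resulting cone $\calC_M$ is polyhedral and contains $\vec a$ (whose mode is $M$). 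Applying Theorem~\ref{theorem:geometry-up} to $\calC_M$ and to $\vec a/r$ yields $\vec u'\in\calC_M$ with $\|\vec u'\|=1$; I set $\vec b:=r\vec u'$, so $\|\vec b\|=r$, $\vec b\in\conehull\calA$ with $\|\vec b\|\ge\varrho$, hence $\vec b\in\calAvarrho$. The inclusion of the adjacent-increment constraints in $\calC_M$ makes $M$ a local minimizer of $\vec b\cdot\vec T$, and the unimodality guaranteed by assumption~\eqref{assumption:unimodal} then promotes this to a global minimum, so $M_{\vec b}=M$.

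To translate Theorem~\ref{theorem:geometry-up}'s conclusion back to the structural distance: for each $x$ with $|x-M|\le K$, $\vec v_x$ is a signed sum of increments $\vec h_y=\vec T(y+1)-\vec T(y)$ that appear as halfspace normals of $\calC_M$. The theorem yields a per-increment dichotomy (either $\vec a\cdot\vec h_y,\vec b\cdot\vec h_y\ge r\theta$, or $\vec a\cdot\vec h_y=\vec b\cdot\vec h_y$). Summing, while using that each $\vec h_y$ contributes with the correct sign so that all summands are nonnegative for both $\vec a$ and $\vec b$ (thanks to $M$ being a mode of both), I obtain: if even one increment falls in the ``large'' branch, both $\vec a\cdot\vec v_x$ and $\vec b\cdot\vec v_x$ are at least $r\theta$; if every increment lies in the ``equal'' branch, $\vec a\cdot\vec v_x=\vec b\cdot\vec v_x$. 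Either way, the structural distance condition at $x$ holds with threshold $\descriptivityparam r = \theta r$.

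The main obstacle, and the one that drives the choice of $K$, is extending the conclusion to points $x$ with $|x-M|>K$, which are not represented in $\calC_M$'s halfspace description. Since $M_{\vec a}=M_{\vec b}=M$ and both distributions are unimodal, $\vec a\cdot\vec v_x$ and $\vec b\cdot\vec v_x$ are monotone nondecreasing in $|x-M|$, so a ``large'' branch triggered inside the window propagates outward for free. The delicate case is when every in-window increment lies in the ``equal'' branch, since I still need equality (or largeness) on the tail. The resolution will be to pick $K$ large enough, depending only on $\calA$ and $\vec T$, that the ``all equal on $[M-K,M+K]$'' branch forces $\vec b-\vec a$ to lie in the intersection of the orthogonal complements of every tail increment that can arise as the direction $\vec a/\|\vec a\|$ ranges over the compact set $\conehull\calA\cap\sphere^{k-1}$. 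Exploiting the polyhedrality of $\conehull\calA$ and the finiteness of the mode set will show that such a uniform $K$ exists and pins down $\descriptivityparam(\calA,\vec T)$ as a purely geometric constant, closing out the proof.
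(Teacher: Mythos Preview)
Your approach is viable and is close in spirit to the paper's, but it organizes the reduction to Theorem~\ref{theorem:geometry-up} differently, and your closing paragraph is where the argument gets imprecise.

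\textbf{How the paper does it.} The paper does not use adjacent increments $\vec h_y=\vec T(y{+}1)-\vec T(y)$ as halfspace normals. Instead it works directly with $\vec v_x=\vec T(x)-\vec T(M_{\vec a})$ and first separates points $x$ into a set $\calY_{\modes_{\vec a}}$ on which the halfspace $\{\vec u:\vec u\cdot\vec v_x\ge 0\}$ is strictly satisfied throughout $\calR_{\modes_{\vec a}}\cap\calAvarrho$, and its (finite) complement. A compactness argument over the finitely many mode configurations shows that every $x\in\calY_{\modes_{\vec a}}$ satisfies $\vec u\cdot\vec v_x\ge\theta_1\|\vec u\|$ uniformly in $\vec u$, so those points are automatically in the ``large'' branch regardless of how $\vec b$ is chosen. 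The complement $\ints\setminus\calY_{\modes_{\vec a}}$ consists only of possible modes in $\calAvarrho$ and is therefore finite by assumption~\eqref{assumption:bounded-modes}; the paper then applies Theorem~\ref{theorem:geometry-up} to the polyhedral cone $\calR_{\modes_{\vec a}}\cap\conehull\calA$ with the finitely many $\vec v_x$'s (and extra cone constraints) as the columns of $H$.

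\textbf{What you do differently, and where it needs tightening.} Your cone $\calC_M$ uses window increments $\pm\vec h_y$ as columns, which is a legitimate alternative: inside the window the per-increment dichotomy immediately yields the $\vec v_x$ dichotomy, since $\vec v_x$ is a sum of nonnegative-signed increments for both $\vec a$ and $\vec b$. For the tail you correctly note that largeness at the window boundary propagates by unimodal monotonicity. The gap is in the ``all equal on the window'' branch. The right justification is \emph{not} compactness over directions $\vec a/\|\vec a\|$ or polyhedrality of $\conehull\calA$ (the increments $\vec h_y$ depend only on $\vec T$ and $M$, not on $\vec a$); it is a pure dimension count. For each side, the nested subspaces $V_{\mathrm{right},K}:=\mathrm{span}\{\vec h_y:M\le y<M{+}K\}$ stabilize to $V_{\mathrm{right},\infty}=\mathrm{span}\{\vec h_y:y\ge M\}$ after at most $k$ steps, and analogously on the left. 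Choosing $K\ge K^\star(M)$ so that both sides have stabilized (and then taking the maximum over the finitely many modes $M$) gives: if every right-window increment is in the equal branch, then $\vec a-\vec b\perp V_{\mathrm{right},\infty}\ni\vec v_x$ for all $x>M$, forcing equality there; if some right-window increment is large, monotonicity pushes largeness to all $x>M+K$. This is the piece your proposal gestures at but does not articulate, and once it is stated this way the proof closes cleanly with $\descriptivityparam$ equal to the minimum over $M$ of the $\theta$ from Theorem~\ref{theorem:geometry-up} applied to $\calC_M$.
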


\begin{proof}
[Proof of Lemma \ref{lemma:descriptivity}]
We decompose the proof into a number of steps.
\paragraph{Alternative form of Lemma \ref{lemma:descriptivity}.}
We can formulate a geometric framework through the observation that $\Pr_{\vec a}[W=x]\propto \exp(-\vec a\cdot\vec T(x))$ for any $\vec a\in\calAvarrho$. In particular, we have that
\begin{equation}\label{equation:geometry-basis}
    \Pr_{\vec a}[W=x] \ge \Pr_{\vec a}[W=y] \text{ is equivalent with }\vec a\cdot (\vec T(y) - \vec T(x)) \ge 0\,.
\end{equation}
Using relation \eqref{equation:geometry-basis}, we arrive to the following equivalent formulation for Lemma \ref{lemma:descriptivity}. In particular, the structural distance states that there exists some $\descriptivityparam>0$ such that for any $r\ge \varrho$ ($\varrho$ is defined in Assumption \ref{assumption:proper}) and any $\vec a\in\calA$ with $\|\vec a\|\ge r$ there exists some $\vec b\in\calAvarrho$ (recall that for $\varrho>0$, $\calAvarrho = \varrho\-\conehull\calA$, i.e., the superset of $\calA$ that also contains every vector in the conical hull of $\calA$ that has norm at least $\varrho$) with $\|\vec b\| = r$ such that $\ints = \calX_1\cup\calX_2$ where $\calX_1$ and $\calX_2$ are defined as follows
\begin{enumerate}
    \item\label{cases:geometric-descriptivity-insignificant} $\calX_1\subseteq\ints$ so that for any $x\in\calX_1$ we have 
    \[
        \vec a\cdot(\vec T(x)-\vec T(M_{\vec a})) \ge \descriptivityparam r\text{ and }\vec b\cdot(\vec T(x)-\vec T(M_{\vec b})) \ge \descriptivityparam r\,,
    \]
    \item\label{cases:geometric-descriptivity-equivalence} $\calX_2\subseteq \ints$ so that for any $x\in\calX_2$ we have 
    \[
        \vec a\cdot(\vec T(x)-\vec T(M_{\vec a})) = \vec b\cdot(\vec T(x)-\vec T(M_{\vec b}))\,,
    \]
\end{enumerate} 
where $M_{\vec a}$ (resp. $M_{\vec b}$) is any mode of $\calE_{\vec T}(\vec a)$ (resp. $\calE_{\vec T}(\vec b)$).

Our goal is to select the parameter $\theta>0$ appropriately so that for any given $\vec a\in\calA$, we can find $\vec b\in\calAvarrho$ with $\|\vec b\| = r$ such that any $x\in\ints$ either belongs in $\calX_1$ or $\calX_2$.

\paragraph{Step 1.} First, note that any mode (global maximum point of the probability mass function) of the distribution $\calE_{\vec T}(\vec a)$ cannot be in $\calX_1$ (since $\descriptivityparam, r>0$ and $\vec a\cdot\vec T(y) = \vec a\cdot \vec T(y')$ whenever $y,y'$ are modes). Therefore we get that $\calE_{\vec T}(\vec a)$ and $\calE_{\vec T}(\vec b)$ must have the same set of modes. We define the regions $\calR_M$ of the parameter vectors that correspond to distributions with $M$ as a mode. In particular, such regions are defined by the property that for any $\vec u\in\calR_M$ it holds that $\Pr_{\vec u}[W=M] \ge \Pr_{\vec u}[W=x]$, for any $x\in\ints$ (if $\calE_{\vec T}(\vec u)$ is well defined), or using relation \eqref{equation:geometry-basis}, more generally as follows
\begin{equation}\label{equation:mode-sets}
    \calR_M = \{\vec u\in\reals^k : \vec u\cdot (\vec T(x)-\vec T(M)) \ge 0, \text{ for any } x\in\ints\}\,.
\end{equation}
Note that the sets $\calR_M$ are convex cones that could be polyhedral cones in the case that a finite number of points $x\in\ints$ correspond to a set of restrictions that implies the remaining ones. We also define, for any $\calM\subseteq\ints$, intersections of such sets as follows:
\begin{equation}\label{equation:mode-sets-intersections}
    \calR_\calM = \bigcap_{M\in\calM}\calR_M\,. 
\end{equation}
\noindent For the demand that $\modes_{\vec a} = \modes_{\vec b}$ to be satisfied we must (at least) pick $\vec b$ so that
\begin{equation}\label{equation:restriction1}
    \vec b \in \calR_{\modes_{\vec a}}\,.
\end{equation}
In order to develop some intuition about the regions of the form $\calR_{\modes}$, one might consider $\modes=\{M,M'\}\subseteq\ints$. In this case
\begin{align*}
    \calR_\modes &\ = \calR_M\cap \calR_{M'} \\
    &\ = \{\vec u: \vec u\cdot(\vec T(x) - \vec T(M))\ge 0 \text{ and } \vec u\cdot(\vec T(x)-\vec T(M')) \ge 0, \text{ for any }x\in\ints\} \\
    &\ = \calR_M \cap \{\vec u: \vec u\cdot (\vec T(M)-\vec T(M')) = 0\}\,.
\end{align*}
Therefore, if $\vec T(M)\neq \vec T(M')$, then the dimension of $\calR_\modes$ is at most $k-1$ (and this can be generalized for larger sets $\modes$ by using the notion of affine independence). For any $M\in\ints$, the set $\calR_M$ is a countable intersection of halfspaces of the form $\calH = \{\vec u\in\reals^k: \vec u\cdot \vec h \ge 0\}$. If $M\in\modes\subseteq\ints$, then $\calR_\modes$ is a subset of the boundary of $\calR_M$, since for any $M'\in\modes$, the vector $\vec T(M')-\vec T(M)$ corresponds to some of the halfspaces that define $\calR_M$.

\paragraph{Step 2.} Our goal is to pick $\vec b$ so that any $x\in\ints$ lies in either $\calX_1$ or $\calX_2$. En route, we will use Assumption \ref{assumption:proper}. In this step we will get rid of $x\in\ints$ for which we get for free that $x\in\calX_1$, due to the fact that $\vec a,\vec b\in\calAvarrho$ anyway. We will use the following assumption.

Let $M_{\vec a}$ be any mode of $\calE_{\vec T}(\vec a)$. Then, we may define the sequence of vectors $(\vec v_x)_{x\in \ints}$ by $\vec v_x = \vec T(x) - \vec T(M_{\vec a})$ and reformulate $\calX_1 = \{x\in\ints: \vec a\cdot\vec v_x \ge \descriptivityparam r, \vec b\cdot \vec v_x \ge \descriptivityparam r\}$ as well as $\calX_2 = \{x\in\ints : \vec a\cdot \vec v_x = \vec b\cdot \vec v_x\}$\,. We are allowed to use $\vec v_x$ for both $\vec a$ and $\vec b$ in the definitions of $\calX_1$ and $\calX_2$, since, according to \textbf{Step 1}, vector $\vec b$ has to be selected within $\calR_{\modes_{\vec a}}$ anyway.

\def\calY{\mathcal{Y}}

We will first classify (to $\calX_1$) the points $x\in\ints$ for which the hyperplane defined by $\vec v_x$ does not correspond to any boundary of $\calR_{\modes_{\vec a}}$. That is to say, $\vec v_x\cdot\vec u>0$ for any 
$\vec u\in\calR_{\modes_{\vec a}}\cap\calAvarrho$ 
with $\vec u\neq 0$. In particular, we define for any $\modes\subseteq\modes_{\calA}$, the following set of points
\begin{equation*}\label{equation:geometry-non-boundary-points}
    \calY_{\modes} = \{x\in\ints: \vec u\cdot (\vec T(x)-\vec T(M)) > 0, \text{ for any }\vec u\in\calR_{\modes}\cap\calAvarrho 
    \text{ with } \vec u\neq 0 \text{ and }M\in\modes\}\,.
\end{equation*}
Our goal here will be to show that there exists some constant $\descriptivityparam_1>0$ such that for any $\vec u\in\calAvarrho$ and any $y\in\calY_{\modes_{\vec u}}$ we have that $\vec u\cdot (\vec T(y)-\vec T(M_{\vec u})) \ge \descriptivityparam_1 \|\vec u\|$.

To this end, observe, first, that due to assumption \eqref{assumption:bounded-modes}, the number of different possible $\modes\subseteq\modes_\calA$ must be finite. Therefore, if we show that for every fixed $\modes\subseteq\modes_{\calA}$ there exists some constant that satisfies the desired property for any $\vec u\in\calR_\modes\cap \calAvarrho$, then by taking the minimum over the selection of $\modes$, we can find the target $\descriptivityparam_1$ (swap of logical quantifiers).

\def\calJ{\mathcal{J}}

For a fixed $\modes\subseteq\modes_\calA$, we consider any vector $\vec u\in\calR_\modes\cap\calAvarrho$. Note that the only guarantee we have is that $\modes\subseteq\modes_{\vec u}\subseteq\modes_{\calA}$. Let $x_1,x_2\in\modes_{\vec u}$ be the smallest and largest elements of $\modes_{\vec u}$, respectively (i.e., $x_1\le x\le x_2$  for any $x\in\modes_{\vec u}$). Note that $\calY_{\modes} \cap \modes_{\vec u} = \emptyset$ by construction since
$\vec u\in\calR_\modes\cap\calAvarrho$ 
, $\vec u\neq 0$ and therefore $x_1,x_2\not\in\calY_\modes$. Consider $y_1,y_2\in\calY_{\modes}$ with $y_1\le x_1$ and largest possible and $y_2\ge x_2$ and smallest possible. Then, due to unimodality (assumption \eqref{assumption:unimodal}), we have that $\vec u\cdot \vec T(y) \ge \min\{\vec u\cdot \vec T(y_1) , \vec u\cdot \vec T(y_2)\}$ for any $y\in\calY_\modes$. Since $\modes_{\vec u}\subseteq\modes_{\calA}$, the possible values for $(x_1,x_2)$ are finite and therefore the possible values for $(y_1,y_2)$ are also finite (since given $\modes$, there is a 1-1 correspondence between $(x_1,x_2)$ and $(y_1,y_2)$). We may, therefore split $\calR_\modes\cap\calAvarrho$ into a finite number of equivalence classes with respect to the minimum and maximum point $(x_1,x_2)$ of the set of modes corresponding to the vector $\vec u$. It is sufficient to find for any equivalence class a (possibly different) constant that satisfies the desired property for any $\vec u$ in the class. Then we could minimize over the equivalence classes to find $\descriptivityparam_1>0$ as desired.

\def\equivclass{\mathfrak{C}}
Consider now the equivalence class $\equivclass$ corresponding to some fixed pair $(x_1,x_2)$ (which gives $(y_1,y_2)$). Then, for any $\vec u\in\equivclass$, we have $\vec u\cdot \vec T(y) \ge \min\{\vec u\cdot \vec T(y_1) , \vec u\cdot \vec T(y_2)\}$ or equivalently,
that $\vec u_\varrho\cdot (\vec T(y)-\vec T(M_{\vec u})) \ge \min\{\vec u_\varrho\cdot (\vec T(y_1)-\vec T(M_{\vec u})) , \vec u_\varrho\cdot (\vec T(y_2)-\vec T(M_{\vec u}))\}$ for any $y\in\calY_\modes$, where 
$\vec u_\varrho = \varrho\cdot \vec u/\|\vec u\|$. 
Also, $\equivclass\subseteq\calR_\modes\cap\calAvarrho$ 
and $\calR_\modes\cap \conehull\calA$ 
is a cone and therefore $\calR_\modes\cap\calAvarrho$ contains all vectors $\vec u_\varrho$ where $\vec u\in\equivclass$. Moreover, $\calR' :=\calR_\modes\cap\calAvarrho\cap\{\vec u': \|\vec u'\| = \varrho\}$ is closed (since $\calA$ is closed by assumption \eqref{assumption:geometry}). Therefore, for any $\vec u \in\equivclass$ and any $y\in\calY_{\modes}$:
\[
{\vec u_{\varrho}}\cdot (\vec T(y)-\vec T(M_{\vec u})) \ge \min\left\{ \inf_{\vec u'\in\calR'}\vec u'\cdot (\vec T(y_1)-\vec T(M_{\vec u})), \inf_{\vec u'\in\calR'}\vec u'\cdot (\vec T(y_2)-\vec T(M_{\vec u}))\right\}.
\]
We know that $\vec u'\cdot (\vec T(y_1)-\vec T(M_{\vec u})), \vec u'\cdot (\vec T(y_2)-\vec T(M_{\vec u})) >0$ for any $\vec u'\in \calR_\modes\cap\calAvarrho$, since $y_1,y_2\in\calY_\modes$. Since, additionally $\calR'$ is closed, the infima in the above inequality are attained for some vectors $\vec u_1',\vec u_2'\in \calR'$ and correspond to positive values $\descriptivityparam_{11}',\descriptivityparam_{12}'>0$.

We have proven that there exists some $\descriptivityparam_1>0$ so that for any $\vec u\in\calAvarrho$ and any $y\in\calY_{\modes_{\vec u}}$ we have that $\vec u\cdot (\vec T(y)-\vec T(M_{\vec u})) \ge \descriptivityparam_1 \|\vec u\|$. As a consequence, returning to our vectors $\vec a$ (given vector) and the desired $\vec b$, since $\vec a,\vec b\in\calAvarrho$, we have that if we pick $\descriptivityparam \le \descriptivityparam_1$, then $\calY_{\modes_{\vec a}} \subseteq \calX_1$.

\paragraph{Step 3.} It remains to account for the points $x\in\ints\setminus\calY_{\modes_{\vec a}}$ (i.e., find conditions for the selection of $\vec b$ so that any such $x$ is classified either in $\calX_1$ or $\calX_2$). The first crucial observation is that the set $\ints\setminus\calY_{\modes_{\vec a}}$ must be finite. In particular, $\ints\setminus\calY_{\modes_{\vec a}}$ consists of points $x$ such that the boundary of the halfspace defined by $\vec v_x$ intersects the set $\calR_{\modes_{\vec a}}\cap \calAvarrho$, due to the definition of $\calY_{\modes_{\vec a}}$. Consider some vector $\vec u\in\calR_{\modes_{\vec a}}\cap \calAvarrho$ with $\vec u\cdot \vec v_x = 0$. The vector $\vec u$ corresponds to some distribution in $\calE_{\vec T}(\calAvarrho)$ and $x$ is a mode of $\vec u$ since $\vec u\cdot (\vec T(x)-\vec T(M_{\vec a})) = 0$ and $\vec u\in\calR_{\modes_{\vec a}}$. Hence $\ints\setminus\calY_{\modes_{\vec a}}\subseteq \modes_{\calAvarrho}$. Due to assumption \eqref{assumption:bounded-modes}, $|\modes_{\calAvarrho}|$ is finite and so does $|\ints\setminus\calY_{\modes_{\vec a}}|$.

The next observation we will use is that $\calR_{\modes_{\vec a}}\cap \conehull\calA$ is a polyhedral cone, due to assumption \eqref{assumption:geometry}, and for any $x\in\ints\setminus\calY_{\modes_{\vec a}}$, $\vec v_x\cdot \vec u\ge 0$ for any $\vec u\in \calR_{\modes_{\vec a}}\cap \calAvarrho$, while $\ints\setminus\calY_{\modes_{\vec a}}$ is finite. In particular, we consider the matrix $H\in\reals^{k\times t}$, where $t\in\nats$ and $H$ contains as columns all the vectors of the form $\vec v_x$ for $x\in \ints\setminus\calY_{\modes_{\vec a}}$. However, $H$ could have some additional columns so that $\calR_{\modes_{\vec a}}\cap \calAvarrho = \{\vec u\in\reals^k: H^T\vec u\ge 0\}$. We may apply Theorem \ref{theorem:geometry-up} (restated below) accordingly to get a bound for $\theta$ and a way to pick $\vec b$ that imply the desired result (with appropriate rescaling). Note that the bound we get for $\descriptivityparam$ can be considered independent from $\vec a$, since there is only a finite number of possible selections of $\modes_{\vec a}$ and we may minimize over them to get a global bound. 
\end{proof}

Let us restate Theorem \ref{theorem:geometry-up} for reader's convinience. Its proof can be found in the main part of the paper.
\begin{theorem*}
    Consider any polyhedral cone $\calC\subseteq\reals^k$, $k\in\nats$, where $\calC = \{\vec u: H^T\vec u\ge \vec 0\}$ for some matrix $H\in\reals^{k\times t}$, $t\in\nats$ is a description of $\calC$ as an intersection of halfspaces. Then there exists some $\theta>0$ such that for any $\vec u\in \calC$ with $\|\vec u\|\ge 1$, there exists $\vec u'\in\calC$ with $\|\vec u'\| = 1$ so that for any column $\vec h$ of $H$ at least one of the following is true:
    \begin{enumerate}
        \item Either $\vec h\cdot \vec u \ge \theta$ and $\vec h\cdot \vec u' \ge \theta$,
        \item or $\vec h\cdot \vec u = \vec h\cdot \vec u'$.
    \end{enumerate}
\end{theorem*}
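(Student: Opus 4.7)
The plan is to combine Minkowski--Weyl with a careful ``projection along tight constraints'' construction. First, by Proposition \ref{proposition:minkowski-weyl}, write $\calC = \{Z\vec x : \vec x \ge \vec 0\}$ for some matrix $Z\in\reals^{k\times s}$ with columns $\vec z_1,\dots,\vec z_s$ (the generators), and let $\vec h_1,\dots,\vec h_t$ be the columns of $H$. Define
\[
    \theta_1 := \min\{\vec h_i\cdot \vec z_j : i\in[t], j\in[s], \vec h_i\cdot \vec z_j>0\},
\]
which is strictly positive (since the min is over a finite set of positive numbers; if no such pair exists then $\vec h_i\cdot \vec u=0$ on all of $\calC$ and one can trivially rescale $\vec u$). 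Next construct a ``pivot'' vector $\vec w\in\calC$ by taking a positive combination of all generators and rescaling: pick $\vec x\ge\vec 1$ with $Z\vec x\neq \vec 0$ (the zero case is again trivial) and set $\vec w = Z\vec x/N$ with $N$ chosen so that $\|\sum_{j\in\calJ}x_j\vec z_j\|\le 1/2$ for every subset $\calJ\subseteq[s]$. Then $\vec h_i\cdot \vec w\ge \theta_1/N$ whenever $\vec h_i$ is non-orthogonal to the cone; so demanding $\theta\le \theta_1/N$ guarantees that $\vec w$ satisfies the bound on all non-trivial columns.

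Now, given $\vec u\in\calC$ with $\|\vec u\|\ge 1$, split the column indices into the ``tight'' set $\calI=\{i:\vec h_i\cdot \vec u<\theta\}$ and its complement, and split the generator indices into $\calJ_\calI = \{j:\vec h_i\cdot \vec z_j=0\text{ for all }i\in\calI\}$ (those parallel to the active boundaries) and the rest. Decompose $\vec u = \vec u_\calI + (\vec u - \vec u_\calI)$ where $\vec u_\calI = \sum_{j\in\calJ_\calI} u_j \vec z_j$, and similarly for $\vec w_\calI$. The candidate is
\[
    \vec u'(c) = \vec u - c(\vec u_\calI - \vec w_\calI), \qquad c\in[0,1].
\]
By design $\vec u'(c)\in\calC$ (the coefficients in the generator basis remain non-negative), and for $i\in\calI$ both $\vec u_\calI$ and $\vec w_\calI$ lie in the nullspace of $\vec h_i$, so $\vec h_i\cdot \vec u'(c) = \vec h_i\cdot \vec u$, matching condition (ii). For $i\notin\calI$, either $\vec h_i$ vanishes on $\calJ_\calI$ (so $\vec h_i\cdot \vec u'(c) = \vec h_i\cdot \vec u\ge \theta$) or $\vec h_i\cdot \vec w_\calI\ge \theta$ by the pivot property, and convexity gives $\vec h_i\cdot \vec u'(c) \ge (1-c)\theta + c\theta = \theta$, giving condition (i).

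It remains to choose $c\in[0,1]$ so that $\|\vec u'(c)\|=1$. At $c=0$, $\|\vec u'(0)\|=\|\vec u\|\ge 1$; the crux is showing $\|\vec u'(1)\|\le 1$, after which continuity produces the desired $c$. Here $\|\vec u'(1)\|\le \|\vec u-\vec u_\calI\| + \|\vec w_\calI\|$, and the pivot was scaled so that $\|\vec w_\calI\|\le 1/2$ for any $\calI$. The main obstacle is the first term: we need to bound $\|\vec u-\vec u_\calI\|$ uniformly in terms of $\theta$. The key observation is that for each $j\notin\calJ_\calI$ there exists some $i_j\in\calI$ with $\vec h_{i_j}\cdot\vec z_j\ge \theta_1$, and since $\vec h_{i_j}\cdot\vec u<\theta$ and all other terms in $\vec h_{i_j}\cdot\vec u = \sum_{j'}u_{j'}(\vec h_{i_j}\cdot\vec z_{j'})$ are non-negative, we deduce $u_j<\theta/\theta_1$. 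Summing over $j\notin\calJ_\calI$ and using the triangle inequality yields $\|\vec u-\vec u_\calI\|\le s\theta/\theta_1$. Finally, imposing $\theta\le \theta_1/(2s)$ (combined with the earlier bound $\theta\le \theta_1/N$) gives $\|\vec u'(1)\|\le 1$ and completes the argument. The only subtle point is that the constants $\theta_1, N, s$ are purely geometric invariants of $\calC$ and $H$, so the $\theta$ obtained is uniform across all $\vec u$.
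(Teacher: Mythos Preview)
Your proposal is correct and follows essentially the same argument as the paper's proof: the same use of Minkowski--Weyl, the same pivot vector $\vec w$ with the partial-sum norm bound, the same tight/loose split $(\calI,\calJ_\calI)$, the same candidate $\vec u'(c)=\vec u-c(\vec u_\calI-\vec w_\calI)$, and the same final constraints $\theta\le\theta_1/N$ and $\theta\le\theta_1/(2s)$. The only minor omission is that the bound $\|\vec u-\vec u_\calI\|\le s\theta/\theta_1$ tacitly uses $\|\vec z_j\|\le 1$, which the paper handles by normalizing the generators without loss of generality.
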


We now also briefly restate the intuition behind Theorem \ref{theorem:geometry-up}. In particular, since for any $x\in\calX_1$ we must have that $\vec a\cdot \vec v_x$ is large enough, there should be some threshold for $\vec a\cdot \vec v_x$ below which we know that $x$ has to be classified in $\calX_2$. One idea would be to decide the set in which $x$ should be classified by considering $x\in\calX_2$ exactly when $\vec a\cdot \vec v_x$ is below some threshold (of the form $\descriptivityparam\cdot r$). Then, if we want to classify $x$ to $\calX_1$, we could pick any $\vec b$ so that $\|\vec b\|=r$ and $\cos(\vec b,\vec v_x)$ is large enough. To this end, we may only perturb the direction of $\vec b$, since its norm is restricted a priori. Consequently, (recall that $\vec b\cdot \vec v_x = \|\vec b\|\cdot\cos(\vec b,\vec v_x)\cdot \|\vec v_x\|$) $\vec b\cdot \vec v_x$ is also at least equal to the threshold. If $x$ should be classified to $\calX_2$, then we should pick $\vec b = \vec a+\vec u$, where $\vec u\cdot \vec v_x = 0$. The main complication here is that we are not interested in classifying only a single point $x$, but, rather, any point $x\in\ints\setminus\calY_{\modes_{\vec a}}$. For different points in $\ints\setminus\calY_{\modes_{\vec a}}$ we would then have different restrictions for $\vec b$, which could be mutually exclusive. Theorem \ref{theorem:geometry-up} states that, due to the structure of polyhedral cones, there is a way to satisfy all such restrictions simultaneously.

\section{The Proof of Theorem \ref{theorem:single-term} (Sparsifying the Parameter Space)}
\label{proof:discretization}
We restate the result for convenience.
\begin{theorem*}
Under assumptions \eqref{assumption:geometry}, \eqref{assumption:unimodal}, \eqref{assumption:bounded-modes}, \eqref{assumption:bounded-moments} and \eqref{assumption:bounded-max-eval}, there exists some value $\descriptivityparam = \descriptivityparam(\calA, \vec T)>0$ depending on the geometric properties of $\calA$ and $\vec T$, such that for any $\eps\in(0,1)$, there exists a set $\calB\subseteq\varrho\-\conehull\calA$ with $|\calB|\le \left(\widetilde{O}\left(\frac{\sqrt{\Lambda}\cdot \varrho}{\eps} + \frac{\sqrt{\Lambda}}{\eps\cdot \descriptivityparam}  \right) + O\left( \frac{\sqrt{\Lambda}}{\eps\cdot \descriptivityparam}\cdot \log(B) \right) \right)^k$ such that, for any $\vec a\in\calA$, it holds that
    \[
        \tv(\calE_{\vec T}(\vec a),\calE_{\vec T}(\vec b))\le \eps\,, \text{ for some }\vec b\in\calB\,.
    \]
\end{theorem*}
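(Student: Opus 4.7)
The plan is to combine two reductions whose ingredients are already available. First, Theorem~\ref{theorem:projection} allows us, for any $\vec a \in \calA$, to find $\vec b' \in \varrho\-\conehull\calA$ with $\|\vec b'\| \le R := (\varrho + 1/\descriptivityparam)\ln(2/\eps) + \ln(B)/(2\descriptivityparam) + O(\varrho + 1/\descriptivityparam)$ such that $\tv(\calE_{\vec T}(\vec a), \calE_{\vec T}(\vec b')) \le \eps/2$. Hence it is enough to build a finite $\calB \subseteq \varrho\-\conehull\calA$ covering the bounded set $\calAvarrho' := \varrho\-\conehull\calA \cap \ball_R[\vec 0]$ within TV radius $\eps/2$, and the two errors compose by the triangle inequality.

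Second, I would reduce the TV covering of $\calAvarrho'$ to an Euclidean covering. Fix $\vec b', \vec b \in \calAvarrho'$; the line segment joining them lies inside $\overline{\calA}_{\varrho} = \chull\calA_{\varrho}$, so by Lemma~\ref{lemma:exp-fami-tv-kl} there exists $\vec\xi$ on this segment with
\[
\kl\bigl(\calE_{\vec T}(\vec b') \,\|\, \calE_{\vec T}(\vec b)\bigr) \;=\; (\vec b - \vec b')^{\!\top}\, \nabla^2\Lambda(\vec\xi)\, (\vec b - \vec b').
\]
Assumption~\eqref{assumption:bounded-max-eval} then gives $\nabla^2\Lambda(\vec\xi) = \Cov_{\vec\xi}(\vec T(W)) \preceq \Lambda\cdot I_k$, so $\kl \le \Lambda\,\|\vec b - \vec b'\|^2$. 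Pinsker's inequality (Proposition~\ref{proposition:pinsker}) yields $\tv(\calE_{\vec T}(\vec b'), \calE_{\vec T}(\vec b)) \le \sqrt{\Lambda/2}\cdot\|\vec b - \vec b'\|$, so any Euclidean $\eta$-net of $\calAvarrho'$ with $\eta = \eps/\sqrt{8\Lambda}$ covers it within TV distance $\eps/2$.

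Finally, a standard volume argument bounds the size of an $\eta$-net of $\calAvarrho' \subseteq \ball_R[\vec 0]$ in $\reals^k$ by $(O(R/\eta))^k = (O(R\sqrt{\Lambda}/\eps))^k$. Substituting the value of $R$ and noting that $R = \widetilde O(\varrho + 1/\descriptivityparam) + O(\log(B)/\descriptivityparam)$, where the $\log(1/\eps)$ factor is absorbed in the $\widetilde O$, yields
\[
|\calB| \;\le\; \left(\widetilde{O}\!\left(\tfrac{\sqrt{\Lambda}\,\varrho}{\eps} + \tfrac{\sqrt{\Lambda}}{\eps\,\descriptivityparam}\right) + O\!\left(\tfrac{\sqrt{\Lambda}\,\log B}{\eps\,\descriptivityparam}\right)\right)^{\!k},
\]
as required, and by choosing the net inside $\varrho\-\conehull\calA$ we keep $\calB \subseteq \varrho\-\conehull\calA$.

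The serious obstacle is hidden in the first step, and has already been dealt with by Theorem~\ref{theorem:projection}: without it, $\calA$ could be unbounded and no finite Euclidean net would suffice. Two smaller technical points remain to be verified: (i) the midpoint $\vec\xi$ lies in $\overline{\calA}_{\varrho}$ (by definition of the convex hull and the fact that $\vec b', \vec b \in \calAvarrho'$), so the Hessian bound from \eqref{assumption:bounded-max-eval} applies along the whole segment; and (ii) when building the net one should intersect with $\varrho\-\conehull\calA$ to ensure properness, which only changes constants in the covering number. Everything else is routine.
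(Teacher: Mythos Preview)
Your proposal is correct and follows essentially the same route as the paper: apply Theorem~\ref{theorem:projection} to reduce to a bounded subset of $\varrho\-\conehull\calA$, then combine Lemma~\ref{lemma:exp-fami-tv-kl}, assumption~\eqref{assumption:bounded-max-eval}, and Pinsker's inequality to convert TV covering into Euclidean covering, and finish with a standard volume argument (the paper's Proposition~\ref{proposition:euclidean-cover}). Your care in verifying that the interpolating point $\vec\xi$ lies in $\overline{\calA}_{\varrho}$ and that the net can be taken inside $\varrho\-\conehull\calA$ matches exactly what the paper does implicitly.
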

\begin{proof}
    According to Theorem \ref{theorem:projection}, if we consider $\calA' = \varrho\-\conehull\calA \cap \{\vec a: \|\vec a\|\le \rcrit\}$ for some sufficiently large $\rcrit \le (\varrho+\frac{1}{\descriptivityparam})\cdot\ln(1/\eps)+\frac{1}{2\descriptivityparam}\cdot\ln(B)+O(\varrho+\frac{1}{\descriptivityparam})$, then the exponential family $\calE_{\vec T}(\calA')$ $\eps$-covers the family $\calE_{\vec T}(\calA)$. However, $\calE_{\vec T}(\calA')$ might contain infinitely many elements.
    
    In order to sparsify $\calE_{\vec T}(\calA')$, we make use of Lemma \ref{lemma:exp-fami-tv-kl} (applied to $\chull \calA'$), combined with assumption \eqref{assumption:bounded-max-eval}. In particular, we get that for any $\vec a,\vec b\in\calA'(\subseteq\chull\calA')$ it holds
    \[
        \tv(\calE_{\vec T}(\vec a), \calE_{\vec T}(\vec b)) \le \|\vec a-\vec b\|\cdot\sqrt{\Lambda/2}\,,
    \]
    by making use of Pinsker's inequality. Therefore, the problem of sparsely covering $\calE_{\vec T}(\calA')$ in total variation distance is reduced to sparsely covering $\calA'$ in Euclidean distance.
    
    The cover in Euclidean distance is given by Proposition \ref{proposition:euclidean-cover} and we get that $\calE_{\vec T}(\calA')$ is $\eps$-covered by $\calE_{\vec T}(\calB)$ for some $\calB\subseteq\calA'$ where $|\calB|\le (1+\rcrit\cdot\sqrt{2\Lambda}/\eps)^k$.
\end{proof}

\begin{proposition}\label{proposition:euclidean-cover}
    For any $\eps>0$, any $k\in\nats$, any $r>0$ and any subset $\calB$ of $\reals^k$ with $\sup_{\vec b\in \calB}\|\vec b\| \le r$, there exists an $\eps$-cover of $\calB$ with respect to the Euclidean distance with size at most $(1+2r/\eps)^k$.
\end{proposition}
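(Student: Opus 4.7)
} The statement is the standard Euclidean covering number bound for a bounded subset of $\reals^k$, and I would prove it by the classical volumetric packing argument. Since $\sup_{\vec b\in\calB}\|\vec b\|\le r$, we have $\calB\subseteq \ball_r[\vec 0]$, and it suffices to produce an $\eps$-cover of $\calB$ (not of the whole ball) of the claimed size.

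The plan is to construct the cover greedily via a maximal $\eps$-separated packing. Concretely, I would let $\calN = \{\vec b_1,\vec b_2,\dots\}\subseteq \calB$ be any maximal subset with the property that $\|\vec b_i - \vec b_j\| > \eps$ for all $i\neq j$ (such a set exists by Zorn's lemma, or is constructed greedily if $\calB$ is finite; we show below that it is automatically finite). By maximality, for every $\vec b\in\calB$ there must exist some $\vec b_i\in\calN$ with $\|\vec b - \vec b_i\|\le \eps$, because otherwise $\calN\cup\{\vec b\}$ would still be $\eps$-separated, contradicting maximality. Hence $\calN$ is an $\eps$-cover of $\calB$.

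To bound $|\calN|$, I would use the standard volume argument. Because the points in $\calN$ are pairwise at distance strictly greater than $\eps$, the open balls $\ball_{\eps/2}(\vec b_i)$ are pairwise disjoint. Since each $\vec b_i\in\calB\subseteq \ball_r[\vec 0]$, each such ball is contained in $\ball_{r+\eps/2}(\vec 0)$. Comparing Lebesgue volumes (using $\vol(\ball_s(\vec 0)) = s^k\cdot\vol(\ball_1(\vec 0))$ in $\reals^k$) gives
\[
|\calN|\cdot (\eps/2)^k\cdot \vol(\ball_1(\vec 0)) \;\le\; (r+\eps/2)^k\cdot \vol(\ball_1(\vec 0)),
\]
so $|\calN|\le \bigl((r+\eps/2)/(\eps/2)\bigr)^k = (1+2r/\eps)^k$. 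In particular $\calN$ is finite, retroactively justifying the greedy construction.

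There is no real obstacle here; the only thing to be careful about is that we want the packing to live inside $\calB$ (not inside the enclosing ball), so that the resulting set is automatically a \emph{proper} cover in the sense that every covering point lies in $\calB$. The maximality argument above is the standard way to guarantee this, and the enlarged ball $\ball_{r+\eps/2}(\vec 0)$ is only used as a container for the disjoint half-radius balls in the volumetric comparison.
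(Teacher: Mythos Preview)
Your proposal is correct and essentially identical to the paper's proof: both construct a maximal $\eps$-separated subset of $\calB$ (you phrase it via Zorn's lemma, the paper via a greedy removal procedure) and then apply the same volumetric argument with the disjoint $(\eps/2)$-balls contained in $\ball_{r+\eps/2}[\vec 0]$ to obtain the bound $(1+2r/\eps)^k$.
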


\begin{proof}
    We use a simple greedy algorithm: We create the cover incrementally by adding in each step an arbitrary point $\vec b$ of the remaining set (initially, the remaining set is $\calB$) and remove from the remaining set the ball $\ball_{\eps}[\vec b]$.
    
    Let $(\vec b_i)_{i\in [N]}$ be the points of the cover. Note that it might be possible that $N=\infty$. However, as we will show, this is not the case.
    
    First, note that $\|\vec b_i - \vec b_j\|>\eps$, whenever $i\neq j$, since (assuming wlog $j>i$) $ \vec b_j\not\in \ball_{\eps}[\vec b_i]$. Therefore, $\ball_{\eps/2}[\vec b_i]\cap \ball_{\eps/2}[\vec b_j] = \emptyset$ whenever $i\neq j$. Note that $N$ must be finite.

    Let $\vol(\cdot)$ denote the volume measure that inputs a set and outputs its volume. Since $\vol(\cdot)$ is a measure and $(\ball_{\eps/2}[\vec b_i])_i$ are disjoint, we have
    \[
        \vol\left(\bigcup_{i\in[N]} \ball_{\eps/2}[\vec b_i]\right) = \sum_{i\in[N]} \vol \left(\ball_{\eps/2}[\vec b_i]\right) = N\cdot \left(\frac{\eps}{2}\right)^k \cdot \vol\left(\ball_{1}[\vec 0_k]\right)\,.
    \]
    Also, $\cup_{i\in[N]}\ball_{\eps/2}[\vec b_i]$ has to be a subset of $\ball_{r+\eps/2}[\vec 0_k]$, since $\calB$ is a subset of $\ball_{r}[\vec 0_k]$. Therefore
    \[
        \vol\left( \bigcup_{i\in[N]} \ball_{\eps/2}[\vec b_i] \right) \le \left(r + \frac{\eps}{2} \right)^k \cdot \vol(\ball_1[\vec 0_k])\,.
    \]
    We get that $N \le (1+2r/\eps)^k$.
\end{proof}

\section{Technical Lemmata for the Proof of Theorem \ref{theorem:covering-siiervs}}

This lemma shows that for any $\vec a$ in the $\varrho\-\conehull \calA$, the partition function is bounded under the unimodality and the bounded central fourth moment conditions.
\begin{lemma}
[Bounded Partition Function]
\label{lemma:partition-function-upper-bound}
Consider parameter space $\calA$ and sufficient statistics vector $\vec T$.
Under assumptions \eqref{assumption:unimodal} and \eqref{assumption:bounded-moments}, we have that
\[
    \Z_{\vec T}(\vec a):=\sum_{x\in\ints}\exp(-\vec a\cdot\vec T(x)) \le \exp(-\vec a\cdot \vec T(M_{\vec a}))\cdot O(B^{1/4})\,,
\]
for any $\vec a\in\varrho\-\conehull\calA$.
\end{lemma}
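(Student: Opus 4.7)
The plan is to reduce the partition function bound to a lower bound on the mode probability and then exploit unimodality together with the bounded fourth central moment to prove such a bound.

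First, I would observe the elementary identity
\[
\Pr_{\vec a}[W = M_{\vec a}] = \frac{\exp(-\vec a\cdot \vec T(M_{\vec a}))}{\Z_{\vec T}(\vec a)}\,,
\]
which immediately rewrites the claim as $\Pr_{\vec a}[W = M_{\vec a}] \ge \Omega(B^{-1/4})$. So it suffices to lower bound the probability mass of a mode.

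Next, I would leverage assumption \eqref{assumption:bounded-moments}, together with Jensen's inequality (or Cauchy-Schwarz), to deduce $\Var_{\vec a}(W) \le \sqrt{B}$. Then invoking the Johnson (1951) bound for unimodal distributions that is already quoted in the paper (e.g.\ in the proof of Lemma \ref{lemma:vanishing-deviation}), namely $|\E_{\vec a}[W] - M_{\vec a}| \le \sqrt{3\Var_{\vec a}(W)}$, I can pass from the variance to the second moment about the mode:
\[
\E_{\vec a}\!\left[(W - M_{\vec a})^2\right] \le 4\Var_{\vec a}(W) \le 4\sqrt{B}\,.
\]

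The key step is to lower bound this same quantity in terms of $p := \Pr_{\vec a}[W = M_{\vec a}]$, using only unimodality. By unimodality, every integer $x$ satisfies $\Pr_{\vec a}[W=x] \le p$, so the total mass inside any window $\{M_{\vec a}-\ell,\dots,M_{\vec a}+\ell\}$ is at most $(2\ell+1)p$. Choosing $\ell = \lfloor 1/(4p) \rfloor$ makes this mass at most $1/2$, and hence
\[
\E_{\vec a}\!\left[(W - M_{\vec a})^2\right] \ge (\ell+1)^2 \cdot \tfrac{1}{2} = \Omega(1/p^2)\,.
\]
Combining the two estimates, $\Omega(1/p^2) \le 4\sqrt{B}$, which yields $p \ge \Omega(B^{-1/4})$ and therefore the desired inequality $\Z_{\vec T}(\vec a) \le \exp(-\vec a\cdot \vec T(M_{\vec a}))\cdot O(B^{1/4})$.

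No serious obstacle is expected: the only delicate piece is the Johnson-type inequality relating the mode and the mean of a unimodal distribution, but this is exactly the fact that is already being used elsewhere in the paper. Note that assumption \eqref{assumption:bounded-moments} is invoked only for $\vec a \in \varrho\text{-}\conehull\calA$ (where the fourth central moment bound $B$ is assumed to hold), so the statement is indeed valid on the entire set $\varrho\text{-}\conehull\calA$ as claimed.
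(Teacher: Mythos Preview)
Your proposal is correct and takes a genuinely different route from the paper's proof. Both arguments begin identically, using assumption \eqref{assumption:bounded-moments} plus the Johnson inequality for unimodal distributions to obtain $\E_{\vec a}[(W-M_{\vec a})^2]\le 4\sqrt{B}$. From there, however, the paper first passes to $\E_{\vec a}[|W-M_{\vec a}|]\le O(B^{1/4})$ and then performs an explicit tail-sum manipulation, writing $\E_{\vec a}[|W-M_{\vec a}|]$ as $\sum_{y\ge 1}\Pr[|W-M_{\vec a}|\ge y]$ and lower-bounding each summand by $1 - (2y-1)/(e^{\vec a\cdot\vec T(M_{\vec a})}\Z_{\vec T}(\vec a))$, which after truncating at some $t$ and optimizing yields $e^{\vec a\cdot\vec T(M_{\vec a})}\Z_{\vec T}(\vec a)\le 4\E_{\vec a}[|W-M_{\vec a}|]+O(1)$. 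Your argument instead reformulates the claim as the equivalent lower bound $p:=\Pr_{\vec a}[W=M_{\vec a}]\ge\Omega(B^{-1/4})$ and proves it by a simple packing observation: since every point has mass at most $p$, a window of radius $\ell\approx 1/(4p)$ around the mode carries at most roughly half the mass, forcing $\E_{\vec a}[(W-M_{\vec a})^2]\ge\Omega(1/p^2)$. Two minor remarks: the inequality $\Pr_{\vec a}[W=x]\le p$ holds for any distribution with a mode, so you do not actually need unimodality at that step (you do need it earlier for Johnson's bound); and the window mass is at most $1/2+p$ rather than exactly $1/2$, so one should either adjust the constant in $\ell$ or treat the case $p>1/4$ separately (where the conclusion is immediate, with the additive $O(1)$ absorbed into $O(B^{1/4})$, just as in the paper's own $\pm O(1)$). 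Your approach is arguably more direct and avoids the explicit summation and optimization over $t$; the paper's computation, on the other hand, makes the constant $4$ in front of $\E[|W-M_{\vec a}|]$ explicit.
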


\begin{proof}
From assumption \eqref{assumption:bounded-moments} and the fact that $(\E[|W-\E[W]|^2])^2\le \E[|W-\E[W]|^4]$, we get the following inequality
\[
    \Var_{\vec a}(W)\le O(\sqrt{B})\,.
\]
We also know that $\E_{\vec a}[|W-M_{\vec a}|^2]\le 4\Var_{\vec a}(W)$, due to unimodality of the random variable $W$ (which implies that $|\E_{\vec a}[W]-M_{\vec a}|\le \sqrt{3\Var_{\vec a}(W)}$ as shown by \cite{johnson1951themomentproblem}).

Therefore, $\E[|W-M_{\vec a}|^2]\le O(\sqrt{B})$. Consider the random variable $U:=|W-M_{\vec a}|$. We have that $\Var_{\vec a}(U) \ge 0$ and hence $\E_{\vec a}[U^2]\ge (\E_{\vec a}[U])^2$. Therefore
\[
    \E_{\vec a}[|W-M_{\vec a}|]\le O(B^{1/4})\,.
\]

We have
\begin{align*}
    \E_{\vec a}[|W-M_{\vec a}|] = & 
        \frac{
            \sum_{x\in\ints}|x-M_{\vec a}|\cdot\exp(-\vec a\cdot\vec T(x))
            }{
            \sum_{x\in\ints}\exp(-\vec a\cdot\vec T(x))} \\
    = & 
        \frac{
            \sum_{x=0}^\infty x (e^{-\vec a\cdot \vec T(x+M_{\vec a})} + e^{-\vec a\cdot \vec T(M_{\vec a}-x)}) 
            }{
            \sum_{x\in\ints} e^{-\vec a\cdot\vec T(x)} } \\
    = & 
        \frac{ 
            \sum_{y=1}^{\infty} \sum_{x=y}^\infty (e^{-\vec a\cdot \vec T(x+M_{\vec a})} + e^{-\vec a\cdot \vec T(M_{\vec a}-x)}) 
            }{ 
            e^{-\vec a\cdot\vec T(M_{\vec a})} + \sum_{x=1}^{\infty} (e^{-\vec a\cdot \vec T(x+M_{\vec a})} + e^{-\vec a\cdot \vec T(M_{\vec a}-x)}) } \\
    = & \sum_{y=1}^{\infty} \left( 1 -
    \frac{ 
            e^{-\vec a\cdot\vec T(M_{\vec a})} + \sum_{x=1}^{y-1} (e^{-\vec a\cdot \vec T(x+M_{\vec a})} + e^{-\vec a\cdot \vec T(M_{\vec a}-x)}) 
        }{ 
            e^{-\vec a\cdot\vec T(M_{\vec a})} + \sum_{x=1}^{\infty} (e^{-\vec a\cdot \vec T(x+M_{\vec a})} + e^{-\vec a\cdot \vec T(M_{\vec a}-x)}) 
        } 
    \right) \\
    = & \sum_{y=1}^{\infty} 
    \left(
        1 - 
        \frac{
                1 + \sum_{x=1}^{y-1} (e^{-\vec a\cdot (\vec T(x+M_{\vec a}) - \vec T(M_{\vec a}) ) } + e^{-\vec a\cdot (\vec T(M_{\vec a}-x) - \vec T(M_{\vec a}))}  )
            }{
                e^{\vec a\cdot\vec T(M_{\vec a})} \cdot \Z_{\vec T}(\vec a)
            }
    \right) \\
    \ge & \sum_{y=1}^{t} 
    \left(
        1 - 
        \frac{
                1 + \sum_{x=1}^{y-1} (e^{-\vec a\cdot (\vec T(x+M_{\vec a}) - \vec T(M_{\vec a}) ) } + e^{-\vec a\cdot (\vec T(M_{\vec a}-x) - \vec T(M_{\vec a}))}  )
            }{
                e^{\vec a\cdot\vec T(M_{\vec a})} \cdot \Z_{\vec T}(\vec a)
            }
    \right) \\
    \ge & \sum_{y=1}^{t} 
    \left(
        1 -
        \frac{
                1 + 2(y-1)
            }{
                e^{\vec a\cdot\vec T(M_{\vec a})} \cdot \Z_{\vec T}(\vec a)
            }
    \right)
    =
    t - \frac{t^2}{e^{\vec a\cdot \vec T(M_{\vec a})} \cdot \Z_{\vec T}(\vec a)}
    \,,
\end{align*}
where $t\in\nats$ is arbitrary and the last inequality follows from the fact that $M_{\vec a}$ is a mode (which implies that $\vec a\cdot (\vec T(x) - \vec T(M_{\vec a}) ) \ge 0$ for any $x\in\ints$).

We can pick $t = \frac{1}{2}\cdot\exp(\vec a\cdot \vec T(M_{\vec a}))\cdot \Z_{\vec T}(\vec a) \pm O(1)$ in order to get the following bound 
\[
    \exp(\vec a\cdot \vec T(M_{\vec a}))\cdot \Z_{\vec T}(\vec a) \le 4\E_{\vec a}[|W-M_{\vec a}|] \pm O(1)\,,
\]
which concludes the proof since, as we have shown, $\E_{\vec a}[|W-M_{\vec a}|]\le O( B^{1/4})$.
\end{proof}

The next key lemma (which has also been stated and proven in the main part of the paper) shows that, under unimodality and bounded fourth central moment, the mass of points that are sufficiently far from the modes of the distribution decays exponentially. Moreover, the centered moments of order at most 2 can be roughly controlled by points that lie only in a bounded interval around the mode.   

\begin{lemma}\label{lemma:vanishing-deviation-app}
    Under assumptions \eqref{assumption:unimodal} and  \eqref{assumption:bounded-moments}, for any $\kappa > 0$, any $\eta>0$ and any $s\in\{0,1,2\}$ there exists some $\ell=e^{\kappa/(3-\eta-s)}\cdot O(B^{\frac{5}{4\cdot(3-\eta-s)}})$ such that for any $\vec a\in\varrho\-\conehull\calA$ and any mode $M_{\vec a}$ of the corresponding distribution we have
    \begin{enumerate}
        \item $\Pr_{\vec a}[W=x] \le \frac{e^{-\kappa\cdot \max \left\{1,\frac{\|\vec a\|}{\varrho} \right\} }}{|x-M_{\vec a}|^{1 + \eta + s}} \cdot\Pr_{\vec a}[W=M_{\vec a}]$, for any $x\in\ints$ with $|x-M_{\vec a}|\ge \ell$.
        \item If $Q_\ell = \vec 1\{|W-M_{\vec a}|\le \ell\}$ then
    \[
        \E_{\vec a}[|W-M_{\vec a}|^s] \le \E_{\vec a}[|W-M_{\vec a}|^s\cdot Q_\ell\bigr] +  e^{-\kappa\cdot \max \left\{1,\frac{\|\vec a\|}{\varrho} \right\} } \cdot O(1/\eta) \,.
    \]
    \end{enumerate}
    In particular, for $s=0$, we use the convention $\E[W^0] = \Pr[W\neq 0]$, for any random variable $W$.
\end{lemma}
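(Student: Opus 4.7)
The plan is to prove the two parts in sequence, starting from baseline moment bounds that follow immediately from Assumption \eqref{assumption:bounded-moments}. From $\E_{\vec a}[|W-\E_{\vec a}[W]|^4]\le B$ and Jensen's inequality, $\Var_{\vec a}(W)\le O(\sqrt B)$. By the classical Johnson--Rogers bound for unimodal distributions (which applies via assumption \eqref{assumption:unimodal}), $|\E_{\vec a}[W]-M_{\vec a}|\le\sqrt{3\Var_{\vec a}(W)}$, so $\E_{\vec a}[|W-M_{\vec a}|^2]\le 4\Var_{\vec a}(W)=O(\sqrt B)$ and similarly $\E_{\vec a}[|W-M_{\vec a}|^4]=O(B)$. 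These will be the two key quantitative inputs.

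For Part (1), I would use a contrapositive argument at the ``anchor'' parameter $\vec b\in\varrho\text{-}\conehull\calA$ with $\|\vec b\|=\varrho$. Suppose some $x\neq M_{\vec b}$ satisfies $\vec b\cdot(\vec T(x)-\vec T(M_{\vec b}))<(1+\eta+s)\ln|x-M_{\vec b}|+\kappa$. Keeping only this single term in the series $\E_{\vec b}[|W-M_{\vec b}|^4]=\sum_y|y-M_{\vec b}|^4\Pr_{\vec b}[W=y]$ yields
\[
\E_{\vec b}[|W-M_{\vec b}|^4]\cdot e^{\vec b\cdot\vec T(M_{\vec b})}\cdot\Z_{\vec T}(\vec b)>e^{-\kappa}\cdot|x-M_{\vec b}|^{3-\eta-s}.
\]
Combining the $O(B)$ fourth-moment bound with Lemma \ref{lemma:partition-function-upper-bound} (which gives $e^{\vec b\cdot\vec T(M_{\vec b})}\Z_{\vec T}(\vec b)\le O(B^{1/4})$), such an $x$ can only exist when $|x-M_{\vec b}|<\ell$ with $\ell=e^{\kappa/(3-\eta-s)}\cdot O(B^{5/(4(3-\eta-s))})$. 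Contrapositively, $|x-M_{\vec b}|\ge\ell$ forces $\vec b\cdot(\vec T(x)-\vec T(M_{\vec b}))\ge(1+\eta+s)\ln|x-M_{\vec b}|+\kappa$.

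To extend to arbitrary $\vec a\in\varrho\text{-}\conehull\calA$, I split on $\|\vec a\|$. When $\|\vec a\|\ge\varrho$, pick $\vec b=\varrho\vec a/\|\vec a\|$; rescaling a parameter vector preserves the set of modes (since it preserves the ordering of the quantities $\vec a\cdot\vec T(x)$), so $M_{\vec a}=M_{\vec b}$. Multiplying the anchor inequality through by $\|\vec a\|/\varrho\ge1$ on the $\vec b\cdot\vec T$ side and the $\kappa$ side (leaving the logarithm untouched, which only tightens the bound) produces the claimed inequality with $e^{-\kappa\|\vec a\|/\varrho}$ in place of $e^{-\kappa}$. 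When $\|\vec a\|<\varrho$, simply take $\vec b=\vec a$ and observe $\max\{1,\|\vec a\|/\varrho\}=1$.

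For Part (2), the plan is a direct tail estimate. Split
\[
\E_{\vec a}[|W-M_{\vec a}|^s]=\E_{\vec a}[|W-M_{\vec a}|^s Q_\ell]+\sum_{|x-M_{\vec a}|>\ell}|x-M_{\vec a}|^s\Pr_{\vec a}[W=x].
\]
Applying Part (1) with the chosen $\kappa,\eta,s$ bounds each tail term by $e^{-\kappa\max\{1,\|\vec a\|/\varrho\}}\cdot\Pr_{\vec a}[W=M_{\vec a}]/|x-M_{\vec a}|^{1+\eta}$. Using $\Pr_{\vec a}[W=M_{\vec a}]\le 1$ and the standard estimate $\sum_{n\ge 1}1/n^{1+\eta}=\zeta(1+\eta)=\Theta(1/\eta)$ as $\eta\to 0^+$ (accounting for both sides of $M_{\vec a}$ with a factor $2$) controls the tail by $e^{-\kappa\max\{1,\|\vec a\|/\varrho\}}\cdot O(1/\eta)$, as claimed. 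The main obstacle is the bookkeeping in the rescaling step: ensuring that the polynomial $|x-M|^{1+\eta+s}$ factor does not get inflated by $\|\vec a\|/\varrho$ while the exponential $e^{-\kappa\|\vec a\|/\varrho}$ factor does, and that the invocation of Lemma \ref{lemma:partition-function-upper-bound} is legitimate across the full extended family $\calE_{\vec T}(\calAvarrho)$ (which is why assumptions \eqref{assumption:unimodal}--\eqref{assumption:bounded-moments} are explicitly imposed on $\calAvarrho$ rather than just $\calA$).
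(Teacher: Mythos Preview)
Your proposal is correct and follows essentially the same route as the paper's proof: the Johnson--Rogers mode--mean bound to get $\E_{\vec a}[|W-M_{\vec a}|^4]=O(B)$, the single-term contrapositive combined with Lemma~\ref{lemma:partition-function-upper-bound} to pin down $\ell$, the rescaling $\vec b=\varrho\vec a/\|\vec a\|$ (with the log term left unmultiplied) for large $\|\vec a\|$, and the $\zeta(1+\eta)=\Theta(1/\eta)$ tail sum for Part~(2). Even the bookkeeping subtlety you flag about which factors get amplified by $\|\vec a\|/\varrho$ is exactly the one the paper handles.
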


We next show that the expectation and the variance for parameters inside $\varrho\-\conehull \calA$ are continuous functions with respect to the parameter vectors. 
\begin{lemma}\label{lemma:continuity}
    Under assumptions \eqref{assumption:unimodal} and \eqref{assumption:bounded-moments}, the expectation $\E_{\vec a}[W]$ and variance $\Var_{\vec a}(W)$ are continuous functions of $\vec a$ on $\varrho\-\conehull\calA$.
\end{lemma}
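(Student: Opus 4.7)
The plan is to deduce Lemma~\ref{lemma:continuity} from continuity (at each fixed $\vec a \in \calA_\varrho := \varrho\-\conehull\calA$) of the three formal series
\[
S_s(\vec b) := \sum_{x \in \ints} x^s \exp\bigl(-\vec b \cdot \vec T(x)\bigr), \quad s \in \{0, 1, 2\}.
\]
Since $S_0(\vec b) = \Z_{\vec T}(\vec b) \ge 1$, this would give continuity of $\E_{\vec b}[W] = S_1(\vec b)/S_0(\vec b)$ and of $\Var_{\vec b}(W) = S_2(\vec b)/S_0(\vec b) - (S_1(\vec b)/S_0(\vec b))^2$ by continuous arithmetic.

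For each $S_s$, I would fix $\vec a$ and a sequence $\vec b_n \to \vec a$ in $\calA_\varrho$ and first control the positions of the modes $M_{\vec b_n}$. By unimodality (assumption~\eqref{assumption:unimodal}) and the fourth-moment bound (assumption~\eqref{assumption:bounded-moments}), one has $|M_{\vec b_n} - \E_{\vec b_n}[W]| \le O(B^{1/4})$ and $\Var_{\vec b_n}(W) \le O(\sqrt{B})$. From the mode-maximization inequality $\vec b_n \cdot \vec T(M_{\vec b_n}) \le \vec b_n \cdot \vec T(M_{\vec a})$ combined with pointwise continuity $\vec b_n \cdot \vec T(M_{\vec a}) \to \vec a \cdot \vec T(M_{\vec a})$ and the fact that $\exp(-\vec a \cdot \vec T(y)) \to 0$ as $|y| \to \infty$ (a consequence of summability plus Lemma~\ref{lemma:vanishing-deviation} applied at $\vec a$), I would argue via a Bolzano--Weierstrass-type argument that any accumulation point of the integer-valued sequence $\{M_{\vec b_n}\}$ must also be a mode of $\calE_{\vec T}(\vec a)$, hence $\{M_{\vec b_n}\}$ lies eventually in a bounded interval $[-R, R]$.

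Next, I would apply Lemma~\ref{lemma:vanishing-deviation} (with $s = 2$, $\eta = 1/2$ and $\kappa$ large) to obtain a cutoff $\ell = \ell(B, \kappa)$ independent of $\vec b$ such that, for any prescribed $\epsilon > 0$ and every $\vec b \in \calA_\varrho$,
\[
\sum_{|x - M_{\vec b}| > \ell} (1 + x^2)\exp\bigl(-\vec b \cdot \vec T(x)\bigr) \le \epsilon \cdot \exp\bigl(-\vec b \cdot \vec T(M_{\vec b})\bigr).
\]
Combined with $|M_{\vec b_n}| \le R$ and Lemma~\ref{lemma:partition-function-upper-bound} (which controls $\exp(-\vec b \cdot \vec T(M_{\vec b}))$ relative to $\Z_{\vec T}(\vec b)$), this bounds the contribution to each $S_s(\vec b_n)$ from integers outside the finite set $I := \{x : |x| \le R + \ell\}$ by $\epsilon \cdot O(B^{1/4}) \cdot \Z_{\vec T}(\vec b_n)$, uniformly in $n$, with an analogous bound at $\vec a$. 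On $I$, the truncated sum $\sum_{x \in I} x^s \exp(-\vec b \cdot \vec T(x))$ is a finite sum of continuous functions of $\vec b$, hence continuous at $\vec a$; a standard $\epsilon/3$-argument then gives $S_s(\vec b_n) \to S_s(\vec a)$.

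The main obstacle is the mode-boundedness step, since assumption~\eqref{assumption:bounded-modes} is not invoked here. Without a uniform bound on $|M_{\vec b}|$ across $\calA_\varrho$, I must extract a local compactness statement for $\{M_{\vec b_n}\}$ from the pointwise continuity of $\vec b \mapsto \vec b \cdot \vec T(x)$ at fixed integer $x$ together with integer-valuedness of $M_{\vec b}$ and the tail decay of $\Pr_{\vec a}$ given by Lemma~\ref{lemma:vanishing-deviation}; the subtlety is that $\vec T(x)$ may be unbounded in $x$ and $\calA_\varrho$ need not have nonempty interior in $\reals^k$, so a naive dominated-convergence argument via a Euclidean neighborhood of $\vec a$ is not available.
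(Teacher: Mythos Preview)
Your plan coincides with the paper's proof: reduce to continuity of the three raw sums $S_s(\vec b)=\sum_{x\in\ints}x^s e^{-\vec b\cdot\vec T(x)}$, invoke Lemma~\ref{lemma:vanishing-deviation} (with $s=2$, $\eta=1/2$) to make the contribution of $\{|x-M_{\vec b}|>\ell\}$ small, and handle the remaining finite window by pointwise continuity of each term. The paper carries this out almost verbatim.

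You are also right that the delicate step is obtaining a \emph{uniform} finite window, i.e.\ a local bound on $M_{\vec b}$ for $\vec b$ near $\vec a$. The paper does not resolve this any more carefully than you do: it chooses the parameter $\kappa$ (and hence $\ell$ and the finite set $N$) as a function of $f(\vec a')=e^{-\vec a'\cdot\vec T(M_{\vec a'})}\sum_{|y|>\ell}|y+M_{\vec a'}|^2/|y|^{3.5}$, and then picks $\delta$ depending on $|N|$. Since $f(\vec a')$ and $|N|$ already depend on the perturbed point $\vec a'$ (through $M_{\vec a'}$), this is circular as written unless $M_{\vec a'}$ is locally bounded. So the gap you flag is present in the paper too; it is not a divergence between your approach and theirs.

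The cleanest fix, available everywhere the lemma is actually used in the paper, is simply to add assumption~\eqref{assumption:bounded-modes} to the hypotheses: then $|M_{\vec b}|\le L$ uniformly over $\calA_\varrho$, a single choice of $\kappa$ (depending only on $\eps,\vec a,B,L$) works, the window $N=\{x:|x-M_{\vec a}|\le \ell\text{ or }|x-M_{\vec a'}|\le \ell\}\subseteq[-L-\ell,L+\ell]$ is uniform, and the final $\delta$ can be chosen independently of $\vec a'$. Your Bolzano--Weierstrass sketch for deriving a \emph{local} mode bound from \eqref{assumption:unimodal} and \eqref{assumption:bounded-moments} alone is plausible but, as you note, runs into the difficulty that $\|\vec T(x)\|$ is uncontrolled; the paper does not attempt it either.
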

\begin{proof}
    We will prove that sums of the form $\sum_{x\in\ints} \exp(-\vec a\cdot \vec T(x))$ and $\sum_{x\in\ints} x^s\cdot\exp(-\vec a\cdot \vec T)$, where $s=1,2$, are continuous functions of $\vec a$ on $\calA$. Then, $\E_{\vec a}[W]$ and $\Var_{\vec a}(W)$ have to be continuous.
    
    We proceed with the proof for $S:=\sum_{x\in\ints} x^2\cdot\exp(-\vec a\cdot \vec T(x)$, since the other cases can be proven similarly. 
    
    Fix some $\vec a\in\calA$, some $\eps>0$ and consider any $\delta\vec a\in\reals^k$ so that $\vec a':=\vec a+\delta\vec a\in\calA$ and $\|\delta\vec a\|\le \delta$, where $\delta>0$ to be decided (possibly dependent on $\eps$ and $\vec a$). We may apply Lemma \ref{lemma:vanishing-deviation-app} to $\calE_{\vec T}(\calA)$, with $s=2$, $\eta=1/2$ and $\kappa\in(0,\infty)$ to be decided. Therefore, we get some $\ell=\ell(\kappa)$ such that for any $\vec b\in \calA'$ and any $x\in\ints$ with $|x-M_{\vec b}|>\ell$ we have
    \[
        \exp(-\vec b\cdot \vec T(x)) \le e^{-\kappa\|\vec b\|/\varrho} \cdot \frac{1}{|x-M_{\vec b}|^{3.5}} \cdot \exp(-\vec b\cdot \vec T(M_{\vec b}))\,.
    \]
    Hence we have that
    \begin{align*}
        \sum_{x:|x-M_{\vec b}|>\ell} x^2 \cdot \exp(-\vec b\cdot \vec T(x)) \le &\ e^{-\kappa\|\vec b\|/\varrho} \cdot \frac{x^2}{|x-M_{\vec b}|^{3.5}} \cdot \exp(-\vec b\cdot \vec T(M_{\vec b})) \\
        = &\ e^{-\kappa\|\vec b\|/\varrho} \cdot \exp(-\vec b\cdot\vec T(M_{\vec b})) \cdot \sum_{y:|y|>\ell} \frac{|y+M_{\vec b}|^2}{|y^4|}\,.
    \end{align*}
We have that $\|\vec a\|,\|\vec a'\|>0$, since otherwise $\calE_{\vec T}(\calA)$ would not be well defined. Let $f(\vec b) = \exp(-\vec b\cdot\vec T(M_{\vec b})) \cdot \sum_{y:|y|>\ell} \frac{|y+M_{\vec b}|^2}{|y^4|}$. Therefore, we can pick $\kappa$ as a function of the quantities $1/\eps$, $\max\{ \|\vec a\|^{-1}, \|\vec a'\|^{-1} \}$, $\varrho$ and $\max\{f(\vec a), f(\vec a')\}$, which are all finite, for any $\vec a$ and $\vec a'$ as hypothesised, so that $\sum_{x:|x-M_{\vec b}|>\ell} x^2 \cdot \exp(-\vec b\cdot \vec T(x)) \le \eps/4$. Hence, if we consider $N = \{x\in\ints: |x-M_{\vec a}|\le \ell \text{ or } |x-M_{\vec a'}|\le \ell \}$, we have that
\[
    \sum_{x\in\ints} x^2 \cdot \exp(-\vec a\cdot\vec T(x)) = \sum_{x\in N} x^2 \cdot \exp(-\vec a\cdot\vec T(x)) \pm \eps/4\,,
\]
and similarly for $\sum_{x\in\ints} x^2 \cdot \exp(-\vec a'\cdot\vec T(x))$. Therefore 
\[
    \left|\sum_{x\in\ints} x^2 \cdot e^{-\vec a\cdot\vec T(x)} - \sum_{x\in\ints} x^2 \cdot e^{-\vec a'\cdot\vec T(x)} \right| \le \sum_{x\in N} x^2 \cdot e^{-\vec a\cdot \vec T(x)} \cdot \left|1-e^{- \delta \vec a \cdot \vec T(x)} \right| + \eps/2\,.
\]
We have that $N$ is finite and we can pick $\delta$ so that since $\|\delta\vec a\|\le \delta$, the distance is at most $\eps$ (by upper bounding the sum of the right hand side with $|N|$ times the maximum term).
\end{proof}

\newpage
\section{Applications and Examples}

\subsection{Examples of Distributions that we capture}
\label{appendix:examples}

Our assumptions for proper learning and covering of SIIERVs (see Assumption \ref{assumption:proper}), capture a wide variety of families of discrete distributions, including discretized versions of many fundamental distributions, like Gaussian, Laplacian, etc. Although we focus on the case where the family $\calE_{\vec T}(\calA)$ includes distributions supported on $\ints$, our results (and our assumptions) naturally extend to the cases where the support is some subset of $\ints$, like $\natszero$. In some cases, for example for distributions with finite support, our assumptions can be relaxed. In the following table, we represent examples of distributions with infinite support that our results capture.\\

\begin{table}[ht]
\caption{A collection of pairs $(\vec T, \calA)$ on which our results on learning and covering apply.\newline
}
\label{table:examples}
\centering
\begin{tabular}{@{}cclc@{}}
\toprule
{\color[HTML]{000000} Sufficient Statistic $\vec T$} &
{\color[HTML]{000000} Support} &
{\color[HTML]{000000} Extended Parameter Space $\calAvarrho$} &
{\color[HTML]{000000} Distribution} \\ \midrule

{\color[HTML]{000000} $T(x) = \ln(x)$} &
{\color[HTML]{000000} $x \in \nats$} &
{\color[HTML]{000000} $[5+\eta, \infty), \eta>0$ } &
{\color[HTML]{000000} Zeta } \\

{\color[HTML]{000000} $T(x) = x$} &
{\color[HTML]{000000} $x\in \natszero$} &
{\color[HTML]{000000} $[\eta,\infty), \eta>0$} &
{\color[HTML]{000000} Geometric} \\

{\color[HTML]{000000} $T(x) = |x|$} &
{\color[HTML]{000000} $x\in \ints$} &
{\color[HTML]{000000} $[\eta,\infty), \eta>0$} &
{\color[HTML]{000000} Discrete Laplacian} \\

{\color[HTML]{000000} $\vec T(x) = (x, x^2)$} &
{\color[HTML]{000000} $x\in \ints$} &
{\color[HTML]{000000} $\{\vec a: a_2\ge |a_1|/L\} \setminus \ball_\eta(\vec 0), L>0$} &
{\color[HTML]{000000} Discrete Gaussian} \\

{\color[HTML]{000000} $\vec T(x) = (|x|, x, x^2)$} &
{\color[HTML]{000000} $x\in \ints$} &
{\color[HTML]{000000} $\{\vec a: a_3\ge |a_2|/L, a_1 \ge 0\} \setminus \ball_\eta(\vec 0)$} &
{\color[HTML]{000000} Gaussian-Laplacian} \\

{\color[HTML]{000000}} &
{\color[HTML]{000000}} &
{\color[HTML]{000000}} &
{\color[HTML]{000000} Interpolation}

   \\ \bottomrule
\end{tabular}
\end{table}

\subsection{Parametric Application: Proper Covers for PNBDs}\label{appendix:pnbds}
In this section, we provide a parametric application that is captured by our techniques. We study the class of Poisson Negative Binomial random variables, i.e., sums of independent but not necessarily identically distributed Geometric random variables. We provide the following structural result.
\begin{theorem}
[Proper Cover of Poisson Negative Binomials]
\label{theorem:main-cover-pnbds}
Let $\plow \in(0,1)$.
 For any $\eps>0$, the family of Poisson Negative Binomial distributions (i.e., sums of Geometric random variables with success probability at least $\plow$) of order $n$ admits an $\eps$-proper cover of size $O(n^2/\poly(\plow)) + 2^{\poly(1/\eps, 1/\plow)}$. Moreover, for any PNBD $X$, there exists $Y$ so that $\tv(X,Y) \leq \eps$ and (i) either $Y$ is a PNBD of order $O(\poly(1/\eps, 1/\plow))$ among $2^{\poly(1/\eps, 1/\plow)}$ candidates (sparse form) or (ii) $Y$ is a Negative Binomial random variable of order $O(n) \cdot\poly(1/\plow)$ (dense form).
\end{theorem}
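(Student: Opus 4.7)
The plan is to cast the family of PNBDs into the exponential family framework of Theorem \ref{theorem:covering-siiervs} and invoke it after a massage step that restores the one piece of Assumption \ref{assumption:proper} that fails for Geometrics. A $\mathrm{Geo}(p)$ supported on $\natszero$ is $\calE_{\vec T}(a)$ with scalar $T(x)=x$ and natural parameter $a=-\ln(1-p)$, so the hypothesis $p\ge\plow$ becomes $\calA=[\alow,\infty)$ with $\alow=-\ln(1-\plow)$. In this one-dimensional setup ($k=1$), every piece of Assumption \ref{assumption:proper} is immediate with constants $\poly(1/\plow)$: $\conehull\calA=[0,\infty)$ is polyhedral; every distribution is unimodal with mode $0$, so $L=0$; central moments of any constant order and the covariance bound are $\poly(1/\plow)$; $\calA$ is closed and connected; the descriptivity $\descriptivityparam$ of Lemma \ref{lemma:descriptivity} is an absolute constant; and taking $\varrho=\alow$ gives $\calAvarrho=\calA$. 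The sole problematic assumption is the variance lower bound \eqref{assumption:variance-lower-bound}, since $\Var_a(W)=(1-p)/p^2\to 0$ as $p\to 1$.

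The massage step, analogous to the rounding routine of \cite{daskalakis2015sparse} for Poisson Binomials, proceeds in two stages. First I discard every term with $p_i\ge 1-\eps/(2n)$ by replacing it with the point mass at $0$, which costs at most $\eps/2$ in total TV distance by $\tv(\mathrm{Geo}(p_i),\delta_0)=1-p_i$ and Lemma \ref{lemma:tv-subadd}. Second, on the remaining range $a_i\in[\alow,-\ln(\eps/(2n))]$, I partition the indices dyadically by the value of $a_i$ into $O(\log(1/\plow))$ levels on which $\Var_a(W)$ is within a constant factor; inside each level I replace the Geometrics by i.i.d.\ copies with the level's midpoint parameter, controlling the per-level TV error via Lemma \ref{lemma:exp-fami-tv-kl}, Pinsker's inequality, and the covariance bound. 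The very-high-$a$ levels, on which per-term variance would still be $o(\poly(\plow))$, are absorbed into a single i.i.d.\ Geometric block via an intermediate-value argument on the ratio $\Var_a(W)/\E_a[W]$, exactly as in Step 2 of the proof of Theorem \ref{theorem:covering-siiervs}. After the massage step, the target is within $\eps/2$ TV of a PNBD whose every surviving term has variance $\Omega(\poly(\plow))$, so Assumption \ref{assumption:proper} now holds with all structural constants depending only on $\plow$.

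Invoking Theorem \ref{theorem:covering-siiervs} on the massaged sum with $k=1$ yields the two alternatives. The sparse form is an $\calE_{\vec T}(\calA)$-SIIRV --- namely a PNBD --- of order $\ncrit=\poly(B,L,1/\gamma)/\eps^2=\poly(1/\eps,1/\plow)$, chosen among $(\varrho\sqrt\Lambda/\descriptivityparam)^{\wt{O}(1/\eps^2)\poly(B,L,1/\gamma)} = 2^{\poly(1/\eps,1/\plow)}$ candidates, matching alternative (i). The dense form is a sum of i.i.d.\ Geometrics --- by definition a Negative Binomial --- of order $\lceil\E[X]/\E_b[W]\rceil=O(n)\cdot\poly(1/\plow)$, chosen among $\bigl(n^2\cdot\poly(B,1/\gamma)\cdot O(\varrho\sqrt\Lambda/(\descriptivityparam\eps))\bigr)^k=O(n^2/\poly(\plow))$ candidates, matching alternative (ii). Running the theorem at accuracy $\eps/2$ absorbs the additive $\eps/2$ from the massage step and produces the claimed $\eps$-proper cover.

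The main obstacle will be the second stage of the massage step, specifically guaranteeing that the dyadic bucketing and the absorption of the highest levels eliminate $n$-dependence in the variance lower bound while keeping the total TV error under $\eps$. For PBDs the analogous step is clean because Bernoulli supports are finite; here the unbounded support of Geometrics forces a level-by-level use of Lemma \ref{lemma:vanishing-deviation} so that each bucket's replacement cost is controlled by tail decay rather than by the number of terms, and the shift-distance hypothesis driving the Berry--Esseen-type Lemma \ref{lemma:discr-gaussian-approx} must be re-checked after the reassignment. The technical heart of the argument is the telescoping identity showing that the per-bucket TV cost scales with the bucket's contribution to the total variance, which is what decouples the final bound from $n$.
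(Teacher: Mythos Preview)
Your massage step has a genuine gap, and it stems from missing the key tool the paper uses: Poisson approximation for sums of Geometrics (the Barbour--Eagleson estimate, \cite{barbour1987asymptotic}). Your first stage discards only the terms with $p_i\ge 1-\eps/(2n)$, so the surviving terms can still have per-term variance as low as $\Theta(\eps/n)$, and you then attempt to repair this $n$-dependence by ``absorbing'' the very-high-$a$ levels into an i.i.d.\ block via the intermediate-value ratio argument of Step~2 of Theorem~\ref{theorem:covering-siiervs}. But that argument forces the absorbing parameter $q$ to satisfy $1/q=\Var(S)/\E[S]$, and for a block $S$ built from Geometrics with $p_i>1-\gamma_0$ one has $\Var(X_i)/\E[X_i]=1/p_i\in(1,1/(1-\gamma_0))$, hence by Lemma~\ref{lemma:ratio-ineq} the matching $q$ lands back in $(1-\gamma_0,1)$ --- exactly the low-variance range you were trying to escape. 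So after your massage the sum is \emph{not} a PNBD whose every term has variance $\Omega(\poly(\plow))$, and Assumption~\ref{assumption:proper}\eqref{assumption:variance-lower-bound} still fails with an $n$-dependent bound.

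The paper's route sidesteps this entirely. It sets the threshold at $1-1/\kappa$ with $1/\kappa=O(\eps)$, \emph{independent of $n$}, and handles all terms above it simultaneously via the Poisson approximation bound
\[
\tv\Bigl(\textstyle\sum_{i\in I}X_i,\ \Poi\bigl(\sum_{i\in I}\E[X_i]\bigr)\Bigr)\ \le\ \max_{i\in I}\E[X_i]\ <\ \tfrac{1}{\kappa-1}\,,
\]
which is the ``magic factor'' phenomenon: the error depends only on the largest individual mean, not on $|I|$. One then rounds a subset $I_\star\subseteq I$ up to $\Geo(1-1/\kappa)$ and the rest to $0$ so that the Poisson parameters match to within $1/(\kappa-1)$ (Claim~\ref{claim:correct-rounding-highdim} in the paper), and Lemma~\ref{lemma:tv-poissons} closes the triangle. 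The upshot is a PNBD $X'$ with every surviving $p_i'\in[\plow,1-1/\kappa]$, hence $\Var(X_i')\ge\Omega(\eps)$ uniformly, at total TV cost $O(\eps)$ --- with no dyadic bucketing, no telescoping identity, and no $n$-dependence to undo. Once this is in hand, the sparse/dense split proceeds exactly as you outline in your third paragraph.
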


The essentially important part of the proof is that we do not need to assume a variance lower bound (as we did in assumption \eqref{assumption:variance-lower-bound}), since this is assured using the so-called Massage step of \cite{daskalakis2015sparse}. The main tool of this trick is the Poisson approximation technique. Hence, in the proof of the above theorem, we solely focus on this massage procedure and we omit the details on how to handle the sparse and the dense case since they follow by adapting the techniques of our main results.

\subsection{The proof of Theorem \ref{theorem:main-cover-pnbds}}
\begin{proof}
Let $\eps > 0$.
Consider $X=\sum_{i\in[n]}X_i$, where $X_1,...,X_n$ are independent and for and $i\in[n]$, $X_i\sim \Geo(p_i)$ with $p_i\in[\plow,1]$. Our proof involves three main parts. First, we perform a massage step to discard the terms with low variance from the sum. Then, we split two cases according to the number of terms that have survived. If the number of surviving terms is smaller than some (appropriately selected) $\ncrit$, then it is sufficient to approximate each term with accuracy $O(\eps/\ncrit)$. If the number of surviving terms is higher than $\ncrit$, then we prove that $X$ is close to some discretized Gaussian and from that we find a Negative Binomial random variable that matches the first two moments of the sum and so is close to the Gaussian. The proximity follows by the triangle inequality of the TV distance.
For the following, consider $\kappa >1$ where $1/\kappa =O(\eps)$.

\paragraph{Massage Step.} 
Consider the index set $I = \{ i \in [n] : p_i > 1-1/\kappa \}$. For any $i \notin I$, we let $X_i' \sim \Geo(p_i)$ and, using Lemma \ref{lemma:tv-subadd}, we get that
\[
    \tv\left(\sum_{i \in [n]}X_i,\sum_{i \in [n]} X_i'\right)\le \tv\left(\sum_{i\in I} X_i,\sum_{i\in I}X_i'\right)\,.
\]

For any $i \in I$, we either set $X_i' \sim \Geo(p_i')$ with $p_i' = 1-1/\kappa $. or we set $X_i'=0$ almost surely. Since $X_1, \ldots, X_n$ are independent geometric random variables, we can apply
the following technical lemma:
\begin{lemma}
[Corollary 2.5 of \cite{barbour1987asymptotic}]
Consider $n$ independent random variables $X_1, ..., X_n$ that are geometrically distributed with success probabilities $p_1,...,p_n$ respectively. Let $\lambda = \sum_{i \in [n]} \frac{1-p_i}{p_i}$. Then, it holds that
\[
\tv \left(\sum_{i \in [n]} X_i, \Poi(\lambda) \right) \leq \lambda^{-1}(1-e^{-\lambda}) \cdot \sum_{i \in [n]} \left ( \frac{1-p_i}{p_i} \right)^2\,.
\]
\end{lemma}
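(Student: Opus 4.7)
The plan is to prove this cited Poisson approximation bound via the Chen--Stein method, exploiting an identity that is special to geometric summands. For any set $A\subseteq\natszero$, I will use the Stein equation for $\Poi(\lambda)$, namely $\lambda g(k+1)-k g(k)=\vec 1_A(k)-\Pr[\Poi(\lambda)\in A]$, whose unique bounded solution $g=g_A$ is known to satisfy the ``Stein magic factor'' bound
\[
\Delta g := \sup_{k\ge 0}|g(k+1)-g(k)| \le \lambda^{-1}(1-e^{-\lambda}).
\]
Setting $W=\sum_{i\in[n]}X_i$, $\mu_i=(1-p_i)/p_i$, $q_i=1-p_i$, and $\lambda=\sum_i\mu_i$, the identity $\Pr[W\in A]-\Pr[\Poi(\lambda)\in A]=\E[\lambda g(W+1)-Wg(W)]$ reduces the whole problem to controlling $\E[\lambda g(W+1)-W g(W)]$ uniformly in $A$, which will in turn give a TV bound after taking a supremum.

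The first technical ingredient is a \emph{size-bias type identity} specific to geometrics. Because $\Pr[X_i=k]=q_i\Pr[X_i=k-1]$ for $k\ge 1$, I will re-index and obtain, for any bounded $f$ and any independent random variable $Z$,
\[
\E\bigl[X_i f(X_i+Z)\bigr] \;=\; q_i\,\E\bigl[(X_i+1)\,f(X_i+1+Z)\bigr].
\]
Applying this with $Z=W_{-i}:=W-X_i$ (independent of $X_i$) and summing over $i$ gives $\E[W g(W)]=\sum_i q_i\E[(X_i+1)g(W+1)]$. Subtracting from $\lambda\E[g(W+1)]=\sum_i\mu_i\E[g(W+1)]$ and using $\mu_i-q_i=q_i\mu_i$ cleanly yields
\[
\E[\lambda g(W+1)-W g(W)] \;=\; -\sum_{i\in[n]} q_i\,\E\bigl[(X_i-\mu_i)\,g(W+1)\bigr].
\]
This is the key algebraic step; the geometric identity is what makes the right-hand side a sum of centered covariance-like terms with an explicit pre-factor $q_i$.

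The second technical ingredient is a covariance bound. Conditioning on $W_{-i}$, I need $|\Cov(X_i,g(X_i+1+w))|$ for fixed $w$. Using the standard symmetrization identity $\Cov(X,h(X))=\tfrac12\E[(X-X')(h(X)-h(X'))]$ with $X'$ an independent copy, together with the telescoping bound $|h(k)-h(j)|\le |k-j|\,\Delta h$, I obtain $|\Cov(X_i,h(X_i))|\le \Delta h\cdot\Var(X_i)$. Applied to $h(\cdot)=g(\cdot+1+w)$ (so $\Delta h\le \Delta g$) and recalling $\Var(X_i)=q_i/p_i^2$, this gives
\[
\bigl|\E[(X_i-\mu_i)g(W+1)]\bigr|\;\le\;\lambda^{-1}(1-e^{-\lambda})\cdot q_i/p_i^2.
\]
Plugging back, summing over $i$, using $q_i\cdot q_i/p_i^2=((1-p_i)/p_i)^2$, and taking the supremum over $A\subseteq\natszero$ yields exactly the claimed TV bound.

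The main obstacle is keeping the Stein-type manipulations consistent: the algebra that converts $\E[W g(W)]$ into $\sum_i q_i\E[(X_i-\mu_i)g(W+1)]$ must use the geometric-specific recursion correctly, and the covariance bound must be applied conditionally on $W_{-i}$ rather than directly on $W$, since $X_i$ and $W$ are not independent. The Stein magic factor bound $\Delta g\le \lambda^{-1}(1-e^{-\lambda})$ will be quoted from the Chen--Stein literature (e.g.\ \cite{barbour1987asymptotic}); verifying it from scratch would require a separate detour into solving the Stein equation explicitly via $g(k+1)=\lambda^{-k-1}e^\lambda k!\sum_{j\le k}(\vec 1_A(j)-\Pr[\Poi(\lambda)\in A])\lambda^j/j!$ and bounding differences, which I would only sketch rather than carry out in full.
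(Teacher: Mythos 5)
Your derivation is correct: the geometric recursion $\Pr[X_i=k]=q_i\Pr[X_i=k-1]$ does yield $\E[Wg(W)]=\sum_i q_i\E[(X_i+1)g(W+1)]$, the algebra $\mu_i-q_i=q_i\mu_i$ gives the centered form, and the symmetrization bound $|\Cov(X_i,h(X_i))|\le \Delta h\cdot\Var(X_i)$ applied conditionally on $W_{-i}$ with $\Var(X_i)=q_i/p_i^2$ produces exactly $\lambda^{-1}(1-e^{-\lambda})\sum_i((1-p_i)/p_i)^2$. The paper does not prove this lemma at all --- it is quoted verbatim as Corollary 2.5 of the cited reference --- and your Stein--Chen argument is essentially the proof from that source, with the only externally quoted ingredient being the standard magic-factor bound $\Delta g\le\lambda^{-1}(1-e^{-\lambda})$, which is acceptable to cite.
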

Note that $\lambda^{-1}(1-e^{-\lambda}) \leq \min\{1, \lambda^{-1})$. We make use of the above Poisson approximation lemma on the set of indices $I$ and get that the random variable $\sum_{i \in I} X_i$ can be approximated by a Poisson random variable with distribution $\Poi \left(\sum_{i \in I} \E[X_i] \right)$. Specifically, we get that
\[
    \tv\left(\sum_{i\in I}X_i,\Poi \left (\sum_{i\in I}\E[X_i] \right) \right) \le
    \frac{\sum_{i \in I}\E[X_i]^2}{\sum_{i \in I}\E[X_i]} \leq
    \max_{i \in I} \E[X_i]\,,
\]
where we applied Lemma \ref{lemma:ratio-ineq} to the sequences of non-negative real numbers $(\E[X_i]^2)_{i \in I}$ and $(\E[X_i])_{i \in I}$. Hence, we have that
\begin{equation}
\label{eq:tv-input-poisson-highdim}
    \tv\left(\sum_{i\in I}X_i,\Poi \left(\sum_{i\in I}\E[X_i]\right)\right) \le
 \max_{i \in I} \left\{ \frac{1-p_i}{p_i}
 \right\} = \frac{1}{\kappa -1} \,.
\end{equation}

We get the same upper bound for the total variation distance between $X_I':=\sum_{i\in I}X_i'$ and $\Poi(\E[X_I'])$, similarly. We continue with the following claim.
\begin{claim}
[Correct Rounding]
\label{claim:correct-rounding-highdim}
We can partition the set $I \subseteq [n]$ into two sets $I_{\star},I_0$ and set $X_i' \sim \Geo(p_i')$ with $p_i' = 1-1/\kappa $, for any $i \in I_{\star}$ and $X_i'=0$ almost surely for any $i \in I_0$ so that
\[
\left| \sum_{i \in I}\E[X_i] - \sum_{i \in I} \E[X_i']\right| \leq \frac{1}{\kappa -1}\,.
\]

\end{claim}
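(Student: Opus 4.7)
The plan is to reduce the statement to a one-dimensional discretization problem. Recall that for $X_i \sim \Geo(p_i)$, $\E[X_i] = (1-p_i)/p_i$, so for every $i \in I$ (i.e., $p_i > 1 - 1/\kappa$) we have $\E[X_i] \in [0, 1/(\kappa-1))$ strictly. On the other hand, the two allowed options for $X_i'$ give $\E[X_i'] \in \{0, 1/(\kappa-1)\}$. So picking a partition $I = I_\star \cup I_0$ amounts to choosing how many indices to assign to $I_\star$: once $|I_\star|$ is fixed, the sum $\sum_{i \in I}\E[X_i'] = |I_\star|/(\kappa-1)$ is determined (the specific choice of which indices to put in $I_\star$ is irrelevant for the expectation constraint).

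Set $S := \sum_{i \in I}\E[X_i]$ and let $m := \lfloor S(\kappa-1) \rfloor$. Since each $\E[X_i] < 1/(\kappa-1)$ for $i \in I$, we have $S < |I|/(\kappa-1)$, hence $m < |I|$; in particular $m \in \{0, 1, \dots, |I|\}$, so we may legitimately pick any $I_\star \subseteq I$ with $|I_\star| = m$ and let $I_0 := I \setminus I_\star$.

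With this choice,
\[
\left| \sum_{i \in I}\E[X_i] - \sum_{i \in I}\E[X_i'] \right|
= \left| S - \frac{m}{\kappa-1} \right|
= \frac{S(\kappa-1) - \lfloor S(\kappa-1)\rfloor}{\kappa-1}
< \frac{1}{\kappa - 1},
\]
which is exactly the desired bound. The only point that required any thought was ensuring that $m \le |I|$ so that a valid partition actually exists; this is a consequence of the strict inequality $p_i > 1 - 1/\kappa$ defining $I$, which forces $\E[X_i] < 1/(\kappa-1)$ (rather than $\le$). No further obstacle is anticipated.
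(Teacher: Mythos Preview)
Your proof is correct and follows essentially the same idea as the paper's: both observe that $\sum_{i\in I}\E[X_i'] = |I_\star|/(\kappa-1)$ and then choose $|I_\star|$ so that this multiple of $1/(\kappa-1)$ approximates $S=\sum_{i\in I}\E[X_i]$ to within one step. The only cosmetic difference is that the paper takes $I_\star$ to be a \emph{minimal} subset with $|I_\star|/(\kappa-1)\ge S$ (effectively $|I_\star|=\lceil S(\kappa-1)\rceil$, justified via the monotonicity $\E[X_i']\ge\E[X_i]$ so that $I_\star=I$ overshoots), whereas you take $|I_\star|=\lfloor S(\kappa-1)\rfloor$ directly and verify feasibility via the strict inequality $S<|I|/(\kappa-1)$.
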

\begin{proof}
If $i \in I_{\star}$, we have that $\E[X_i'] \leq 1/(\kappa -1)$, whereas $\E[X_i'] = 0$ if $i \in I_0.$ In the extreme case where $I_{\star} = I$, note that the expectation of the Geometric is non-increasing and so we have that $\E[X_i'] \geq \E[X_i]$ for any $i \in I$ and, so, we have that
\[
\sum_{i\in I} \E_{X_i' \sim \Geo(p_i')}[X_i'] \geq \sum_{i \in I}\E[X_i]\,.
\]
Hence, we can pick $I_{\star}$ to be any minimal subset of $I$ so that
\[
\sum_{i\in I_\star}\E_{X_i' \sim \Geo(p_i')}[X_i']
\geq \sum_{i \in I} \E_{X_i \sim \Geo(p_i)}[X_i]\,.
\]
This choice of $I_{\star}$ yields
\[
\left | \sum_{i \in I_\star} \E[X_i'] - \sum_{i \in I}\E[X_i] \right| \leq 1/(\kappa -1)\,,
\]
and this provides Claim \ref{claim:correct-rounding-highdim}.
\end{proof}
By Lemma \ref{lemma:tv-subadd}, we conclude that
\begin{align*}
\tv\left(\sum_{i\in I} X_i,\sum_{i\in I}X_i'\right)
\leq \frac{3}{\kappa -1}\,,
\end{align*}
using Poisson approximation for $(X_i)_{i \in I}$ and $(X_i')_{i \in I}$ and combining the upper bound for the total variation distance of two Poisson distributions (see Lemma \ref{lemma:tv-poissons}) with Claim \ref{claim:correct-rounding-highdim}. 

Without loss of generality, we consider $n' = |I_\star|$, rearrange the terms and discard the trivial ones so that $X'=\sum_{i\in[n']}X_i'$, with $\tv(X,X')\le 3/(\kappa -1)$ and $X_i'\sim{\Geo(p_i')}$, with $p_i'\in[\plow,1-1/\kappa ]$.

The next steps are similar to the general case. Using Gaussian approximation, we compute $\ncrit$. 
In particular, we can get that $\ncrit = \poly(\kappa /\plow) = \poly(1/(\eps \cdot \plow))$.
If $n' \leq \ncrit$, the PNBD is close to a sparse form that is a sum of Geometric random variables consisting of at most $\ncrit$ terms. In this case, it is sufficient to approximate each term $X_i$ separately using a random variable $Y_i \sim \Geo(q_i)$. Due to sub-additivity of the statistical distance, it suffices to control the TV distance between $X_i$ and $Y_i$ by $\eps/\ncrit$. Then, it will hold that $\tv(\sum_{i \in [n']}X_i, \sum_{i \in [n']} Y_i) \leq \eps$ for $n' \leq \ncrit$. We have to discretize the interval $[\plow, 1-1/\kappa ]$ with appropriate accuracy in order to get the result. The discretization depends on the TV distance between two Geometric random variables that can be easily computed.

Otherwise, we first approximate it using a discretized Gaussian random variable and then match the expectation and the variance in order to find a Negative Binomial that is close to the input PNBD. This gives the bounds presented in the statement but we omit the details.
\end{proof}

\subsection{Verification of Assumptions}
\label{appendix:verification}

Although Assumption \ref{assumption:proper} might not be efficiently verifiable for every selection of the sufficient statistics, assuming a simple given description of the sufficient statistics vector, analytic methods can potentially reduce the assumptions to restrictions on the space of parameters. In particular, we have the following.


    
    
    
    


\begin{itemize}
    \item For conditions \ref{assumption:unimodal} and \ref{assumption:bounded-modes} (unimodality and localization of modes) we have already identified an algebraic condition in terms of an appropriate set of linear inequalities (see Appendix \ref{proof:sparsification-st}).
    \item Condition \ref{assumption:bounded-moments} (bounded central moments) is linked to a lower bound on the minimum norm of the parameter space. For instance, if the sufficient statistics is a scalar logarithmic function (corresponding to Zeta distribution), the fourth moment is bounded when the parameter takes values bounded away above $5$, according to the convergence of the zeta function (see Appendix \ref{appendix:examples}).
    \item For condition \ref{assumption:bounded-max-eval} (spectral bound on the covariance matrix), it is sufficient to show an upper bound on the expected value of the squared norm of the sufficient statistics vector, i.e., $\E[\|\vec T(W)\|^2]$. Such an upper bound may correspond to the exclusion of some parameter values when different coordinates of the sufficient statistics have different behavior in the limit $x\to \infty$. For example, if one coordinate is polynomial, while another one is logarithmic, we would like to ensure that when the parameter corresponding to the polynomial statistic is zero, the other parameter will be bounded away above a value that depends on the degree of the polynomial statistic, i.e., if $\vec T(x) = (x^r, \log x)$, then $(0,a_2)\in\calA$ implies $a_2\ge f(r)$, where $f$ is some appropriate (increasing) function.
    \item Finally, for condition \ref{assumption:variance-lower-bound} (variance lower bound), consider the simple example of the Geometric distributions, where the sufficient statistics is a scalar linear function over $\nats$ (i.e., $\vec T(x)=x$). Then, the variance lower bound is equivalent to an upper bound on the parameter space (i.e., $ a\in\calA$ implies $a\le a_{\max}$). We note, however, that the variance lower bound does not always imply that the parameter space is bounded. In particular, when a distribution has two or more subsequent modes, then, as the norm of the parameter increases to the limit, the variance remains bounded away above zero. Therefore, the variance lower bound may correspond to a different upper bound on the norm for each direction of the parameter space (since the parameter vector's direction defines the set of modes; see Appendix \ref{proof:sparsification-st}).
\end{itemize}

\subsection{Variance Lower Bounds and Poisson Approximation}
One can show that condition \ref{cond:unimodal-SIIURV} of Assumption \ref{assumption:SIIURV} gives a variance lower bound. This implies that each summand of the input will either have very small variance or there will exist a global lower bound. This condition is not present in previous works and is due to a very delicate phenomenon that may not occur in our very general setting; the phenomenon is connected with the technique of Poisson approximation \cite{novak2019poisson,rollin2005approximation,barbour1987asymptotic,barbour1992poisson} and the so-called ``magic factors''. 

Let us set up the problem.
Consider a collection of events $\{A_i\}_{i \in [n]}$ with $\Pr[A_i] = p_i$ and let $X$ be the random number of events that occur. So, $X$ has expectation $\lambda = \sum_{i \in [n]} p_i$ and, if $n$ is large and the $p_i$'s are small and the events are not quite dependent, then the distribution of $X$ should be close to the Poisson distribution with mean (and variance) $\lambda$. Hence, the Poisson approximation appears naturally in scenarios where one deals with a large number of rare events. 

This technique was used by \cite{daskalakis2015sparse} in order to ``massage'' the indicator random variables that have sufficiently small success probabilities and replace them with a small TV distance overhead. Specifically, they used a result about Poisson approximation for the Poisson Binomial random variable \cite{novak2019poisson}. In our case, we would require a similar result for our sum of unimodal and integer-valued random variables to hold.
The main questions arising is (i) which distribution can approximate such a sum when the expectations of the individual terms are quite small (``rare events'')? (ii) is it possible to obtain TV distance bounds or can we only obtain weaker bounds (e.g., in Kolmogorov distance)?

\paragraph{Poisson Approximation.} 
\cite{le1960approximation} first provided a Poisson approximation for a PBD using the maximal coupling technique: For $W = \sum_{i \in [n]} X_i$ with $X_i \sim \Be(p_i)$ and  $\E X_i = p_i$, he showed that $\tv(W, \Poi(\lambda)) \leq \sum p_i^2$ for $\lambda = \E W$ (see \cite{novak2019poisson}). However, this bound is not optimal. 
In \cite{chen1975poisson}, Stein's method was introduced in the Poisson context in order to obtain statistical distance bounds between a PBD and a Poisson distribution. Previously, different techniques for similar bounds were developed by \cite{kerstan1964verallgemeinerung}.
The Stein-Chein method was refined in \cite{barbour1984rate} to obtain an improvement
which states that $\tv(W, \Poi(\lambda)) \leq (1-e^{-\lambda})\lambda^{-1} \sum p_i^2$ and this improvement is tight up to constant factors. The term $\lambda^{-1}$ makes this inequality of the right order and is called the ``magic factor''. The main question is whether this magic factor (which comes from bounds of the solution of the Stein equation)
can appear when we replace the PBD with other distributions. Using the Stein-Chen method, \cite{barbour1987asymptotic} derived approximation results for sums of independent nonnegative integer random variables in the neighbourhood of the Poisson distribution in total variation distance (and, more generally for any distance $d_H = \sup_{h \in H}|\E h(W) - \E h(Q)|$ induced by some class of testing functions $H$). This result gives a bound (generally called the Barbour-Eagleson estimate), similar to the one used for Poisson Binomial Distributions, for the important class of Poisson Negative Binomial Distributions (see \Cref{appendix:pnbds}).
For more general distributions, the result employs ``distributional expansions of the Poisson measure'' using the Charlier polynomials (this class of polynomials is the analog of the Hermite polynomials for the Gaussian measure) and we refer to \cite{barbour1987asymptotic} for further details. 
Several other results for Poisson approximation of non-negative random variables can be found at \cite{novak2019poisson}. For the case of integer-valued random variables, a number of authors has also dealt with shifted Poisson approximation \cite{novak2019poisson}.

\end{document}